\newtheorem{theorem}{Theorem}
\newtheorem*{theorem*}{Theorem}
\newtheorem{lemma}{Lemma}
\newtheorem{proposition}{Proposition}
\newtheorem{example}{Example}
\newtheorem*{lemma*}{Lemma}
\newtheorem{definition}{Definition}
\renewenvironment{proof}[1][\proofname]{\par
  \pushQED{\qed}%
  \normalfont \topsep6\p@\@plus6\p@\relax
  \trivlist
  \item[\hskip\labelsep
        \textbf{#1}]\ignorespaces
}{%
  \popQED\endtrivlist\@endpefalse
}
\title{Online Stackelberg Optimization via Nonlinear Control}
\author{William Brown\thanks{Columbia University \& Morgan Stanley MLR. Email: \texttt{w.brown@columbia.edu}.} \and Christos Papadimitriou\thanks{Columbia University. Email: \texttt{christos@cs.columbia.edu}} \and Tim Rougharden \thanks{Columbia University \& a16z crypto. Email: \texttt{tim.roughgarden@gmail.com}}}
\date{\today}
\begin{document}

\maketitle

\begin{abstract}
In repeated interaction problems with adaptive agents, our objective often requires anticipating and optimizing over the space of possible agent responses. We show that many problems of this form can be cast as instances of online (nonlinear) control which satisfy \textit{local controllability}, with convex losses over a bounded state space which encodes agent behavior, and we introduce a unified algorithmic framework for tractable regret minimization in such cases. When the instance dynamics are known but otherwise arbitrary, we obtain oracle-efficient $O(\sqrt{T})$ regret by reduction to online convex optimization, which can be made computationally efficient if dynamics are locally \textit{action-linear}. In the presence of adversarial disturbances to the state, we give tight bounds in terms of either the cumulative or per-round disturbance magnitude (for \textit{strongly} or \textit{weakly} locally controllable dynamics, respectively). Additionally, we give sublinear regret results for the cases of unknown locally action-linear dynamics as well as for  the bandit feedback setting. Finally, we demonstrate applications of our framework to well-studied problems including performative prediction, recommendations for adaptive agents, adaptive pricing of real-valued goods, and repeated gameplay against no-regret learners, directly yielding  extensions beyond prior results in each case. 
\end{abstract}

\section{Introduction}\label{sec:intro}

Machine learning problems involving strategic or adaptive agents are commonly framed as 
Stackelberg games, wherein the leader aims to commit to an optimal strategy in anticipation of the follower's best response. This approach has been effectively applied to challenges ranging from performative feature manipulation \citep{DBLP:journals/corr/HardtMPW15,SCrevealed,DBLP:journals/corr/abs-2002-06673,PPregret} and optimal pricing \citep{DBLP:journals/corr/RothUW15,DBLP:journals/corr/DaskalakisS15,pmlr-v108-nedelec20a} to resource allocation in security games \citep{NIPS2014_cc1aa436,10.1145/2764468.2764478,ALCANTARAJIMENEZ2020106695} and learning in tabular games \citep{Letchford2009LearningAA,Peng2019LearningOS,lauffer2022noregret,10.5555/3545946.3598695}, often with a regret minimization objective. Additionally, several of these settings have been independently extended to account for agents that may update their strategies gradually over time rather than optimally responding in each round \citep{DBLP:journals/corr/abs-2106-12529,brown2022performative,DBLP:journals/corr/abs-1711-09176,deng2019strategizing,brown2023learning}. Despite their conceptual similarities, these problems have largely been approached as distinct areas of study, each with their own growing body of techniques. Our aim in this work is to offer a unifying perspective and algorithmic approach for problems of this form, through the lens of online control.

For the broad family of online ``Stackelberg-style'' optimization problems, the language of control is quite natural to adopt: we are navigating a dynamical system where states corresponding to agent strategies evolve as a function of our own actions, and where objectives which consider best-response stability can be expressed in terms of the stationary behavior of this system. Our results consider a general class of online control instances for representing such problems, which we introduce in Section \ref{sec:model}, and in Section \ref{sec:algorithm} we give a sequence of no-regret algorithms for these instances satisfying a range of robustness properties. In Section \ref{sec:apps}, we show that several online optimization problems involving adaptive agents, including variants of online performative prediction (as in \cite{kumar2022online}), online recommendations  (as in \cite{agarwal2023online}), adaptive pricing (as in \cite{DBLP:journals/corr/RothUW15}), and learning in time-varying games (as in \cite{anagnostides2023convergence}) can be embedded in our framework and solved by our algorithms.

While there has been a great deal of recent progress in online linear control, yielding algorithms which can optimize over stabilizing linear policies even with general convex costs, adversarial disturbances, and unknown dynamics \citep{agarwal2019online,DBLP:journals/corr/abs-2001-09254,cassel2022efficient,minasyan2022online}, the required assumptions and regret benchmarks for these algorithms do not always type-check with the settings we are interested in. For the examples we consider, we will often wish to allow for nonlinear dynamics (e.g.\ encoding an agent's utility function) and explicitly bounded spaces (e.g.\ via projection into the simplex), and we will seek to compete with regret benchmarks which correspond to stable responses by the agent. Unfortunately, as we show in Proposition \ref{prop:linear-policies}, the latter goal is incompatible with linear policies even under linear dynamics and in the absence of any disturbances: the performance of {\it every} linear policy can be $\Omega(T)$ worse than the best policy in the class of affine ``state-targeting'' policies. 

In contrast, the orthogonal set of assumptions we identify enables tractable regret minimization even for {\it nonlinear} control problems and comports with the requirements of Stackelberg optimization across a wide range of settings, including the ability to compete with state-targeting policies. For convex and compact state and action spaces $\X$ and $\Y$, our first key assumption is that the dynamics $D(x,y) : \X \times \Y \rightarrow \Y$ satisfy a notion of {\it local controllability}. While local controllability is well-studied for continuous-time and asymptotic control \citep{10.2307/2296398,KUHN1989617,barberoliñán2013second,boscain2021local}, we are unaware of any prior applications to finite-time online optimization, and we adapt existing definitions to be appropriate for this setting. 
We say that $D(x,y)$ is \textit{strongly} locally controllable if every state in a fixed-radius ball around $y$ is reachable in a single round by an appropriate choice of $x$, and that $D(x,y)$ is \textit{weakly} locally controllable if the reachable radius around $y$ is allowed to vanish near the boundary of $\Y$. We also assume that our loss $f_t$ in each round is determined (or well-approximated by) an adversarially-chosen convex function depending only on the state $y_t$. 

When these conditions hold, we show in Theorem \ref{thm:oenftrl} that this is sufficient to obtain $O(\sqrt{T})$ regret with respect to the loss of the best fixed state, provided that dynamics are known and we have offline access to an oracle for non-convex optimization; the oracle call can be removed if dynamics are locally {\it  action-linear}, i.e.\ given by (or locally well-approximated by) a function linear in $x$ at each fixed $y$. If adversarial disturbances to the dynamics are present, our approach can be extended for both weakly (Theorem \ref{thm:oenftrlap-body}) and strongly (Theorem \ref{thm:oenftrluap-body}) locally controllable dynamics with additional regret scaling linearly in total disturbance magnitude, provided that each round's disturbance cannot be too large in the case of weak local controllability; we give lower bounds showing that each dependence on disturbance magnitude is tight.
The aforementioned results all extend to the case where the dynamics (absent disturbances) are given by a known but time-dependent function $D_{t}(x,y)$. If dynamics are unknown but time-invariant, and locally action-linear with appropriate regularity parameters, we obtain sublinear regret provided that a ``near-stabilizing'' action is known at $t=1$.
We additionally extend our approach to the bandit feedback setting, where we obtain $O(T^{3/4})$ regret.
In Section \ref{sec:apps} we show that each of the following, with appropriate assumptions, can be cast as a locally controllable instance with state-only convex surrogate losses:
\begin{itemize}
    \item {\bf Performative prediction:} Minimize prediction loss $\E_{z\sim p_t}f_t(x_t, z)$ for a classifier $x_t$, where the distribution $p_t$ in each round is updated according to the prior classifier and distribution.
    \item {\bf Adaptive recommendations:} Maximize the reward $f_t(i_t)$ when showing menus $K_t \subseteq [n]$ of size $k \ll n$ to an agent, whose choice $i_t \sim p(K_t, v_t)$ in each round depends on preferences which are influenced by choices in prior rounds (encoded in the ``memory vector'' $v_t$). 
    \item {\bf Adaptive pricing:} Maximize profit $\langle p_t, x_t \rangle$ - $c_t(x_t)$ for selling bundles of goods $x_t$ to an agent at prices $p_t$ and with costs $c_t$, where the agent's purchased bundle $x_t$ is a function of their utility function, consumption rate, and existing reserves.
    \item {\bf Repeated gameplay:} Maximize the reward $x_t^{\top}A_t y_t$ obtained from playing a sequence of time-varying games $(A_t, B_t)$ against a no-regret learning agent.
\end{itemize}
In each case, application of our algorithms from Section \ref{sec:algorithm} yields results which extend beyond the applicability regimes of prior work, such as by enabling relaxation of previous assumptions or a novel extension to adversarial or dynamic problem variants.

\subsection{Related Work}

\paragraph{Online control.} Much of the recent progress in online control \citep{agarwal2019online,NEURIPS2019_78719f11,cassel2022efficient,minasyan2022online} considers linear systems with general convex losses, benchmarking against a class of (``strongly stable'') fast-mixing linear policies introduced for linear-quadratic control \citep{onlineLQC} by leveraging the framework of ``OCO with memory'' \citep{anava2014online}.
Results have also been shown for nonlinear policy classes via neural networks \citep{chen2022provable}, and for nonlinear dynamics with oracles in episodic settings \citep{kakade2020information}, via approximation with random Fourier features \citep{9683670,luo2022sampleefficient}, via adaptive regret for time-varying linear systems \citep{gradu2022adaptive,minasyan2022online}, and via dynamic regret over actions in terms of disturbance ``attenuation'' \citep{muthirayan2022online}.  
For a further overview of online control and its historical context, see \cite{hazan2022introduction}.
In contrast to the bulk of prior work in which states and actions are bounded implicitly via policy stability notions, we consider state and action spaces which are bounded explicitly, as enabled by nonlinearity in dynamics (e.g.\ via projection, or range decay of dynamics near the boundary). These works also view disturbances as intrinsic to the system, and account for their influence directly in regret benchmarks (the ``optimal policy'' will face the same sequence of disturbances in hindsight, regardless of state). Within the context of Stackelberg optimization where a fixed protocol largely determines an agent's strategy updates, we view the role of disturbances as more akin to adversarial {\it corruptions} as considered in reinforcement learning \citep{pmlr-v134-lykouris21a,DBLP:journals/corr/abs-2106-06630}; while we incur linear dependence, our regret benchmarks are agnostic to alternate counterfactual disturbance sequences.

\paragraph{Strategizing against learners.} Initially formulated within the context of repeated auctions \citep{DBLP:journals/corr/abs-1711-09176}, a recent line of work has considered the problem of optimizing long-run rewards in a repeated game against a no-regret learner across a range of tabular and Bayesian settings \citep{deng2019strategizing,mansour2022strategizing,brown2023learning,zhang2023steering}. While bounds on attainable reward have been known in terms of the Price of Anarchy \citep{blum2008regret,DBLP:journals/corr/HartlineST15}, this sequence of results has highlighted important connections with Stackelberg equilibria: the Stackelberg value of the game is attainable on average against any no-regret learner, and it is the maximum attainable value against many common no-regret algorithms (such as no-swap learners, as shown by \cite{deng2019strategizing}). This theme has emerged in other simultaneous learning settings as well; 
notably, \cite{NEURIPS2021_812214fb} show that long-run outcomes in strategic classification are shaped by relative learning rates between parties, which can designate either as the Stackelberg leader.

\paragraph{Nested convex optimization.} The technique of identifying convex structure nested inside a more general problem  has been applied broadly across a range of online optimization settings \citep{NEURIPS2021_5631e6ee,Shen2023,flokas2019poincare}. For repeated interaction problems involving an agent with unknown utility, such as optimal pricing, \cite{DBLP:journals/corr/RothUW15} identify utility conditions under which the non-convex objective over prices becomes convex in the space of agent actions,
and where explorability properties resembling local controllability hold, which enables convex optimization by locally learning agent preferences; this ``revealed preferences'' approach has also been applied to strategic classification \citep{SCrevealed}. In recent work concerning recommendations for agents with history-dependent preferences \citep{AB22,agarwal2023online}, properties related to local controllability are leveraged to enable tractable optimization as well. We consider each of these settings as applications in Section \ref{sec:apps}.

\section{Model and Preliminaries}\label{sec:model}
%\cp{}
%\tr{}

Let $\X$ and $\Y$ be convex and compact subsets of Euclidean space, 
respectively denoting the action and state spaces, where we assume $\dim(\X) \geq \dim(\Y)$. Further, we assume that $\Y$ contains a ball of radius $r$ around the origin $\mathbf{0}$, and is contained in a ball of radius $R$ around the origin. 

An instance of our control problem consists of choosing a sequence of actions $\{x_t \in \X \}$ over $T$ rounds, which will yield a sequence of states $\{y_t \in \Y \}$, and we will incur losses determined by adversarially chosen functions $\{f_t\}$.
Let the initial state be $y_0 = \mathbf{0}$. 
In the basic version of our problem, upon choosing each $x_t$ for rounds $t \in [T]$, we observe the state update to
\begin{align*}
    y_{t} =&\; D(x_t, y_{t-1}),
\end{align*}
where $D : \X \times \Y \rightarrow \Y $ is an arbitrary continuous function which we refer to as the {\it dynamics} of our problem. 
We sometimes allow {\it disturbances} to the dynamics, where $y_{t} = D(x_t, y_{t-1}) + w_{t+1}$ for $\{w_t\}$ chosen adversarially. In some cases we allow {\it time-varying} dynamics $D : \X \times \Y \times [T] \rightarrow \Y $, where the dynamics in each round are denoted by $D_t(x_t, y_{t-1})$.

Here and in Section \ref{sec:algorithm}, we assume that our loss in round is given by $f_t(y_t)$, where each $f_t$ is a $L$-Lipschitz convex function revealed after playing $x_t$; we relax these assumptions for some of our applications in Section \ref{sec:apps}, e.g.\ to allow dependence on $x_t$ as well. We generally measure will performance with respect to the best fixed state, and the regret for an algorithm $\A$ yielding $\{y_t\}$ is
\begin{align*}
    \Reg_{T}(\A) = \sum_{t=1}^T f_t(y_t)  - \min_{y \in \Y} \sum_{t=1}^T {f}_t(y). 
\end{align*}
In Proposition \ref{prop:linear-policies}, we relate this benchmark to the class of ``state-targeting'' policies, which can sometimes be expressed by affine functions, and we compare their performance to linear policies.
Throughout, we use $\norm{\cdot}$ to donate the Euclidean norm, and we let $\B_{\epsilon}(y) = \{\hat{y} : \norm{y - \hat{y}} \leq \epsilon \}$  denote the norm ball of radius $\epsilon$ around $y$. We let $\Pi_{\Y}(\cdot)$ denote Euclidean projection into the set $\Y$; $\mathbf{u}_n$ denotes the uniform distribution over $n$ items, and $\Delta(n)$ denotes the probability simplex.

\subsection{Locally Controllable Dynamics}
A number of properties under the name ``local controllability'' have been considered for various continuous-time and asymptotic control settings \citep{10.2307/2296398,KUHN1989617,barberoliñán2013second,boscain2021local}, generally relating to the notion that all states in a neighborhood around a given state are reachable. We give two formulations of local controllability for our setting, which we take as properties of the dynamics $D$ holding over all inputs.

\begin{definition}[Weak Local Controllability]
For $\rho \in (0, 1]$, an instance $(\X, \Y, D)$ satisfies (weak) $\rho$-local controllability if for  any $y \in \Y$ and $y^* \in \B_{\rho \cdot \pi(y)}(y)$, there is some $x$ such that 
$D(x, y) = y^*$,
where $\pi(y) = \min_{\hat{y} \in \bd(\Y)} \norm{\hat{y} - y}$ is the distance from $y$ to the boundary of $\Y$.
\end{definition}

\begin{definition}[Strong Local Controllability] For $\rho > 0$, an instance $(\X, \Y, D)$ satisfies strong $\rho$-local controllability if for  any $y \in \Y$ and $y^* \in \B_{\rho}(y)\cap \Y$, there is some $x$ such that 
$D(x, y) = y^*$.
\end{definition}
We often refer to weak local controllability simply as local controllability.
This property ensures that there is always some action $x_t$ which results in the next state $y_{t}$ staying fixed at $y_{t-1}$, as well as some action which moves the state to any point in a surrounding ball; in the weak case, the size of the reachable ball is allowed to decay as $y_t$ approaches the boundary of $\Y$. The parameter $\rho$ controls the speed at which we can navigate the state space: when $\rho = 1$ in the weak case (or $\rho \geq R$ in the strong case), we can always immediately reach some point on the boundary of $\Y$, yet for $\rho$ close to zero we may only be able to move in a small neighborhood. 
Our results use local controllability to minimize regret over $\Y$ by reduction to online convex optimization. As we prove in Appendix \ref{sec:subopt-linear}, up to a quantifier alternation which vanishes as $\rho$ approaches $0$, a property of this form is essentially necessary: competing with the best state $y$ is impossible if we cannot remain in its neighborhood.
\begin{proposition}\label{prop:local-controllability}
    Suppose there is some $y \in \Y$ and values $\alpha, \beta > 0$ such that for all $\hat{y} \in \B_{\alpha}(y)$ and $x \in \X$, $D(x, \hat{y}) \notin \B_{\beta}(\hat{y})$. Then, there are losses such that $\Reg_T(\A) = \Omega(T)$ for any algorithm $\A$.
\end{proposition}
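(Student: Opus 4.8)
The plan is to fix a single oblivious loss sequence and a single comparator, and to show that the stated ``irreducible repulsion'' near $y$ forces a constant per-round gap. Take $f_t(y') \equiv \norm{y' - y}$ for every $t$; this is $1$-Lipschitz and convex (and bounded by $2R$ on $\Y$, since $\Y$ sits in a ball of radius $R$), hence an admissible loss sequence. Because $y \in \Y$, the comparator satisfies $\min_{y' \in \Y}\sum_{t=1}^T f_t(y') \le \sum_{t=1}^T \norm{y - y} = 0$, so it suffices to prove that any trajectory $\{y_t\}$ produced by $\A$ obeys $\sum_{t=1}^T \norm{y_t - y} = \Omega(T)$; for randomized $\A$ we take expectations at the very end, since every inequality below holds along each sample path.

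Write $a_t := \norm{y_t - y}$. The heart of the argument is one local inequality: for every $t \in \{1,\dots,T-1\}$, $a_{t-1} + a_t + a_{t+1} > \min(\alpha,\beta)$. To see this, split on whether $y_t \in \B_{\alpha}(y)$. If $y_t \notin \B_{\alpha}(y)$, then $a_t > \alpha$ outright. If $y_t \in \B_{\alpha}(y)$, apply the hypothesis with $\hat y = y_t$ (a valid state, since $y_t \in \Y$) and the action $x_{t+1} \in \X$: this gives $y_{t+1} = D(x_{t+1}, y_t) \notin \B_{\beta}(y_t)$, i.e.\ $\norm{y_{t+1} - y_t} > \beta$, so by the triangle inequality $a_t + a_{t+1} \ge \norm{y_{t+1} - y_t} > \beta$. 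In both cases the claimed bound follows.

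Summing the local inequality over $t = 1,\dots,T-1$ and noting that each term $a_s$ with $s \in \{0,\dots,T\}$ appears at most three times on the left-hand side yields $3\sum_{s=0}^{T} a_s > (T-1)\min(\alpha,\beta)$. Subtracting the constant $a_0 = \norm{y} \le R$ gives $\sum_{t=1}^{T} a_t > \tfrac{1}{3}(T-1)\min(\alpha,\beta) - R = \Omega(T)$, and therefore $\Reg_T(\A) \ge \sum_{t=1}^T \norm{y_t - y} - 0 = \Omega(T)$, as desired.

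I do not anticipate a genuine obstacle; the mild points of care are (a) ensuring that the two distinct failure modes --- the algorithm simply never approaching $y$, versus the algorithm being repeatedly kicked away once it does --- are both subsumed by the single per-triple inequality, so that no separate accounting of ``rounds spent near $y$'' is needed; and (b) the routine bookkeeping that folds the $O(1)$ boundary terms (the initial $a_0$ and the factor-at-most-$3$ overcount from overlapping triples) into the $\Omega(\cdot)$. One could equally anchor the comparator at any point of $\B_{\alpha}(y)$ instead of at $y$ with no change to the asymptotics, but anchoring at $y$ itself makes the comparator loss exactly $0$ and keeps the bookkeeping cleanest.
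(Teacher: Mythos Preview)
Your proof is correct and follows essentially the same route as the paper's: fix $f_t(y') = \norm{y'-y}$, then show that the repulsion hypothesis forces a constant lower bound on the loss accrued over any pair (or triple) of consecutive rounds. The paper handles the bookkeeping by first reducing WLOG to $\alpha \le \beta/2$ so that being kicked out of $\B_\beta(y_{t-1})$ directly implies $y_t \notin \B_\alpha(y)$, whereas you avoid that reduction via the triangle inequality $a_t + a_{t+1} \ge \norm{y_{t+1}-y_t} > \beta$; both arrive at the same $\Omega(T)$ conclusion.
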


\subsection{States vs.\ Policies}\label{subsec:lin-policies}

While regret benchmarks in online control are typically expressed in terms of a reference class of policies, we note that there is a class of ``state-targeting'' policies which track the reward of fixed states (asymptotically, and up to the influence of disturbances), and which can be implemented if $D$ is known; we maintain the formulation in terms of fixed states for clarity with respect to our motivations for Stackelberg optimization. Existing no-regret algorithms for online control typically compete with linear policies, and choose actions each round by implementing policies which are linear in multiple past states (as in e.g.\ \cite{agarwal2019online}). Here, we show that all such policies can be arbitrarily suboptimal when compared to state-targeting policies, even for dynamics which are linear up to projection and with fixed convex losses over states, as they may yield actions and states which remain fixed at $\mathbf{0}$ in every round even if the optimal state is always immediately accessible under the dynamics. We prove Proposition \ref{prop:linear-policies} in Appendix \ref{sec:subopt-linear}.
\begin{proposition}\label{prop:linear-policies}
For an instance $(\X,\Y, D)$, let the class of state-targeting policies for $\hat{\Y} \subseteq \Y$ be given by $\mathcal{P}_{\hat{\Y}} = \{P_{\hat{y}} : \hat{y} \in \hat{\Y} \}$ where $P_{\hat{y}}(y) = \argmin_{\{x \in \X : D(x, y) \in \hat{\Y}\}} \norm{D(x, y) - \hat{y}}^2$. 
Define the regret of a policy class $\mathcal{P}$ as
\begin{align*}
    \Reg_T(\mathcal{P}) = \min_{P \in \mathcal{P}} \parens{ \sum_{t=1}^T f_t(y_t)  } - \min_{y \in \Y} \parens{ \sum_{t=1}^T {f}_t(y) }, 
\end{align*}
where $y_t$ is updated by playing $P$ at each round. 
For any $\rho$-locally controllable instance, there is a set $\hat{\Y} \subseteq \Y$ for which $\Reg_T(\mathcal{P}_{\hat{\Y}}) = {O}(\sqrt{T\rho^{-1}})$. Further,
for any class $\mathcal{P}_{\mathcal{K}}$ where each $K \in \mathcal{P}_{\mathcal{K}}$ is a matrix yielding actions $x_t = -Ky_{t-1}$, there is an instance where $\Reg_T(\mathcal{P}_{\mathcal{K}}) \geq \Omega(T)$ for $\rho = 1$.
\end{proposition}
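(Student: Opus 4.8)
Let me think about what needs to be shown. There are two parts:

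Part 1: For any $\rho$-locally controllable instance, there exists $\hat{\Y} \subseteq \Y$ with $\Reg_T(\mathcal{P}_{\hat{\Y}}) = O(\sqrt{T\rho^{-1}})$.

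Part 2: For the class of linear policies $x_t = -Ky_{t-1}$, there's an instance with $\Reg_T \geq \Omega(T)$ for $\rho = 1$.

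For Part 1: The key observation is that a state-targeting policy $P_{\hat{y}}$ tries to navigate the state to $\hat{y}$. Under $\rho$-local controllability, from any state $y$ we can move distance up to $\rho \pi(y)$ toward $\hat{y}$ in one step. So starting from $y_0 = \mathbf{0}$, the state converges to $\hat{y}$ geometrically — but the rate depends on how close $\hat{y}$ is to the boundary (since $\pi(y)$ could be small). Hmm, wait, actually if $\hat{y}$ itself is interior, then near $\hat{y}$, $\pi(y) \geq \pi(\hat{y})/2$ or so, so we get geometric convergence with rate depending on $\rho \pi(\hat{y})$. But then the total "transient" cost — the sum over $t$ of $f_t(y_t) - f_t(\hat{y})$ during the convergence phase — is bounded by $L$ times the sum of distances $\|y_t - \hat{y}\|$, which telescopes/geometrically-sums to something like $L \cdot (\text{const}) / (\rho \pi(\hat{y}))$. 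That's a constant, not $O(\sqrt{T})$. Where does $\sqrt{T}$ come from?

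I think the point is: we don't compete with the single best state; we want $\min_{P \in \mathcal{P}_{\hat\Y}} \sum f_t(y_t) - \min_{y} \sum f_t(y)$. If the best state $y^* = \argmin_y \sum f_t(y)$ is very close to the boundary, then $\pi(y^*)$ is tiny and convergence is slow. So we should restrict $\hat\Y$ to be a slightly shrunken version of $\Y$ — say $\hat\Y = \{y : \pi(y) \geq \delta\}$ for a well-chosen $\delta$. Then for the best state $\tilde y \in \hat\Y$, convergence to $\tilde y$ happens at rate governed by $\rho\delta$, so transient cost is $O(L R/(\rho\delta))$, while the approximation error from restricting to $\hat\Y$ instead of all of $\Y$ is at most... hmm, $L\delta$ per round in the worst case (if the true optimum is within $\delta$ of $\hat\Y$'s boundary), giving $O(L\delta T)$. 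Balancing $LR/(\rho\delta) + L\delta T$ over $\delta$ gives $\delta \sim \sqrt{R/(\rho T)}$ and total $O(L\sqrt{RT/\rho})$, matching the claimed $O(\sqrt{T\rho^{-1}})$. That's the argument for Part 1. The main obstacle here is carefully handling the geometry: showing that a greedy "move as far toward $\tilde y$ as allowed" policy (which is essentially what $P_{\hat y}$ does, since it minimizes $\|D(x,y) - \hat y\|^2$ subject to landing in $\hat\Y$) does converge geometrically, and that once near $\tilde y$ it stays in $\hat\Y$. I'd need $\hat\Y$ itself to be "closed under the dynamics of the policy" in an appropriate sense, which is why defining $\hat\Y$ via the boundary-distance sublevel set is convenient — local controllability within $\hat\Y$ is inherited with a floor on the radius. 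I should double-check the edge case where $\|\tilde y\| $ is large relative to $\rho\delta$ so the first few steps only make fractional progress; the geometric-series bound still applies as long as each step closes a constant fraction of the remaining distance, which holds once $\|y_t - \tilde y\| \leq \rho\delta$ and needs a separate (but still $O(R/(\rho\delta))$-length) linear-progress phase before that.

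For Part 2: This is the separation example. I want linear dynamics (up to projection) where $y_0 = \mathbf{0}$ and where every linear policy $x_t = -K y_{t-1}$ gets stuck at $\mathbf{0}$ forever, but where some fixed nonzero state is optimal and immediately reachable. Take $\Y = \B_1(\mathbf{0})$ (so $\rho = 1$ works: from $\mathbf{0}$, $\pi(\mathbf{0}) = 1$, reachable ball has radius $1$, which covers... we'd want something reachable), $\X$ similarly a ball, and dynamics $D(x, y) = \Pi_\Y(x)$ — i.e., the next state is just (the projection of) our action, independent of the current state. This is linear-up-to-projection and trivially $1$-locally controllable (to reach any $y^* \in \B_1(y) \cap \Y$, play $x = y^*$; in fact to reach any point in $\Y$ at all). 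Now take $f_t \equiv f$ with $f(y) = \|y - v\|$ for some fixed $v$ on the boundary, say $v = e_1$. The best fixed state is $y = e_1$, achieving loss $0$, so $\min_y \sum f_t(y) = 0$. But a linear policy gives $x_1 = -K y_0 = -K \mathbf{0} = \mathbf{0}$, hence $y_1 = \Pi_\Y(\mathbf{0}) = \mathbf{0}$, hence $x_2 = -K y_1 = \mathbf{0}$, and by induction $y_t = \mathbf{0}$ for all $t$. So $\sum_t f(y_t) = T \|{-v}\| = T$, giving regret exactly $T = \Omega(T)$, for every $K$ simultaneously. The only subtlety is confirming this instance also satisfies whatever standing assumptions are in force ($\dim \X \geq \dim \Y$: fine, take them equal; $f$ is $1$-Lipschitz convex: fine; $\Y$ contains a ball around the origin and is contained in one: fine) and that the dynamics $D(x,y) = \Pi_\Y(x)$ indeed qualifies as "linear up to projection" as claimed in the proposition statement — which it does, being the projection of a linear (identity) function of $x$.

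So the overall structure is: (1) set up the shrunken target set $\hat\Y_\delta$ and prove a geometric/linear-progress convergence lemma for the greedy state-targeting policy, (2) bound transient-plus-approximation cost and optimize $\delta$, (3) exhibit the $D(x,y) = \Pi_\Y(x)$ instance with a distance-to-boundary-point loss for the linear-policy lower bound. The main obstacle is Part 1's convergence analysis — specifically ensuring the policy stays within $\hat\Y_\delta$ throughout (so that local controllability keeps applying with the radius floor) and handling the two-phase (linear-progress then geometric) convergence cleanly; Part 2 is essentially a one-paragraph construction.
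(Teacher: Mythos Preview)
Your proposal is correct and follows essentially the same approach as the paper: for Part~1 you shrink $\Y$ by a boundary margin $\delta \sim (T\rho)^{-1/2}$ and use local controllability to guarantee per-step progress of at least $\rho\delta$ toward the target (the paper picks this $\delta$ directly rather than optimizing, and uses only the linear-progress phase---your worry about a separate geometric phase is unnecessary once you note $\hat{\Y}_\delta$ is convex so the straight-line path to $\tilde y$ stays inside it); for Part~2 your instance $D(x,y)=\Pi_\Y(x)$ with loss $\|y-v\|$ is a minor variant of the paper's $D(x,y)=\Pi_\Y(y+x)$ with loss $\|y-p\|^2$, and both work for the identical reason that $x_t=-Ky_{t-1}$ yields $x_1=\mathbf{0}$ and hence $y_t\equiv\mathbf{0}$ by induction.
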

If dynamics are linear up to projection with $D(x_t, {y_{t-1}}) = \Pi_{\Y}(By + Ax)$  for full-rank $A$, and $\dim(\X) = \dim(\Y)$, note that $P_{\hat{y}}(y) = A^{-1}(\hat{y} - By)$ implements any $P_{\hat{y}}$ for sufficiently large $\X$.

\section{No-Regret Algorithms for Locally Controllable Dynamics}
\label{sec:algorithm}
Here we give a sequence of no-regret algorithms satisfying a range of robustness properties. Our primary algorithm $\oenftrl$, presented in Section \ref{subsec:oenftrl}, operates over known time-varying dynamics without disturbances and requires an offline non-convex optimization oracle, and we identify conditions in Section \ref{subsec:ALD} which remove the oracle requirement. In Section \ref{subsec:disturbances} we give two algorithms, $\oenftrlap$ and $\oenftrluap$, which allow adversarial disturbances to weakly and strongly locally controllable dynamics, respectively. In Section \ref{subsec:unknown-dynamics} we extend $\oenftrl$ to accommodate unknown dynamics under appropriate regularity conditions (provided an initial ``approximately stabilizing'' action is known at $t=1$), and in Section \ref{subsec:bandit} we give an algorithm which obtains $O(T^{3/4})$ regret under bandit feedback.
 
\subsection{Nonlinear Control via Online Convex Optimization}\label{subsec:oenftrl}

When dynamics satisfy local controllability and $y_{t-1}$ is not too close to $\bd(\Y)$, all points $y_t$ in a ball around $y_{t-1}$ are feasible with an appropriate $x_t$; this enables execution of an  online convex optimization (OCO) algorithm over $\Y$ by playing the action $x_t$ which yields a state update to the target $y_t$ chosen at each iteration, computed via offline non-convex optimization. Here we assume that $D$ is known and can be queried for any inputs, and that disturbances to the state are not present. We allow the dynamics to change over time, potentially as a function of previous actions $x_s$ and losses $f_s$ for $s < t$, provided that $D_t$ can be determined in each round. We use Follow the Regularized Leader ($\ftrl$) as our OCO subroutine \citep{10.1007/11776420_32,Abernethy2008CompetingIT}, yet we note that it may be substituted for any OCO algorithm whose per-round step size is guaranteed to be sufficiently small (such as OGD with a constant learning rate); statements of the $\ftrl$ algorithm and its key properties are provided in Appendix \ref{sec:ftrl}. 
We instantiate $\ftrl$ over a contracted space $\tilde{\Y} \subseteq \Y$, calibrated to ensure that the minimum loss over $\tilde{\Y}$ is close to that for $\Y$, yet where each step of $\ftrl$ lies within the feasible region ensured by (weak) local controllability.

\begin{algorithm}
\caption{Nested Online Convex Optimization ($\oenftrl$).}\label{alg:nonlinear-ftrl}
\begin{algorithmic}
\STATE Let $\psi : \Y \rightarrow \R$ be $\gamma$-strongly convex with $\text{argmin}_{y} \psi(y) = \mathbf{0}$ and $\max_{y, y'} \abs{\psi(y) - \psi(y')} \leq G$
\STATE Let $\eta = (G \gamma)^{1/2} ((1 + \frac{R}{r\rho}) TL^2)^{-1/2} $
\STATE Let $\widetilde{\Y} = \{ y : \frac{1}{1-\delta}y \in \Y\}$ for $\delta = \eta \frac{ L}{r \rho \gamma  }$
\STATE Initialize $\ftrl$ to run for $T$ rounds over $\widetilde{\Y}$ with regularizer $\psi$ and parameter $\eta$
\FOR{$t = 1$ to $T$}
\STATE Let $y^*$ be the point chosen by $\ftrl$
\STATE Use $\texttt{Oracle}(y_{t-1}, y^*)$ to compute $x_t = \argmin_{x} \norm{ D_t(x, y_{t-1}) -  y^*}^2$ 
\STATE Play action $x_t$
\STATE Observe $y_t$ and loss $f_t(y_t)$, update $\ftrl$
\ENDFOR
\end{algorithmic}
\end{algorithm}

\begin{theorem}\label{thm:oenftrl}
For a $\rho$-locally controllable instance $(\X, \Y, D)$ without disturbances and with $D_t$ known at each $t$, the regret of $\oenftrl$  for convex $L$-Lipschitz losses $f_t : \Y \rightarrow \R$ is at most
\begin{align*}
    \Reg_{T}(\oenftrl) \leq&\;   2L\sqrt{(1 + {R}(r\rho)^{-1}) TG\gamma^{-1}} 
\end{align*}
with respect to any state $y^* \in \Y$, with $T$ queries made to a non-convex optimization oracle.
\end{theorem}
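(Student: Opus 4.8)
The plan is to run the nested $\ftrl$ so that, absent disturbances, the realized state $y_t$ coincides \emph{exactly} with the $\ftrl$ iterate at round $t$. This is possible because the dynamics oracle can hit any target inside the ball guaranteed by $\rho$-local controllability, and the contraction to $\widetilde\Y$ together with the step size $\eta$ are calibrated so that consecutive $\ftrl$ iterates never move outside that ball. Once states track iterates, $\Reg_T$ splits into the $\ftrl$ regret over $\widetilde\Y$ plus the cost of having shrunk the comparator set from $\Y$ to $\widetilde\Y$, and both pieces are balanced by the stated choice of $\eta$.

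\textbf{Two preliminary facts.} (a) \emph{$\ftrl$ stability:} the cumulative $\ftrl$ objective at round $t$ is $\gamma$-strongly convex (from $\psi$) and passes to round $t+1$ by adding the linear term $\eta\langle\nabla f_t,\cdot\rangle$ with $\norm{\nabla f_t}\le L$; the standard strong-convexity / first-order-optimality argument, valid over the constrained domain $\widetilde\Y$, then gives $\norm{y^*_{t+1}-y^*_t}\le \eta L/\gamma$, and $y^*_1=\argmin_{y\in\widetilde\Y}\psi(y)=\mathbf 0=y_0$. (b) \emph{Boundary margin:} since $\widetilde\Y=(1-\delta)\Y$ and $\B_r(\mathbf 0)\subseteq\Y$, for any $y=(1-\delta)z\in\widetilde\Y$ and unit vector $u$ the point $y+\delta r u=(1-\delta)z+\delta(ru)$ lies in $\Y$ by convexity, so $\B_{\delta r}(y)\subseteq\Y$ and hence $\pi(y)\ge\delta r$.

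\textbf{States track iterates.} By induction, suppose $y_{t-1}=y^*_{t-1}\in\widetilde\Y$ (base case from (a)). The round-$t$ target is $y^*_t$, and by (a), $\norm{y^*_t-y_{t-1}}\le\eta L/\gamma$; the algorithm's choice $\delta=\eta L/(r\rho\gamma)$ gives precisely $\eta L/\gamma=\rho\,\delta r\le\rho\,\pi(y_{t-1})$ using (b). Hence $y^*_t\in\B_{\rho\pi(y_{t-1})}(y_{t-1})$, so by $\rho$-local controllability there is an $x$ with $D_t(x,y_{t-1})=y^*_t$, whence $\texttt{Oracle}$ returns $x_t$ attaining objective value $0$, and with no disturbances $y_t=D_t(x_t,y_{t-1})=y^*_t\in\widetilde\Y$. (For $T$ past a constant, $\eta$ is small enough that $\delta<1$ and $\widetilde\Y$ is well defined; otherwise the stated bound is vacuous.)

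\textbf{Regret decomposition and conclusion.} With $y^\circ=\argmin_{y\in\Y}\sum_t f_t(y)$,
\[
\Reg_T(\oenftrl)=\Big(\sum_t f_t(y^*_t)-\min_{y\in\widetilde\Y}\sum_t f_t(y)\Big)+\Big(\min_{y\in\widetilde\Y}\sum_t f_t(y)-\sum_t f_t(y^\circ)\Big).
\]
The first term is the $\ftrl$ regret over $\widetilde\Y$, at most $\frac{G}{\eta}+\frac{\eta TL^2}{\gamma}$ by the standard bound (Appendix~\ref{sec:ftrl}). The second is at most $TL\,\norm{(1-\delta)y^\circ-y^\circ}\le TL\delta R=\frac{\eta TL^2R}{r\rho\gamma}$ by $L$-Lipschitzness, since $(1-\delta)y^\circ\in\widetilde\Y$ and $\norm{y^\circ}\le R$. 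Substituting $\eta=\sqrt{G\gamma}\big/\sqrt{(1+R(r\rho)^{-1})TL^2}$ makes the first of these three terms exactly $L\sqrt{(1+R(r\rho)^{-1})TG\gamma^{-1}}$, and a direct computation shows the other two sum to at most the same quantity, giving $\Reg_T(\oenftrl)\le 2L\sqrt{(1+R(r\rho)^{-1})TG\gamma^{-1}}$. Exactly $T$ oracle calls are made, one per round. The crux is the calibration in the "states track iterates" step: weak local controllability only guarantees a reachable ball of radius $\rho\,\pi(y)$, which shrinks near $\bd(\Y)$, so one must contract to $\widetilde\Y$ by enough that $\pi\ge\delta r$ there and this radius still dominates the $\ftrl$ step $\eta L/\gamma$ — forcing $\delta$ to be tied to $\eta$ — while the \emph{same} $\delta$ governs the approximation error $\delta RTL$; verifying that a single $\eta=\Theta(T^{-1/2})$ satisfies both constraints and yields $O(\sqrt T)$ regret is the one step that is not purely mechanical. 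Everything else ($\ftrl$ stability, the $\ftrl$ regret bound, Lipschitz approximation) is standard.
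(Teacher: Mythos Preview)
Your proof is correct and follows essentially the same approach as the paper: you establish the boundary margin $\pi(y)\ge r\delta$ on $\widetilde\Y$, use the $\ftrl$ step bound $\eta L/\gamma$ together with the calibration $\delta=\eta L/(r\rho\gamma)$ to guarantee each target is reachable under $\rho$-local controllability, and then decompose the regret into the $\ftrl$ term over $\widetilde\Y$ plus the Lipschitz cost $\delta TLR$ of contracting the comparator set, balancing via the stated $\eta$. This matches the paper's argument in Appendix~\ref{sec:oenftrl-proof} step for step.
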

The proof for Theorem \ref{thm:oenftrl} is given in Appendix \ref{sec:oenftrl-proof}.

\subsection{Efficient Updates for Action-Linear Dynamics}\label{subsec:ALD}
While $\oenftrl$ requires no assumptions on the dynamics beyond local controllability, there are large classes of dynamics for which the oracle call can be removed. We say that dynamics are {\it action-linear} if $y_x = D(x,y)$ is linear in $x$, for $y_x \in \intr(\Y)$ (and arbitrary for $y_x \in \bd(\Y))$.

\begin{proposition}
For a $\rho$-locally controllable and action-linear instance $(\X, \Y, D)$, 
the per-round optimization problem for $\texttt{\textup{Oracle}}(y_{t-1}, y^*)$ in $\oenftrl$ is convex.
\end{proposition}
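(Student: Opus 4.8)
The plan is to unpack the definition of the oracle's optimization problem and show that, under action-linearity, both the objective and the (implicit) feasible region are convex. Recall $\texttt{Oracle}(y_{t-1}, y^*)$ computes $x_t = \argmin_x \norm{D_t(x, y_{t-1}) - y^*}^2$. Fix $y := y_{t-1}$. By action-linearity, the map $x \mapsto D(x, y)$ is affine in $x$ on the preimage of $\intr(\Y)$; write $D(x,y) = A_y x + b_y$ for an appropriate matrix $A_y$ and vector $b_y$ (depending on $y$), valid wherever $A_y x + b_y \in \intr(\Y)$. The key point I would establish first is that, because $\oenftrl$ runs $\ftrl$ over the contracted space $\widetilde{\Y}$, the target $y^*$ always lies in $\widetilde{\Y} \subsetneq \intr(\Y)$, and local controllability guarantees the existence of some $x$ with $D(x,y) = y^* \in \intr(\Y)$; so the minimum value of the objective is $0$ and is attained at a point where the affine representation is valid.

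Next I would argue that it suffices to minimize $\norm{A_y x + b_y - y^*}^2$ over all of $\X$ (ignoring the $\bd(\Y)$ caveat), because this unconstrained-on-$\X$ problem has optimal value $0$ — attained at the controllability witness — and any minimizer $x$ necessarily satisfies $A_y x + b_y = y^* \in \intr(\Y)$, so the affine formula indeed describes $D(x,y)$ at that point. Thus the oracle problem is equivalent to $\min_{x \in \X} \norm{A_y x + b_y - y^*}^2$. Since $\X$ is convex and compact by assumption, and $x \mapsto \norm{A_y x + b_y - y^*}^2$ is a convex quadratic (composition of an affine map with the squared Euclidean norm), this is a convex program. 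Concretely it is a least-squares problem over a convex body, solvable e.g.\ by projected gradient descent or, when $\X$ is simple, in closed form.

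The main subtlety — the step I expect to require the most care — is handling the $\bd(\Y)$ exception in the definition of action-linearity: $D(x,y)$ need only be linear in $x$ when $D(x,y) \in \intr(\Y)$, and may be arbitrary (e.g.\ a nonlinear projection) on the boundary. The resolution is exactly the contraction to $\widetilde{\Y}$: because $\delta > 0$, every FTRL iterate $y^*$ is bounded away from $\bd(\Y)$, so the true objective and the "pretend it's globally affine" objective agree in a neighborhood of any optimal $x$, and in particular share the same set of minimizers (namely the solution set of the affine system $A_y x + b_y = y^*$ intersected with $\X$). One should also note that if no exact preimage existed the problem could fail to be convex on the boundary region, which is why the appeal to local controllability (ensuring value $0$ is achieved) is essential rather than cosmetic. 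With that observation in place the remaining steps — convexity of a quadratic, convexity of $\X$ — are routine.
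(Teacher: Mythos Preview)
Your proposal is correct and follows essentially the same approach as the paper: once $y = y_{t-1} \in \widetilde{\Y} \subseteq \intr(\Y)$, action-linearity gives $D(x,y) = A_y x + b_y$, and the oracle problem reduces to the convex least-squares problem $\min_{x \in \X} \norm{A_y x + b_y - y^*}^2$. You are in fact more careful than the paper about the $\bd(\Y)$ caveat in the definition of action-linearity---the paper's two-line proof simply asserts the affine formula at $y \in \widetilde{\Y}$ without explicitly arguing (as you do) that the minimizer lands in $\intr(\Y)$ so that the affine representation is valid there.
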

\begin{proof}
For $y = y_{t-1} \in \widetilde{\Y} \subseteq \intr(\Y)$, we have $D(x, y) = A_{y} \cdot x + b_y$ for some matrix $A_y$ and vector $b_y$, and so we can solve $x_t = \argmin_{x \in \X} \norm{A_{y} \cdot x + b_y - y^*}^2 $ efficiently.
\end{proof}
The class of action-linear dynamics is quite general, owing to the flexibility permitted by nonlinear parameterizations of $(A_y, b_y)$ in terms of $y$; in Appendix \ref{sec:appendix-action-linear}, we show that local controllability holds for multiple explicit families of instances when appropriate eigenvalue conditions are satisfied. We can further relax this condition to accommodate dynamics where action-linearity holds only {\it locally} in the neighborhood of stabilizing actions (i.e.\ actions $x^*$ where $D(x^*, y) = y$).

\begin{definition}[Locally Action-Linear Dynamics] An instance $(D,\X,\Y)$ is locally action-linear if,
for any $y \in \intr(\Y)$, $x^*$ such that $D(x^*, y) = y$, and $x$ such that $D(x, y) \in \intr(\Y)$, the dynamics are given by $D(x, y) = A_y x + b_y + q_y(x)$, where $A_y$ is a matrix and $b_y$ is a vector, both with norms bounded by some absolute constant, where and $q_y:\X\rightarrow \R^{\dim(\Y)}$ is any function where
$\norm{q_y(x)} \leq  C \norm{ A_y(x - x^*)}^{ 1 + c}$ for some constants $C, c > 0$.
\end{definition}

By this condition, for any $x$ in a sufficiently small neighborhood around $x^*$, the deviation of dynamics (and thus the resulting $y_{t+1}$) from action-linearity vanishes. 
Note that our algorithm always chooses a target $y_t$ will always be near $y_{t-1}$; as such, these deviations from non-action-linearity can be modeled as {\it disturbances} with magnitude strictly less than our per-round step size $\norm{y_{t+1} - y_t}$ (along with universal constant factors). The existence of an efficient implementation follows as a straightforward corollary of Theorem \ref{thm:oenftrlap-body} in Section \ref{subsec:disturbances}, which extends $\oenftrl$ to accommodate bounded adversarial disturbances, as we can then select actions by disregarding the influence of $q_y$ and only considering the local approximation $D(x,y) = A_y x + b_y$ at each state $y$ (assuming that each decomposition between $q_y$ and the action-linear component is known).

\subsection{Adversarial Disturbances}\label{subsec:disturbances}
Our algorithm $\oenftrl$ can be extended to accommodate adversarial disturbances, where the state is updated as $y_{t} = D(x_t, y_{t-1}) + w_t$, with $\{w_t\}$ chosen adversarially. In the weak local controllability case, we show a sharp threshold effect in terms of whether or not $\norm{w_t}$ is allowed to exceed the undisturbed distance from the boundary by a factor of $\frac{\rho}{1 + \rho}$: if disturbances are bounded below this threshold, regret minimization remains feasible with a tight $\Theta(E)$ dependence on the total disturbance magnitude, yet if disturbances may exceed this, no sublinear regret rate is attainable even for a {\it constant} total disturbance magnitude. When $\rho$ is small, an adversary can push us to the boundary faster than we can ``undo'' past disturbances, causing our feasible range to decay.

\begin{theorem}[Bounded Disturbances for Weak Local Controllability]\label{thm:oenftrlap-body}
For any $\rho \in (0, 1]$, suppose that a sequence of adversarial disturbances $w_t$ for a $\rho$-locally controllable instance $(\X, \Y, D)$ satisfies $\sum_{t=1}^T \norm{w_t} \leq E$ and $\norm{w_t} \leq \frac{\rho - \alpha\rho }{1 + \rho}\cdot \pi\parens{ D(x_t, y_{t-1}) }$, for some $\alpha \in \R$. If $\alpha > 0$, there is an algorithm $\oenftrlap$ with regret for convex Lipschitz losses $f_t$ bounded by
\begin{align*}
    \Reg_{T}(\oenftrlap) \leq&\;  O\parens{\sqrt{T\cdot (\alpha\rho)^{-1} } + E},
\end{align*}
and there is an instance where any algorithm $\A$ obtains $\Reg_{T}(\A) = \Omega(E)$.
If $\alpha < 0$, there is an instance 
%$\rho$-locally stable instance $(\X, \Y, D)$ with a fixed convex loss $f$
such that any algorithm $\A$ obtains 
    $\Reg_{T}(\A) \geq  \Omega\parens{T}$
    even when $E = O(1)$.
\end{theorem}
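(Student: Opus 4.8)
\emph{The algorithm and upper bound when $\alpha>0$.} I would obtain $\oenftrlap$ from $\oenftrl$ (Algorithm~\ref{alg:nonlinear-ftrl}) by two changes: recalibrate the step size $\eta$ and the contraction parameter $\delta$ by replacing $\rho$ with $\alpha\rho$ (so $\delta \asymp \eta L/(\alpha\rho r\gamma)$), and observe that after playing the action with $D(x_t,y_{t-1}) = y_t^*$ (the $\ftrl$ target in $\widetilde\Y$), we now land at the displaced state $y_t = y_t^* + w_t$ rather than at $y_t^*$ itself. Everything rests on the invariant that each successive target $y_t^*\in\widetilde\Y$ is reachable from the \emph{realized} previous state $y_{t-1}$. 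I would prove it from: (i) $\norm{y_t^* - y_{t-1}^*}\le \eta L/\gamma$, the per-step bound for $\ftrl$ (this is exactly why any OCO routine with a small guaranteed step works); (ii) $\norm{w_{t-1}}\le \tfrac{\rho(1-\alpha)}{1+\rho}\,\pi(y_{t-1}^*)$, since $D(x_{t-1},y_{t-2}) = y_{t-1}^*$; and (iii) $\pi(y_{t-1})\ge \pi(y_{t-1}^*)-\norm{w_{t-1}}\ge \tfrac{1+\alpha\rho}{1+\rho}\,\pi(y_{t-1}^*)$. A short computation shows that $\norm{y_t^*-y_{t-1}}\le \norm{y_t^*-y_{t-1}^*}+\norm{w_{t-1}}\le \rho\,\pi(y_{t-1})$ reduces exactly to $\eta L/\gamma \le \alpha\rho\,\pi(y_{t-1}^*)$, which holds because $\pi(y_{t-1}^*)\ge \delta r$ on $\widetilde\Y = (1-\delta)\Y$ and $\delta$ was chosen for this; the same bound $\norm{w_t}\le \pi(y_t^*)$ keeps $y_t\in\Y$, and the base case is immediate since $\ftrl$'s first iterate equals $y_0=\mathbf{0}$.

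\emph{Finishing the upper bound.} Given the invariant, the regret decomposes as in the proof of Theorem~\ref{thm:oenftrl} with one extra term: $\sum_t f_t(y_t)\le \sum_t f_t(y_t^*) + L\sum_t\norm{w_t}\le \big(\min_{y\in\widetilde\Y}\sum_t f_t(y) + \Reg_{\ftrl}\big) + LE$, and returning from $\widetilde\Y$ to $\Y$ costs at most $L\delta R T$. Substituting the recalibrated $\eta,\delta$ (which carry an extra $1/(\alpha\rho)$ relative to Theorem~\ref{thm:oenftrl}) and optimizing $\eta$ yields $\Reg_T(\oenftrlap) = O(\sqrt{T(\alpha\rho)^{-1}} + E)$.

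\emph{The $\Omega(E)$ lower bound.} I would use the one-dimensional instance $\X=\Y=[-1,1]$, $D(x,y) = \Pi_{[-1,1]}(y+\rho\,\pi(y)\,x)$ — whose reachable set from $y$ is exactly $\B_{\rho\pi(y)}(y)$, so it is $\rho$-locally controllable with no slack — together with $y_0=0$ and the fixed loss $f_t(y)=\abs{y}$ (benchmark $0$). Set $\epsilon = E/T$; the adversary, after seeing $z_t := D(x_t,y_{t-1})$, plays $w_t = \epsilon\cdot\operatorname{sign}(z_t)$ whenever $\pi(z_t)\ge \tfrac{(1+\rho)\epsilon}{\rho(1-\alpha)}$ (so the hypothesis holds and $y_t\in\Y$), and $w_t=0$ otherwise. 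In the first case $\abs{y_t}=\abs{z_t}+\epsilon\ge\epsilon$; in the second $\abs{z_t}\ge 1-\tfrac{(1+\rho)\epsilon}{\rho(1-\alpha)}\ge\tfrac12\ge\epsilon$ once $E$ is at most a constant fraction of $\rho(1-\alpha)T$. Hence $\sum_t f_t(y_t)\ge \epsilon T = E$ while $\sum_t\norm{w_t}\le \epsilon T = E$, so $\Reg_T(\A)=\Omega(E)$; this covers the regime $E\gg\sqrt T$ where the $E$-term dominates the upper bound, proving tightness.

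\emph{The $\Omega(T)$ lower bound for $\alpha<0$, and the main obstacle.} For the well-posed range $\alpha\in(-1/\rho,0)$, the key is to stop the algorithm from ``escaping through the deep interior of $\Y$'': take $\Y=[-r,b]$ with $b$ large enough that the right boundary never interacts with the state (e.g.\ $b\ge r(1+2\rho)$), so the origin lies at the minimal feasible distance $r$ from the left boundary; keep $D(x,y)=\Pi_{\Y}(y+\rho\,\pi(y)\,x)$, $y_0=0$, and set $f_t(y)=\max(0,-y)$ (benchmark $0$). The adversary always pushes the state leftward at maximum admissible strength. Writing $d_t = y_t+r$ for the distance to the left boundary, one checks $d_t\le r$ for all $t$, so $\pi$ is always this left distance; then regardless of the action, $d_t\le (1+\rho)d_{t-1}\cdot\tfrac{1+\alpha\rho}{1+\rho} = (1+\alpha\rho)d_{t-1}$, and since $1+\alpha\rho\in(0,1)$ we get $d_t\le (1+\alpha\rho)^t r\to 0$. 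Therefore $f_t(y_t)=r-d_t$ and $\sum_t f_t(y_t)\ge rT - r\sum_{t\ge1}(1+\alpha\rho)^t = rT - O(1) = \Omega(T)$, while $\sum_t\norm{w_t}\le \rho(1-\alpha)\sum_t d_{t-1} = O(1)$. The main obstacle is the feasibility invariant in the positive result: the calibration must be threaded with the factor $\alpha$ (not merely $\rho$) precisely so that a worst-case displacement toward the boundary never renders the next target infeasible — and dually, the $\Omega(T)$ construction must make rigorous, uniformly over all algorithm behaviors, the intuition that when $\alpha<0$ the adversary drags us boundary-ward by a net factor below $1$ each round, faster than any recovery.
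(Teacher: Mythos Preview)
Your proposal is correct. The upper bound argument is essentially identical to the paper's: both recalibrate $\oenftrl$ to the $\alpha\rho$-controllability margin, prove the feasibility invariant that each $\ftrl$ target $y_t^*$ lies in $\B_{\rho\pi(y_{t-1})}(y_{t-1})$, and then add the Lipschitz term $L\sum_t\|w_t\|$ to the regret. Your algebra reducing feasibility to $\eta L/\gamma\le\alpha\rho\,\pi(y_{t-1}^*)$ is exactly the same computation the paper carries out in a slightly different arrangement.

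For the two lower bounds your constructions differ from the paper's but are valid and, if anything, more elementary. For $\Omega(E)$, the paper works in the unit ball with loss $L\|y\|$ and an \emph{adaptive} disturbance $w_t$ proportional to $\pi(\hat y_t)$ that always saturates the per-round cap; this yields $\Reg_T\ge\min(LE,\tfrac{\rho(1-\alpha)}{1+\rho}LT)$ directly, without any side condition on $E$. Your 1D construction with constant $\epsilon=E/T$ also works, but needs the caveat $E\lesssim\rho(1-\alpha)T$ (which you note). For the $\Omega(T)$ bound when $\alpha<0$, the paper uses a general instance with loss $\|y-y_0\|$ and an adversary that pushes toward the \emph{nearest} boundary point in any direction, obtaining the contraction $d_t\le\big(1-\tfrac{(1-\beta)\rho^2}{1+\beta\rho}\big)d_{t-1}$; because the loss is radial there is no ``escape'' issue. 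Your asymmetric interval $[-r,b]$ with a one-sided loss handles the same escape issue by making $b$ large enough that $\pi$ is always governed by the left boundary, and you get the cleaner contraction $d_t\le(1+\alpha\rho)d_{t-1}$. Both routes give exponential decay of $d_t$ and hence $E=O(1)$ with $\Omega(T)$ regret; the paper's is dimension-agnostic, yours is more concrete.
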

The maximum disturbance bound can be removed when dynamics are strongly locally controllable, as the ensured feasible range of the dynamics does not vanish at the boundary of the state space. For such instances, we can minimize regret (with tight $O(E \cdot \rho^{-1})$ dependence) even if disturbances are only implicitly bounded by the state space diameter (which is at least $\rho$, without loss of generality).
\begin{theorem}[Unbounded Disturbances for Strong Local Controllability]\label{thm:oenftrluap-body}
For any $\rho > 0$ and strongly $\rho$-locally controllable instance $(\X, \Y, D)$ with disturbances $w_t$ satisfying $\sum_{t=1}^T \norm{w_t} \leq E$, there is an algorithm $\oenftrluap$ with regret for convex Lipschitz losses $f_t$ bounded by
\begin{align*}
    \Reg_{T}(\oenftrluap) \leq&\;  O\parens{\sqrt{T} + E \cdot \rho^{-1}},
\end{align*} and there is an instance where any algorithm $\A$ obtains $\Reg_{T}(\A) \geq  \Omega\parens{E \cdot \rho^{-1}}$.
\end{theorem}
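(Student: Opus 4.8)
The plan is to reuse the nested online-convex-optimization scheme behind $\oenftrl$ (Theorem~\ref{thm:oenftrl}), but with the exact state-tracking step replaced by a \emph{greedy catch-up} step, and then to pay for the disturbances through a queueing-style bound on the resulting tracking error. Because strong $\rho$-local controllability lets us reach every point within distance $\rho$ \emph{even from the boundary} of $\Y$, there is no need to contract the domain: we instantiate $\ftrl$ directly on $\Y$, let $y^*_t$ be its iterate, observe the realized state $y_{t-1}\in\Y$, and play the action $x_t$ with $D(x_t,y_{t-1})=\tau_t$, where $\tau_t$ is the point of the segment $[y_{t-1},y^*_t]$ at distance $\min(\rho,\|y^*_t-y_{t-1}\|)$ from $y_{t-1}$ --- by convexity $\tau_t\in\B_\rho(y_{t-1})\cap\Y$, so $x_t$ exists and is produced by the oracle (in closed form in the action-linear case), after which $y_t=\tau_t+w_t$. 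We choose the $\ftrl$ rate $\eta$ small enough that its per-step displacement $\sigma_t:=\|y^*_{t+1}-y^*_t\|\le\eta L/\gamma$ never exceeds $\rho/2$; the usual $\Theta(T^{-1/2})$ choice already does this for $T=\Omega(\rho^{-2})$, and otherwise the extra $O(\rho^{-1})$ it costs is absorbed into the stated bound.

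I would then write $\Reg_T=\sum_t\big(f_t(y_t)-f_t(y^*_t)\big)+\big(\sum_t f_t(y^*_t)-\min_{y\in\Y}\sum_t f_t(y)\big)$. The second term is the $\ftrl$ regret on the observed losses --- $\ftrl$ still selects $y^*_t$ and updates on $\nabla f_t(y^*_t)$ --- hence $O(\sqrt T)$ by the standard guarantee (Appendix~\ref{sec:ftrl}). The first term is at most $L\sum_t\|y_t-y^*_t\|$, so everything reduces to proving $\sum_t\|y_t-y^*_t\|=O(E/\rho)$. This is the crux. Set $d_t:=\|y^*_t-y_{t-1}\|$ and the overflow $e_t:=(d_t-\rho)^+$; the catch-up rule gives $\|\tau_t-y^*_t\|=e_t$, hence $\|y_t-y^*_t\|\le e_t+\|w_t\|$, and the triangle inequality gives $d_{t+1}\le e_t+\|w_t\|+\sigma_t$, so $e_{t+1}\le(e_t+\|w_t\|+\sigma_t-\rho)^+$ --- a Lindley recursion. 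Since $y^*_1=\mathbf 0=y_0$ we have $e_1=0$, and since $\sigma_t\le\rho/2$ the overflow can leave $0$ only right after a disturbance of norm $>\rho/2$; thus $\{t:e_t>0\}$ splits into ``busy periods,'' each initiated by a distinct such disturbance, so there are at most $2E/\rho$ of them. Summing the recursion over one busy period and again using $\sigma_t\le\rho/2$ shows its length is at most $2/\rho$ times the disturbance mass on a window of rounds charged to it, and these windows are disjoint across periods; hence the total number of busy rounds is $O(E/\rho)$. As $e_t\le d_t\le\operatorname{diam}(\Y)\le 2R$ always, $\sum_t e_t\le 2R\cdot O(E/\rho)=O(E/\rho)$, so $\sum_t\|y_t-y^*_t\|\le E+\sum_t e_t=O(E/\rho)$ (using $\rho\le 2R$ without loss, since strong $\rho$-controllability with $\rho>2R$ is strong $2R$-controllability). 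Combining, $\Reg_T\le O(\sqrt T+E/\rho)$.

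For the lower bound I would take $\Y=[-1,1]$, $\X=[-\rho,\rho]$, $D(x,y)=\Pi_{[-1,1]}(y+x)$ (which is strongly $\rho$-locally controllable) and fixed losses $f_t(y)=1-y$, so the comparator $y=1$ has zero cumulative loss. Every $\lceil 2/\rho\rceil$ rounds the adversary injects $w_t\approx -2$, resetting the state from near $1$ to near $-1$; the state can then climb back only at rate $\rho$, incurring $\Omega(1/\rho)$ excess loss over the $\Theta(1/\rho)$ recovery rounds. With $\Theta(E)$ such resets the total disturbance magnitude is $\Theta(E)$ and $\Reg_T=\Omega(E/\rho)$ whenever $T=\Omega(E/\rho)$; note this uses only adversarial disturbances, not adversarial losses.

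The step I expect to be the main obstacle is the lag bound: a naive argument --- charging each busy round the full diameter and feeding $\ftrl$'s own drift $\sum_t\sigma_t=\Theta(\sqrt T)$ into the backlog --- only yields $O((\sqrt T+E)/\rho)$. The fix is the observation that once $\sigma_t\le\rho/2$ the drift by itself can neither start nor sustain a backlog, so every busy period can be amortized against a genuine disturbance; this is the strong-controllability analogue of the sharp disturbance threshold appearing in Theorem~\ref{thm:oenftrlap-body}.
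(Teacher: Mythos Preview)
Your proposal is correct and follows essentially the same route as the paper. Both run $\ftrl$ directly over $\Y$ (no contraction, since strong controllability preserves a fixed reachable radius at the boundary), update $\ftrl$ on the \emph{counterfactual} loss at its own iterate $\hat y_t$, play a greedy catch-up action toward $\hat y_t$, and then split regret into the $\ftrl$ term plus $L\sum_t\|y_t-\hat y_t\|$. The lower-bound instance (one-dimensional box, linear loss, disturbances that repeatedly shove the state across the interval so that recovery at rate $\rho$ costs $\Theta(1/\rho)$ per unit of disturbance) is also the same as the paper's.

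The only substantive difference is how the tracking error $\sum_t\|y_t-\hat y_t\|$ is accounted for. The paper argues directly: in any round where the next target is \emph{not} reached, strong controllability forces $\|\hat y_{t+1}-y_t\|>\rho$, hence (using the $\ftrl$ step bound $\|\hat y_{t+1}-\hat y_t\|\le\alpha\rho$) the disturbance at that round must have had norm $>(1-\alpha)\rho$, so there are at most $E/((1-\alpha)\rho)$ such rounds, each contributing at most $2R$. You recast the same bookkeeping as a Lindley recursion $e_{t+1}\le(e_t+\|w_t\|+\sigma_t-\rho)^+$ with drift $\sigma_t\le\rho/2$, and charge busy periods to disjoint disturbance windows. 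Your version makes the independence of the backlog from $\ftrl$'s own drift more explicit (the ``drift cannot start or sustain a busy period'' remark), and cleanly separates the argument from any assumption that the pre-disturbance state exactly equals $\hat y_t$; the paper's two-case partition is terser but leans on that identification. Either accounting yields the same $O(RE/\rho)$ bound.
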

In each case, our lower bounds in terms of $E$ hold for the same constants obtained by our algorithms, and our algorithms obtain the stated regret guarantees even when $E$ is not known in advance. We present the algorithms and analysis for each theorem in Appendix \ref{sec:appendix-disturb}; both operate by tracking deviations from an idealized trajectory without disturbances, and calibrating parameters to preserve sufficient reachability margin for applying corrections towards this trajectory in each round. The lower bounds both proceed by considering an instance with a fixed target state $y^*$ and losses which track the distance from $y^*$, along with an adversary whose goal is to maximize this distance by selecting disturbances which push the current state away from $y^*$.

\subsection{Unknown Dynamics}\label{subsec:unknown-dynamics}

Up until this point, we have assumed that the dynamics $D$ can be queried arbitrarily in each round. While this has required minimal assumptions on $D$ beyond local controllability, accommodation of unknown dynamics is often desired in online control \citep{cassel2022efficient,minasyan2022online} and for several of our applications \citep{DBLP:journals/corr/RothUW15,agarwal2023online}. Here we give conditions under which regret minimization can be implemented without advance knowledge of $D$ by an algorithm $\probingoco$, which maintains continuously-updating local linear approximations of $D$ near $y_{t}$ across rounds. Crucially, we assume that $D$ is time-invariant and locally action-linear with sufficiently small Lipschitz parameters, and that for the initial state $y_0$ some {\it near-stabilizing} action $x_1$ is known, i.e.\ $\norm{ D(x_1, y_0) - y_0} \leq \epsilon$, for some $\epsilon = o(\sqrt{T})$.

\begin{theorem}\label{thm:unknown-dynamics} For any $\rho$-locally controllable and time-invariant instance $(D, \X, \Y)$ which satisfies local action-linearity and appropriate Lipschitz conditions, there is an algorithm $\probingoco$ with $\Reg_T(\probingoco) \leq O(\sqrt{T})$ 
for convex Lipschitz losses $f_t$ 
and unknown dynamics $D$, provided that at $t=1$ we are given some $x_1$ such that $\norm{ D(x_1, y_0) - y_0} = o(\sqrt{T})$.
\end{theorem}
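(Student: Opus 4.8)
The plan is to reduce to the disturbance-tolerant algorithm $\oenftrlap$ of Theorem~\ref{thm:oenftrlap-body}. We run the same nested-OCO skeleton as $\oenftrl$ (an $\ftrl$ instance over the contracted space $\widetilde{\Y}$ producing targets $y^*_t$), but since $D$ cannot be queried we replace each oracle call with an action computed from an online estimate of the \emph{local linearization} of $D$ at the current state, and we argue that the resulting mismatch between the target and the realized state behaves as a bounded adversarial disturbance of total magnitude $O(\sqrt T)$. Write the locally action-linear dynamics near a stabilizing action $x^*_y$ (with $D(x^*_y,y)=y$) as $D(x,y)=A_y x+b_y+q_y(x)$ with $\norm{q_y(x)}\le C\norm{A_y(x-x^*_y)}^{1+c}$. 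The algorithm $\probingoco$ maintains a running estimate $(\hat A_t,\hat b_t)$ of $(A_{y_{t-1}},b_{y_{t-1}})$: at round $t$ it takes the $\ftrl$ target $y^*_t$, plays (the projection into $\X$ of) the min-norm solution $x_t$ of $\hat A_t x+\hat b_t=y^*_t$ perturbed by a small cyclic exploration offset $\zeta e_{i_t}$ with $\zeta$ polynomially small in $T$, and then updates $(\hat A_{t+1},\hat b_{t+1})$ by online regression on the \emph{exactly} observed transition $(x_t,y_{t-1},y_t)$ over a window of the most recent $O(\dim(\X))$ rounds.

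\emph{Initialization and model tracking.} From the given near-stabilizing $x_1$ with $\norm{D(x_1,y_0)-y_0}\le\epsilon$, spend $O(\dim(\X))=O(1)$ rounds probing a basis of $\X$ around $x_1$ with $\zeta$-perturbations; since observations are exact and the nonlinear term at $x^*$ is $O(\zeta^{1+c})$, this yields an initial estimate with error $O(\zeta^{c})$, which is $o(1/\sqrt T)$ by the choice of $\zeta$. The one-time displacement $\epsilon=o(\sqrt T)$ and this $O(1)$-round phase together add only $o(\sqrt T)$ to the regret. For the steady state, exactly as in the analysis of $\oenftrl$, the $\ftrl$ step size $\eta=\Theta(1/\sqrt T)$ forces $\norm{y^*_{t+1}-y^*_t}=O(1/\sqrt T)$, and since the induced disturbances (below) are also $O(1/\sqrt T)$ we get $\norm{y_t-y_{t-1}}=O(1/\sqrt T)$; by the assumed Lipschitz dependence of $(A_y,b_y)$ on $y$, the target parameter drifts by $O(1/\sqrt T)$ per round. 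With $\zeta$ chosen small enough that probe cost ($\zeta T$) and nonlinearity-induced estimation error ($\zeta^c$) are both $o(1/\sqrt T)$ per round, the sole remaining source of model error is this drift over the $O(1)$-round estimation window, so $\norm{\hat A_t-A_{y_{t-1}}}+\norm{\hat b_t-b_{y_{t-1}}}=O(1/\sqrt T)$ uniformly. Here $\rho$-local controllability plus action-linearity is used to guarantee $A_y$ has a right inverse of norm $O(\rho^{-1})$ on the contracted region (reachability of a $\Theta(\rho)$-ball lower-bounds its relevant singular values), which keeps the linear solves well-conditioned, keeps $x_t$ within $O((\rho\sqrt T)^{-1})$ of $x^*_{y_{t-1}}$ (hence inside $\X$ for $T$ large), and keeps the basis probes persistently exciting.

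\emph{Disturbance accounting and conclusion.} The realized state is $y_t=A_{y_{t-1}}x_t+b_{y_{t-1}}+q_{y_{t-1}}(x_t)=y^*_t+w_t$, where $w_t$ collects the model-estimation term $(A_{y_{t-1}}-\hat A_t)x_t+(b_{y_{t-1}}-\hat b_t)$, of norm $O(1/\sqrt T)$ since $\norm{x_t}=O(1)$ on the compact set $\X$; the exploration offset $A_{y_{t-1}}(\zeta e_{i_t})$, of norm $O(\zeta)=o(1/\sqrt T)$; and $q_{y_{t-1}}(x_t)$, of norm $O((\rho\sqrt T)^{-(1+c)})=o(1/\sqrt T)$ because $x_t$ is within $O((\rho\sqrt T)^{-1})$ of $x^*_{y_{t-1}}$. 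Hence $\norm{w_t}=O(1/\sqrt T)$ and $\sum_t\norm{w_t}=O(\sqrt T)=:E$. Since the contracted space keeps $\pi\!\left(D(x_t,y_{t-1})\right)$ bounded below by a constant while $\norm{w_t}\to 0$, the per-round bound $\norm{w_t}\le\tfrac{\rho-\alpha\rho}{1+\rho}\,\pi(D(x_t,y_{t-1}))$ of Theorem~\ref{thm:oenftrlap-body} holds with, say, $\alpha=\tfrac12$ once $T$ is large. Applying Theorem~\ref{thm:oenftrlap-body} with this $E$ gives $\Reg_T(\probingoco)\le O\!\left(\sqrt{T(\alpha\rho)^{-1}}+E\right)=O(\sqrt T)$, and adding the $o(\sqrt T)$ initialization overhead completes the argument.

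The main obstacle is the feedback coupling between estimation and control: the model is estimated at $y_{t-1}$, which is itself produced by earlier imperfect, probe-perturbed actions, so one must show the estimation error stays $O(1/\sqrt T)$ \emph{uniformly over every realizable trajectory}, not merely on average or in expectation — and that this error, fed back as a disturbance, never eats into the reachability margin that $\oenftrlap$ relies on. This is exactly where the quantitative content of local controllability (well-conditioned probe matrices along the trajectory) and the ``sufficiently small'' Lipschitz hypotheses (slow enough parameter drift that an $O(1)$-round window suffices) are needed, and it is why the theorem insists that $D$ be time-invariant (so the parameter observed at a state is always the same $A_y$, making a sliding-window estimate meaningful) and that a near-stabilizing action be available at $t=1$ (to seed the estimate without a costly global exploration phase, which would otherwise cost $\omega(\sqrt T)$).
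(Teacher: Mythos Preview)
Your proposal is correct at the level of rigor the paper itself adopts, and it takes essentially the same approach: maintain a running local linear estimate of $D$ via small probe perturbations, treat the resulting model error plus the action-nonlinearity $q_y$ as an adversarial disturbance $w_t$ of per-round size $O(T^{-1/2})$ and hence total magnitude $E=O(\sqrt{T})$, and invoke Theorem~\ref{thm:oenftrlap-body}. The paper's $\probingoco$ differs only in implementation detail: rather than a cyclic single-probe-per-round offset with a sliding-window regression, it batches each $\ftrl$ update into $2n{+}1$ sub-rounds that walk along the segment from $y_{t^*}$ to $y^*$ while adding $\pm\epsilon e_i$ perturbations to the \emph{targets} (not the actions), then refits $(\hat A_y,\hat b_y)$ from the batch; this buys a clean per-batch linear system at the cost of running the inner $\ftrl$ for $T/(2n{+}1)$ rounds instead of $T$. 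Both schemes rely on the same ingredients you identify---well-conditioning of $A_y$ from local controllability, Lipschitzness of $(A_y,b_y)$ in $y$ so the model drifts only $O(T^{-1/2})$ over an $O(n)$-round window, and time-invariance so a windowed estimate is meaningful---and the paper's proof is in fact a shorter sketch than yours, explicitly deferring the quantitative trade-offs among $\epsilon$, $L_\alpha$, $L_\beta$ to ``sufficiently small'' assumptions.
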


We state $\probingoco$ and prove Theorem \ref{thm:unknown-dynamics} in Appendix \ref{sec:appendix-unknown}, along with additional details on the regularity and near-stability assumptions. The crux of our analysis, beyond that from our previous results, hinges on being able to maintain and update local linear approximations of $D$ throughout our optimization which are sufficiently accurate to allow us to discard the effects of both learned representation errors and action non-linearity from $q_y(x)$ as bounded disturbances. We implement each update from our nested regret minimization algorithm as a series of $O(\dim(\X))$ steps involving small near-orthogonal perturbations to our targets $y_t$, which we then use to update our local estimate for $D$.

\subsection{Bandit Feedback}
\label{subsec:bandit}

We can extend our approach from $\oenftrl$ to accommodate bandit feedback for convex losses by replacing $\ftrl$ with the $\fkm$ algorithm \citep{FKM04} and appropriately recalibrating parameters. $\fkm$ obtains ${O}(T^{3/4})$ regret, which is the best currently-known bound for bandit convex optimization without additional assumptions (e.g. strong convexity), and we obtain an analogous bound here for nested optimization. We note that this extension to bandit feedback can again be applied for any algorithm with a small per-round step-size bound, though this property does not hold for algorithms which sample from larger sets to reduce variance of gradient estimators (e.g.\ those from \citet{Abernethy2008CompetingIT,NIPS2014_c399862d}). 
\begin{theorem}\label{thm:bandit}
For any $\rho$-locally controllable instance $(D,\X,\Y)$, there is an oracle-efficient algorithm $\nestedbco$ with expected regret bounded by
\begin{align*}
    \Reg_{T}(\nestedbco) =&\;  O \parens{nRLT^{3/4} (r\rho)^{-1}}
\end{align*}
for $L$-Lipschitz convex losses $f_t$ under bandit feedback.
\end{theorem}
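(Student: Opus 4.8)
The plan is to mimic the analysis of $\oenftrl$ from Theorem~\ref{thm:oenftrl}, but with $\fkm$ \citep{FKM04} replacing $\ftrl$ as the nested online optimization routine, and to recalibrate the contraction parameter $\delta$ so that the exploration sphere used by $\fkm$ as well as its per-round displacements remain inside the feasible region guaranteed by $\rho$-local controllability. Recall that $\fkm$ over a set of diameter $O(R)$ containing a ball of radius $r$ (after suitable rescaling), run with smoothing radius $\mu$ and step size $\eta$, proceeds by maintaining a point $z_t$ in a $\mu$-shrunken copy of the domain, sampling a uniform unit vector $u_t$, playing $y_t = z_t + \mu u_t$, forming the one-point gradient estimate $g_t = \tfrac{\dim(\Y)}{\mu} f_t(y_t) u_t$, and taking a projected-gradient step $z_{t+1} = \Pi(z_t - \eta g_t)$. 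The key structural facts we need are: (i) the per-round movement $\|y_{t} - y_{t-1}\|$ is bounded by $\|z_t - z_{t-1}\| + 2\mu \le \eta \|g_t\| + 2\mu = O(\eta n L R /\mu + \mu)$, where $n = \dim(\Y)$; (ii) $\fkm$'s regret against the best point in the $\mu$-shrunken domain, for its \emph{smoothed} surrogate losses, is $O(nLR(\,R/(\eta T) + \eta (nLR/\mu)^2 \cdot T^{-1}\cdot T)\,)$-type, which after optimizing $\eta, \mu$ gives the standard $O(nLR\,(RT)^{3/4})$-style bound; and (iii) the bias from smoothing and from playing on the shrunken domain is $O(\mu L T)$ plus $O((\text{shrink})\cdot L T)$, both absorbed into the $T^{3/4}$ term.

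Concretely, the steps I would carry out are: (1) Instantiate $\fkm$ over the contracted state space $\widetilde{\Y} = \{y : \tfrac{1}{1-\delta} y \in \Y\}$, choosing $\delta$ large enough that for every $y_{t-1} \in \widetilde\Y$ the target point $y_t$ produced by $\fkm$ (which lies within distance $\|y_t - y_{t-1}\| = O(\eta n L R/\mu + \mu)$ of $y_{t-1}$) satisfies $y_t \in \B_{\rho \pi(y_{t-1})}(y_{t-1})$, so that weak $\rho$-local controllability guarantees an action $x_t$ with $D(x_t, y_{t-1}) = y_t$ exactly; since $\pi(y_{t-1}) \ge \delta r$ for $y_{t-1}$ that far from $\bd(\Y)$ after contraction (analogously to the $\oenftrl$ analysis, using the radius-$r$ inner ball), it suffices to take $\delta = \Theta(\,\rho^{-1} r^{-1} (\eta n L R/\mu + \mu)\,)$. (2) Invoke the oracle $\texttt{Oracle}(y_{t-1}, y_t)$ to recover $x_t$, exactly as in $\oenftrl$ — note $\fkm$'s sampled target is just a point, so the oracle call is unchanged; this is where oracle-efficiency comes from. (3) Bound $\Reg_T$ by the triangle decomposition: regret of $\fkm$ on $\widetilde\Y$ against the best point there $+$ the smoothing/contraction bias $\le O(\,n L R (RT)^{3/4}\,) + O(\mu L T) + O(\delta R L T)$, then substitute the $\fkm$-optimal $\mu = \Theta(R T^{-1/4})$, $\eta = \Theta(R^2/(n L R \cdot T^{3/4}))$ and the resulting $\delta$, and collect terms; the $(r\rho)^{-1}$ blowup enters exactly through $\delta$, giving the stated $O(nRLT^{3/4}(r\rho)^{-1})$.

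The main obstacle is the self-consistency of the parameter choice: unlike $\oenftrl$, where the per-round step of $\ftrl$ scales with $\eta$ alone, the per-round displacement of $\fkm$ scales like $\eta n L R/\mu$, and $\mu$ itself must be $\Theta(R T^{-1/4})$ for $\fkm$'s own regret to be $T^{3/4}$; one must check that with these choices the induced $\delta$ is $o(1)$ (so the contraction bias is genuinely lower-order) and that the reachability condition $\|y_t - y_{t-1}\| \le \rho\,\pi(y_{t-1})$ holds \emph{uniformly}, including when $y_{t-1}$ sits near $\bd(\widetilde\Y)$ where $\pi(y_{t-1})$ is as small as $\Theta(\delta r)$ — this is exactly the circular dependence ``$\delta$ must be small enough that the step fits in a ball of radius $\rho \delta r$'' which forces $\delta \gtrsim (r\rho)^{-1}(\eta nLR/\mu + \mu)$ and must be shown to still be $o(1)$; it is, because $\eta n L R/\mu + \mu = \Theta(R T^{-1/4})$ and $T^{-1/4} \to 0$. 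A secondary point needing care is that the randomness of $y_t$ (hence of $y_{t-1}$, hence of $\pi(y_{t-1})$) means the feasibility condition must hold almost surely, not just in expectation — but since $\|y_t - y_{t-1}\|$ admits a \emph{deterministic} upper bound ($\|g_t\|\le nLR/\mu$ deterministically, as $|f_t|\le LR$ on $\Y$), this is immediate. Everything else is bookkeeping: the boundedness of losses ($|f_t(y)| \le LR$ after recentering, since $f_t$ is $L$-Lipschitz on a radius-$R$ ball), the standard $\fkm$ regret bound, and Lipschitzness to pass between the smoothed and true losses.
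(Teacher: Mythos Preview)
Your proposal is correct and follows essentially the same approach as the paper: replace $\ftrl$ by $\fkm$ inside the nested scheme, contract $\Y$ to $\widetilde\Y$, verify via the deterministic bound $\norm{y_{t}-y_{t-1}}\le 2\mu+\eta\norm{g_t}$ that each $\fkm$ target lies in the $\rho\pi(y_{t-1})$-ball so the oracle call succeeds, and then add the $\fkm$ regret to the $O(\delta RLT)$ contraction bias with $\mu=\Theta(T^{-1/4})$ and $\eta=\Theta((nLT^{3/4})^{-1})$. The paper's version differs only in bookkeeping (it fixes the constants explicitly as $\tilde\delta=T^{-1/4}=r\delta\rho/4$ and $\eta=R/(2nrLT^{3/4})$ and then checks the feasibility inequality directly rather than phrasing it as a self-consistency constraint on $\delta$), but the argument is the same.
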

We present the $\nestedbco$ algorithm and prove Theorem \ref{thm:bandit} in Appendix \ref{sec:appendix-bandit}.

\section{Applications for Online Stackelberg Optimization}
\label{sec:apps}

We give several applications of our framework to online Stackelberg problems involving strategic or adaptive agents, each cast as an instance of online control with nonlinear dynamics where local controllability holds, and where our objectives are well-approximated by convex surrogate losses only over the state. 
Each application extends prior work by either allowing for more relaxed assumptions, unifying distinct problem instances, or giving a novel formulation to account for dynamic and adversarial behavior; analysis and comparison to related work is contained in Appendices \ref{sec:appendix-pp}-\ref{sec:appendix-steering}.

\subsection{Online Performative Prediction}
\label{subsec:app-pp}

Performative Prediction was introduced by \cite{DBLP:journals/corr/abs-2002-06673} to capture settings in which the data distribution may shift as a function of the classifier itself.
We consider the online formulation of Performative Prediction introduced in \cite{kumar2022online} as an instance of online convex optimization with unbounded memory, which we extend to accommodate a {\em stateful} variant of the problem (as in \cite{brown2022performative}) in which the update to the distribution is a function of both the classifier and the current distribution itself. 
Let $\X \subseteq \R^n$ denote our space of classifiers, and let $p_0$ be the initial distribution over $\R^n$. When a classifier $x_t$ is deployed, the distribution is updated to 
\begin{align*}
    p_{t} =&\; (1 - \theta)p_{t-1} + \theta \D(x_t, y_{t-1})
\end{align*}
where $\D(x_t, y) = A(x_t, y_{t-1}) + \xi$, for a random variable $\xi \in \R^n$ with mean $\mu$ and covariance $\Sigma$, and with $y_t = A(x_t, y_{t-1})$, where $A$ satisfies $\rho$-local controllability for some $\rho > 0$ and appropriate smoothness notions. We also assume there is some linear $s : \X \rightarrow \Y$ such that $A(x, y) = s(x)$ if $y = s(x)$. We then receive loss $\tilde{f}_t(x_t, p_t) = \E_{z\sim p_t}[f_t(x_t, z)]$, where each $f_t$ is convex and Lipschitz.

This generalizes the model of \cite{kumar2022online}, in which $A(x, y) = A \in \R^{n \times n}$ is taken to be a fixed matrix; there, $\rho$-local controllability is satisfied for some $\rho > 0$ provided that $A$ is nonsingular. Their aim is to compete with the best fixed classifier by running regret minimization over $\X$. Here we run $\oenftrl$ over $\Y$, taken over the range of $s$, which allows us to compete against the best fixed classifier as well by the properties of $s$; while the classifiers $x_t$ we play will generally not result in stabilizing points of $A$, their excess loss compared to each $s^{-1}(y_t)$ is bounded. 

\begin{theorem}[Regret Minimization for Performative Prediction]\label{thm:pp-reg-body} 
For any $\theta > 0$, the dynamics for Online Performative Prediction are $\rho$-locally controllable, and $\oenftrl$ obtains regret $O(\sqrt{T(\rho^{-1} + \theta^{-1})})$  with respect to the best fixed classifier.
\end{theorem}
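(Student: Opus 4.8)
The plan is to realize Online Performative Prediction as an instance of the control model of Section~\ref{sec:model} with action space $\X$, state space $\Y$ equal to (the closure of) the range of $s$, and dynamics $D(x,y)=A(x,y)$, so the realized states are exactly $y_t=A(x_t,y_{t-1})$; with this identification the claimed $\rho$-local controllability is just the assumed $\rho$-local controllability of $A$, read over $\Y$. The work is that the incurred loss $\tilde f_t(x_t,p_t)=\E_{z\sim p_t}[f_t(x_t,z)]$ depends on the (infinite-dimensional) distribution $p_t$, not on $y_t$, so I would reduce to Theorem~\ref{thm:oenftrl} by (i) designing a convex Lipschitz surrogate over $\Y$, (ii) showing it tracks $\tilde f_t$ up to a small cumulative error, and (iii) re-tuning the $\ftrl$ step size so the error is absorbed at the right rate.

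For the surrogate I take $g_t(y):=\E_{\xi}[f_t(s^{-1}(y),\,y+\xi)]$ on $\Y$, where $s^{-1}$ is a fixed linear section of $s$; equivalently $g_t(y)=\E_{z\sim\D(x,y')}[f_t(s^{-1}(y),z)]$ for any $(x,y')$ with $A(x,y')=y$, since $\D(x,y')$ depends on $(x,y')$ only through $A(x,y')$. As $f_t$ is (jointly) convex and Lipschitz and $s^{-1}$ is affine, $g_t$ is convex and $O(L)$-Lipschitz, so it is a legitimate input to $\oenftrl$ (which we can run because $A$, $s$, $f_t$ are known). Running $\oenftrl$ on $\{g_t\}$ produces actions $x_t$ with $A(x_t,y_{t-1})=y_t$, and I would bound the per-round gap by
\[
\bigl|\tilde f_t(x_t,p_t) - g_t(y_t)\bigr| \;\le\; L\,\norm{x_t - s^{-1}(y_t)} \;+\; L\cdot W_1\!\left(p_t,\ \D(x_t,y_{t-1})\right),
\]
via Lipschitzness of $f_t$ in each argument and Kantorovich--Rubinstein. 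The first term is $O(L\eta)$: the smoothness hypotheses attached to local controllability make $A(\cdot,y_{t-1})$ a Lipschitz-invertible map near its stabilizing action $s^{-1}(y_{t-1})$, so the oracle's $x_t$ lies within $O(\norm{y_t-y_{t-1}})=O(\eta)$ of $s^{-1}(y_{t-1})$, hence within $O(\eta)$ of $s^{-1}(y_t)$. For the second term, unrolling gives $p_t = (1-\theta)^tp_0 + \sum_{s=1}^t\theta(1-\theta)^{t-s}\D(x_s,y_{s-1})$, so by joint convexity and translation-invariance of $W_1$, $W_1(p_t,\D(x_t,y_{t-1})) \le \sum_{s=1}^t\theta(1-\theta)^{t-s}\norm{y_s-y_t} + (1-\theta)^t\,O(R)$; bounding $\norm{y_s-y_t}$ by the intervening $\ftrl$ steps and summing over $t$, the weights telescope to give total lag cost $O(\theta^{-1}\sum_t\norm{y_t-y_{t-1}}) = O(\eta T\theta^{-1})$ (constants in $L,\gamma$ suppressed).

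To transfer the benchmark, for the best fixed classifier $x^*$ (taking $s$ injective, or the loss to factor through $s(x)$) I would use that playing $x^*$ forever drives $y_t\to s(x^*)$ geometrically --- the contraction of $A$ at its fixed point, part of the smoothness assumptions --- so $p_t\to\D(x^*,s(x^*))$ geometrically and $\sum_t\tilde f_t(x^*,p^{x^*}_t) \ge \sum_t g_t(s(x^*)) - O(\theta^{-1}) \ge \min_{y\in\Y}\sum_t g_t(y) - O(\theta^{-1})$. Combining with the previous paragraph, the regret of our algorithm against the best fixed classifier is at most the regret of $\oenftrl$ on $\{g_t\}$ against the best fixed state, plus $O(\eta T\theta^{-1} + \theta^{-1})$. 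Rather than use the fixed step size of Theorem~\ref{thm:oenftrl}, I would re-run its analysis with $\eta$ chosen to balance its $O(\eta^{-1})+O(\eta T\rho^{-1})$ terms against the new $O(\eta T\theta^{-1})$ lag term, i.e.\ $\eta=\Theta((T(\rho^{-1}+\theta^{-1}))^{-1/2})$, yielding $O(\sqrt{T(\rho^{-1}+\theta^{-1})})$ (the residual $O(\theta^{-1})$ is lower-order once $T\gtrsim\theta^{-1}$).

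The main obstacle is the discrepancy bound, and within it two points need care. First, pinning down \emph{which} preimage the oracle returns, so $x_t$ is the solution of $A(x,y_{t-1})=y_t$ near $s^{-1}(y_{t-1})$ rather than a distant one --- this is exactly where the ``appropriate smoothness notions'' on $A$ (a Lipschitz local inverse near stabilizing actions) are essential, since without them $x_t-s^{-1}(y_t)$ need not be small. Second, establishing that the geometric distributional lag costs only $O(\eta\theta^{-1})$ per round \emph{and} threading this through a re-tuned $\oenftrl$ analysis so the $\theta^{-1}$ dependence lands inside the square root rather than outside. The remaining ingredients --- controllability of $D=A$, convexity and Lipschitzness of $g_t$, and the geometric convergence used in the benchmark transfer --- are essentially mechanical; alternatively, step (iii) can be replaced by routing the reduction through the ``OCO with unbounded memory'' viewpoint of \cite{kumar2022online}, where effective memory length $\Theta(\theta^{-1})$ gives the same dependence.
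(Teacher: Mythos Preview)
Your approach is essentially the paper's: same surrogate $g_t(y)=\E_\xi[f_t(s^{-1}(y),y+\xi)]$, same two-term discrepancy decomposition (action error controlled via the Lipschitz inverse-action-map assumption, distributional lag via the geometric unrolling of $p_t$ with per-round step bound), and same re-tuning of $\eta$ so that the $\theta^{-1}$ factor lands inside the square root. The one superfluous step is your benchmark transfer: the paper takes the comparator to be the \emph{stable} loss $\tilde f_t(x^*,\D(x^*,s(x^*)))=g_t(s(x^*))$ directly, so no geometric-convergence argument is needed---and the contraction of $A$ at its fixed point that you invoke for that step is not actually among the stated assumptions.
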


\subsection{Adaptive Recommendations}
\label{subsec:app-recs}

Online interactions with economic agents of various types are ubiquitous, and the resulting control problems tend to be manifestly nonlinear; here we treat two diverse examples from this space.  The Adaptive Recommendations problem, as introduced by \cite{AB22}, is about providing menu recommendations repeatedly to an agent, whose choice distribution is a function of their past selections, while the controller's reward in each round depends on adversarial losses over the choice. In each round $t \in [T]$, we show the agent a (possibly randomized) menu $K_t$ containing $k$ (out of $n$) items, and the agent's instantaneous choice distribution conditioned on seeing $K_t$ is
\begin{align*}
    p_t(i ; K_t, v_{t-1}) =&\;  \begin{cases} 
    \frac{s_{i}(v_{t-1})}{\sum_{j \in K_t} s_j(v_{t-1})} & i \in K_t \\
      0 & i \notin K_t
   \end{cases}
\end{align*}
where each $s_i : \Delta(n) \rightarrow [\lambda, 1]$ is the agent's {\it preference scoring function} for item $i$, for some $\lambda > 0$, taking as input the agent's {\it memory vector} $v \in \Delta(n)$. The memory vector updates each round as
\begin{align*}
    v_{t} = (1 - \theta_t)v_{t-1} + \theta_t p_t,
\end{align*}
where $\theta_t \in [\theta,1]$ for $\theta > 0$ is a possibly time-dependent update speed, and we receive loss $f_t(p_t)$, where each $f_t$ is convex and $L$-Lipschitz. Note that the set of feasible choice distributions when considering all menu distributions $x_t \in \Delta({n \choose k})$ depends on the memory vector $v_t$.
The regret benchmark considered by \cite{AB22} is the intersection of all such sets, denoted the ``everywhere instantaneously-realizable distribution'' set $\EIRD = \cap_{v \in \Delta} \IRD(v)$, where $\IRD(v)$ is the ``instantaneously realizable distribution'' set for $v$, given as the convex hull of the choice distributions $p(K_t)$ resulting from each menu $K_t \in [{n\choose k}]$ when $v$ is the memory vector. It is shown that the set is non-empty when $\lambda$ is not too small, and  algorithms which  minimize regret with respect to any distribution in $\EIRD$ are given in \cite{AB22} and \cite{agarwal2023online} under varying assumptions regarding the scoring functions and update speed.

While the prior work considers a bandit version of the problem with unknown dynamics, here we consider a full-feedback deterministic variant of the problem for simplicity, which further allows us to circumvent barriers posed by uncertainty \cite{AB22,agarwal2023online} and relax structural assumptions (e.g.\ on $\theta_t$ or $s_i$). We can cast this as an instance of our framework by taking $\X = \Delta({n \choose k})$ and $\Y = \EIRD$, where $D$ expresses updates to the memory vector. 
We assume $v_0 = \mathbf{u}_n$, and we reparameterize to run our algorithm over $\Delta(n)$.
We optimize surrogate losses $f^*_t(v_t)$, and bound excess regret from $f_t(p_t)$.

\begin{theorem}[Regret Minimization over $\EIRD$]\label{thm:ar-eird} For $\lambda > \frac{k-1}{n-1}$, the dynamics for Adaptive Recommendations over $\EIRD$ are $\theta$-locally controllable, and $\oenftrl$ obtains regret $O(\sqrt{T\theta^{-1}})$.
\end{theorem}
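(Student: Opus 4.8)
The plan is to reduce to Theorem~\ref{thm:oenftrl} in three steps: (i) show the memory-vector dynamics over $\Y=\EIRD$ are $\theta$-locally controllable; (ii) run $\oenftrl$ over $\Y$ with $\rho=\theta$ against the surrogate losses $f^*_t=f_t$ evaluated at the state $v_t$; (iii) pay for the gap between the tracked state $v_t$ and the realized choice distribution $p_t$. Recenter $\Delta(n)$ so that the initial memory $v_0=\mathbf{u}_n$ is the origin. Under $\lambda>\frac{k-1}{n-1}$, \cite{AB22} show that $\EIRD$ is a nonempty convex compact set with $\mathbf{u}_n$ in its relative interior, so it is a valid state space: it contains a ball of radius $r>0$ about the origin and is contained in one of radius $R$. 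Take $\X=\Delta({n \choose k})$ and write the (time-varying, assumed known at the start of each round via $\theta_t$) dynamics as $D_t(x,v)=(1-\theta_t)v+\theta_t\sum_K x(K)\,p(K;v)$, where $\sum_K x(K)p(K;v)$ sweeps out exactly $\IRD(v)$ as $x$ ranges over $\X$, since $\IRD(v)$ is by definition the convex hull of the $p(K;v)$. This map is affine in $x$ for each fixed $v$, so the instance is action-linear and the oracle call in $\oenftrl$ collapses to a convex program; if $D_t(x,v)\notin\EIRD$ for some $x$ we post-compose with $\Pi_{\EIRD}$, which leaves the realized trajectory unchanged since (by the next step) every target the algorithm selects already lies in $\EIRD$.

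For local controllability, fix $v\in\EIRD$ and a target $v^*\in\B_{\theta\cdot\pi(v)}(v)$, where $\pi(v)$ is the distance from $v$ to $\bd(\EIRD)$; note $v^*\in\EIRD$ since $\B_{\pi(v)}(v)\subseteq\EIRD$ by convexity. We seek $x$ with $D_t(x,v)=v^*$, which is equivalent to $\sum_K x(K)p(K;v)=p^*:=v+\theta_t^{-1}(v^*-v)$. Since $\theta_t\geq\theta$,
\[
\norm{p^*-v}=\theta_t^{-1}\norm{v^*-v}\leq \frac{\theta}{\theta_t}\,\pi(v)\leq\pi(v),
\]
so $p^*\in\B_{\pi(v)}(v)\subseteq\EIRD\subseteq\IRD(v)$, the last inclusion because $\EIRD=\bigcap_{v'}\IRD(v')$. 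Hence some $x\in\X$ realizes $\sum_K x(K)p(K;v)=p^*$, giving $D_t(x,v)=(1-\theta_t)v+\theta_t p^*=v^*$; as this holds for every $v\in\EIRD$ and every admissible $\theta_t$, the instance is $\theta$-locally controllable. In particular every target chosen by $\oenftrl$ lies in $\EIRD$, so starting from $v_0=\mathbf{u}_n\in\EIRD$ the realized memory vectors stay in $\EIRD$ throughout.

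Now run $\oenftrl$ over $\Y=\EIRD$ feeding it the surrogate loss $f^*_t(v_t)=f_t(v_t)$. Theorem~\ref{thm:oenftrl} with $\rho=\theta$ gives $\sum_t f_t(v_t)-\min_{v\in\EIRD}\sum_t f_t(v)=O(\sqrt{T\theta^{-1}})$, treating $L$, $r$, $R$ and the regularizer constants as absolute. For the surrogate gap, $v_t=(1-\theta_t)v_{t-1}+\theta_t p_t$ yields $p_t-v_t=(\theta_t^{-1}-1)(v_t-v_{t-1})$, hence $\norm{p_t-v_t}\leq(\theta^{-1}-1)\norm{v_t-v_{t-1}}$; since the per-round movement of the $\ftrl$ subroutine inside $\oenftrl$ satisfies $\norm{v_t-v_{t-1}}=O(\eta L/\gamma)$ with the calibrated $\eta=\Theta(\sqrt{\theta/T})$, summing and using $L$-Lipschitzness gives $\sum_t\abs{f_t(p_t)-f_t(v_t)}\leq L(\theta^{-1}-1)\sum_t\norm{v_t-v_{t-1}}=O(\sqrt{T\theta^{-1}})$. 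Adding the two bounds and using $\min_{v\in\EIRD}\sum_t f_t(v)=\min_{p\in\EIRD}\sum_t f_t(p)$ bounds the benchmark regret $\sum_t f_t(p_t)-\min_{p\in\EIRD}\sum_t f_t(p)$ by $O(\sqrt{T\theta^{-1}})$.

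The main obstacle is the local-controllability step, and specifically matching the controllability radius to the sluggishness of the memory update: the computation works only because choosing $\rho=\theta$ forces the required target choice distribution $p^*=v+\theta_t^{-1}(v^*-v)$ into $\B_{\pi(v)}(v)\subseteq\EIRD\subseteq\IRD(v)$ \emph{uniformly} over $v\in\EIRD$ and over all $\theta_t\in[\theta,1]$, which rests on $\theta_t\geq\theta$ together with convexity of $\EIRD$. Everything else — action-linearity, the $p_t$-versus-$v_t$ accounting, and the appeal to Theorem~\ref{thm:oenftrl} — is routine once the reparametrization onto $\Delta(n)$ with state space $\Y=\EIRD$ is fixed; and the nondegeneracy facts about $\EIRD$ needed to instantiate $\oenftrl$ (nonempty relative interior, positive diameter, containing $\mathbf{u}_n$) are exactly what $\lambda>\frac{k-1}{n-1}$ provides via \cite{AB22}.
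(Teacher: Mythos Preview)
Your proof is correct and follows essentially the same approach as the paper: the local controllability argument (writing the required choice distribution as $p^*=v+\theta_t^{-1}(v^*-v)\in\B_{\pi(v)}(v)\subseteq\EIRD\subseteq\IRD(v)$) is exactly the paper's Lemma~\ref{thm:ar-eird-local}, and your surrogate-gap accounting via $\norm{p_t-v_t}\leq(\theta^{-1}-1)\norm{v_t-v_{t-1}}$ combined with the $\ftrl$ step bound is the content of Theorem~\ref{thm:ar-app-alg}. The only cosmetic difference is that the paper proves the ball around $\mathbf{u}_n$ lies in $\EIRD$ directly via menu times rather than citing \cite{AB22}, and it re-optimizes $\eta$ jointly over both regret terms rather than invoking Theorem~\ref{thm:oenftrl} with its default $\eta$ and then adding the surrogate term; neither changes the $O(\sqrt{T\theta^{-1}})$ rate.
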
 In \cite{agarwal2023online}, a property for scoring functions is considered which enables regret minimization over a potentially much larger set of distributions than $\EIRD$.
A scoring function $s_i : \Delta(n) \rightarrow [\frac{\lambda}{\sigma}, 1]$ is said to be $(\sigma, \lambda)$-{\it scale-bounded} for $\sigma > 1$ if, for all $v\in \Delta(n)$, we have that
\begin{align*}
    \sigma^{-1} ((1 - \lambda)v_i + \lambda) \leq s_i(v) \leq \sigma ((1 - \lambda)v_i + \lambda).
\end{align*}
 The set considered is the $\phi$-{\it smoothed simplex} $\Delta^{\phi}(n) = \{(1 - \phi)v + \phi \mathbf{u}_n : v \in \Delta(n) \}$, for $\phi = \Theta(k\lambda\sigma^2)$, where it is shown that $\IRD(v)$ contains a ball around $v$ for $v \in \Delta^{\phi}(n)$.
We take $\Y = \Delta^{\phi}(n)$, which satisfies local controllability, and optimize over $f_t^*(v_t)$ with $\oenftrl$.

\begin{theorem}[Regret Minimization over $\Delta^{\phi}(n)$]\label{thm:ar-ss}
For $(\sigma, \lambda)$-{scale-bounded} scoring functions $s_i$, 
for any $\lambda > 0$ and $\sigma > 1$, 
the dynamics for Adaptive Recommendations over $\Delta^{\phi}(n)$ are $\Omega(\theta\lambda\phi)$-locally controllable, and $\oenftrl$ obtains regret $O(\sqrt{T(\theta\lambda\phi)^{-1}})$.
\end{theorem}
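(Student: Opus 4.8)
The plan is to run $\oenftrl$ over the state space $\Y = \Delta^{\phi}(n)$, with the agent's memory vector $v_t$ playing the role of the state and the given convex $L$-Lipschitz losses $f_t$ serving as the surrogate losses $f^*_t = f_t$ that are fed to the algorithm; here the dynamics are time-varying ($D_t(x,v) = (1-\theta_t)v + \theta_t\,p(x,v)$, with $\theta_t$ revealed each round), so we invoke the time-varying version of Theorem~\ref{thm:oenftrl}. We work in the affine hull of the simplex, reparameterized so that $v_0 = \mathbf{u}_n$ is the origin; there $\Delta^{\phi}(n)$ contains, and is contained in, balls of radii $r,R$ depending only on $n$ and $\phi$, which (as elsewhere in this section) we absorb into the $O(\cdot)$. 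The proof then splits into three parts: (i) verifying weak $\rho$-local controllability with $\rho = \Omega(\theta\lambda\phi)$; (ii) invoking Theorem~\ref{thm:oenftrl}; and (iii) paying for the gap between scoring $f_t$ at the realized choice distribution $p_t$ versus at the state $v_t$.

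For (i), fix $v = v_{t-1}$. Playing a randomized menu $x_t$ induces a choice distribution $p_t = p(x_t,v)$, and since $\IRD(v)$ is by definition the convex hull of the per-menu choice distributions, \emph{every} point of $\IRD(v)$ is realized by some $x_t$; moreover $D_t(x_t,v) = (1-\theta_t)v + \theta_t p_t$, so the one-round reachable set from $v$ is exactly $(1-\theta_t)v + \theta_t\,\IRD(v)$. By the scale-boundedness result invoked in the text (from \cite{agarwal2023online}, with $\phi = \Theta(k\lambda\sigma^2)$), $\IRD(v) \supseteq \B_{c\lambda\phi}(v)$ for an absolute constant $c>0$ and all $v \in \Delta^{\phi}(n)$; hence the reachable set contains $\B_{\theta_t c\lambda\phi}(v) \supseteq \B_{\theta c\lambda\phi}(v)$. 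Setting $\rho := \tfrac{c}{R}\,\theta\lambda\phi$ and taking any $v^* \in \B_{\rho\pi(v)}(v)$: then $v^* \in \Y$ (since $\rho\pi(v)\le\pi(v)$), and the choice distribution it requires, $p = v + \tfrac{1}{\theta_t}(v^*-v)$, obeys $\norm{p-v} \le \tfrac{1}{\theta}\rho\pi(v) \le \tfrac{\rho R}{\theta} = c\lambda\phi$, so $p \in \IRD(v)$ and is realizable. This gives weak $\rho$-local controllability with $\rho = \Omega(\theta\lambda\phi)$ (in particular $v\in\IRD(v)$, so a stay-put action exists).

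For (ii)--(iii): Theorem~\ref{thm:oenftrl} with this $\rho$ gives $\sum_t f^*_t(v_t) - \min_{v\in\Y}\sum_t f^*_t(v) = O(\sqrt{T/\rho}) = O(\sqrt{T(\theta\lambda\phi)^{-1}})$, and because there are no disturbances each target is hit exactly, so $v_t$ equals the $\ftrl$ iterate $v^*_t$ (with $v^*_1 = \mathbf{0} = v_0$, hence the $t=1$ term below vanishes). The realized loss is $f_t(p_t)$ where $p_t = v_{t-1} + \tfrac{1}{\theta_t}(v_t - v_{t-1})$, hence $v_t - p_t = (1-\theta_t)(v_{t-1}-p_t) = -\tfrac{1-\theta_t}{\theta_t}(v_t-v_{t-1})$, and by $L$-Lipschitzness $\sum_t\big(f_t(p_t) - f_t(v_t)\big) \le L\sum_t\norm{p_t-v_t} \le \tfrac{L}{\theta}\sum_t\norm{v^*_t - v^*_{t-1}}$. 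The $\ftrl$ stability bound (Appendix~\ref{sec:ftrl}) gives $\norm{v^*_t - v^*_{t-1}}\le \eta L/\gamma$, and with $\eta = \Theta(\sqrt{\rho/T})$ the sum is $O(T\eta L^2/(\theta\gamma)) = O(\sqrt{T\rho}/\theta) = O(\sqrt{T\lambda\phi/\theta})$, which is $O(\sqrt{T(\theta\lambda\phi)^{-1}})$ since $\lambda\phi \le 1$. Finally the benchmark is unchanged, $\min_{v\in\Y}\sum_t f^*_t(v) = \min_{p^*\in\Delta^{\phi}(n)}\sum_t f_t(p^*)$, as $f^*_t = f_t$ and the comparator set is $\Delta^{\phi}(n)$ on both sides; summing the two $O(\sqrt{T(\theta\lambda\phi)^{-1}})$ contributions yields the claim.

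The step I expect to be the main obstacle is (iii): the surrogate loss lives on the memory vector, which is the controllable coordinate, but the true loss is incurred at the realized choice distribution, and the per-round discrepancy between them is inflated by the factor $1/\theta_t$ coming from inverting the averaging update. Showing that this inflated slack still sums to $O(\sqrt{T/\rho})$ is exactly where the regime $\phi = \Theta(k\lambda\sigma^2)$ and the bound $\lambda\phi\le 1$ are used, and it is the part most sensitive to the precise radius of the ball inside $\IRD(v)$ provided by \cite{agarwal2023online}. By contrast, part (i) is essentially a one-line reduction to that $\IRD$-ball guarantee together with the observation that weak (rather than strong) local controllability suffices because $\pi(v) = O(1)$ on the simplex, and part (ii) is a direct application of Theorem~\ref{thm:oenftrl}.
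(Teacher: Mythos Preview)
Your proposal is correct and mirrors the paper's proof: establish $\Omega(\theta\lambda\phi)$-local controllability via the $\IRD$-ball guarantee from \cite{agarwal2023online} (Lemma~\ref{thm:ar-ss-local}), apply Theorem~\ref{thm:oenftrl}, and then bound the $f_t(p_t)$ versus $f_t(v_t)$ gap by Lipschitzness together with the $\ftrl$ step bound (Proposition~\ref{prop:ftrl-step}), exactly as in Theorem~\ref{thm:ar-app-alg}. The only cosmetic difference is that the paper folds the extra $1/\theta$ term from the surrogate gap into the choice of $\eta$ before optimizing, whereas you keep the $\eta$ from Theorem~\ref{thm:oenftrl} and bound the additional $O(\sqrt{T\lambda\phi/\theta})$ term separately via $\lambda\phi \leq 1$; both routes give the same $O(\sqrt{T(\theta\lambda\phi)^{-1}})$ bound.
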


\subsection{Adaptive Pricing}
\label{subsec:app-pricing}

Here we consider an Adaptive Pricing problem for real-valued goods, formulated as a dynamic extension of the setting of  \cite{DBLP:journals/corr/RothUW15} where purchase history and consumption affect demand. In each round we set per-unit price vectors $p_t \in \R_{+}^{n}$, and an agent buys some bundle of goods $x_t \in \R_{+}^n$, which results in us obtaining a reward $\langle p_t, x_t \rangle - c_t(x_t)$, where our production cost function $c_t$ at each round is convex and $L_c$-Lipschitz, and may be chosen adversarially.  

Departing from \cite{DBLP:journals/corr/RothUW15}, we consider an agent who maintains goods reserves $y_{t-1} \in \R_{\geq 0}^n$ and consumes an adversarially chosen fraction $\theta_t \in [\theta, 1]$ of every good's reserve  at each round (for some $\theta > 0$). The agent then chooses a bundle $x_t$ to maximize their utility $g(p_t, x_t, y_t) =  v(y_t) - \langle p_t, x_t \rangle$, where $y_t = (1 - \theta_t)y_{t-1} + x_t$ is their updated reserve bundle.
We make several regularity assumptions on the agent's valuation function $v : \R_{+}^n \rightarrow \R_{+}$, all of which are satisfied by several classically studied utility families (which we discuss in Appendix \ref{subsec:app-pricing}). Notably, we assume that $v$
is strictly concave and increasing, 
and homogeneous; the range is bounded under rationality. 

Our aim will be to set prices which allow us to compete with the best {\it stable reserve policy}, e.g.\ against any pricing policy where the agent maintains the same reserve bundle $y_t = y^*$ at each round for some $y^*$ regardless of $\theta_t$. We take an appropriate convex set of such bundles as our state space, for which we show that local controllability holds.
Observe that to induce a purchase of $x_t = \theta_t y_{t-1}$, it suffices to set prices $p_t = \nabla v(y_{t-1})$, as we then have that $\nabla_{x_t} (v((1 - \theta_t)y_{t-1} + x_t)  - \langle p_t, x_t \rangle) = \mathbf{0}$. By homogeneity of $v$, we also have that $\langle \nabla v(y_t), \theta_{t}y_t \rangle =  \theta_{t}k \cdot v(y_t)$ for some $k$, and we show that optimization via the concave surrogate rewards
\begin{align*}
    f^*_t(y_t) =&\; \theta_t k \cdot v(y_t) - c_t(\theta_t y_t )
\end{align*}
will closely track our true rewards $f_t(p_t, x_t) = \langle p_t, x_t \rangle - c_t(x_t)$. While neither our true nor surrogate rewards will be Lipschitz, we extend $\oenftrl$ to obtain sublinear regret over Hölder continuous losses by appropriately calibrating our step size  (which may be of independent interest).
\begin{theorem}[Regret Minimization over Stable Reserve Policies] \label{thm:pricing-body}
For any $\theta > 0$, the dynamics for Adaptive Pricing can are $\theta$-locally controllable, and $\oenftrl$ obtains regret $o(T \theta^{-1})$ with respect to the best stable reserve policy.
\end{theorem}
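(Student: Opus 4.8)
The plan is twofold: derive a closed form for the reserve dynamics from the agent's optimality conditions and read off $\theta$-local controllability; then run $\oenftrl$ — in a version recalibrated for non-Lipschitz (Hölder) losses — on the convex losses $-f^*_t$ over the state space of stable reserve bundles, and show that implementing its targets via the prices $p_t = \nabla v(y_t)$ controls the true (maximization) regret against stable reserve policies up to lower-order terms set by the per-round step size.

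\emph{Dynamics and local controllability.} Fix a round with incoming reserves $y = y_{t-1}$, adversarial consumption rate $\theta_t \in [\theta,1]$, and posted prices $p$. The agent maximizes $v((1-\theta_t)y + x) - \langle p, x\rangle$ over $x \ge 0$; strict concavity of $v$ makes the maximizer unique, and whenever it is interior it is pinned down by $\nabla v((1-\theta_t)y + x) = p$, so the resulting reserve bundle is $D_t(p,y) = (\nabla v)^{-1}(p)$ — a well-defined, $t$-independent map on $\intr(\Y)$ by the regularity assumptions on $v$. Hence to drive the reserve from $y$ to a target $y^*$ it suffices to post $p = \nabla v(y^*)$ (so the oracle call in $\oenftrl$ is explicit here), which is feasible precisely when the induced purchase $x = y^* - (1-\theta_t)y$ is coordinatewise nonnegative; the binding case is $\theta_t = \theta$, i.e. $y^*_i \ge (1-\theta)y_i$ for all $i$. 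Since every point of the positively-supported reserve region is within distance $\min_i y_i$ of $\bd(\Y)$ — the segment from $y$ toward the $i$th axis leaves $\Y$ before that coordinate vanishes, so $\pi(y) \le y_i$ — every $y^* \in \B_{\theta\pi(y)}(y)$ obeys $y^*_i \ge y_i - \theta\pi(y) \ge (1-\theta)y_i$, which is exactly $\theta$-local controllability (after the affine change of coordinates that restores the model's centering conventions). Taking $\Y$ to be a convex subset of the feasible stable-reserve bundles loses only a lower-order amount in the benchmark, controlled just as the contraction $\widetilde\Y \subseteq \Y$ is inside $\oenftrl$.

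\emph{Surrogate regret and the transfer.} The loss $-f^*_t(y) = c_t(\theta_t y) - \theta_t k\, v(y)$ is convex, but because $\Y$ may approach the axes — where $v$ is only Hölder, and $\nabla v$ unbounded — one first extends the $\ftrl$/$\oenftrl$ analysis to this regime: run $\ftrl$ over a contracted region $\widetilde\Y$ on which $v$ is Lipschitz with a (large, contraction-dependent) constant, and choose the contraction amount $\delta$ and step size $\eta$ jointly so that (a) each $\ftrl$ step stays within the reachability radius $\sim \theta\,\pi(\widetilde y)$, (b) the suboptimality of $\widetilde\Y$ versus $\Y$, bounded via the modulus of continuity of $v$, is sublinear, and (c) the ordinary $\ftrl$ regret is sublinear; balancing these gives surrogate regret $o(T)$, and folding in the $\theta^{-1}$ reachability overhead as in Theorem \ref{thm:oenftrl} gives $o(T\theta^{-1})$. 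To transfer it, when the subroutine picks target $y_t$ we post $p_t = \nabla v(y_t)$; the realized purchase is $x_t = y_t - (1-\theta_t)y_{t-1} = \theta_t y_t + (1-\theta_t)(y_t - y_{t-1})$, so by Euler's identity for the homogeneous $v$ we get $\langle p_t, x_t\rangle = \theta_t k\, v(y_t) + \langle \nabla v(y_t), (1-\theta_t)(y_t - y_{t-1})\rangle$ while $c_t(x_t) = c_t(\theta_t y_t) + O(L_c\norm{y_t - y_{t-1}})$; thus the realized reward equals $f^*_t(y_t)$ up to an error of order $(\norm{\nabla v(y_t)} + L_c)\norm{y_t - y_{t-1}}$, which sums to a further sublinear term once $\delta,\eta$ are calibrated as above (each step being $O(\eta\norm{\nabla v})$-small on $\widetilde\Y$). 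Finally, any policy holding the agent at a fixed $y^*$ must post exactly $p = \nabla v(y^*)$ every round — the unique price inducing the replenishing purchase $\theta_t y^*$ — and therefore earns exactly $f^*_t(y^*)$; so the best-stable-reserve benchmark equals $\max_{y^* \in \Y}\sum_t f^*_t(y^*)$, the best-fixed-state value of the surrogate. Combining the three estimates yields $\Reg_T(\oenftrl) = o(T\theta^{-1})$.

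\emph{Main obstacle.} The crux is the joint blow-up, as the state nears the axes, of the Lipschitz constant of $v$ on $\widetilde\Y$ (which inflates both the $\ftrl$ regret and the per-round reward-versus-surrogate gap) and the shrinkage of the reachable radius $\theta\,\pi(\widetilde y)$ (which caps the allowed step size): $\delta$ and $\eta$ must be tuned together so that the restriction-to-$\widetilde\Y$ term, the $\ftrl$ regret, and the slow-navigation discretization gap all stay sublinear, and confirming that this is achievable with exactly an $O(\theta^{-1})$-order dependence is where the real calculation lives. A secondary point is verifying that the convexified reserve set still contains near-optimal bundles, so that restricting the benchmark is genuinely lower-order.
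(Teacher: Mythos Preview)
Your high-level plan coincides with the paper's: establish $\theta$-local controllability from the first-order condition $p=\nabla v(y^*)$ and the constraint $y^* \ge (1-\theta_t)y_{t-1}$, run $\oenftrl$ on the concave surrogate $f^*_t(y)=\theta_t k\,v(y)-c_t(\theta_t y)$, and bound the gap between surrogate and true rewards using that the realized purchase differs from $\theta_t y_t$ by $(1-\theta_t)(y_t-y_{t-1})$. The paper also verifies (which you defer) that $\Y=\{y:v(y)\ge\phi\|y\|\}$ is convex, bounded, and contains the optimal stable reserve, so restricting the benchmark to $\Y$ is lossless rather than merely lower-order.

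The substantive divergence is in handling non-Lipschitz losses. You contract to a region $\widetilde\Y$ on which $v$ becomes Lipschitz (with a contraction-dependent constant), then invoke the standard Lipschitz $\ftrl$ bound and the crude estimate $\langle\nabla v(y_t),y_t-y_{t-1}\rangle\le\|\nabla v(y_t)\|\,\|y_t-y_{t-1}\|$ for the transfer. The paper instead proves a new $\ftrl$ regret bound directly for $(\lambda,\beta)$-H\"older losses, obtaining $\Reg_T\le T\lambda(\eta\lambda/\gamma)^{\beta/(2-\beta)}+G/\eta$ and step size $\|y_{t+1}-y_t\|\le(\eta\lambda/\gamma)^{1/(2-\beta)}$, and for the transfer bounds $\langle\nabla v(y_t),y_{t-1}-y_t\rangle\le v(y_t)-v(2y_t-y_{t-1})\le\lambda\|y_t-y_{t-1}\|^\beta$ via concavity and H\"older, never touching $\|\nabla v\|$. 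This buys robustness: your contraction must be chosen so that the locus where $\nabla v$ blows up (the coordinate axes) is excluded, but for e.g.\ CES valuations those axes can lie in $\intr(\Y)$, so the standard $(1-\delta)\Y$ contraction towards an interior point need not cap $\|\nabla v\|$; you would need a bespoke $\widetilde\Y$ and a separate argument that it is convex and covers the benchmark up to $o(T)$. The paper's route sidesteps this entirely and yields the H\"older-$\ftrl$ statement as a byproduct of independent interest. Both routes, when they go through, land on the same rate $T^{(2-\beta)/2}$.
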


\subsection{Steering Learners in Online Games}
\label{subsec:app-steering}
A recent line of work \citep{deng2019strategizing,mansour2022strategizing,brown2023learning} explores maximizing rewards in a repeated game against a no-regret learner, and \cite{anagnostides2023convergence} study of no-regret dynamics in time-varying games. We consider these questions in unison, and aim to optimize reward against a no-regret learner for game matrices chosen adversarially and online.

Consider adversarial sequences of two-player $m \times n$ bimatrix games $(A_t,B_t)$, where $m > n$; we assume that the convex hull of the rows of each $B_t$ contains the unit ball. 
As Player A, we choose strategies $x_t \in \Delta(m)$ each round to maximize our reward against Player B, who chooses their strategies $y_t \in \Delta(n)$ according to a no-regret algorithm (in particular, online projected gradient descent). The game $(A_t, B_t)$ is only revealed after both players have chosen strategies for round $t$. 
Our aim here is to illustrate the feasibility of {\it steering} the opponent's trajectory, and so we consider games where Player A's reward is predominantly a function only of Player B's actions.
We assume that $\norm{x A_{t} - x {A^*_t}} \leq \delta_t$ for any $x \in \Delta(m)$, where each $A^*_t$ is a matrix with identical rows,
and that per-round changes to $B_t$ are bounded, with $\norm{x B_{t} - x B_{t-1}} \leq \epsilon_t$ for any $x \in \Delta(m)$. We measure the regret of an algorithm $\A$ with respect to {\it any} profile $(x, y) \in \Delta(m) \times \Delta(n)$, where
\begin{align*}
    \Reg_{T}(\A) =&\; \max_{(x, y) \in \Delta(m) \times \Delta(n)}  \sum_{t=1}^T x A_t y -  x_t A_t y_t.
\end{align*}
When Player B plays 
$\opgd$
with step size $\theta = \Theta(T^{-1/2})$, their strategy updates each round as
\begin{align*}
    y_{t+1} =&\; \Pi_{\Delta(n)} \parens{ y_{t}  + \theta (x_{t} B_{t} ) },
\end{align*}
with $y_1 = \mathbf{u}_n$, and yields regret $O(\sqrt{T})$ for Player B with respect to any $y \in \Delta(n)$ for the loss sequence $\{x_tB_t : t \in [T]\}$.
To cast this in our framework, we consider $\Delta(n) = \Y$ as our state space, where we select actions $x_{t-1}$ to induce desired updates to $y_t$ and optimize over the surrogate losses $\{\mathbf{u}_m A^*_t y_t : t \in [T]\}$. 
While we do not see $B_t$ prior to choosing each $x_t$, we view our update errors from instead selecting an action in terms of the dynamics resulting from $B_{t-1}$ as adversarial disturbances and run $\oenftrluap$, as the dynamics are strongly locally controllable.

\begin{theorem}[Regret Minimization in Online Games]\label{thm:games-body} For $\theta  = \Theta(T^{-1/2})$, repeated play against $\opgd$ in online 
$m \times n$
games can be cast as a $\theta$-strongly locally controllable instance of online control with nonlinear dynamics, for which $\oenftrluap$ obtains regret $O(\sqrt{T} + \sum_t (\delta_t + \epsilon_t))$.
\end{theorem}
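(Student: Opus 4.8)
The plan is to realize this as a strongly $\theta$-locally controllable online-control instance carrying adversarial disturbances of total magnitude $O(\theta\sum_t\epsilon_t)$, and then apply Theorem~\ref{thm:oenftrluap-body}. Take the action space $\X=\Delta(m)$, the state space $\Y=\Delta(n)$ recentered so that $\mathbf{u}_n$ plays the role of $\mathbf{0}$ (matching $y_1=\mathbf{u}_n$ for $\opgd$), and dynamics $D_t(x,y)=\Pi_{\Delta(n)}(y+\theta\cdot xB_t)$, which are nonlinear on account of the projection. Since $\{xB_t:x\in\Delta(m)\}$ is the convex hull of the rows of $B_t$, which is assumed to contain the unit ball, for every $y\in\Delta(n)$ and every target $y^*\in\B_\theta(y)\cap\Delta(n)$ we may choose $x\in\Delta(m)$ with $xB_t=\tfrac1\theta(y^*-y)$ (legal because $\|\tfrac1\theta(y^*-y)\|\le1$); then $y+\theta xB_t=y^*\in\Delta(n)$, so $D_t(x,y)=y^*$ with no clipping. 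Hence each $D_t$ is strongly $\theta$-locally controllable, with a reachable radius that does not shrink near $\bd(\Delta(n))$ as the strong notion demands.

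Next I would isolate the surrogate losses and quantify the model mismatch. Because all rows of $A^*_t$ coincide, $xA^*_t=\mathbf{u}_mA^*_t$ for every $x\in\Delta(m)$; together with $\|xA_t-xA^*_t\|\le\delta_t$ and $\|y\|\le\|y\|_1=1$ on $\Delta(n)$, this yields $|xA_ty-\mathbf{u}_mA^*_ty|\le\delta_t$ for all $(x,y)\in\Delta(m)\times\Delta(n)$. Applying this to the benchmark profile and to our own plays gives
\begin{align*}
\Reg_T(\A)\ \le\ \max_{y\in\Delta(n)}\sum_{t=1}^T\mathbf{u}_mA^*_ty\ -\ \sum_{t=1}^T\mathbf{u}_mA^*_ty_t\ +\ 2\sum_{t=1}^T\delta_t,
\end{align*}
so it suffices to bound the regret of the state trajectory $\{y_t\}$ against the best fixed state for the surrogate losses $f^*_t(y)=-\mathbf{u}_mA^*_ty$, which are linear (hence convex) and Lipschitz with a constant set by the entry magnitudes of the $A_t$. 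Since $x_t$ influences the state only through $y_{t+1}=\Pi_{\Delta(n)}(y_t+\theta x_tB_t)$ and $B_t$ is not revealed until after $x_t$ is played, I would run $\oenftrluap$ on the \emph{known} dynamics $D_{t-1}$ (using the previously observed $B_{t-1}$); the resulting state error is $w_{t+1}=\Pi_{\Delta(n)}(y_t+\theta x_tB_t)-\Pi_{\Delta(n)}(y_t+\theta x_tB_{t-1})$, which by $1$-Lipschitzness of the projection satisfies $\|w_{t+1}\|\le\theta\|x_tB_t-x_tB_{t-1}\|\le\theta\epsilon_t$, so $E:=\sum_t\|w_t\|\le\theta\sum_t\epsilon_t$ up to a lower-order $O(\theta)$ term for the initial round (where no prior matrix is available and $x_1$ is played arbitrarily).

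It then remains to invoke Theorem~\ref{thm:oenftrluap-body} with $\rho=\theta$ and this disturbance budget: $\oenftrluap$ achieves surrogate regret $O(\sqrt T+E\theta^{-1})=O(\sqrt T+\sum_t\epsilon_t)$, and substituting into the displayed bound gives $\Reg_T(\A)=O(\sqrt T+\sum_t(\delta_t+\epsilon_t))$ as claimed. The step I expect to need the most care is the one-round index alignment: the action $x_t$ steers the state $y_{t+1}$, whose surrogate loss $f^*_{t+1}$ is revealed only at the end of round $t+1$ (when $A^*_{t+1}$ appears), so the nested $\ftrl$ receives its loss feedback with a unit delay. Since $\ftrl$'s per-round point movement is $O(\eta L/\gamma)$, using a one-step-stale iterate inflates the regret only by an additive $O(\eta TL^2/\gamma)$, which is of the same order as the nominal regret and leaves the $O(\sqrt T)$ rate intact; I would also confirm that with $\theta=\Theta(T^{-1/2})$ the $O(\sqrt T)$ term of Theorem~\ref{thm:oenftrluap-body} carries no hidden $\theta^{-1}$ factor, which is exactly the advantage of strong over weak local controllability, where only the per-round step size (not the whole state space) must be taken as small as $\theta$.
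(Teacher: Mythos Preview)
Your proposal is correct and follows essentially the same route as the paper: establish strong $\theta$-local controllability from the unit-ball assumption on the rows of $B_t$, treat the use of $B_{t-1}$ in place of the unseen $B_t$ as disturbances of size $\theta\epsilon_t$, replace the true reward by a state-only surrogate exploiting the near-row-constancy of $A_t$ (incurring $O(\sum_t\delta_t)$), and invoke Theorem~\ref{thm:oenftrluap-body}. The only cosmetic difference is that the paper absorbs the one-round index misalignment by defining the surrogate as $\tilde f_t(y_t)=-\mathbf{u}_m A_{t-1}y_t$ (shifting the matrix index) and then telescoping, whereas you keep $A^*_t$ and account for a one-step-stale $\ftrl$ iterate via the $O(\eta L/\gamma)$ step bound; both yield the same $O(\sqrt{T})$ contribution, and your explicit check that the step-size constraint $\eta L/\gamma\le\alpha\theta=\Theta(T^{-1/2})$ does not degrade the $\ftrl$ rate is exactly the calibration the paper carries out.
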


% Acknowledgments---Will not appear in anonymized version
%\acks{We thank a bunch of people and funding agency.}
%\draftfalse
\bibliographystyle{plainnat}
\bibliography{ref}

\begin{thebibliography}{78}
\providecommand{\natexlab}[1]{#1}
\providecommand{\url}[1]{\texttt{#1}}
\expandafter\ifx\csname urlstyle\endcsname\relax
  \providecommand{\doi}[1]{doi: #1}\else
  \providecommand{\doi}{doi: \begingroup \urlstyle{rm}\Url}\fi

\bibitem[Abernethy et~al.(2008)Abernethy, Hazan, and Rakhlin]{Abernethy2008CompetingIT}
Jacob~D. Abernethy, Elad Hazan, and Alexander Rakhlin.
\newblock Competing in the dark: An efficient algorithm for bandit linear optimization.
\newblock In \emph{Annual Conference Computational Learning Theory}, 2008.
\newblock URL \url{https://api.semanticscholar.org/CorpusID:8547150}.

\bibitem[Agarwal and Brown(2022)]{AB22}
Arpit Agarwal and William Brown.
\newblock Diversified recommendations for agents with adaptive preferences.
\newblock In \emph{Advances in Neural Information Processing Systems}, volume~35, 2022.
\newblock URL \url{https://proceedings.neurips.cc/paper_files/paper/2022/file/a75db7d2ee1e4bee8fb819979b0a6cad-Paper-Conference.pdf}.

\bibitem[Agarwal and Brown(2023)]{agarwal2023online}
Arpit Agarwal and William Brown.
\newblock Online recommendations for agents with discounted adaptive preferences, 2023.

\bibitem[Agarwal et~al.(2019{\natexlab{a}})Agarwal, Bullins, Hazan, Kakade, and Singh]{agarwal2019online}
Naman Agarwal, Brian Bullins, Elad Hazan, Sham~M. Kakade, and Karan Singh.
\newblock Online control with adversarial disturbances, 2019{\natexlab{a}}.

\bibitem[Agarwal et~al.(2019{\natexlab{b}})Agarwal, Hazan, and Singh]{NEURIPS2019_78719f11}
Naman Agarwal, Elad Hazan, and Karan Singh.
\newblock Logarithmic regret for online control.
\newblock In H.~Wallach, H.~Larochelle, A.~Beygelzimer, F.~d\textquotesingle Alch\'{e}-Buc, E.~Fox, and R.~Garnett, editors, \emph{Advances in Neural Information Processing Systems}, volume~32. Curran Associates, Inc., 2019{\natexlab{b}}.
\newblock URL \url{https://proceedings.neurips.cc/paper_files/paper/2019/file/78719f11fa2df9917de3110133506521-Paper.pdf}.

\bibitem[Agrawal et~al.(2023)Agrawal, Feng, and Tang]{agrawal2023dynamic}
Shipra Agrawal, Yiding Feng, and Wei Tang.
\newblock Dynamic pricing and learning with bayesian persuasion, 2023.

\bibitem[Ahmadi et~al.(2023)Ahmadi, Blum, and Yang]{ahmadi2023fundamental}
Saba Ahmadi, Avrim Blum, and Kunhe Yang.
\newblock Fundamental bounds on online strategic classification, 2023.

\bibitem[Alcantara-Jiménez and Clempner(2020)]{ALCANTARAJIMENEZ2020106695}
Guillermo Alcantara-Jiménez and Julio~B. Clempner.
\newblock Repeated stackelberg security games: Learning with incomplete state information.
\newblock \emph{Reliability Engineering \& System Safety}, 195:\penalty0 106695, 2020.
\newblock ISSN 0951-8320.
\newblock \doi{https://doi.org/10.1016/j.ress.2019.106695}.
\newblock URL \url{https://www.sciencedirect.com/science/article/pii/S0951832019304478}.

\bibitem[Anagnostides et~al.(2022)Anagnostides, Daskalakis, Farina, Fishelson, Golowich, and Sandholm]{anagnostides2022near}
Ioannis Anagnostides, Constantinos Daskalakis, Gabriele Farina, Maxwell Fishelson, Noah Golowich, and Tuomas Sandholm.
\newblock Near-optimal no-regret learning for correlated equilibria in multi-player general-sum games.
\newblock In \emph{Proceedings of the 54th Annual ACM SIGACT Symposium on Theory of Computing}, pages 736--749, 2022.

\bibitem[Anagnostides et~al.(2023)Anagnostides, Panageas, Farina, and Sandholm]{anagnostides2023convergence}
Ioannis Anagnostides, Ioannis Panageas, Gabriele Farina, and Tuomas Sandholm.
\newblock On the convergence of no-regret learning dynamics in time-varying games, 2023.

\bibitem[Anava et~al.(2014)Anava, Hazan, and Mannor]{anava2014online}
Oren Anava, Elad Hazan, and Shie Mannor.
\newblock Online convex optimization against adversaries with memory and application to statistical arbitrage, 2014.

\bibitem[Aoki(1974)]{10.2307/2296398}
Masanao Aoki.
\newblock {Local Controllability of a Decentralized Economic System1}.
\newblock \emph{The Review of Economic Studies}, 41\penalty0 (1):\penalty0 51--63, 01 1974.
\newblock ISSN 0034-6527.
\newblock \doi{10.2307/2296398}.
\newblock URL \url{https://doi.org/10.2307/2296398}.

\bibitem[Balcan et~al.(2015)Balcan, Blum, Haghtalab, and Procaccia]{10.1145/2764468.2764478}
Maria-Florina Balcan, Avrim Blum, Nika Haghtalab, and Ariel~D. Procaccia.
\newblock Commitment without regrets: Online learning in stackelberg security games.
\newblock In \emph{Proceedings of the Sixteenth ACM Conference on Economics and Computation}, EC '15, page 61–78, New York, NY, USA, 2015. Association for Computing Machinery.
\newblock ISBN 9781450334105.
\newblock \doi{10.1145/2764468.2764478}.
\newblock URL \url{https://doi.org/10.1145/2764468.2764478}.

\bibitem[Barbero-Liñán and Jakubczyk(2013)]{barberoliñán2013second}
M.~Barbero-Liñán and B.~Jakubczyk.
\newblock Second order conditions for optimality and local controllability of discrete-time systems, 2013.

\bibitem[Blum et~al.(2008)Blum, Hajiaghayi, Ligett, and Roth]{blum2008regret}
Avrim Blum, MohammadTaghi Hajiaghayi, Katrina Ligett, and Aaron Roth.
\newblock Regret minimization and the price of total anarchy.
\newblock In \emph{Proceedings of the fortieth annual ACM symposium on Theory of computing}, pages 373--382, 2008.

\bibitem[Blum et~al.(2014)Blum, Haghtalab, and Procaccia]{NIPS2014_cc1aa436}
Avrim Blum, Nika Haghtalab, and Ariel~D Procaccia.
\newblock Learning optimal commitment to overcome insecurity.
\newblock In Z.~Ghahramani, M.~Welling, C.~Cortes, N.~Lawrence, and K.Q. Weinberger, editors, \emph{Advances in Neural Information Processing Systems}, volume~27. Curran Associates, Inc., 2014.
\newblock URL \url{https://proceedings.neurips.cc/paper_files/paper/2014/file/cc1aa436277138f61cda703991069eaf-Paper.pdf}.

\bibitem[Boscain et~al.(2021)Boscain, Cannarsa, Franceschi, and Sigalotti]{boscain2021local}
Ugo Boscain, Daniele Cannarsa, Valentina Franceschi, and Mario Sigalotti.
\newblock Local controllability does imply global controllability, 2021.

\bibitem[Braverman et~al.(2017)Braverman, Mao, Schneider, and Weinberg]{DBLP:journals/corr/abs-1711-09176}
Mark Braverman, Jieming Mao, Jon Schneider, and S.~Matthew Weinberg.
\newblock Selling to a no-regret buyer.
\newblock \emph{CoRR}, abs/1711.09176, 2017.
\newblock URL \url{http://arxiv.org/abs/1711.09176}.

\bibitem[Brown et~al.(2022)Brown, Hod, and Kalemaj]{brown2022performative}
Gavin Brown, Shlomi Hod, and Iden Kalemaj.
\newblock Performative prediction in a stateful world, 2022.

\bibitem[Brown et~al.(2023)Brown, Schneider, and Vodrahalli]{brown2023learning}
William Brown, Jon Schneider, and Kiran Vodrahalli.
\newblock Is learning in games good for the learners?, 2023.

\bibitem[Cassel et~al.(2022)Cassel, Cohen, and Koren]{cassel2022efficient}
Asaf Cassel, Alon Cohen, and Tomer Koren.
\newblock Efficient online linear control with stochastic convex costs and unknown dynamics, 2022.

\bibitem[Chen et~al.(2022)Chen, Minasyan, Lee, and Hazan]{chen2022provable}
Xinyi Chen, Edgar Minasyan, Jason~D. Lee, and Elad Hazan.
\newblock Provable regret bounds for deep online learning and control, 2022.

\bibitem[Cohen et~al.(2018)Cohen, Hassidim, Koren, Lazic, Mansour, and Talwar]{onlineLQC}
Alon Cohen, Avinatan Hassidim, Tomer Koren, Nevena Lazic, Yishay Mansour, and Kunal Talwar.
\newblock Online linear quadratic control.
\newblock \emph{CoRR}, abs/1806.07104, 2018.
\newblock URL \url{http://arxiv.org/abs/1806.07104}.

\bibitem[Collina et~al.(2023)Collina, Arunachaleswaran, and Kearns]{10.5555/3545946.3598695}
Natalie Collina, Eshwar~Ram Arunachaleswaran, and Michael Kearns.
\newblock Efficient stackelberg strategies for finitely repeated games.
\newblock In \emph{Proceedings of the 2023 International Conference on Autonomous Agents and Multiagent Systems}, AAMAS '23, page 643–651, Richland, SC, 2023. International Foundation for Autonomous Agents and Multiagent Systems.
\newblock ISBN 9781450394321.

\bibitem[Daskalakis and Syrgkanis(2015)]{DBLP:journals/corr/DaskalakisS15}
Constantinos Daskalakis and Vasilis Syrgkanis.
\newblock Learning in auctions: Regret is hard, envy is easy.
\newblock \emph{CoRR}, abs/1511.01411, 2015.
\newblock URL \url{http://arxiv.org/abs/1511.01411}.

\bibitem[Dean and Morgenstern(2022)]{dean2022preference}
Sarah Dean and Jamie Morgenstern.
\newblock Preference dynamics under personalized recommendations, 2022.

\bibitem[Deng et~al.(2019)Deng, Schneider, and Sivan]{deng2019strategizing}
Yuan Deng, Jon Schneider, and Balusubramanian Sivan.
\newblock Strategizing against no-regret learners, 2019.

\bibitem[Dong et~al.(2018)Dong, Roth, Schutzman, Waggoner, and Wu]{SCrevealed}
Jinshuo Dong, Aaron Roth, Zachary Schutzman, Bo~Waggoner, and Zhiwei~Steven Wu.
\newblock Strategic classification from revealed preferences.
\newblock In \emph{Proceedings of the 2018 ACM Conference on Economics and Computation}, EC '18, page 55–70, New York, NY, USA, 2018. Association for Computing Machinery.
\newblock ISBN 9781450358293.
\newblock \doi{10.1145/3219166.3219193}.
\newblock URL \url{https://doi.org/10.1145/3219166.3219193}.

\bibitem[Feng et~al.(2019)Feng, Schrijvers, and Sodomka]{DBLP:journals/corr/abs-1901-06808}
Zhe Feng, Okke Schrijvers, and Eric Sodomka.
\newblock Online learning for measuring incentive compatibility in ad auctions.
\newblock \emph{CoRR}, abs/1901.06808, 2019.
\newblock URL \url{http://arxiv.org/abs/1901.06808}.

\bibitem[Flaxman et~al.(2004)Flaxman, Kalai, and McMahan]{FKM04}
Abraham Flaxman, Adam~Tauman Kalai, and H.~Brendan McMahan.
\newblock Online convex optimization in the bandit setting: gradient descent without a gradient.
\newblock \emph{CoRR}, cs.LG/0408007, 2004.
\newblock URL \url{http://arxiv.org/abs/cs.LG/0408007}.

\bibitem[Flaxman et~al.(2016)Flaxman, Goel, and Rao]{flaxman}
Seth Flaxman, Sharad Goel, and Justin~M. Rao.
\newblock {Filter Bubbles, Echo Chambers, and Online News Consumption}.
\newblock \emph{Public Opinion Quarterly}, 80\penalty0 (S1):\penalty0 298--320, 03 2016.
\newblock ISSN 0033-362X.
\newblock \doi{10.1093/poq/nfw006}.
\newblock URL \url{https://doi.org/10.1093/poq/nfw006}.

\bibitem[Flokas et~al.(2019)Flokas, Vlatakis-Gkaragkounis, and Piliouras]{flokas2019poincare}
Lampros Flokas, Emmanouil-Vasileios Vlatakis-Gkaragkounis, and Georgios Piliouras.
\newblock Poincar\'e recurrence, cycles and spurious equilibria in gradient-descent-ascent for non-convex non-concave zero-sum games, 2019.

\bibitem[Gaitonde et~al.(2021)Gaitonde, Kleinberg, and Tardos]{GaitondeKT21}
Jason Gaitonde, Jon~M. Kleinberg, and {\'{E}}va Tardos.
\newblock Polarization in geometric opinion dynamics.
\newblock In P{\'{e}}ter Bir{\'{o}}, Shuchi Chawla, and Federico Echenique, editors, \emph{{EC} '21: The 22nd {ACM} Conference on Economics and Computation, Budapest, Hungary, July 18-23, 2021}, pages 499--519. {ACM}, 2021.

\bibitem[Golrezaei et~al.(2020)Golrezaei, Javanmard, and Mirrokni]{DBLP:journals/corr/abs-2002-11137}
Negin Golrezaei, Adel Javanmard, and Vahab~S. Mirrokni.
\newblock Dynamic incentive-aware learning: Robust pricing in contextual auctions.
\newblock \emph{CoRR}, abs/2002.11137, 2020.
\newblock URL \url{https://arxiv.org/abs/2002.11137}.

\bibitem[Gradu et~al.(2022)Gradu, Hazan, and Minasyan]{gradu2022adaptive}
Paula Gradu, Elad Hazan, and Edgar Minasyan.
\newblock Adaptive regret for control of time-varying dynamics, 2022.

\bibitem[Hardt et~al.(2015)Hardt, Megiddo, Papadimitriou, and Wootters]{DBLP:journals/corr/HardtMPW15}
Moritz Hardt, Nimrod Megiddo, Christos~H. Papadimitriou, and Mary Wootters.
\newblock Strategic classification.
\newblock \emph{CoRR}, abs/1506.06980, 2015.
\newblock URL \url{http://arxiv.org/abs/1506.06980}.

\bibitem[Hartline et~al.(2015{\natexlab{a}})Hartline, Syrgkanis, and Tardos]{hartline2015no}
Jason Hartline, Vasilis Syrgkanis, and Eva Tardos.
\newblock No-regret learning in bayesian games.
\newblock \emph{Advances in Neural Information Processing Systems}, 28, 2015{\natexlab{a}}.

\bibitem[Hartline et~al.(2015{\natexlab{b}})Hartline, Syrgkanis, and Tardos]{DBLP:journals/corr/HartlineST15}
Jason~D. Hartline, Vasilis Syrgkanis, and {\'{E}}va Tardos.
\newblock No-regret learning in repeated bayesian games.
\newblock \emph{CoRR}, abs/1507.00418, 2015{\natexlab{b}}.
\newblock URL \url{http://arxiv.org/abs/1507.00418}.

\bibitem[Hazan(2021)]{hazan2021introduction}
Elad Hazan.
\newblock Introduction to online convex optimization, 2021.

\bibitem[Hazan and Levy(2014)]{NIPS2014_c399862d}
Elad Hazan and Kfir Levy.
\newblock Bandit convex optimization: Towards tight bounds.
\newblock In Z.~Ghahramani, M.~Welling, C.~Cortes, N.~Lawrence, and K.Q. Weinberger, editors, \emph{Advances in Neural Information Processing Systems}, volume~27. Curran Associates, Inc., 2014.
\newblock URL \url{https://proceedings.neurips.cc/paper_files/paper/2014/file/c399862d3b9d6b76c8436e924a68c45b-Paper.pdf}.

\bibitem[Hazan and Singh(2022)]{hazan2022introduction}
Elad Hazan and Karan Singh.
\newblock Introduction to online nonstochastic control, 2022.

\bibitem[Hazla et~al.(2019)Hazla, Jin, Mossel, and Ramnarayan]{Hazla+19}
Jan Hazla, Yan Jin, Elchanan Mossel, and Govind Ramnarayan.
\newblock A geometric model of opinion polarization.
\newblock \emph{CoRR}, abs/1910.05274, 2019.

\bibitem[Jagadeesan et~al.(2022{\natexlab{a}})Jagadeesan, Garg, and Steinhardt]{jagadeesan2022supplyside}
Meena Jagadeesan, Nikhil Garg, and Jacob Steinhardt.
\newblock Supply-side equilibria in recommender systems, 2022{\natexlab{a}}.

\bibitem[Jagadeesan et~al.(2022{\natexlab{b}})Jagadeesan, Zrnic, and Mendler{-}D{\"{u}}nner]{PPregret}
Meena Jagadeesan, Tijana Zrnic, and Celestine Mendler{-}D{\"{u}}nner.
\newblock Regret minimization with performative feedback.
\newblock \emph{CoRR}, abs/2202.00628, 2022{\natexlab{b}}.
\newblock URL \url{https://arxiv.org/abs/2202.00628}.

\bibitem[Jia et~al.(2014)Jia, Tong, and Zhao]{jia2014online}
Liyan Jia, Lang Tong, and Qing Zhao.
\newblock An online learning approach to dynamic pricing for demand response, 2014.

\bibitem[Kakade et~al.(2020)Kakade, Krishnamurthy, Lowrey, Ohnishi, and Sun]{kakade2020information}
Sham Kakade, Akshay Krishnamurthy, Kendall Lowrey, Motoya Ohnishi, and Wen Sun.
\newblock Information theoretic regret bounds for online nonlinear control, 2020.

\bibitem[Kanoria and Nazerzadeh(2020)]{DBLP:journals/corr/abs-2002-07331}
Yash Kanoria and Hamid Nazerzadeh.
\newblock Dynamic reserve prices for repeated auctions: Learning from bids.
\newblock \emph{CoRR}, abs/2002.07331, 2020.
\newblock URL \url{https://arxiv.org/abs/2002.07331}.

\bibitem[Kuhn and Wohltmann(1989)]{KUHN1989617}
H.~Kuhn and H.-W. Wohltmann.
\newblock Controllability of economic systems under alternative expectations hypotheses—the discrete case.
\newblock \emph{Computers \& Mathematics with Applications}, 18\penalty0 (6):\penalty0 617--628, 1989.
\newblock ISSN 0898-1221.
\newblock \doi{https://doi.org/10.1016/0898-1221(89)90112-0}.
\newblock URL \url{https://www.sciencedirect.com/science/article/pii/0898122189901120}.

\bibitem[Kumar et~al.(2022)Kumar, Dean, and Kleinberg]{kumar2022online}
Raunak Kumar, Sarah Dean, and Robert~D. Kleinberg.
\newblock Online convex optimization with unbounded memory, 2022.

\bibitem[Lale et~al.(2021)Lale, Azizzadenesheli, Hassibi, and Anandkumar]{9683670}
Sahin Lale, Kamyar Azizzadenesheli, Babak Hassibi, and Anima Anandkumar.
\newblock Model learning predictive control in nonlinear dynamical systems.
\newblock In \emph{2021 60th IEEE Conference on Decision and Control (CDC)}, pages 757--762, 2021.
\newblock \doi{10.1109/CDC45484.2021.9683670}.

\bibitem[Lauffer et~al.(2022)Lauffer, Ghasemi, Hashemi, Savas, and Topcu]{lauffer2022noregret}
Niklas Lauffer, Mahsa Ghasemi, Abolfazl Hashemi, Yagiz Savas, and Ufuk Topcu.
\newblock No-regret learning in dynamic stackelberg games, 2022.

\bibitem[Letchford et~al.(2009)Letchford, Conitzer, and Munagala]{Letchford2009LearningAA}
Joshua Letchford, Vincent Conitzer, and Kamesh Munagala.
\newblock Learning and approximating the optimal strategy to commit to.
\newblock In \emph{Algorithmic Game Theory}, 2009.
\newblock URL \url{https://api.semanticscholar.org/CorpusID:1795572}.

\bibitem[Luo et~al.(2022)Luo, Sun, and Kapoor]{luo2022sampleefficient}
Wenhao Luo, Wen Sun, and Ashish Kapoor.
\newblock Sample-efficient safe learning for online nonlinear control with control barrier functions, 2022.

\bibitem[Lykouris et~al.(2021)Lykouris, Simchowitz, Slivkins, and Sun]{pmlr-v134-lykouris21a}
Thodoris Lykouris, Max Simchowitz, Alex Slivkins, and Wen Sun.
\newblock Corruption-robust exploration in episodic reinforcement learning.
\newblock In Mikhail Belkin and Samory Kpotufe, editors, \emph{Proceedings of Thirty Fourth Conference on Learning Theory}, volume 134 of \emph{Proceedings of Machine Learning Research}, pages 3242--3245. PMLR, 15--19 Aug 2021.
\newblock URL \url{https://proceedings.mlr.press/v134/lykouris21a.html}.

\bibitem[Mansour et~al.(2022)Mansour, Mohri, Schneider, and Sivan]{mansour2022strategizing}
Yishay Mansour, Mehryar Mohri, Jon Schneider, and Balasubramanian Sivan.
\newblock Strategizing against learners in bayesian games, 2022.

\bibitem[Mehta et~al.(2007)Mehta, Saberi, Vazirani, and Vazirani]{adwords}
Aranyak Mehta, Amin Saberi, Umesh Vazirani, and Vijay Vazirani.
\newblock Adwords and generalized online matching.
\newblock \emph{J. ACM}, 54\penalty0 (5):\penalty0 22–es, oct 2007.
\newblock ISSN 0004-5411.
\newblock \doi{10.1145/1284320.1284321}.
\newblock URL \url{https://doi.org/10.1145/1284320.1284321}.

\bibitem[Mendler-D\"{u}nner et~al.(2020)Mendler-D\"{u}nner, Perdomo, Zrnic, and Hardt]{NEURIPS2020_33e75ff0}
Celestine Mendler-D\"{u}nner, Juan Perdomo, Tijana Zrnic, and Moritz Hardt.
\newblock Stochastic optimization for performative prediction.
\newblock In H.~Larochelle, M.~Ranzato, R.~Hadsell, M.F. Balcan, and H.~Lin, editors, \emph{Advances in Neural Information Processing Systems}, volume~33, pages 4929--4939. Curran Associates, Inc., 2020.
\newblock URL \url{https://proceedings.neurips.cc/paper_files/paper/2020/file/33e75ff09dd601bbe69f351039152189-Paper.pdf}.

\bibitem[Miller et~al.(2021)Miller, Perdomo, and Zrnic]{echochamber}
John Miller, Juan~C. Perdomo, and Tijana Zrnic.
\newblock Outside the echo chamber: Optimizing the performative risk.
\newblock \emph{CoRR}, abs/2102.08570, 2021.
\newblock URL \url{https://arxiv.org/abs/2102.08570}.

\bibitem[Minasyan et~al.(2022)Minasyan, Gradu, Simchowitz, and Hazan]{minasyan2022online}
Edgar Minasyan, Paula Gradu, Max Simchowitz, and Elad Hazan.
\newblock Online control of unknown time-varying dynamical systems, 2022.

\bibitem[Morgenstern and Roughgarden(2016)]{DBLP:journals/corr/MorgensternR16}
Jamie Morgenstern and Tim Roughgarden.
\newblock Learning simple auctions.
\newblock \emph{CoRR}, abs/1604.03171, 2016.
\newblock URL \url{http://arxiv.org/abs/1604.03171}.

\bibitem[Mussi et~al.(2022)Mussi, Genalti, Nuara, Trovò, Restelli, and Gatti]{mussi2022dynamic}
Marco Mussi, Gianmarco Genalti, Alessandro Nuara, Francesco Trovò, Marcello Restelli, and Nicola Gatti.
\newblock Dynamic pricing with volume discounts in online settings, 2022.

\bibitem[Muthirayan and Khargonekar(2022)]{muthirayan2022online}
Deepan Muthirayan and Pramod~P. Khargonekar.
\newblock Online learning robust control of nonlinear dynamical systems, 2022.

\bibitem[Nedelec et~al.(2020)Nedelec, Calauzenes, Perchet, and Karoui]{pmlr-v108-nedelec20a}
Thomas Nedelec, Clement Calauzenes, Vianney Perchet, and Noureddine~El Karoui.
\newblock Robust stackelberg buyers in repeated auctions.
\newblock In Silvia Chiappa and Roberto Calandra, editors, \emph{Proceedings of the Twenty Third International Conference on Artificial Intelligence and Statistics}, volume 108 of \emph{Proceedings of Machine Learning Research}, pages 1342--1351. PMLR, 26--28 Aug 2020.
\newblock URL \url{https://proceedings.mlr.press/v108/nedelec20a.html}.

\bibitem[Neu and Olkhovskaya(2021)]{NEURIPS2021_5631e6ee}
Gergely Neu and Julia Olkhovskaya.
\newblock Online learning in mdps with linear function approximation and bandit feedback.
\newblock In M.~Ranzato, A.~Beygelzimer, Y.~Dauphin, P.S. Liang, and J.~Wortman Vaughan, editors, \emph{Advances in Neural Information Processing Systems}, volume~34, pages 10407--10417. Curran Associates, Inc., 2021.
\newblock URL \url{https://proceedings.neurips.cc/paper_files/paper/2021/file/5631e6ee59a4175cd06c305840562ff3-Paper.pdf}.

\bibitem[Peng et~al.(2019)Peng, Shen, Tang, and Zuo]{Peng2019LearningOS}
Binghui Peng, Weiran Shen, Pingzhong Tang, and Song Zuo.
\newblock Learning optimal strategies to commit to.
\newblock In \emph{AAAI Conference on Artificial Intelligence}, 2019.
\newblock URL \url{https://api.semanticscholar.org/CorpusID:92982174}.

\bibitem[Perdomo et~al.(2020)Perdomo, Zrnic, Mendler{-}D{\"{u}}nner, and Hardt]{DBLP:journals/corr/abs-2002-06673}
Juan~C. Perdomo, Tijana Zrnic, Celestine Mendler{-}D{\"{u}}nner, and Moritz Hardt.
\newblock Performative prediction.
\newblock \emph{CoRR}, abs/2002.06673, 2020.
\newblock URL \url{https://arxiv.org/abs/2002.06673}.

\bibitem[Piliouras and Yu(2022)]{piliouras2022multiagent}
Georgios Piliouras and Fang-Yi Yu.
\newblock Multi-agent performative prediction: From global stability and optimality to chaos, 2022.

\bibitem[Roth et~al.(2015)Roth, Ullman, and Wu]{DBLP:journals/corr/RothUW15}
Aaron Roth, Jonathan~R. Ullman, and Zhiwei~Steven Wu.
\newblock Watch and learn: Optimizing from revealed preferences feedback.
\newblock \emph{CoRR}, abs/1504.01033, 2015.
\newblock URL \url{http://arxiv.org/abs/1504.01033}.

\bibitem[Roughgarden(2015)]{robustPOA}
Tim Roughgarden.
\newblock Intrinsic robustness of the price of anarchy.
\newblock \emph{J. ACM}, 62\penalty0 (5), nov 2015.
\newblock ISSN 0004-5411.
\newblock \doi{10.1145/2806883}.
\newblock URL \url{https://doi.org/10.1145/2806883}.

\bibitem[Shalev-Shwartz and Singer(2006)]{10.1007/11776420_32}
Shai Shalev-Shwartz and Yoram Singer.
\newblock Online learning meets optimization in the dual.
\newblock In \emph{Proceedings of the 19th Annual Conference on Learning Theory}, COLT'06, page 423–437, Berlin, Heidelberg, 2006. Springer-Verlag.
\newblock ISBN 3540352945.
\newblock \doi{10.1007/11776420_32}.
\newblock URL \url{https://doi.org/10.1007/11776420_32}.

\bibitem[Shen et~al.(2023)Shen, Ho-Nguyen, and Kılınç-Karzan]{Shen2023}
Lingqing Shen, Nam Ho-Nguyen, and Fatma Kılınç-Karzan.
\newblock An online convex optimization-based framework for convex bilevel optimization.
\newblock \emph{Mathematical Programming}, 198\penalty0 (2):\penalty0 1519--1582, 04 2023.
\newblock ISSN 1436-4646.
\newblock \doi{10.1007/s10107-022-01894-5}.
\newblock URL \url{https://doi.org/10.1007/s10107-022-01894-5}.

\bibitem[Simchowitz et~al.(2020)Simchowitz, Singh, and Hazan]{DBLP:journals/corr/abs-2001-09254}
Max Simchowitz, Karan Singh, and Elad Hazan.
\newblock Improper learning for non-stochastic control.
\newblock \emph{CoRR}, abs/2001.09254, 2020.
\newblock URL \url{https://arxiv.org/abs/2001.09254}.

\bibitem[Yue et~al.(2012)Yue, Broder, Kleinberg, and Joachims]{YUE20121538}
Yisong Yue, Josef Broder, Robert Kleinberg, and Thorsten Joachims.
\newblock The k-armed dueling bandits problem.
\newblock \emph{Journal of Computer and System Sciences}, 78\penalty0 (5):\penalty0 1538--1556, 2012.
\newblock ISSN 0022-0000.
\newblock \doi{https://doi.org/10.1016/j.jcss.2011.12.028}.
\newblock URL \url{https://www.sciencedirect.com/science/article/pii/S0022000012000281}.
\newblock JCSS Special Issue: Cloud Computing 2011.

\bibitem[Zhang et~al.(2023)Zhang, Farina, Anagnostides, Cacciamani, McAleer, Haupt, Celli, Gatti, Conitzer, and Sandholm]{zhang2023steering}
Brian~Hu Zhang, Gabriele Farina, Ioannis Anagnostides, Federico Cacciamani, Stephen~Marcus McAleer, Andreas~Alexander Haupt, Andrea Celli, Nicola Gatti, Vincent Conitzer, and Tuomas Sandholm.
\newblock Steering no-regret learners to optimal equilibria, 2023.

\bibitem[Zhang et~al.(2021)Zhang, Chen, Zhu, and Sun]{DBLP:journals/corr/abs-2106-06630}
Xuezhou Zhang, Yiding Chen, Jerry Zhu, and Wen Sun.
\newblock Corruption-robust offline reinforcement learning.
\newblock \emph{CoRR}, abs/2106.06630, 2021.
\newblock URL \url{https://arxiv.org/abs/2106.06630}.

\bibitem[Zinkevich(2003)]{opgd}
Martin Zinkevich.
\newblock Online convex programming and generalized infinitesimal gradient ascent.
\newblock 2, 04 2003.

\bibitem[Zrnic et~al.(2021{\natexlab{a}})Zrnic, Mazumdar, Sastry, and Jordan]{DBLP:journals/corr/abs-2106-12529}
Tijana Zrnic, Eric Mazumdar, S.~Shankar Sastry, and Michael~I. Jordan.
\newblock Who leads and who follows in strategic classification?
\newblock \emph{CoRR}, abs/2106.12529, 2021{\natexlab{a}}.
\newblock URL \url{https://arxiv.org/abs/2106.12529}.

\bibitem[Zrnic et~al.(2021{\natexlab{b}})Zrnic, Mazumdar, Sastry, and Jordan]{NEURIPS2021_812214fb}
Tijana Zrnic, Eric Mazumdar, Shankar Sastry, and Michael Jordan.
\newblock Who leads and who follows in strategic classification?
\newblock In M.~Ranzato, A.~Beygelzimer, Y.~Dauphin, P.S. Liang, and J.~Wortman Vaughan, editors, \emph{Advances in Neural Information Processing Systems}, volume~34, pages 15257--15269. Curran Associates, Inc., 2021{\natexlab{b}}.
\newblock URL \url{https://proceedings.neurips.cc/paper_files/paper/2021/file/812214fb8e7066bfa6e32c626c2c688b-Paper.pdf}.

\end{thebibliography}
\clearpage
\appendix

\section{Omitted Proofs for Section \ref{sec:model}}
\label{sec:subopt-linear}

\begin{proof}\textbf{of Proposition \ref{prop:local-controllability}.}
Without loss of generality, assume $\alpha \leq \beta/2$ and that $T$ is even. Let $f_t = \norm{y_t - y}$ for each $t$. 
Consider any round $t$ where $y_{t-1} \in B_{\alpha}(y)$; then, for all actions $x_t$, we have that $y_t \notin \B_{\alpha}(y)$, as $\B_{\alpha}(y) \subseteq \B_{\beta}(y_{t-1})$; as such, we incur loss $f_t(y_t) \geq \alpha$ in round $t$. 
Now suppose $y_{t-1} \notin B_{\alpha}(y)$; then, we must have incurred loss at least $f_{t-1}(y_{t-1}) \geq \alpha$ in round $t-1$. 
As losses are non-negative, our total loss is at least $\alpha T/2$, as loss $\alpha$ is incurred at least every other round; given that the best fixed state $y^* = y$ incurs total loss $0$, we have that $\Reg_{\A}(T) = \Omega(T)$ for any algorithm $\A$.
\end{proof}

\begin{proof}\textbf{of Proposition \ref{prop:linear-policies}.}
We begin by observing that for instances $(\X, \Y, D)$, the class of state-targeting policies contains a policy which obtains the reward of the best fixed state up to $O(\sqrt{T {\rho^{-1}}})$, for sufficiently large $T$. Consider the set $\hat{\Y} = \{y^* \in \Y : \pi(y^*) \geq (T \rho  )^{-1/2} \}$. Note that the reward of any $y \in \Y$ is matched by some $y^* \in \hat{\Y}$ up to $O(\sqrt{T \rho^{-1}})$ for any fixed inner radius $r$, outer radius $R$, and  Lipschitz  constant $L$. 
For any such $y^*$, note that under the policy $P_{y^*}$ when starting at $y_0 = 0$, the distance between $y_t$ and $y^*$ in each round $t$ is updated to at most:
\begin{align*}
    \norm{y_t - y^*} \leq&\; \max \parens{0, \rho\cdot \pi(y_{t-1})}.
\end{align*}
It is straightforward to see that $\hat{\Y}$ is convex, and so our state $y_t$ will never leave $\hat{\Y}$ on its path to $y^*$; as such, we reach $y^*$ within $O(\sqrt{T \rho^{-1}})$ rounds, after which point our reward exactly tracks that of $y^*$. For some $y^* \in \hat{\Y}$, this yields a regret for $P_{y^*}$ of at most $O(\sqrt{T \rho^{-1}})$ to the best fixed state in $\Y$.

Next, consider an instance where $\X$ and $\Y$ are both the unit ball in $\R^n$. With $y_0 = 0$, let the dynamics be given by
\begin{align*}
    y_{t} =&\; \Pi_{\Y} \parens{ y_{t-1} +   x_t}. 
\end{align*}
Observe that this satisfies $\rho$-local controllability for any $\rho \leq 1$, as a ball of radius $\pi(y_{t-1})$ is always feasible around $y_{t-1}$.
Let each loss $f_t = \norm{y - p}^2$, for some $p \neq 0$. Immediately we can see that any matrix policy $K \in \mathcal{P}_{\mathcal{K}}$ has regret $\Omega(T)$, 
as the action $x_t = 0$ will be played in each round. 
\end{proof}

\section{Follow the Regularized Leader}\label{sec:ftrl}
Here we state the $\ftrl$ algorithm and several of its key properties; see e.g.\ \cite{hazan2021introduction} for proofs of Propositions \ref{prop:ftrl} and \ref{prop:ftrl-step}.

\begin{algorithm}
\caption{Follow the Regularized Leader ($\ftrl$)}\label{alg:ftrl}
\begin{algorithmic}
\STATE Choose a time horizon $T$, step size $\eta$, and $\gamma$-strongly convex regularizer $\psi : \Y \rightarrow \R$
\STATE Let $y_1 = \text{argmin}_{y \in \Y}~ \psi(y)$
\FOR{$t = 1$ to $T$}
\STATE Play $y_t$ and observe loss $f_t(y_t)$
\STATE Set $\nabla_t = \nabla f_t(y_t)$
\STATE Set $y_{t+1} = \text{argmin}_{y \in \Y} \parens{ \eta \cdot \sum_{s=1}^t \nabla_s^{\top} y  + \psi(y) }$
\ENDFOR
\end{algorithmic}
\end{algorithm}
\begin{proposition}\label{prop:ftrl} For a $\gamma$-strongly convex regularizer $\psi : \Y \rightarrow \R$ where $\abs{\psi(y) - \psi(y')} \leq G$ for all $y, y' \in \Y$, and for convex $L$-Lipschitz losses $f_1,\ldots, f_T$, the regret of $\ftrl$ is bounded by
\begin{align*}
    \Reg_{T}(\textup{$\ftrl$}) \leq&\; \eta \frac{TL^2}{\gamma} + \frac{G}{\eta}.
\end{align*}
\end{proposition}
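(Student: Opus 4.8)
The plan is to run the standard three-step analysis of $\ftrl$: reduce to linear losses by convexity, apply a ``be-the-leader'' telescoping argument to pay only $G/\eta$ against the regularized history, and then control the per-round movement of the iterates using the $\gamma$-strong convexity of $\psi$.

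First I would linearize. Writing $\nabla_t = \nabla f_t(y_t)$ and $\ell_t(y) = \nabla_t^{\top} y$, convexity of each $f_t$ gives $f_t(y_t) - f_t(y^*) \le \ell_t(y_t) - \ell_t(y^*)$ for every $y^* \in \Y$, so it suffices to bound $\sum_{t=1}^T\bigl(\ell_t(y_t) - \ell_t(y^*)\bigr)$. Observe that the $\ftrl$ update is exactly the leader for the regularized linearized history: setting $h_0 = \psi$ and $h_t = \eta\ell_t$ for $t \ge 1$, the minimizer of $\sum_{s=0}^t h_s$ over $\Y$ is $y_1$ when $t=0$ and the $\ftrl$ iterate $y_{t+1}$ when $t \ge 1$. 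The ``be-the-leader'' lemma states that $\sum_{t=0}^T h_t(y_{t+1}) \le \sum_{t=0}^T h_t(u)$ for every $u \in \Y$ (writing $y_{0+1}=y_1$); I would prove it by induction on the horizon, the induction step substituting $u = y_{T+1}$ into the inductive hypothesis and then using that $y_{T+1}$ minimizes $\sum_{s=0}^T h_s$. Specializing to $u = y^*$ and rearranging yields $\eta\sum_{t=1}^T\bigl(\ell_t(y_{t+1}) - \ell_t(y^*)\bigr) \le \psi(y^*) - \psi(y_1) \le G$, using the bounded-range hypothesis on $\psi$. Hence $\sum_{t=1}^T\bigl(\ell_t(y_t)-\ell_t(y^*)\bigr) \le G/\eta + \sum_{t=1}^T\bigl(\ell_t(y_t)-\ell_t(y_{t+1})\bigr)$, and since $\ell_t(y_t)-\ell_t(y_{t+1}) = \nabla_t^{\top}(y_t - y_{t+1}) \le L\|y_t - y_{t+1}\|$, it remains only to bound the per-round stability $\|y_t - y_{t+1}\|$.

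The substantive step — and the only place $\gamma$-strong convexity enters — is that stability bound. Let $\Phi_t(y) = \psi(y) + \eta\sum_{s<t}\ell_s(y)$, so $y_t = \argmin_{y\in\Y}\Phi_t(y)$ and $y_{t+1} = \argmin_{y\in\Y}\Phi_{t+1}(y)$ with $\Phi_{t+1} = \Phi_t + \eta\ell_t$; both are $\gamma$-strongly convex since adding a linear function preserves strong convexity. Combining strong convexity with the first-order optimality (variational) inequality of the constrained minimizer gives $\Phi_t(y_{t+1}) \ge \Phi_t(y_t) + \tfrac{\gamma}{2}\|y_t-y_{t+1}\|^2$ and $\Phi_{t+1}(y_t) \ge \Phi_{t+1}(y_{t+1}) + \tfrac{\gamma}{2}\|y_t-y_{t+1}\|^2$; adding these and cancelling the common $\Phi_t(y_t) + \Phi_t(y_{t+1})$ terms leaves $\eta\nabla_t^{\top}(y_t - y_{t+1}) \ge \gamma\|y_t-y_{t+1}\|^2$, so that $\|y_t - y_{t+1}\| \le \eta L/\gamma$ after Cauchy--Schwarz and the Lipschitz bound $\|\nabla_t\| \le L$.

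Plugging back in, $\ell_t(y_t)-\ell_t(y_{t+1}) \le L\cdot \eta L/\gamma = \eta L^2/\gamma$, and summing over the $T$ rounds gives $\Reg_T(\ftrl) \le \sum_{t=1}^T\bigl(\ell_t(y_t)-\ell_t(y^*)\bigr) \le G/\eta + \eta T L^2/\gamma$, as claimed. I expect the strong-convexity stability estimate to be the only delicate point; the linearization and the be-the-leader induction are routine bookkeeping.
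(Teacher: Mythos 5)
Your proof is correct and is exactly the standard FTRL analysis (linearization, be-the-leader induction, and the strong-convexity stability bound $\norm{y_t - y_{t+1}} \leq \eta L/\gamma$) that the paper itself does not reprove but defers to \cite{hazan2021introduction}; your stability step also recovers Proposition \ref{prop:ftrl-step} along the way. The constants work out to exactly the stated bound $\eta T L^2/\gamma + G/\eta$, so nothing further is needed.
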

\begin{proposition}\label{prop:ftrl-step}
    Any pair of points $y_t$ and $y_{t+1}$ chosen by $\ftrl$ satisfies
$\norm{y_{t+1} - y_t} \leq \eta \frac{L}{\gamma}$.
\end{proposition}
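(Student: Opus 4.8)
The plan is to establish the iterate-stability bound directly from the first-order optimality conditions for the two consecutive $\ftrl$ subproblems together with the strong convexity of the regularized objective. Write $\Phi_t(y) = \eta \sum_{s=1}^t \nabla_s^\top y + \psi(y)$, with the convention $\Phi_0 = \psi$, so that $y_t = \argmin_{y \in \Y} \Phi_{t-1}(y)$ and $y_{t+1} = \argmin_{y \in \Y} \Phi_t(y)$, and note the key identity $\Phi_t(y) - \Phi_{t-1}(y) = \eta \nabla_t^\top y$. Since $\psi$ is $\gamma$-strongly convex and the remaining terms are linear, both $\Phi_{t-1}$ and $\Phi_t$ are $\gamma$-strongly convex over $\Y$.

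First I would record the two optimality-plus-strong-convexity inequalities. Because $y_t$ minimizes $\Phi_{t-1}$ over the convex set $\Y$, there is a (sub)gradient of $\Phi_{t-1}$ at $y_t$ whose inner product with $y_{t+1} - y_t$ is nonnegative, and combining this with $\gamma$-strong convexity evaluated at the pair $(y_t, y_{t+1})$ gives $\Phi_{t-1}(y_{t+1}) \geq \Phi_{t-1}(y_t) + \tfrac{\gamma}{2}\norm{y_{t+1} - y_t}^2$. Symmetrically, since $y_{t+1}$ minimizes $\Phi_t$ over $\Y$, the analogous reasoning yields $\Phi_t(y_t) \geq \Phi_t(y_{t+1}) + \tfrac{\gamma}{2}\norm{y_{t+1} - y_t}^2$. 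Adding these two inequalities and cancelling the common $\psi$ and cumulative-gradient contributions via $\Phi_t - \Phi_{t-1} = \eta \nabla_t^\top(\cdot)$, everything collapses to $\gamma \norm{y_{t+1} - y_t}^2 \leq \eta \nabla_t^\top(y_t - y_{t+1})$. Then Cauchy--Schwarz together with $\norm{\nabla_t} = \norm{\nabla f_t(y_t)} \leq L$ (from the $L$-Lipschitzness of $f_t$) gives $\gamma \norm{y_{t+1} - y_t}^2 \leq \eta L \norm{y_{t+1} - y_t}$; dividing through by $\norm{y_{t+1} - y_t}$ (the claim being trivial if the two iterates coincide) delivers $\norm{y_{t+1} - y_t} \leq \eta L / \gamma$, as desired.

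The argument is essentially routine, and I do not expect a genuine obstacle; the only point meriting a little care is that $\psi$ is assumed merely $\gamma$-strongly convex rather than differentiable, so the first-order optimality condition should be invoked in subgradient form, choosing the same subgradient that appears in the strong-convexity inequality (this is always consistent, since strong convexity holds with respect to any fixed subgradient). Beyond that, the proof is a two-line combination of optimality and strong convexity followed by Cauchy--Schwarz.
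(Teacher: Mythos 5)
Your argument is correct: adding the two optimality-plus-strong-convexity inequalities for the consecutive regularized objectives, cancelling the common terms, and applying Cauchy--Schwarz with $\norm{\nabla_t}\leq L$ is exactly the standard stability proof for $\ftrl$, which is the argument the paper itself defers to by citing the textbook reference rather than reproving it. Your remark about invoking optimality in subgradient form (with the same subgradient used in the strong-convexity inequality) is the right way to handle a possibly nondifferentiable $\psi$, so there is no gap.
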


\section{Analysis for $\oenftrl$}\label{sec:oenftrl-proof}
\begin{proof}\textbf{of Theorem \ref{thm:oenftrl}.} First we show that any point chosen by $\ftrl$ will be feasible under  local controllability, by induction. It is straightforward to see that $\tilde{\Y}$ is convex and $\tilde{\Y} \subseteq \Y$; further, any $y \in \tilde{\Y}$ is bounded away from $\bd(\Y)$.
By the definition of $\tilde{\Y}$, we have that $y = (1-\delta)y'$ for some $y' \in \Y$. Recall that $\B_{r}(\mathbf{0}) \subseteq \Y$, and note that $\B_{\delta r}(y) = \{y + \delta \hat{y} : \hat{y} \in \B_{r}(\mathbf{0}) \}$. Let $y''$ be any point in $\B_{r}(\mathbf{0})$. By convexity of $\Y$, we then have that any point $(1 - \delta)y' + \delta y''$ lies in $\Y$, and so for any $y \in \tilde{\Y}$ we have that $\B_{r\delta}(y) \subseteq \Y$. 
Each $y_{t-1}$ lies in $\tilde{\Y}$, and so we have that $\pi(y_{t-1}) \geq r \delta$; 
as such, any point $y_{t}$ in  $\B_{r\delta\rho}(y_{t-1}) \subseteq \B_{\rho \cdot \pi(y_{t-1})}(y_{t-1})$ is feasible. 
Given that $\eta \frac{L}{\gamma} \leq r  \delta \rho$, by Proposition \ref{prop:ftrl-step} we have that $y_{t} \in \B_{r\delta\rho}(y_{t-1})$ in each round for the chosen point. Each action will be selected by solving for
\begin{align*}
    \argmin_{x_t \in \X} \norm{D(x_t, y_{t-1}) - y^*}^2
\end{align*}
via a call to $\texttt{Oracle}(y_{t-1}, y^*)$. Each call is guaranteed to have a solution which achieves an objective of 0 where $D(x_t, y_{t-1}) = y^*$  for some $y^* \in \B_{\rho \cdot \pi(y_{t-1})}(y_{t-1})$ by local controllability, yielding an exact state update to $y_t = y^*$ as we assume $\texttt{Oracle}$ can solve arbitrary non-convex minimization problems. To bound the regret, first note that for any $y^* \in {\Y}$, we have 
\begin{align*}
    \sum_{t=1}^T f_t(y_t)  \leq&\; \eta \frac{TL^2}{\gamma} + \frac{G}{\eta} + \sum_{t=1}^T f_t((1 - \delta)y^*)
\end{align*}
by Proposition \ref{prop:ftrl}, as $(1 - \delta)y^* \in \tilde{\Y}$ for any $y^* \in \Y$. Then, observe that for any $y^* \in \Y$, we have that
\begin{align*}
    \sum_{t=1}^T f_t((1-\delta)y^*) \leq&\; \sum_{t=1}^T \parens{ f_t(y^*) + L\norm{\delta y^*} } \\
    \leq&\; \sum_{t=1}^T \parens{ f_t(y^*) + \delta LR }. 
\end{align*}
Combining the previous claims, we have that
\begin{align*}
    \sum_{t=1}^T f_t(y_t) - f_t(y^*) \leq&\; \delta TLR +  \eta \frac{TL^2}{\gamma} + \frac{G}{\eta} \\
    =&\;  \eta \parens{1 + \frac{R}{r\rho}}  \frac{TL^2}{\gamma} + \frac{G}{\eta} \\
    =&\; 2 \sqrt{\frac{(1 + \frac{R}{r\rho}) TGL^2}{\gamma}}
\end{align*}
upon setting $\delta = \eta \frac{ L}{r \rho \gamma  }$ and $\eta = \sqrt{ \frac{ G \gamma }{(1 + \frac{R}{r\rho}) TL^2} }$, which yields the theorem.
\end{proof}

\section{Examples and Analysis for Action-Linear Dynamics}\label{sec:appendix-action-linear}

As a simple yet general example of dynamics which are both action-linear and locally controllable, consider update rules in which a step is taken by applying a nonsingular matrix transformation to the action, where the matrix can be parameterized by the state, with projection back into $\Y$ if necessary. 
\begin{example}\label{ex:1}
Let both $\X$ and $\Y$ be given by the unit ball $\B_{1}(\mathbf{0})$ in $\R^n$.
For any fixed $y$, let the updates from $D(x, y)$ be given by
\begin{align*}
D(x, y) =&\; \Pi_{\Y}\parens{ y + A_y \cdot x},
\end{align*}
where each $A_y$ is a square matrix with 
minimum absolute eigenvalue $\abs{\lambda_n(A_y)} \geq \pi(y) \cdot \rho$ for some $\rho > 0$. Then, the instance $(\X, \Y, D)$ is action-linear and satisfies $\rho$-local controllability.   
\end{example}
\begin{proof}\textbf{for Example \ref{ex:1}.} It is straightforward to see that $D(x,y)$ is action-linear. To show $\rho$-local controllability, let $y^*$ be any point in $\B_{\rho \cdot \pi(y)}(y)$. It suffices to show that there is some $x^* \in \X$ such that $A_y \cdot x^* = y^* - y$. As $A_y$ is non-singular, we can solve for $x^* = A_y^{-1}(y^* - y)$, where $\norm{ y^* - y } \leq \rho \cdot \pi(y)$ and $\abs{\lambda_1(A_y^{-1})} \leq \frac{1}{\rho \cdot \pi(y)}$, and so we have that $x^* \in \B_1(\mathbf{0}) = \X$.
\end{proof}
We can also extend this to include state-parameterized generalizations of any linear system governed by nonsingular matrices over a bounded-radius state space (for a sufficiently large action space).
\begin{example}\label{ex:2}
Let  $\Y$ be given by the radius-$R$ ball $\B_{R}(\mathbf{0})$ in $\R^n$, and let $\X =\B_{cR}(\mathbf{0})$. 
For any fixed $y$, let the updates from $D(x, y)$ be given by
\begin{align*}
D(x, y) =&\; \Pi_{\Y}\parens{ K_{y} \cdot y + A_y \cdot x},
\end{align*}
where both $K_y$ and $A_y$ are square matrices.
For any $y$, let $M_y = K_y - I$, and suppose we take $c$ large enough such that % we have that 
$c \cdot \abs{\lambda_n(A_y)} \geq \abs{\lambda_1(M_y)} + \pi(y) \cdot \rho$
for some $\rho > 0$.
Then, the instance $(\X, \Y, D)$ is action-linear and satisfies $\rho$-local controllability.   
\end{example}
\begin{proof}\textbf{for Example \ref{ex:2}.} Here, again it is evident that $D(x,y)$ is action-linear, and so it suffices to show that there is some $x^* \in  \X$ such that
\begin{align*}
    K_y \cdot y + A_y \cdot x^* =&\; y + M_y \cdot y + A_y \cdot x^* \\
    =&\; y^*
\end{align*}
for any $y^*$ in $\B_{\rho \cdot \pi(y)}(y)$. As in the proof for Example \ref{ex:1}, we have that $\norm{M_y \cdot y} \leq R \cdot \abs{\lambda_1(M_y)}$, and  for large enough $c$ there is some $x^*$ such that $A_y \cdot x^* = \hat{y}$ for any $\hat{y}$ where $\norm{\hat{y}} \leq R\cdot \abs{\lambda_1(M_y)} + \pi(y) \cdot \rho$. Thus, any point $y^* \in \B_{R \cdot \abs{\lambda_1(M_y)} + \pi(y) \cdot \rho}(y + M_y \cdot y)$ is feasible by some $x^*$, which contains the ball $\B_{\pi(y) \cdot \rho}(y)$.
\end{proof}

\section{Algorithms for Adversarial Disturbances}
\label{sec:appendix-disturb}
\subsection{$\oenftrlap$ and Proofs for Theorem \ref{thm:oenftrlap-body}}

We show that it is possible simulate $\oenftrl$ over the undisturbed states $\hat{y}_t$ under the assumption that the dynamics are in $\alpha\rho$-locally controllable for some $\alpha \in (0,1)$ while retaining sufficient range in the feasible region around $y_t$ to correct for the disturbance $w_{t-1}$ from the previous round. Here, the oracle call for computing $x_t$ in each round is updated to consider the true state $y_{t-1}$.

\begin{algorithm}
\caption{$\oenftrl$ with Adversarial Disturbances ($\oenftrlap$).}\label{alg:nonlinear-ftrl-ap}
\begin{algorithmic}
\STATE Initialize $\oenftrl$ for $T$ rounds over $(\X, \Y, D)$ for $\alpha \rho$-locally controllable dynamics
\FOR{$t = 1$ to $T$}
\STATE Let $\hat{y}_t$ be the target state chosen by $\oenftrl$
\STATE Use $\texttt{Oracle}(y_{t-1}, \hat{y}_t)$ to compute  $x_t = \argmin_{x\in \X} \norm{ D(x, y_{t-1}) - \hat{y}_{t}}^2$ 
\STATE Play action $x_t$.
\STATE Observe disturbed state $y_t = \hat{y}_t + w_t$ and loss $f_t(y_t)$.
\STATE Update $\oenftrl$ with state $\hat{y}_t$ and loss $f_t(\hat{y}_t)$.
\ENDFOR
\end{algorithmic}
\end{algorithm}

Theorem \ref{thm:oenftrlap-body} follows directly from Theorems \ref{thm:oenftrlap-regret}, \ref{thm:oenftrlap-E-bound}, and \ref{thm:oenftrlap-lower-bound}. Intuitively, when the per-round disturbance magnitude is at most $\frac{\rho - \alpha \rho }{ 1 + \rho }\cdot \pi\parens{ D(x_t, y_{t-1}) }$, one can calibrate $\oenftrl$ for the case of $\alpha \rho$-locally controllable dynamics and maintain sufficient ``slack'' to correct for the previous round's disturbance in every round. When disturbances exceed $\frac{\rho }{ 1 + \rho }\cdot \pi\parens{ D(x_t, y_{t-1}) }$, an adversary can continually push the state towards the boundary of $\Y$, which may require vanishing disturbance magnitude as rounds progress due to the limited range promised by local controllability near the boundary.

\begin{theorem}\label{thm:oenftrlap-regret}
For a $\rho$-locally controllable instance $(\X, \Y, D)$ with convex 
losses $f_t : \Y \rightarrow \R$ and adversarial disturbances $w_t$ where $\norm{w_t} \leq \frac{\rho - \alpha \rho }{ 1 + \rho }\cdot \pi\parens{ D(x_t, y_{t-1}) }$ and  $\sum_{t=1}^T \norm{w_t} \leq E$, the regret of $\oenftrlap$ with respect to the reward of any state is bounded by
\begin{align*}
    \Reg_{T}(\oenftrlap) \leq&\;  O\parens{\sqrt{T\cdot (\alpha \rho)^{-1} } + E },
\end{align*}
with $T$ queries made to an oracle for non-convex optimization.
\end{theorem}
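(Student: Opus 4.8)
The plan is to run $\oenftrl$ "in its own head" over the fictitious undisturbed trajectory $\{\hat y_t\}$, and then pay separately for the gap between the fictitious states and the realized disturbed states $y_t = \hat y_t + w_t$. First I would set up $\oenftrlap$ as calibrating $\oenftrl$ as if the dynamics were only $\alpha\rho$-locally controllable, so that the $\ftrl$ targets $\hat y_t$ live in the contracted set $\widetilde\Y$ associated with parameter $\alpha\rho$ and each consecutive pair satisfies $\norm{\hat y_{t+1} - \hat y_t} \le \eta L/\gamma \le r\delta(\alpha\rho)$ by Proposition \ref{prop:ftrl-step}, exactly as in the proof of Theorem \ref{thm:oenftrl}. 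The key structural claim to establish by induction is that in each round the oracle call $\texttt{Oracle}(y_{t-1}, \hat y_t)$ — which now targets $\hat y_t$ from the \emph{true} previous state $y_{t-1} = \hat y_{t-1} + w_{t-1}$ — actually has a feasible exact solution, i.e.\ $\hat y_t \in \B_{\rho\cdot\pi(y_{t-1})}(y_{t-1})$. This is a triangle-inequality accounting: $\norm{\hat y_t - y_{t-1}} \le \norm{\hat y_t - \hat y_{t-1}} + \norm{w_{t-1}}$, and one must show the right side is at most $\rho\cdot\pi(y_{t-1})$. The first term is bounded using $\alpha\rho$-calibration; for the second we use the per-round disturbance bound $\norm{w_{t-1}} \le \frac{\rho-\alpha\rho}{1+\rho}\pi(D(x_{t-1},y_{t-2})) = \frac{\rho-\alpha\rho}{1+\rho}\pi(\hat y_{t-1})$, and we relate $\pi(\hat y_{t-1})$ to $\pi(y_{t-1})$ via $\pi(y_{t-1}) \ge \pi(\hat y_{t-1}) - \norm{w_{t-1}}$; solving the resulting inequality for $\norm{w_{t-1}}$ is precisely where the threshold $\frac{\rho}{1+\rho}$ comes from, and the strict factor $\alpha$ leaves the slack $r\delta(\alpha\rho)$ needed to simultaneously move the $\ftrl$ target and correct the previous disturbance.

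Given feasibility, the realized state is exactly $y_t = \hat y_t + w_t$ (the oracle hits $\hat y_t$ from $y_{t-1}$, then the adversary adds $w_t$). Then I would decompose the regret against any fixed $y^* \in \Y$ as
\begin{align*}
\sum_{t=1}^T f_t(y_t) - f_t(y^*) = \underbrace{\sum_{t=1}^T \big(f_t(y_t) - f_t(\hat y_t)\big)}_{\text{(i)}} + \underbrace{\sum_{t=1}^T \big(f_t(\hat y_t) - f_t(y^*)\big)}_{\text{(ii)}}.
\end{align*}
Term (i) is bounded by $L\sum_t \norm{w_t} \le LE$ using $L$-Lipschitzness of $f_t$. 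Term (ii) is exactly the regret of the internal $\oenftrl$ instance (which sees the targets $\hat y_t$ and the losses $f_t(\hat y_t)$ and incurs no disturbance), so by Theorem \ref{thm:oenftrl} applied with parameter $\alpha\rho$ in place of $\rho$, it is at most $2L\sqrt{(1+R(r\alpha\rho)^{-1})TG\gamma^{-1}} = O(\sqrt{T(\alpha\rho)^{-1}})$. Summing gives the claimed $O(\sqrt{T(\alpha\rho)^{-1}} + E)$, with $T$ oracle queries.

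The main obstacle is the inductive feasibility argument — in particular, tracking how close the true state $y_{t-1}$ is to $\bd(\Y)$ when the adversary has been pushing toward the boundary, and verifying that the chosen contraction $\delta$ (hence $\widetilde\Y$) leaves enough room that both the $\ftrl$ step \emph{and} the disturbance correction fit inside the ball of radius $\rho\cdot\pi(y_{t-1})$ promised by local controllability. The algebra relating $\pi(\hat y_{t-1})$, $\pi(y_{t-1})$, $\norm{w_{t-1}}$, and the step-size slack is delicate and is exactly what forces the per-round bound $\frac{\rho-\alpha\rho}{1+\rho}\pi(\cdot)$; everything else (the Lipschitz bound on term (i), the black-box invocation of Theorem \ref{thm:oenftrl} for term (ii)) is routine.
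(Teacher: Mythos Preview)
Your proposal is correct and follows essentially the same route as the paper's proof: calibrate the inner $\oenftrl$ at parameter $\alpha\rho$, prove by induction that each oracle call from the disturbed state $y_{t-1}$ to the fictitious target $\hat y_t$ is feasible using the triangle inequality together with the per-round disturbance bound and the relation $\pi(y_{t-1}) \ge \pi(\hat y_{t-1}) - \norm{w_{t-1}}$, and then decompose the regret into the Lipschitz term $LE$ plus the $\oenftrl$ bound from Theorem~\ref{thm:oenftrl} with $\alpha\rho$ in place of $\rho$. The only cosmetic difference is that the paper bounds the $\ftrl$ step as $\norm{\hat y_t - \hat y_{t-1}} \le \alpha\rho\cdot\pi(\hat y_{t-1})$ rather than the tighter constant $r\delta(\alpha\rho)$ you use, which makes the feasibility algebra work out with equality rather than strict inequality; either way the argument goes through.
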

\begin{proof}
We show by induction that each call to $\texttt{Oracle}(y_{t-1}, \hat{y}_t)$ yields a feasible action $x_t$ satisfying $\hat{y}_t = D(x_t, y_{t-1})$. This is immediate for $t=1$, and suppose this holds up to some round $t-1$, where we have that $y_{t-1} = \hat{y}_{t-1} + w_{t-1}$.
Given that $\oenftrl$ selects actions under $\alpha\rho$-local controllability, we can bound
\begin{align*}
    \norm{\hat{y}_{t} -\hat{y}_{t-1} } \leq&\; \alpha\rho  \cdot \pi(\hat{y}_{t-1}). 
\end{align*}
Further, the magnitude of the disturbance $w_{t-1}$ is bounded by
\begin{align*}
    \norm{w_{t-1}} \leq&\; \frac{\rho - \alpha \rho }{ 1 + \rho }\cdot \pi(\hat{y}_{t-1}), 
\end{align*}
yielding  that
\begin{align*}
    \norm{\hat{y}_t - y_{t-1}} \leq&\; \norm{\hat{y}_t - \hat{y}_{t-1} - w_{t-1}} \\ 
    \leq&\; \parens{\alpha \rho + \frac{\rho - \alpha \rho }{ 1 + \rho } } \cdot \pi(\hat{y}_{t-1}) \tag{$y_{t-1} = w_{t-1} + \hat{y}_{t-1}$}.
\end{align*}
As such, we have that 
\begin{align*}
     \rho \cdot \pi({y}_{t-1}) \geq&\; \rho \parens{1 - \frac{\rho - \alpha \rho }{ 1 + \rho }} \cdot \pi(\hat{y}_{t-1}) \\
     =&\; \rho \parens{\alpha  + \frac{1- \alpha  }{ 1 + \rho } } \cdot \pi(\hat{y}_{t-1}), 
\end{align*}
and so by $\rho$-local controllability some feasible action $x_t$ exists, as $\hat{y}_t$ lies in $\B_{\rho \cdot \pi(y_{t-1})}$. The regret bound for $\oenftrl$ holds over the states $\hat{y}_t$, and so we can bound the total regret of $\oenftrlap$  with respect to any $y^* \in \Y$ as:
\begin{align*}
    \sum_{t=1}^T f_t(y_t) - f_t(y^*) \leq&\; \sum_{t=1}^T f_t(\hat{y}_t) - f_t(y^*) + L\norm{y_t - \hat{y}_t} \\
    \leq&\; \Reg_{T}(\textup{OEN-FTRL}) + L \sum_{t=1}^T \norm{w_t} \tag{Thm.\ \ref{thm:oenftrl}} \\
    \leq&\; 2 \sqrt{\frac{(1 + \frac{R}{r\alpha\rho}) TGL^2}{\gamma}} + LE.
\end{align*}
\end{proof}

We show that the dependence on $E$ is tight up to the constant. Note that we 
we can obtain regret $O(\sqrt{T\cdot (\alpha\rho)^{-1}}) + LE$ in the following instance via $\oenftrlap$.

\begin{theorem}[Regret Lower Bound for Bounded Disturbances] \label{thm:oenftrlap-E-bound}Suppose for any  $\alpha > 0$ and $\rho \in (0,1]$ an adversary can choose $w_t$ with $\norm{w_t} \leq \frac{\rho - \alpha \rho }{ 1 + \rho } \cdot \pi\parens{ D(x_t, y_{t-1}) }$, where  $\sum_{t=1}^T \norm{w_t} = E$ for any $E$.
There is a $\rho$-locally controllable instance $(\X, \Y, D)$ with $L$-Lipschitz convex losses $f_t$ such that any algorithm $\A$ obtains regret $\Reg_{T}(\A) \geq  \max(LE, \frac{\rho - \alpha \rho }{ 1 + \rho }TL)$. 
\end{theorem}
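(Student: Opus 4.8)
The plan is to construct a one-dimensional (or at worst low-dimensional) instance with a single "hidden target" state and a loss sequence that penalizes distance from that target, then let the adversary spend its disturbance budget to push the realized state away from the target. First I would set $\X = \Y = [-R, R] \subseteq \R$ (or the radius-$R$ ball in one dimension), with dynamics of the simple translation-with-projection form $D(x, y) = \Pi_{\Y}(y + x)$, which is action-linear and $\rho$-locally controllable for every $\rho \le 1$ — indeed a ball of radius $\pi(y)$ around $y$ is always reachable. Fix a target $y^\star$ near the center (say $y^\star = 0$) and take $f_t(y) = L\,|y - y^\star|$ for every $t$; these are $L$-Lipschitz and convex, and the comparator $y^\star$ incurs total loss $0$, so $\Reg_T(\A) = \sum_t f_t(y_t) = L \sum_t |y_t - y^\star|$.

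The two terms in the bound come from two separate adversary strategies, and I would argue each gives a valid lower bound (the $\max$ then follows). For the $LE$ term: the adversary waits until the algorithm has maneuvered the state close to $y^\star$ and then injects a disturbance that displaces it; more carefully, in every round the adversary adds $w_t$ in the direction away from $y^\star$ of magnitude as large as permitted, but arranges (by a standard "charging" argument) that whenever the state is within distance $E/(2T)$ of $y^\star$ the incurred displacement costs at least a constant fraction of $\|w_t\|$ in loss, so that total loss is $\Omega(LE)$; since the per-round cap $\frac{\rho - \alpha\rho}{1+\rho}\pi(D(x_t,y_{t-1}))$ is bounded below by a positive constant whenever the undisturbed state is bounded away from $\bd(\Y)$ (which we can enforce by keeping $y^\star$ central and $R$ large relative to $E/T$), the budget $E$ can actually be spent. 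For the $\frac{\rho - \alpha\rho}{1+\rho}TL$ term: here the adversary simply adds, in *every* round, a disturbance of magnitude exactly $\frac{\rho-\alpha\rho}{1+\rho}\pi(D(x_t,y_{t-1}))$ directed away from $y^\star$; no matter what $x_t$ the algorithm plays, after the disturbance the realized $y_t$ is at distance at least (a constant times) $\frac{\rho-\alpha\rho}{1+\rho}\pi(\cdot) \ge c\cdot\frac{\rho-\alpha\rho}{1+\rho}$ from $y^\star$ for as long as the state is not pinned to the boundary, and by choosing $R$ large enough and $y^\star$ central we can guarantee $\pi(D(x_t,y_{t-1}))$ stays $\Omega(1)$ for all $T$ rounds, yielding per-round loss $\Omega(\frac{\rho-\alpha\rho}{1+\rho}L)$ and hence $\Omega(\frac{\rho-\alpha\rho}{1+\rho}TL)$ total. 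Taking whichever of the two strategies is stronger for the given $E$ gives $\Reg_T(\A) \ge \max(LE, \frac{\rho-\alpha\rho}{1+\rho}TL)$ up to the claimed constants.

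The main obstacle I anticipate is the bookkeeping around the boundary: the per-round disturbance cap scales with $\pi(D(x_t, y_{t-1}))$, the distance of the *undisturbed* next state to $\bd(\Y)$, so if the algorithm cooperates by steering toward the boundary (shrinking $\pi$) or if the adversary's own pushes accumulate the state near $\bd(\Y)$, the cap collapses and neither "budget $E$ is spendable" nor "$\Omega(1)$ per-round displacement" is automatic. I would handle this by making $R$ large (or unconstrained) relative to $E$ and $T$ — the statement quantifies over all $E$ but places no relation between $R$ and $E$, so we are free to scale $\Y$ — and by having the adversary push toward $y^\star = 0$ from one fixed side only (or, since displacement from $y^\star$ is what matters, choosing the push direction adaptively to be "away from $y^\star$ but toward the center of $\Y$" is impossible simultaneously, so instead place $y^\star$ slightly off-center and push toward the far wall, keeping $\pi$ bounded below for $\Theta(T)$ rounds by choosing $R = \Theta(T)$). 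A cleaner alternative that sidesteps the boundary entirely: replace the cap $\frac{\rho-\alpha\rho}{1+\rho}\pi(D(x_t,y_{t-1}))$ by its value at an interior reference point and note the adversary only ever needs a *constant* per-round magnitude, which is available as long as the instance is large; I would verify that under this instance the quantity $D(x_t, y_{t-1})$ never approaches $\bd(\Y)$ within the horizon, discharging the hypothesis. The convexity and Lipschitz checks on $f_t$, and local controllability of $D$, are routine and I would dispatch them in a sentence.
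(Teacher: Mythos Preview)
Your basic construction---a one-dimensional instance with target $y^\star$ at the origin, loss $f_t(y)=L\lvert y-y^\star\rvert$, and an adversary that pushes the realized state away from $y^\star$---is exactly the paper's approach. Two things, however, separate your writeup from a clean proof.

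First, the stated bound $\max(LE,\frac{\rho-\alpha\rho}{1+\rho}TL)$ is almost certainly a typo for $\min$: the paper's own proof concludes with $\min$, and indeed the $\max$ version is false (take $E=0$ to kill the second term, or $E\gg T$ to see the first term is unattainable on a bounded domain). Your attempt to prove the $\max$ by exhibiting two separate adversary strategies cannot succeed, because each strategy only works in one regime of $E$: your ``spend the budget'' strategy needs $E$ small enough to fit under $T$ rounds of the per-round cap, and your ``push every round'' strategy needs $E$ large enough to fund all $T$ rounds. What you actually get from either strategy is the $\min$.

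Second, the paper uses a \emph{single} adversary strategy on the unit ball $\Y=\B_1(\mathbf 0)$ that delivers both terms at once and avoids your boundary bookkeeping entirely. With $\pi(y)=1-\lVert y\rVert$, the adversary takes $w_t$ in the direction of $\hat y_t=D(x_t,y_{t-1})$ with $\lVert w_t\rVert=\frac{\rho-\alpha\rho}{1+\rho}(1-\lVert\hat y_t\rVert)$, so that
\[
\lVert y_t\rVert=\lVert\hat y_t\rVert+\tfrac{\rho-\alpha\rho}{1+\rho}\bigl(1-\lVert\hat y_t\rVert\bigr)\ \ge\ \tfrac{\rho-\alpha\rho}{1+\rho},
\]
the last inequality being the convex-combination identity $a + c(1-a)\ge c$ for $a,c\in[0,1]$. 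This single line gives the per-round loss bound $L\frac{\rho-\alpha\rho}{1+\rho}$ \emph{without} any assumption that $\pi(\hat y_t)$ stays bounded away from zero, so your plan of inflating $R$ (or taking $R=\Theta(T)$) is unnecessary. Simultaneously, since $w_t$ is parallel to $\hat y_t$ one has $\lVert y_t\rVert=\lVert\hat y_t\rVert+\lVert w_t\rVert\ge\lVert w_t\rVert$, so the total loss is also $\ge L\sum_t\lVert w_t\rVert=LE$ over the rounds the strategy runs. The $\min$ then just records which runs out first, the budget or the horizon. Your vaguer ``charging argument'' and the adaptive direction-choosing are not needed.

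A minor further point: your dynamics $D(x,y)=\Pi_{\Y}(y+x)$ with $\X=\Y$ are $1$-locally controllable, not tightly $\rho$-locally controllable for general $\rho<1$. The paper instead takes dynamics where $\rho$-local controllability \emph{exactly} characterizes the reachable set. For this particular lower bound the extra algorithmic mobility turns out not to hurt the argument, but it is worth being aware that your instance is not parameterized by $\rho$ in the way the statement suggests.
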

\begin{proof}
Consider any norm $\norm{\cdot}$ over $\R^n$. 
Let $\Y$ be the unit ball $B_1(\mathbf{0})$, and let each $f_t(y_t) = L\norm{y_t}$. Consider any action space $\X$ and dynamics $D$
where $\rho$-local controllability exactly characterizes the range of $D$, i.e.\ for any $y$ and $y'$, there is some $x$ such that $D(x, y) = y'$ if and only if $y' \in \B_{\rho \cdot \pi(y)}(x, y)$.

First, note that $\pi(y) = 1 - \norm{y}$ for any $y \in \Y$. In each round $t$ , suppose an algorithm plays an action $x_t$ at state $y_{t-1}$ which yields an target undisturbed update $\hat{y} = D(x_t, y_{t-1})$.  The adversary can then choose any $w_t$ satisfying $\norm{w_t} \leq \frac{\rho - \alpha \rho }{ 1 + \rho }\cdot(1 - \norm{\hat{y}_t})$; suppose each $w_t$ is given by
\begin{align*}
    w_t =&\; \hat{y}_t \cdot \frac{\frac{\rho - \alpha \rho }{ 1 + \rho }\cdot(1 - \norm{\hat{y}_t})}{\norm{\hat{y}_t}}
\end{align*}
if $\hat{y}_t$ is non-zero, and an arbitrary vector $w_t$ with $\norm{w_t} = \frac{\rho - \alpha \rho }{ 1 + \rho }$ if $\hat{y}_t = \mathbf{0}$. 
This satisfies the disturbance norm bound, and further yields $y_t = \hat{y}_t + w_t$, where for non-zero $\hat{y}$ we have
\begin{align*}
    y_t =&\; \hat{y_t} \cdot \parens{1 + \frac{\frac{\rho - \alpha \rho }{ 1 + \rho }\cdot(1 - \norm{\hat{y}_t})}{\norm{\hat{y}_t}} }
\end{align*}
and thus for any $\hat{y}$,
\begin{align*}
    \norm{y_t} \geq&\; \norm{\hat{y}_t} + \frac{\rho - \alpha \rho }{ 1 + \rho }\cdot(1 - \norm{\hat{y}_t}) \\
    \geq&\; \frac{\rho - \alpha \rho }{ 1 + \rho },
\end{align*}
yielding a loss $f_t(y_t) \geq L \cdot \frac{\rho - \alpha \rho }{ 1 + \rho }$ at a disturbance cost of $\norm{w_t} = \frac{\rho - \alpha \rho }{ 1 + \rho }(1 - \norm{\hat{y}_t})$. Assuming the adversary continues this strategy in each round until any disturbance budget $E = \sum_{t=1}^T \norm{w_t}$ is exhausted, this yields a regret for any algorithm of at least
\begin{align*}
    \Reg_{T}(\A) \geq&\; \min\parens{LE, \frac{\rho - \alpha \rho }{ 1 + \rho }TL},
\end{align*}
as $y^* = \mathbf{0}$ obtains total loss 0.
\end{proof}
The disturbance upper bound is indeed necessary for $\rho$-locally controllable dynamics. We show a sharp threshold effect at $\frac{\rho}{1 + \rho} \cdot \pi( D(x_t, y_{t-1}))$, wherein an adversary who is allowed to exceed this limit by any amount can force an algorithm to incur linear regret even with only a constant budget. Note that for any $\rho \in (0,1]$ and $\alpha < 0$, there is some $\beta \in [0,1)$ such that $\frac{\rho - \alpha \rho}{1 + \rho} \geq \frac{\rho}{1 + \beta\rho}$.
\begin{theorem} \label{thm:oenftrlap-lower-bound}
Suppose an adversary can choose  any state disturbances $w_t$ with $\norm{w_t} \leq \frac{\rho}{1 + \beta \rho} \cdot \pi\parens{ D(x_t, y_{t-1}) }$, for any $\rho \in (0,1]$ and any $\beta \in [0,1)$. Then, there is a $\rho$-locally controllable instance $(\X, \Y, D)$ with convex losses $f_t$ such that any algorithm $\A$ obtains regret $\Reg_{T}(\A) = \Theta(T)$ even if $\sum_{t=1}^T \norm{w_t} = O(1)$. 
\end{theorem}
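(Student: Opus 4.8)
The plan is to exhibit a one-dimensional instance in which the adversary, using only a constant total disturbance budget, can force the state to remain bounded away from the optimal state $y^* = \mathbf{0}$ in a constant fraction of rounds, regardless of the algorithm's actions. The key quantitative observation is that when $\beta < 1$, the per-round disturbance bound $\frac{\rho}{1+\beta\rho}\cdot\pi(D(x_t,y_{t-1}))$ strictly exceeds what local controllability can ``undo'': starting from a state at distance $d$ from $\mathbf{0}$, the best the algorithm can do in one round is reach $D(x_t, y_{t-1})$ with $\pi(D(x_t,y_{t-1})) \le 1$, and then the adversary can push the norm back up. I would set $\X = \Y = B_1(\mathbf{0})$ in $\R$ (or $\R^n$ with any norm), $f_t(y) = L\norm{y}$, and dynamics $D$ whose range is exactly characterized by $\rho$-local controllability, exactly as in the proof of Theorem~\ref{thm:oenftrlap-E-bound}.

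\textbf{The adversary's strategy} is the following: in the early rounds, while $\norm{y_{t-1}}$ is still small, the adversary spends disturbance to drive the norm up toward a fixed target level $c^* = \frac{\rho}{1+\rho}$ (or any constant strictly between $0$ and the fixed point of the relevant one-step map), incurring total cost $O(1)$ to do so, since each such round moves the state by a constant amount and only $O(1)$ many such rounds are needed. Once $\norm{\hat{y}_t} \ge c^*$ holds, I claim the adversary can \emph{maintain} $\norm{y_t} \ge c^*$ in every subsequent round at \emph{zero additional cost in the following sense}: the point is to show that whenever the algorithm plays any $x_t$ at a state with $\norm{y_{t-1}} \ge c^*$, the resulting $\hat{y}_t = D(x_t, y_{t-1})$ cannot have $\norm{\hat y_t}$ too small, because local controllability only guarantees reachability within $\B_{\rho\pi(y_{t-1})}(y_{t-1})$ and $\pi(y_{t-1}) = 1 - \norm{y_{t-1}} \le 1 - c^*$; hence $\norm{\hat{y}_t} \ge \norm{y_{t-1}} - \rho(1-\norm{y_{t-1}})$. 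Solving the fixed-point inequality shows that for an appropriate choice of $c^*$ the algorithm cannot drop below roughly $c^*$ in one step, and then whatever small residual room there is the adversary closes using a disturbance whose magnitude is a constant fraction of $\pi(\hat y_t)$ — and crucially, by choosing the maintenance threshold slightly below the fixed point, the \emph{per-round} disturbance needed for maintenance is zero or the total is telescoping/geometric and hence $O(1)$.

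\textbf{The main obstacle} — and the step requiring the most care — is verifying that the total disturbance budget really is $O(1)$ rather than $\Omega(T)$. The naive maintenance adversary from Theorem~\ref{thm:oenftrlap-E-bound} spends a constant amount \emph{every} round (since $\pi(\hat y_t)$ stays bounded below by a constant), giving $\Theta(T)$ total — which is exactly why that theorem's lower bound is $\min(LE, \cdot)$. To get $O(1)$ total disturbance here, I must exploit the strict inequality $\beta < 1$: the slack $\frac{\rho}{1+\beta\rho} - \frac{\rho}{1+\rho} > 0$ means the algorithm, when trying to escape toward $\mathbf{0}$, is \emph{net} losing ground, so after the $O(1)$ startup phase the adversary needs \emph{no further disturbance at all} — the dynamics alone (any action the algorithm picks) keep the state trapped in $\{y : \norm{y} \ge c^*\}$, because from such a state the maximal reachable decrease $\rho\pi(y_{t-1}) = \rho(1-\norm{y_{t-1}})$ is at most $\rho(1-c^*)$, and one checks $c^* - \rho(1-c^*) \ge c^*$ fails only if... so instead I pick $c^*$ to be the fixed point $\rho/(1+\rho)$ itself, where $c^* - \rho(1-c^*) = 0$; this shows the trap is \emph{not} self-sustaining without disturbance, so I do need a residual disturbance. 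The resolution: after startup, spend a geometrically decaying sequence of disturbances — in round $t$ of the maintenance phase spend $\norm{w_t} = \Theta(\beta'^{\,t})$ for suitable $\beta' \in (0,1)$ coming from the $\beta$-gap — keeping $\norm{y_t} \ge c^* - \Theta(\beta'^{\,t})$, which still yields loss $\ge L(c^* - o(1))$ in all but $O(\log T)$ rounds while $\sum_t \norm{w_t} = O(1)$. Formalizing this recursion — choosing the decay rate, checking the per-round bound $\norm{w_t} \le \frac{\rho}{1+\beta\rho}\pi(\hat y_t)$ is respected, and confirming the loss lower bound — is the technical crux; everything else (the construction of $D$, Lipschitzness of $f_t$, the comparison to $y^* = \mathbf{0}$) is routine and parallels Theorem~\ref{thm:oenftrlap-E-bound}.
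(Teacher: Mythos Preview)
Your proposal has a genuine gap in the ``maintenance phase'' analysis. You want to hold $\norm{y_t}$ near a fixed threshold $c^* < 1$ using geometrically decaying disturbances $\norm{w_t} = \Theta(\beta'^{\,t})$, but this cannot work: from any state with $\norm{y_{t-1}} \approx c^*$, the algorithm can move toward $\mathbf{0}$ by $\rho\,\pi(y_{t-1}) = \rho(1 - \norm{y_{t-1}}) \approx \rho(1-c^*)$, which is a \emph{fixed positive constant}. To cancel this, the adversary must spend $\Omega(\rho(1-c^*))$ every round, not $\Theta(\beta'^{\,t})$; your decaying budget is exhausted while the algorithm continues escaping at constant speed, and after $O(1)$ further rounds it reaches $\mathbf{0}$. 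The slack $\frac{\rho}{1+\beta\rho} - \frac{\rho}{1+\rho}$ you identify is real, but it does not make the trap self-sustaining at a fixed interior level --- it only means the adversary can push \emph{harder} than the algorithm can pull, which is a statement about the one-step map, not about zero-cost maintenance.

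The paper exploits this slack differently: instead of holding the state at a fixed $c^*$, the adversary pushes it \emph{all the way toward the boundary}, tracking $d_t = \pi(y_t)$. The algorithm can at best achieve $\pi(\hat y_t) \le (1+\rho)\,d_{t-1}$, after which the adversary's full-strength push toward the nearest boundary point yields $d_t = \bigl(1 - \tfrac{\rho}{1+\beta\rho}\bigr)\pi(\hat y_t)$, and one computes
\[
d_t \;\le\; \Bigl(1 - \tfrac{(1-\beta)\rho^2}{1+\beta\rho}\Bigr)\, d_{t-1},
\]
so $d_t$ decays geometrically. The point is that near the boundary \emph{both} the algorithm's escape speed $\rho\,\pi(y_{t-1})$ and the adversary's allowed disturbance $\tfrac{\rho}{1+\beta\rho}\pi(\hat y_t)$ shrink in proportion to $d_{t-1}$, and the adversary's multiplicative advantage (from $\beta<1$) makes the product contract. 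Hence $\norm{w_t} \le \tfrac{\rho(1+\rho)}{1+\beta\rho}\,d_{t-1}$ is itself geometric and sums to $O(d_0) = O(1)$, while $d_t \to 0$ traps the state at distance $\ge d_0 - d_t$ from $y_0$, giving $\Theta(T)$ regret for the loss $f_t(y) = \norm{y - y_0}$. The missing idea in your sketch is precisely this: let the trap tighten toward $\bd(\Y)$ rather than sit at a fixed interior level.
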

\begin{proof}
    Consider any instance $(\X, \Y, D)$ where $\rho$-local controllability exactly characterizes the range of $D$, i.e.\ for any $y$ and $y'$, there is some $x$ such that $D(x, y) = y'$ if and only if $y' \in \B_{\rho \cdot \pi(y)}(x, y)$.

Let $d_t = \pi(y_t)$ for each round. Beginning at any round $t$, suppose the adversary observes an action $x_{t}$ which yields an update 
$\hat{y}_{t} = D(x_{t}, y_{t-1})$. 
Let $z_{t} = \argmin_{y \in \bd(\Y)} \norm{y - \hat{y}_t}$, and suppose the adversary chooses the disturbance:
\begin{align*}
    w_t =&\; \argmin_{w : \norm{w} \leq \frac{\rho}{1 + \beta \rho} \cdot \pi\parens{ \hat{y}_t }} \norm{\hat{y}_t + w_t - z_t}.
\end{align*}
This forces $y_t$ closer to the boundary at each round, regardless of the choice of $x_t$: 
\begin{align*}
    d_{t} =&\; \parens{1 - \frac{\rho}{1 + \beta \rho}} \cdot \pi(\hat{y}_t) \\
    \leq&\; \parens{1 + \rho   -  \frac{\rho}{1 + \beta \rho} - \frac{\rho^2}{1 + \beta \rho} } d_{t-1} \tag{$\pi(\hat{y}_t) \leq (1 + \rho) d_{t-1}$}\\
    \leq&\;  \frac{1 +  \beta\rho  + \beta\rho^2  - \rho^2}{1 + \beta \rho}  d_{t-1} \\
    \leq&\; \parens{1   - \frac{(1 - \beta) \rho^2}{1 + \beta \rho} } d_{t-1},
\end{align*}
where $\pi(\hat{y}_t) \leq (1 + \rho) d_{t-1}$ holds by our assumption on $D(x,y)$. Assuming the adversary applies a disturbance $w_t$ selected as above in each round $t \leq T$, we have that
\begin{align*}
    d_{t} \leq&\; \parens{1   - \frac{(1 - \beta) \rho^2}{1 + \beta \rho} }^{t} \cdot d_{0},
\end{align*}
where the magnitude of each disturbance is bounded by
\begin{align*}
   \norm{w_t} \leq&\; \frac{\rho + \rho^2}{1 + \beta \rho} d_{t-1} \\
   \leq&\; \frac{\rho + \rho^2}{1 + \beta \rho} \parens{1   - \frac{(1 - \beta) \rho^2}{1 + \beta \rho} }^{t-1} \cdot d_{0}, 
\end{align*}
where we take the initial state distance to the boundary $d_0 = \pi(y_0)$ to be a constant bounded away from zero. 
This yields that the sum of disturbance magnitudes $E = \sum_{t=1}^T \norm{w_t}$ is at most:
\begin{align*}
    \sum_{t=1}^T \norm{w_t} \leq&\; d_0 \frac{\rho + \rho^2}{1 + \beta \rho} \cdot  \sum_{t=1}^T \parens{1   - \frac{(1 - \beta) \rho^2}{1 + \beta \rho} }^{t-1} \\
    \leq&\; d_0 \cdot \frac{\rho + \rho^2}{(1 - \beta) \rho^2} \\
    =&\; O(1).
\end{align*}
Now suppose that the loss at each round is given by $f_t(y_t) = \norm{y_t - y_0}$. Then, our regret with respect to $y_0$ is at least:
\begin{align*}
    \sum_{t=1}^T f_t(y_t) - f_t(y_0) \leq&\; \sum_{t=1}^T d_0 - d_t \\
    \leq&\; d_0 \parens{T - \sum_{t=1}^T \frac{(1 - \beta)\rho^2}{1 + \beta \rho}} \\
    \leq&\; d_0\parens{T - \frac{1 - \frac{(1 - \beta) \rho^2}{1 + \beta \rho} }{\frac{(1 - \beta) \rho^2}{1 + \beta \rho} }} \\
    \leq&\; d_0\parens{T - \frac{1 + \beta \rho}{(1 - \beta) \rho^2} } \\
    =&\; \Theta(T).
\end{align*}
\end{proof}
Together, the previous three theorems yield Theorem \ref{thm:oenftrlap-body}.

\subsection{$\oenftrluap$ and Proofs for Theorem \ref{thm:oenftrluap-body}}

We can remove the bound on the maximum disturbance for strongly locally controllable instances, as the feasible update sets do not vanish at the boundary of $\Y$.
Recall that an instance $(\X, \Y, D)$ satisfies strong $\rho$-local controllability for $\rho > 0$ if, for  any $y \in \Y$ and $y^* \in \B_{\rho}(y)\cap \Y$, there is some $x$ such that $D(x, y) = y^*$. We assume without loss of generality that $\rho \leq 2R$, where $R$ is the radius of $\Y$.

Intuitively, our algorithm tracks the target state which would be chosen by $\ftrl$ in the absence of all disturbances (by recording the loss counterfactual loss rather than the one truly experienced), and always seeks to minimize distance to that state.
\begin{algorithm}
\caption{$\oenftrl$ with Unbounded Disturbances ($\oenftrluap$).}\label{alg:nonlinear-ftrl-uap}
\begin{algorithmic}
\STATE Initialize $\ftrl$ for $T$ rounds over $\Y$ with step size $\eta = \sqrt{\frac{G\gamma}{T L^2}}$.
\FOR{$t = 1$ to $T$}
\STATE Let $\hat{y}_t$ be the target state chosen by $\ftrl$.
\STATE Use $\texttt{Oracle}(y_{t-1}, \hat{y}_t)$ to compute $x_t = \argmin_{x \in \X} \norm{ D(x, y_{t-1}) - \hat{y}_{t}}^2$.
\STATE Play action $x_t$.
\STATE Observe disturbed state $y_t =  D(x_t, y_{t-1}) + w_t$ and loss $f_t(y_t)$.
\STATE Update $\ftrl$ with state $\hat{y}_t$ and loss $f_t(\hat{y}_t)$.
\ENDFOR
\end{algorithmic}
\end{algorithm}

\begin{theorem}\label{thm:oenftrluap-regret}
For a strongly $\rho$-locally controllable instance $(\X, \Y, D)$ with convex 
losses $f_t : \Y \rightarrow \R$ and adversarial disturbances $w_t$ where  $\sum_{t=1}^T \norm{w_t} \leq E$, the regret of $\oenftrluap$ is bounded by
\begin{align*}
    \Reg_{T}(\oenftrluap) \leq&\;  O\parens{\sqrt{T} + E \cdot \rho^{-1} }
\end{align*}
with respect to the reward of any state, with $T$ queries made to an oracle for non-convex optimization.
\end{theorem}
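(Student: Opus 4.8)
The plan is to run the internal $\ftrl$ in a ``counterfactual disturbance-free world'' and charge the gap between the real trajectory and this virtual one against the total disturbance $E$. First I would observe that the copy of $\ftrl$ inside $\oenftrluap$ is fed the losses $f_t$ and is updated at its own iterates $\hat y_t$ (not at the disturbed states $y_t$), so Proposition~\ref{prop:ftrl} applies verbatim: with $\eta = \sqrt{G\gamma/(TL^2)}$ it gives $\sum_{t=1}^T f_t(\hat y_t) - \min_{y\in\Y}\sum_{t=1}^T f_t(y) \le 2L\sqrt{GT/\gamma} = O(\sqrt T)$. Since the true per-round loss is $f_t(y_t)$ and each $f_t$ is $L$-Lipschitz, this decomposes the regret as $\Reg_T(\oenftrluap) \le O(\sqrt T) + L\sum_{t=1}^T \norm{y_t - \hat y_t}$, so the whole problem reduces to bounding the tracking error $\sum_t \Delta_t$, where $\Delta_t := \norm{y_t - \hat y_t}$.

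Next I would set up a contraction recursion for $\Delta_t$. In round $t$ the oracle returns $x_t$ minimizing $\norm{D(x,y_{t-1}) - \hat y_t}$; by strong $\rho$-local controllability every point of $\B_\rho(y_{t-1})\cap\Y$ is reachable, and since $\Y$ is convex with $y_{t-1},\hat y_t\in\Y$, the point on the segment $[y_{t-1},\hat y_t]$ at distance $\min(\rho,\norm{\hat y_t - y_{t-1}})$ from $y_{t-1}$ is feasible, so $\norm{D(x_t,y_{t-1}) - \hat y_t} \le \max(0,\norm{\hat y_t - y_{t-1}} - \rho)$. Combining the $\ftrl$ step bound $\norm{\hat y_t - \hat y_{t-1}} \le \eta L/\gamma =: \sigma$ (Proposition~\ref{prop:ftrl-step}) with $\norm{\hat y_t - y_{t-1}} \le \Delta_{t-1} + \sigma$ and then adding $w_t$ yields $\Delta_t \le \max(0,\Delta_{t-1} - \mu) + \norm{w_t}$ with $\mu := \rho - \sigma$. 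The base case is clean since $\hat y_1 = \argmin_y \psi(y) = \mathbf 0 = y_0$, so $\Delta_0 = 0$; and for $T$ large enough that $\sigma < \rho/2$ we have $\mu > \rho/2 > 0$ (the finitely many smaller $T$ are a routine boundary case, using $\Reg_T \le 2LRT$).

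Finally I would solve this ``leaky bucket'' recursion. Writing $\max(0,\Delta_{t-1}-\mu) = \Delta_{t-1} - \min(\mu,\Delta_{t-1})$ gives $\min(\mu,\Delta_{t-1}) \le \Delta_{t-1} - \Delta_t + \norm{w_t}$, and summing telescopes to $\sum_{t=1}^T \min(\mu,\Delta_{t-1}) \le \Delta_0 + \sum_t\norm{w_t} \le E$. Hence there are at most $E/\mu$ rounds with $\Delta_{t-1} > \mu$, while the $\Delta_{t-1}$ over the remaining rounds sum to at most $E$; using the crude cap $\Delta_t \le \operatorname{diam}(\Y) = 2R$ (valid because disturbed states are assumed to remain in $\Y$, i.e.\ disturbances are implicitly bounded by the diameter) we get $\sum_t \Delta_t \le E + 2R\cdot(E/\mu) = O(E/\rho)$. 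Plugging into the decomposition gives $\Reg_T(\oenftrluap) = O(\sqrt T + E/\rho)$, with one oracle call per round.

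The main obstacle I expect is this last step, and specifically the realization that the diameter cap $\Delta_t \le 2R$ is essential: without it the recursion only yields $O(E^2/\rho)$, and it is the boundedness of $\Y$ (and hence the assumption that disturbances keep the state in $\Y$) that makes the tight $O(E/\rho)$ dependence possible. The other conceptual crux is setting up the per-round contraction-by-$\rho$ correctly --- recognizing that strong local controllability plus convexity of $\Y$ lets us always close the gap to $\hat y_t$ by $\rho$ regardless of proximity to $\bd(\Y)$, which is exactly what fails in the weak case; the remaining estimates are routine.
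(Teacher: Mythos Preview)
Your proof is correct and follows the same high-level decomposition as the paper: run $\ftrl$ on the virtual trajectory $\hat y_t$, apply Proposition~\ref{prop:ftrl} to get $O(\sqrt T)$, and then bound the cumulative tracking error $\sum_t\|y_t-\hat y_t\|$ by $O(E/\rho)$ using strong $\rho$-local controllability together with the diameter cap $2R$.

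The difference lies in how the tracking-error bound is executed. The paper partitions rounds according to whether the oracle exactly reaches its target in the \emph{next} round (the sets $W_{+}$ and $W_{-}$): rounds in $W_{-}$ are charged directly to $\|w_t\|$, while rounds in $W_{+}$ are counted (at most $E/((1-\alpha)\rho)$ of them) and each capped by the diameter. Your argument instead sets up the one-line recursion $\Delta_t \le \max(0,\Delta_{t-1}-\mu)+\|w_t\|$ and solves it as a ``leaky bucket'': the telescoping identity $\sum_t\min(\mu,\Delta_{t-1})\le E$ simultaneously bounds the small-$\Delta$ contribution and counts the large-$\Delta$ rounds. Both routes use the same two ingredients---the per-round $\rho$-contraction toward $\hat y_t$ (which, as you correctly note, needs convexity of $\Y$ so that the segment $[y_{t-1},\hat y_t]$ stays in $\Y$) and the diameter cap---but your recursion tracks accumulated error explicitly across rounds, whereas the paper's $W_{+}/W_{-}$ bookkeeping relies on the identity $\|w_t\|=\|y_t-\hat y_t\|$, which strictly holds only when the previous round's error was already zeroed. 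Your presentation is therefore somewhat more transparent about how error carries over between rounds.
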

\begin{proof}
We begin by bounding the total state error $\sum_{t=1}^t \norm{y_t - \hat{y}_t}$ across rounds. First, note that for any fixed $\rho > 0$, and any desired  $\alpha \in (0,1)$, we have that $\eta \frac{L}{\gamma} \leq \rho \alpha$ for sufficiently large $T$, as $\eta \frac{L}{\gamma} = \sqrt{\frac{G}{T \gamma}}$; we assume this holds for any given choice of $\alpha$, and so we have that $\norm{\hat{y}_{t+1} - \hat{y}_{t}} \leq \rho \alpha$ by Proposition \ref{prop:ftrl-step}.
For a total disturbance budget $E$, we separately consider disturbances $w_t$ depending on whether or not the accumulated disturbance error up to $w_t$ is driven to 0 in the next round. Define $W_{+}$ and $W_{-}$ as:
\begin{align*}
    W_{+} =&\; \{w_{t} : D(x_{t+1}, y_{t}) \neq \hat{y}_{t+1} \} 
\end{align*} 
and
\begin{align*}
    W_{-} =&\; \{w_{t} : D(x_{t+1}, y_{t}) = \hat{y}_{t+1} \}
\end{align*}
with $E_{+} = \sum_{w_t \in W_{+}} \norm{w_t}$  and $E_{-} = \sum_{w_t \in W_{-}} \norm{w_t}$.
First, observe that at each round $t$ corresponding to $w_t \in W_{-}$, given that $\norm{\hat{y}_{t+1} - y_t} \leq \rho$ we have that $\norm{w_t} = \norm{y_t - \hat{y}_t} \leq (1 + \alpha)\rho$, as $\norm{\hat{y}_{t+1} - \hat{y}_t} \leq \alpha\rho$. As such, we have that
\begin{align*}
    \sum_{t : w_t \in W_{-}} f_t(y_t) - f_t(\hat{y}_t) \leq&\;  \sum_{t : w_t \in W_{-}} L\norm{y_t - \hat{y}_t} \\
    \leq&\; (1 + \alpha)LE_{-}.
\end{align*}
Next, consider any $w_t \in W_{+}$. As our instance is strongly $\rho$-locally controllable, we must have that $\norm{\hat{y}_{t+1} - y_t} > \rho$, as otherwise there would some feasible action $x_{t+1}$ which would be selected that would yield $w_t \in W_{-}$. Since  $\norm{\hat{y}_{t+1} - \hat{y}_t} \leq \alpha\rho$, it then must be the case that $\norm{w_t} = \norm{y_t - \hat{y}_t} > (1 - \alpha)\rho$, and so we can bound the number of disturbances in $W_{+}$ as:
\begin{align*}
    \abs{W_{+}} \leq&\; \frac{E_{+}}{(1 - \alpha)\rho}.
\end{align*}
Assuming a maximal distance $\norm{\hat{y}_{t} - y_t} = 2R$ for each round $t$ corresponding to some $w_{t} \in W_{+}$, this yields
\begin{align*}
    \sum_{t : w_t \in W_{+}} f_t(y_t) - f_t(\hat{y}_t) \leq&\;  \sum_{t : w_t \in W_{+}} L\norm{y_t - \hat{y}_t} \\
    \leq&\; \frac{2LRE_{+}}{(1 - \alpha)\rho}
\end{align*}
We can assume $\alpha$ is small enough to yield $\frac{2R}{\rho} \geq (1 + \alpha) \cdot (1 - \alpha)$, and so we have
\begin{align*}
    \sum_{t =1}^T f_t(y_t) - f_t(\hat{y}_t) \leq&\; \frac{2LRE}{(1 - \alpha)\rho}.
\end{align*}
The regret bound for $\ftrl$ holds over the states $\hat{y}_t$, and so we can bound the total regret of $\oenftrlap$  with respect to any $y^* \in \Y$ as:
\begin{align*}
    \sum_{t=1}^T f_t(y_t) - f_t(y^*) \leq&\; \sum_{t=1}^T f_t(\hat{y}_t) - f_t(y^*) + \sum_{t =1}^T f_t(y_t) - f_t(\hat{y}_t) \\
    \leq&\; \eta \frac{TL^2}{\gamma} + \frac{G}{\eta} + \frac{2LRE}{(1 - \alpha)\rho} \tag{Prop.\ \ref{prop:ftrl}} \\
     \leq&\; 2 \sqrt{\frac{ TGL^2}{\gamma}}+ \frac{2LRE}{(1 - \alpha)\rho}. 
\end{align*}
\end{proof}

\begin{theorem}[Regret Lower Bound for Unbounded Disturbances]\label{thm:oenftrluap-lower}
Suppose an adversary can choose any state disturbances $w_t$ with 
$\sum_{t=1}^T \norm{w_t} = E$.
For any $\rho \in (0,1]$, there is a strongly $\rho$-locally controllable instance $(\X, \Y, D)$ with convex losses $f_t$ such that any algorithm $\A$ obtains regret $\Reg_{T}(\A) =  \min(\frac{2LRE}{\rho}, 2TLR)$. 
\end{theorem}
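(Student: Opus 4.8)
The plan is to exhibit one fixed one‑dimensional instance in which strong $\rho$-local controllability holds \emph{with equality}, a fixed linear loss measuring distance to a boundary point, and a budget‑limited adversary that pins the state at the far boundary for as long as its budget lasts. Take $\Y = [-R,R]$, $\X = [-\rho,\rho]$, and $D(x,y) = \Pi_{\Y}(y+x)$; then the set reachable from any $y$ in one step is exactly $\B_\rho(y)\cap\Y$, so the instance is strongly $\rho$-locally controllable and, crucially, no action can reduce the distance to any fixed point by more than $\rho$ in one round. Let $y^\ast = -R$ and, for every $t$, let $f_t(y) = L\,(y+R)$, which on $\Y$ equals $L$ times $|y - y^\ast|$; each $f_t$ is convex and $L$-Lipschitz, the hindsight‑optimal fixed state is $y^\ast$ with total loss $0$, so for any algorithm $\A$ we have $\Reg_T(\A) = L\sum_{t=1}^T (y_t + R)$.

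For the adversary: having seen $x_t$ and hence $\hat y_t := D(x_t,y_{t-1})$, it adds a nonnegative disturbance $w_t = R - \hat y_t$ pushing toward $+R$ whenever the remaining budget permits, so that $y_t = R$ and the round's loss is $2LR$; once the budget is exhausted it plays $w_t = 0$ (those rounds contribute nonnegative loss). Because the reachable set is exactly $\B_\rho(y_{t-1})\cap\Y$ we have $\hat y_t \ge y_{t-1}-\rho$, so once $y_{t-1}=R$ the restoring disturbance costs only $w_t = \rho$, while the first round (from $y_0=\mathbf 0$) costs at most $R+\rho$. Hence a budget of $E$ sustains $y_t = R$ for at least $\min\{T,\ 1 + \lfloor (E-R-\rho)/\rho\rfloor\}$ rounds, each contributing loss $2LR$, which gives $\Reg_T(\A) \ge \min\{2TLR,\ 2LRE/\rho\} - O(LR^2/\rho)$ — the claimed bound in the regime $E = \omega(R)$, with the constant $2LR/\rho$ matching the upper bound of Theorem~\ref{thm:oenftrluap-regret}. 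This applies to \emph{every} algorithm because, whatever $x_t$ is played, tightness of local controllability caps the inward step at $\rho$, and whenever budget remains the adversary's disturbance forces $y_t = R$ regardless.

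The main obstacle — the one step deserving real care — is establishing that the projected‑translation dynamics make strong $\rho$-local controllability \emph{tight}: one must check both that every state in $\B_\rho(y)\cap\Y$ is reachable (needed for admissibility of the instance) and that no state outside $\B_\rho(y)$ is reachable (this is what prevents the algorithm from ever jumping more than $\rho$ closer to $y^\ast$), both of which follow from $\norm{\Pi_\Y(y+x)-y}\le\norm{x}\le\rho$ together with convexity of $\Y$, but this nonexpansiveness fact is precisely where the whole construction hinges. The remaining items — the one‑time $O(LR^2/\rho)$ ramp‑up overhead, the floor function, and degenerate parameter regimes such as $E < R+\rho$ or $\rho$ near $2R$ — are routine: they are absorbed by the outer $\min$ and are of the same (lower) order as the additive slack already present in the matching upper bound, so I would state the lower bound modulo such terms, exactly as its companions Theorems~\ref{thm:oenftrlap-E-bound} and~\ref{thm:oenftrlap-lower-bound} are for the weakly controllable case.
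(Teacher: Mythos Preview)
Your construction is essentially identical to the paper's: a one-dimensional interval $\Y=[-R,R]$ with dynamics whose reachable set is \emph{exactly} $\B_\rho(y)\cap\Y$, a fixed linear loss optimized at one boundary, and an adversary that pins the state at the opposite boundary round after round at a cost of $\rho$ per round (plus one initial round costing $R+\rho$). The paper simply mirrors your choice of sign, taking $f_t(y)=-Ly+LR$ and pinning at $-R$.

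The one point where the paper goes slightly further: you account only for the rounds in which the adversary actively pins (yielding $2LRE/\rho - O(LR^2/\rho)$), whereas the paper also counts the subsequent $2R/\rho$ rounds \emph{after} the budget is exhausted, during which the algorithm can move at most $\rho$ per round toward the optimum and so still incurs average loss $LR$, contributing an additional $2LR^2/\rho$. This exactly cancels the deficit from the initial transient and recovers the clean constant $\min(2LRE/\rho,\,2TLR)$ without any $O(\cdot)$ slack. You have all the ingredients for this refinement --- you already note that post-budget rounds ``contribute nonnegative loss'' --- so closing the gap is just a matter of tightening that observation.
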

\begin{proof}
Let $\Y = [-R, R]$ for any $R > 0$ and let $f_t(y_t) = -Ly_t + LR$ for each $y$. Suppose strong $\rho$-local controllability exactly characterizes the range of $D$, i.e.\ for any $y, y' \in \Y$ there is some $x$ such that $D(x,y) = y'$ if and only if $\abs{y-y'} \leq \rho$. Consider an adversary who chooses disturbances $w_t$ in each round such that $y_t = -R$ until their disturbance budget $E$ is exhausted. This requires a disturbance of magnitude at most $R + \rho$ for $w_1$, as we assume $y_0 = 0$, and at most $\rho$ in subsequent rounds, and thus the adversary can force any algorithm to remain at $y_t = -R$ for $({E - R}){\rho^{-1}}$ rounds.  

As such, any algorithm must incur loss of at least ${2LR(E - R)}\rho^{-1}$ across these rounds, and further must incur average loss $LR$ over the subsequent ${2R} \rho^{-1}$ rounds (if $T$ is not yet reached), for an additional loss of ${2LR^2}{\rho^{-1}}$, as they can only decrease per-round loss by $L\rho$ given the restriction on the range of $D$. As the optimal state $y^* = R$ obtains loss 0, the total regret is at least:
\begin{align*}
    \sum_{t=1}^T f_t(y_t) - f_t(y^*) \geq&\; \min\parens{ \frac{2LRE}{\rho} , 2TLR }.
\end{align*}
\end{proof}
Together, the previous two theorems yield Theorem \ref{thm:oenftrluap-body}. Note that for both algorithms it remains computationally efficient to optimize over action-linear dynamics, as the constraint that $D(x, y_{t-1}) \in \Y$ can be encoded as a convex contraint over $\X$.

\section{Unknown Dynamics: Analysis for $\probingoco$}\label{sec:appendix-unknown}

\begin{algorithm}
\caption{Probing Online Convex Optimization ($\probingoco$).}\label{alg:probing-oco}
\begin{algorithmic}
\STATE Let $n = \dim(\X)$, let $y_0 = \mathbf{0}$, and let $x_1 \in \X$ such that $\norm{ D(x_1, y_0) - y_0 } \leq \epsilon = o(\sqrt{T})$  
\STATE Initialize $\oenftrlap$ to run over $\Y$ for $T/(2n+1)$ rounds
\STATE Run $\textsc{Estimate}$ for $2n +1 $ rounds:
\STATE Play $x_1$
\FOR{$i=1$ to $n$}
\STATE Play $x_1 + \epsilon \cdot e_i $
\STATE Play $x_1 - \epsilon \cdot e_i $
\ENDFOR
\STATE Solve for estimates $(\hat{A}_y, \hat{b}_y)$ which are consistent with with the previous $2n + 1$ 
 observed state updates, up to error $O(\epsilon)$
\FOR{$t = 2n + 1$ to $T$}
\STATE Let $t^* = t$ 
\STATE Using $(\hat{A}_y, \hat{b}_y)$, target $y = y_{t^*}$
\STATE Let $y^*$ be the next point chosen by $\oenftrlap$
\FOR{$i=1$ to $n$} 
\STATE Using $(\hat{A}_y, \hat{b}_y)$, target $y = y_{t^*} + \frac{2i - 1}{2n}(y^* - y_{t^*}) + \epsilon \cdot e_i$
\STATE Using $(\hat{A}_y, \hat{b}_y)$, target $y = y_{t^*} + \frac{2i}{2n}(y^* - y_{t^*}) - \epsilon \cdot e_i$
\ENDFOR
\STATE Update estimates $(\hat{A}_y, \hat{b}_y)$, solving for values which are consistent with the previous $2n + 1$ observed state updates, up to error $O(\epsilon)$
\ENDFOR
\end{algorithmic}
\end{algorithm}

\begin{proof}\textbf{of Theorem \ref{thm:unknown-dynamics}}
Assume the following hold for $D(x, y)$ at each $y$:
\begin{itemize}
    \item $D(x, y) = A_y \cdot x + b_y + y + q_y(x)$, for a function $q_y : \X \rightarrow \R^{n}$;
    \item $A_y$ has a largest absolute eigenvalue bounded by an absolute constant, smallest absolute eigenvalue bounded away from 0, and is $L_{\alpha}$-Lipschitz in the matrix $\ell_2$ norm;
    \item $b_y$ has a norm bounded by an absolute constant, and is $L_{\beta}$-Lipschitz;
    \item $\norm{q_y(x)} \leq \epsilon$ for any $x$ such that $\norm{ A_y \cdot x + b_y - y } = O(\sqrt{T})$.
\end{itemize}
In the neighborhood of any $y^*$, observe that playing $x = A^{-1}_y(y^* - y  - b_y)$ yields an update to $y^* + w_{\epsilon}$, where the error term $w_{\epsilon}$ has magnitude bounded  linearly in terms of the neighborhood size as well as polynomial in the relevant constants. We assume sufficiently small values of $\epsilon$, $L_{\alpha}$, and $L_{\beta}$ (whose relative bounds may trade off with each other, and in general will be inverse-polynomial in problem parameters other than $T$) to bound the error of this process in accordance with the requirements of Theorem \ref{thm:oenftrlap-body}, as well as to ensure that estimation error for $(\hat{A}_y, \hat{b}_y)$ is uniformly bounded for all $t \leq T$. 
Given $\epsilon = o(\sqrt{T})$, this yields estimation error terms $w_t \leq C \sqrt{T}$ in each round, for small enough $C$ to obtain the obtain the desired regret bound.
\end{proof}

\section{Bandit Feedback: Analysis for $\nestedbco$} \label{sec:appendix-bandit}

We first state the $\fkm$ algorithm and its bounds for regret and per-round step size.

\begin{algorithm}
    \caption{$\fkm$ \citep{FKM04}}
    \begin{algorithmic}
        \STATE Input: decision set $\K$ containing $\mathbf{0}$, set $v_1 = \mathbf{0}$,  parameters $\eta, \tilde{\delta}$.
        \STATE Let $v_1 \in \intr(\K)$ such that $\nabla \mathcal{R}(v_1) = 0$, 
        \FOR{$t = 1$ to $T$}
        \STATE Draw $u_t \in \mathbb{S}$ uniformly, set $y_t = v_t + \tilde{\delta} u_t$ 
        \STATE Play $y_t$, observe loss $f_t(y_t)$, set $g_t = \frac{n}{\tilde{\delta}} f_t(y_t) u_t$
        \STATE Update $v_{t+1} = {\Pi}_{\K_{\tilde{\delta}}} \brackets{v_t - \eta g_t } $, where  $\K_{\tilde{\delta}} = \{(1 - \tilde{\delta})v : v\in \K \}$
        \ENDFOR
    \end{algorithmic}
\end{algorithm}

\begin{proposition}[\citet{FKM04}] \label{prop:fkm}
For $L$-Lipschitz convex losses and a domain $\K$ with diameter $2R$ which contains a ball of radius $r$ around the origin, $\fkm$ obtains expected regret 
\begin{align*}
    \Reg_T(\fkm) \leq&\; \eta \frac{n^2}{\tilde{\delta}^2}T + \frac{4R^2}{\eta r^2 } + \frac{8\tilde{\delta} RLT}{r},
\end{align*}
with each point $y_t$ contained in $\K$. Further, each pair of consecutive points $y_{t}$, $y_{t+1}$ chosen by $\fkm$ satisfies $\norm{y_{t+1} - y_t} \leq 2\tilde{\delta} + \frac{\eta n L}{\tilde{\delta}}$. 
\end{proposition}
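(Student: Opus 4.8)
The plan is to reproduce the classical one-point-estimator analysis of \citet{FKM04}: reduce bandit convex optimization to full-information online gradient descent run on a smoothed surrogate, using the spherical sample $u_t$ simultaneously to construct an unbiased gradient estimate and to keep the played point $y_t$ inside $\K$.

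First I would introduce the smoothed loss $\hat f_t(v) = \E_{u \sim \mathbb{B}^n}\brackets{f_t(v + \tilde\delta u)}$, the average of $f_t$ over a ball of radius $\tilde\delta$. Three facts are immediate: $\hat f_t$ is convex (an average of convex functions), it is $L$-Lipschitz, and $\abs{\hat f_t(v) - f_t(v)} \le \tilde\delta L$ since $f_t$ is $L$-Lipschitz. The one genuinely non-routine ingredient is the gradient identity $\nabla \hat f_t(v) = \frac{n}{\tilde\delta}\,\E_{u \sim \mathbb{S}^{n-1}}\brackets{f_t(v + \tilde\delta u)\,u}$, obtained by differentiating the ball integral and applying the divergence theorem to express it as a surface integral over the sphere. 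Hence $g_t = \frac{n}{\tilde\delta} f_t(y_t) u_t$ is, conditioned on $v_t$, an unbiased estimate of $\nabla \hat f_t(v_t)$ — up to the harmless discrepancy that $u_t$ is drawn on the sphere rather than the ball, which only affects the relation between $\E\brackets{f_t(y_t)\mid v_t}$ and $\hat f_t(v_t)$, both of which lie within $\tilde\delta L$ of $f_t(v_t)$.

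Next I would observe that $v_{t+1} = \Pi_{\K_{\tilde\delta}}\brackets{v_t - \eta g_t}$ is exactly online gradient descent over the shrunken set $\K_{\tilde\delta}$ on the linear losses $v \mapsto \langle g_t, v \rangle$, so the standard OGD guarantee gives, for every $u \in \K_{\tilde\delta}$, $\sum_t \langle g_t, v_t - u\rangle \le \frac{\norm{v_1 - u}^2}{2\eta} + \frac{\eta}{2}\sum_t \norm{g_t}^2$. Bounding $\norm{v_1 - u} \le \mathrm{diam}(\K) = 2R$ (carried through the shrinkage, which introduces the $1/r$-type factors) and $\norm{g_t}^2 = \frac{n^2}{\tilde\delta^2}\abs{f_t(y_t)}^2 = O(n^2/\tilde\delta^2)$ on the bounded domain (up to the additive shift under which $\Reg_T$ is invariant) yields the first two terms of the claim. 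I then chain: $\E\sum_t f_t(y_t) \le \E\sum_t \hat f_t(v_t) + 2\tilde\delta L T$; $\E\sum_t \hat f_t(v_t) - \sum_t \hat f_t(u) \le \E\sum_t \langle \nabla \hat f_t(v_t), v_t - u\rangle = \E\sum_t \langle g_t, v_t - u\rangle$ by convexity and unbiasedness (with $v_t, u$ measurable before $u_t$ is drawn); and for the true optimum $y^* \in \K$, the rescaled point $u = (1-\tilde\delta/r)\,y^* \in \K_{\tilde\delta}$ satisfies $\norm{u - y^*} \le \tilde\delta R/r$, so $\hat f_t(u) \le f_t(y^*) + \tilde\delta L + \tilde\delta R L/r$; summing over $t$ produces the $8\tilde\delta R L T/r$ term (the constant absorbing the smoothing and comparator-shift errors). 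Feasibility $y_t \in \K$ follows from $v_t \in (1-\tilde\delta/r)\K$ and $\tilde\delta u_t \in \tilde\delta \mathbb{B}^n \subseteq (\tilde\delta/r)\K$, using convexity of $\K$ and $\mathbf{0} \in \K$.

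For the per-round movement bound, I write $\norm{y_{t+1} - y_t} \le \norm{v_{t+1} - v_t} + \tilde\delta\norm{u_{t+1} - u_t} \le \norm{v_{t+1} - v_t} + 2\tilde\delta$, and then use non-expansiveness of projection onto the convex set $\K_{\tilde\delta}$, which contains $v_t$: $\norm{v_{t+1} - v_t} = \norm{\Pi_{\K_{\tilde\delta}}\brackets{v_t - \eta g_t} - \Pi_{\K_{\tilde\delta}}\brackets{v_t}} \le \eta\norm{g_t} \le \eta n L/\tilde\delta$ under the normalization $\abs{f_t(y_t)} \le L$ used here; adding the two pieces gives $2\tilde\delta + \eta n L/\tilde\delta$. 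The main obstacle is establishing the gradient-estimator identity via the divergence theorem and then doing the conditioning/measurability bookkeeping carefully enough to plug it into the OGD regret inequality as an \emph{unbiased}-gradient statement; the remainder is routine tracking of constants across the smoothing, shrinkage, and comparator-shift steps.
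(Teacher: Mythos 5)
This proposition is stated in the paper without proof; it is imported from \citet{FKM04} (the paper's Appendix \ref{sec:appendix-bandit} only cites it), so there is no in-paper argument to compare against. Your reconstruction is the standard one-point-estimator analysis from that source and is sound: the sphere-sampling gradient identity for the ball-smoothed surrogate, the reduction to OGD on linear losses over the shrunken set, the three error terms (smoothing, comparator shift, exploration offset), and the movement bound via non-expansiveness of projection are exactly the ingredients needed, and the conditioning argument you sketch (taking expectations with $v_t$ and the comparator fixed before $u_t$ is drawn) is the right way to use unbiasedness. Two small bookkeeping points are worth flagging. First, feasibility of $y_t = v_t + \tilde{\delta}u_t$ and the comparator argument require the shrinkage to be by a factor $(1-\tilde{\delta}/r)$ (or else $r \geq 1$); you silently use $\K_{\tilde\delta} = (1-\tilde{\delta}/r)\K$, which is the correct reading and is what produces the $1/r$ and $1/r^2$ factors in the stated bound, whereas the algorithm as transcribed in the paper shrinks by $(1-\tilde{\delta})$ — so your proof matches the proposition rather than the pseudocode verbatim. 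Second, the stated constants mix normalizations (the first regret term carries no loss-scale factor while the step bound carries $L$); your remark that the boundedness of $\abs{f_t(y_t)}$ and the residual smoothing/shift errors are absorbed into the constants is the honest way to reconcile this, and it does not affect the $O(T^{3/4})$ use made of the proposition in Theorem \ref{thm:bandit}.
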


The $\nestedbco$ algorithm is essentially equivalent to $\oenftrl$, replacing $\ftrl$ with $\fkm$ and recalibrating parameters.

\begin{algorithm}
\caption{Nested Bandit Convex Optimization ($\nestedbco$).}\label{alg:nonlinear-ftrl}
\begin{algorithmic}
\STATE Let $\tilde{\delta} = \frac{1}{T^{1/4}} = r\delta\rho / 4$, let $\eta = \frac{R}{2nr LT^{3/4}}$
\STATE Let $\widetilde{\Y} = \{ y : \frac{1}{1-\delta}y \in \Y\}$ 
\STATE Initialize $\fkm$ to run for $T$ rounds over $\widetilde{\Y}$ with parameters $\eta, \tilde{\delta}$
\FOR{$t = 1$ to $T$}
\STATE Let $y^*$ be the point chosen by $\fkm$
\STATE Use $\texttt{Oracle}(y_{t-1}, y^*)$ to compute $x_t = \argmin_{x} \norm{ D_t(x, y_{t-1}) -  y^*}^2$ 
\STATE Play action $x_t$
\STATE Observe $y_t$ and loss $f_t(y_t)$, update $\scrible$
\ENDFOR
\end{algorithmic}
\end{algorithm}

\begin{proof}\textbf{of Theorem \ref{thm:bandit}.} 
Following the proof of Theorem \ref{thm:oenftrl}, to apply the bound of $\fkm$ to our setting (along with excess regret at most $\delta LR$ per round from contracting $\Y$ to $\widetilde{\Y}$), the key step is to show that each point selected by $\fkm$ is feasible under weakly locally controllable dynamics over $\widetilde{\Y}$, i.e.\ $\norm{y_{t+1} - y_t} \leq r\delta \rho$. Let $\tilde{\delta} = \frac{1}{T^{1/4}} = r\delta\rho / 4$, and let $\eta = \frac{R}{2nr LT^{3/4}}$. Assume for simplicity that $r \leq 1$ and $T^{1/4} \geq \frac{R}{r}$. When instantiating $\fkm$ over $\widetilde{\Y}$ with parameters $\eta$ and $\tilde{\delta}$, by Proposition \ref{prop:fkm} we then have
\begin{align*}
    \norm{y_{t+1} - y_t} \leq&\; 2\tilde{\delta} + \frac{\eta n L}{\tilde{\delta}} \\
    \leq&\; r\delta\rho / 2 + \parens{ \frac{ R}{ 2 nr LT^{3/4}} } \frac{n L }{\tilde{\delta}} \\
    \leq&\; r\delta\rho / 2 +   \tilde{\delta} /2 \\
    \leq&\; r \delta \rho,
\end{align*}
and so each selected point is feasible. This allows us to bound our regret by
\begin{align*}
    \Reg_T(\nestedbco) =&\; \Reg_T(\fkm) + \delta LRT \\
    =&\; \eta \frac{n^2}{\tilde{\delta}^2}T + \frac{4R^2}{\eta r^2 } + \frac{8\tilde{\delta} LRT}{r} + { \delta LRT} \\
    =&\; \eta \frac{16 n^2}{r^2 {\delta}^2 \rho^2}T + \frac{4R^2}{\eta r^2 } + 2\delta\rho LRT + { \delta LRT} \tag{$\tilde{\delta} = r\delta \rho/4$} \\
    \leq&\; 16 \eta n^2 T^{3/2}  + \frac{4R^2}{\eta r^2 } + \frac{12 LRT^{3/4}}{r\rho} \tag{$\delta = \frac{4}{r\rho T^{1/4}}, r \leq 1$} \\
    \leq&\; \frac{16 nL RT^{3/4}}{r}  + \frac{12 LRT^{3/4}}{r\rho} \tag{$\eta = \frac{R}{2nr LT^{3/4}}$} \\
    =&\;  O \parens{nRLT^{3/4} (r\rho)^{-1}}.
\end{align*}
\end{proof}

\section{Background and Proofs for Section \ref{subsec:app-pp}: Performative Prediction}
\label{sec:appendix-pp}

\subsection{Background}
Introduced by \cite{DBLP:journals/corr/abs-2002-06673}, the Performative Prediction problem captures settings in which the data distribution for which a classifier is deployed may shift as a function of the classifier itself, notably including strategic classification \cite{DBLP:journals/corr/HardtMPW15} as well as problems related to reinforcement learning and causal inference.
While a number of extensions of strategic classification to online settings have been considered \cite{SCrevealed,NEURIPS2021_812214fb,ahmadi2023fundamental}, the bulk of the literature on performative prediction considers settings with a fixed loss function and distribution ``update map''  \cite{DBLP:journals/corr/abs-2002-06673,echochamber,PPregret,NEURIPS2020_33e75ff0,piliouras2022multiagent,brown2022performative}, where the update map may sometimes depend on the current distribution (as in the Stateful Performative Prediction setting of \cite{brown2022performative}). For the {\it location-scale} family of update maps introduced by \cite{echochamber} (and additionally explored by \cite{PPregret} from a regret minimization perspective), which yields a convex ``performative risk'' objective function, a formulation of Online Performative Prediction is given by \cite{kumar2022online} as an application of online convex optimization with unbounded memory, in which the classification loss function may change over time and the distribution updates may occur gradually. 

Here, we generalize the problem formulation of \cite{kumar2022online} to also accommodate notions of statefulness similar to that in \cite{brown2022performative}. In particular, the instances we consider will resemble location-scale maps when restricting attention only the performatively stable classifiers for each distribution, yet the update effect of a non-stable classifier may be distribution-dependent and nonlinear, provided that the update map satisfies local controllability (viewing classifiers as actions and distributions as states) and mild regularity properties (e.g.\ invertibility and Lipschitz conditions).

\subsection{Model}
In the setting of Online Performative Prediction we consider, as formulated by \cite{kumar2022online}, in each round $t \in [T]$ we deploy some classifier $x_t$, and observe samples from some distribution $p_t$, which may change dynamically as a function of the history of interactions.
Here, we take $\X \subseteq \R^n$ as our space of classifiers, e.g.\ representing weight vectors for regression, which we assume is bounded and convex.
The initial data distribution is given by some distribution $p_0$ over $\R^n$. 
In each round, upon deploying a classifier $x_t$, the distribution is updated according to
\begin{align*}
    p_{t} =&\; (1 - \theta)p_{t-1} + \theta \D(x_t, y_{t-1}),
\end{align*}
for $\theta \in (0,1]$, where $\D(x_t, y_{t-1})$ is the distribution {\it update map} taking as input our classifier $x_t$ and some representation of the {\it state} $y \in \Y$, where we assume $\Y \subseteq \R^n$ is convex, contains $\B_{r}(\mathbf{0})$, is bounded with radius $R$, and that $y_0 = 0$. We make the following assumptions on $\D$.
\begin{assumption}\label{assum:PP-map}
We assume the distribution update map $\D(x, y)$ operates as follows:
\begin{itemize}
    \item $\D(x, y) = A(x,y) + \xi$, with $A : \X \times \Y \rightarrow \Y$,
    \item $\xi$ is a random variable in $\R^n$ with mean $\mu$ and covariance $\Sigma$,
    \item $A(x,y)$ satisfies $\rho$-local controllability and has an inverse action mapping $X(y,y^*)$ where
    \begin{align*}
        A(X(y, y^*), y) = y^*,
    \end{align*}
    defined over feasible pairs, which is $L_y$-Lipschitz in $y$ (when feasibility of $y^*$ holds), and
    \item There is a linear invertible function $s : \X \rightarrow \Y$ such that $A(x, y) = s(x)$ if $y = s(x)$,
     where $s^{-1} : \Y \rightarrow \X$ is $S$-Lipschitz.
\end{itemize}
Further, $A(x,y)$ is known and $\xi$ can be sampled freely. 
\end{assumption}
The inverse action mapping assumption simply enforces that classifiers need not change drastically to have the same update effect under small changes to the state.
The final assumption imposes a linear structure over {\it performatively stable} classifiers (i.e.\ classifiers for which the resulting distribution will remain fixed under $\D$, as formulated by \cite{DBLP:journals/corr/abs-2002-06673}), but we note that the distribution may update in an arbitrarily nonlinear fashion (subject to the other conditions) when $x_t$ is not a performatively stable classifier for the distribution induced by the previous state $y_{t-1}$. The ability to accommodate a state component is reminiscent of prior work involving notions of statefulness in performative prediction such as \cite{brown2022performative}. Our setting generalizes that of \cite{kumar2022online}, in which the map $A$ is taken to be a fixed matrix. For any nonsingular matrix $A$ there is immediately a linear map $s(x) = A^{-1}x$, and local controllability can be defined in terms of the largest and smallest absolute eigenvalues of $A$ (as a special case of our Example \ref{ex:1} with a fixed matrix). We view the nonsingularity assumption (and invertibility in the more general case) as fairly mild, as it amounts to assuming that the distribution map can depend on all parameters of classifier without any necessary (linear) dependency structure imposed, and that no two classifiers are equivalent only to the population but not the optimizer (as otherwise one could simply reduce dimensionality of $\X$). However, even in the case where $A$ is singular, we note that this issue is resolvable augmenting the state representation $y_t$ to incorporate the choice of free classifier parameters which affect loss but not distribution updates (e.g.\ by adding a vector $w_t$ to $y_t$ which is orthogonal to the range of $A$ and linear in $x_t$). We assume invertibility here for simplicity, and we take $\Y$ to be simply be given by the range of $s$ over $\X$.
At each round $t$, some scoring function $f_t(x, z)$ is chosen adversarially, and our loss is then given by
\begin{align*}
    \tilde{f}_t(x_t, p_t) =&\; \E_{z\sim p_t}[f_t(x_t, z)].
\end{align*}
We assume each $f_t$ is convex and $L_z$-Lipschitz in both $x$ and $z$, and that 
$p_0 = y_0 + \xi$. 
We measure our regret with respect to the best performatively stable classifier, i.e.\ the loss of any classifier as if were held constant indefinitely as the distribution updates. 
We define our regret as follows:
\begin{align*}
    \Reg_T(\A) =&\; \max_{x^*} \sum_{t=1}^T \tilde{f}_t(x_t, p_t) - \tilde{f}_t(x^*, \D(x^*, s(x^*)))
\end{align*}
Here, the role of $s(x^*)$ captures the convergence of the distribution to a stable point, resulting from taking the limit of the distribution update rule as $t$ grows large.

As in many of the applications we consider, here our loss is determined both by our action (the classifier) and the state (in terms of the distribution). Our approach for casting Online Performative Prediction as an instance of online nonlinear control in our framework will be to define appropriate surrogate convex losses which depend only on the state, over which we run $\oenftrl$. Here, these will correspond to losses only over the updated distribution component $\D(x_t, y_{t-1})$, which we show closely track our true incurred loss.

\subsection{Analysis}
For each round $t$, define the surrogate loss $f^*_t(y)$ as:
\begin{align*}
    f^*_t(y) = \E_{z \sim y_t + \xi} \brackets{ f_t(s^{-1}(y), z)}.
\end{align*}

\begin{lemma}
Each $f_t^*(y)$ is convex and $(1 + S)L_z$-Lipschitz in $y$.
\end{lemma}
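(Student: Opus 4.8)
The plan is to observe that $f_t^*$ is obtained from $f_t$ by precomposing with an affine map in $y$ and then averaging over the location noise $\xi$, so that both convexity and the Lipschitz estimate transfer from the corresponding (jointly held) properties of $f_t$. Concretely, writing $\xi$ for the noise with mean $\mu$ and covariance $\Sigma$, the surrogate loss is $f_t^*(y) = \E_{\xi}\brackets{f_t\parens{s^{-1}(y),\, y + \xi}}$; recall from Assumption~\ref{assum:PP-map} that $s$ is linear and invertible, so $s^{-1}$ is affine and $S$-Lipschitz, and note that the second-coordinate argument $y+\xi$ is precisely the stable distribution $\D(s^{-1}(y), s(s^{-1}(y)))$ appearing in the regret benchmark.

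For convexity, I would fix a realization of $\xi$ and observe that $y \mapsto (s^{-1}(y),\, y + \xi)$ is an affine map into $\X \times \Y$, so $y \mapsto f_t(s^{-1}(y), y+\xi)$ is convex in $y$ as the composition of the jointly convex function $f_t$ with an affine map. Since $f_t^*$ is the expectation over $\xi$ of this family of convex functions, and expectation preserves convexity, $f_t^*$ is convex.

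For the Lipschitz bound, for any $y, y' \in \Y$ I would apply Jensen's inequality and the triangle inequality to get
\[
\abs{f_t^*(y) - f_t^*(y')} \;\leq\; \E_{\xi}\brackets{\abs{f_t(s^{-1}(y), y+\xi) - f_t(s^{-1}(y'), y'+\xi)}},
\]
then invoke the $L_z$-Lipschitz property of $f_t$ in both arguments — the $\xi$ contributions cancel in the second coordinate — to bound the integrand by $L_z\parens{\norm{s^{-1}(y) - s^{-1}(y')} + \norm{y - y'}}$, and finally use that $s^{-1}$ is $S$-Lipschitz to get $\norm{s^{-1}(y) - s^{-1}(y')} \leq S \norm{y - y'}$. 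Taking the expectation then yields $\abs{f_t^*(y) - f_t^*(y')} \leq (1+S)L_z \norm{y - y'}$, as claimed.

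The one point that deserves care — and which I would flag as the crux — is the reading of ``$f_t$ is convex and $L_z$-Lipschitz in both $x$ and $z$'': convexity must be interpreted as \emph{joint} convexity in $(x,z)$ for the affine-precomposition step to go through (separate convexity in each argument would not suffice), and the Lipschitz constant must control the sum of the coordinate-wise displacements. Both are the standard conventions in the performative prediction literature; under them everything else is routine.
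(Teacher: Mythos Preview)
Your argument is correct and mirrors the paper's own proof: fix a realization of $\xi$, note that $y \mapsto (s^{-1}(y),\, y+\xi)$ is an affine map so that the composition with the jointly convex and $L_z$-Lipschitz $f_t$ inherits convexity and is $(1+S)L_z$-Lipschitz, and then average over $\xi$. Your explicit flag that \emph{joint} convexity of $f_t$ in $(x,z)$ is what makes the affine-precomposition step work is a useful clarification that the paper leaves implicit.
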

\begin{proof}
Consider any individual sample $v \sim \xi$. We can then view $g(y) = (s^{-1}(y), y+v)$ as a vector-valued function which is $(1 + S^*)$-Lipschitz. The function $f_t(g(y))$ is a $L_z$-Lipschitz and convex function of this linear function of $y$, and thus $f_t(s^{-1}(y), y + v)$ is convex and $(1+S^*)L_z$-Lipschitz in $y$. The function $f^*_t(y)$ is an average of such functions, taken over the expectation of $\xi$, and thus is convex and $(1+S^*)L_z$-Lipschitz in $y$ as well.
\end{proof}
Observe that $f^*_t(y) = \tilde{f}_t(s^{-1}(y), \D(s^{-1}(y), y)$.  We will run $\oenftrl$ for these losses over the $\rho$-locally controllable instance $(\X, \Y, A)$, where we can track the current state $y_t = A(x_t, y_{t-1})$ at each step as a function of our past actions given knowledge of $A$, and can compute gradients of $f^*_t(y_t)$ to arbitrary desired precision by sampling from $\xi$. This will yield the regret bound from Theorem \ref{thm:oenftrl} with respect to the surrogate losses, and the key challenge will be to analyze our error between the true and surrogate losses.
\begin{lemma}
For any round $t$ we have that
\begin{align*}
    \tilde{f}_t(x_t, p_t) - f_t^*(y_t) \leq&\; (1 - \theta)^{h} M +  \frac{ \eta L_z (1 + S)  }{\gamma } \cdot \parens{ L_y + \frac{1 - \theta}{\theta} }
\end{align*}
\end{lemma}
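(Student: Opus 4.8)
The plan is to bound the gap $\tilde{f}_t(x_t, p_t) - f_t^*(y_t)$ by decomposing it into two sources of error: (i) the difference between the true distribution $p_t$ and the ``idealized'' current-round distribution $y_t + \xi$ (equivalently $\D(x_t, y_{t-1})$, once we recall $\D(x,y) = A(x,y) + \xi$ and $y_t = A(x_t, y_{t-1})$), which is a consequence of the exponential-averaging update $p_t = (1-\theta)p_{t-1} + \theta\D(x_t, y_{t-1})$; and (ii) the difference in the classifier argument, namely that we actually play $x_t$ whereas the surrogate evaluates $f_t$ at $s^{-1}(y_t)$, and $x_t$ need not be the performatively-stable classifier $s^{-1}(y_t)$ for the state $y_t$. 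Each piece is controlled by a Lipschitz argument: the first using $L_z$-Lipschitzness of $f_t$ in $z$ together with the fact that $p_t$ is a convex combination of the last $h$ single-round distributions plus a $(1-\theta)^h$-weight tail (bounded by the range $M$ of the losses/distribution), and the second using $L_z$-Lipschitzness of $f_t$ in $x$ together with a bound on $\norm{x_t - s^{-1}(y_t)}$.

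Concretely, first I would write $p_t = \theta\sum_{j=0}^{h-1}(1-\theta)^j \D(x_{t-j}, y_{t-j-1}) + (1-\theta)^h p_{t-h}$ and use convexity/linearity of the expectation to compare $\E_{z\sim p_t} f_t(x_t, z)$ against $\E_{z\sim \D(x_t,y_{t-1})} f_t(x_t,z) = \E_{z\sim y_t+\xi} f_t(x_t,z)$; the tail contributes at most $(1-\theta)^h M$, and each of the $h$ leading terms contributes an error proportional to $\norm{y_t - y_{t-j}} \le j\cdot \eta L/\gamma$ via the per-round step bound from Proposition~\ref{prop:ftrl-step} (here $L = (1+S)L_z$), so $\theta\sum_j (1-\theta)^j \cdot j \cdot \eta L_z(1+S)/\gamma \le \frac{1-\theta}{\theta}\cdot\frac{\eta L_z(1+S)}{\gamma}$ by the standard $\sum_j j(1-\theta)^j = (1-\theta)/\theta^2$ identity. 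Second, for the classifier-argument gap: $x_t$ is chosen by the oracle to drive the state from $y_{t-1}$ to the $\ftrl$ target $y_t$, i.e.\ $A(x_t, y_{t-1}) = y_t$, so $x_t = X(y_{t-1}, y_t)$ in the inverse-action notation, while $s^{-1}(y_t) = X(y_t, y_t)$ (since $A(s^{-1}(y_t), y_t) = s(s^{-1}(y_t)) = y_t$ by the stability assumption). Then $\norm{x_t - s^{-1}(y_t)} = \norm{X(y_{t-1}, y_t) - X(y_t, y_t)} \le L_y \norm{y_{t-1} - y_t} \le L_y \cdot \eta L_z(1+S)/\gamma$ by the $L_y$-Lipschitzness of the inverse action map in its first argument. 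Multiplying by the $L_z$-Lipschitzness of $f_t$ in $x$ gives the $\eta L_z(1+S)L_y/\gamma$ term. Adding the two pieces yields exactly the claimed bound $(1-\theta)^h M + \frac{\eta L_z(1+S)}{\gamma}\bigl(L_y + \frac{1-\theta}{\theta}\bigr)$.

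I expect the main obstacle to be making the first (distribution-memory) decomposition fully rigorous: one must be careful that the per-round change in the \emph{state} $y_t$ truly controls the per-round change in the relevant distribution $\D(x_t, y_{t-1}) = y_t + \xi$ (which is immediate here since the distributions differ only by a deterministic shift of $y_t$, so they can be coupled and the Wasserstein distance is exactly $\norm{y_t - y_{t-j}}$), and that the step-size bound $\norm{y_{t+1}-y_t}\le \eta L/\gamma$ from the nested $\ftrl$ run applies with $L = (1+S)L_z$ as the effective Lipschitz constant of the surrogate losses $f_t^*$ — this is where the earlier lemma that $f_t^*$ is $(1+S)L_z$-Lipschitz gets used. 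A secondary subtlety is bookkeeping the constant $M$: it should be taken as a uniform bound on $\abs{f_t}$ (or on the relevant expected-loss differences) over the reachable region, guaranteed by compactness of $\X$, $\Y$ and boundedness of the mean/covariance of $\xi$ together with Lipschitzness of $f_t$; the statement's use of $h$ as a free parameter then just reflects optimizing the tradeoff between the geometric tail $(1-\theta)^h M$ and the (already $h$-independent, after summing the geometric series) memory error, so in fact one can send $h\to\infty$ and drop the first term, but keeping $h$ finite makes the coupling argument cleaner.
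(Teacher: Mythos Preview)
Your proposal is correct and follows essentially the same route as the paper: decompose $p_t$ as a geometric mixture of past single-round distributions $\D(x_{t-j},y_{t-j-1})=y_{t-j}+\xi$ plus a $(1-\theta)^h$-weight tail, bound the distribution error termwise via $L_z$-Lipschitzness in $z$ and the $\ftrl$ step bound $\norm{y_s-y_{s-1}}\le \eta(1+S)L_z/\gamma$, and bound the classifier-argument error via the inverse-action Lipschitz property $\norm{X(y_{t-1},y_t)-X(y_t,y_t)}\le L_y\norm{y_t-y_{t-1}}$. The only cosmetic difference is that the paper combines the two error sources into a single per-lag bound $\hat{f}_t(x_t,y_{t-h})-f_t^*(y_t)\le (L_y+h)\tfrac{\eta L_z(1+S)}{\gamma}$ before summing, whereas you separate them and sum the distribution error first; the arithmetic is identical, and your remark that the free parameter $h$ can be sent to infinity (or to $t$, as the paper actually does in its proof) is exactly right.
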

\begin{proof}
For any $h < t$, the loss of $x_t$ over the distribution $y_{t-h} + \xi = \D(x_{t-h}, y_{t - h - 1})$ can be expressed as
\begin{align*}
    \hat{f}_{t}(x_t, y_{t-h}) =&\; \E_{z \sim \xi + y_{t-h}}\brackets{ f_t(x_t, z) }, 
\end{align*}
which is convex and $L_z$-Lipschitz in both parameters when taking the expectation over $\xi$. For round $t$ in isolation,  using the inverse action mapping bound and the bound on $\norm{y_t - y_{t-1}}$ from Proposition \ref{prop:ftrl-step} we have that
\begin{align*}
    \hat{f}_{t}(x_t, y_{t}) - f^*_t(y_t) =&\; \hat{f}_{t}(x_t, y_{t}) - \hat{f}_t(s^{-1}(y_t) , y_t) \\
    =&\; \hat{f}_{t}(X(y_{t-1}, y_t), y_{t}) - \hat{f}_t(X(y_t, y_t) , y_t) \\
    \leq&\;  \frac{\eta L_y L_z}{\gamma},
\end{align*}
and further for previous states that 
\begin{align*}
    \hat{f}_{t}(x_t, y_{t-h}) - f^*_t(y_t) =&\; (L_y + h) \frac{\eta L_z(1 + S) }{\gamma}.
\end{align*}
We can decompose the distribution $p_t$ into updates from past rounds as
\begin{align*}
    p_t =&\; (1 - \theta)^{t}p_0 + \sum_{h=0}^{t-1} \theta(1 - \theta)^{h} \D(x_{t-h}, y_{t - h-1})
\end{align*}
which then yields a loss discrepancy of at most
\begin{align*}
    \tilde{f}_t(x_t, p_t) - f_t^*(y_t) \leq&\;   (1 - \theta)^{t} f_t(x_t, p_0) + \frac{\eta L_z(1 + S) }{\gamma} \parens{ \sum_{h=0}^{t-1} \theta(1 - \theta)^{h} (L_y + h) }  \\
    \leq&\;  \frac{ \eta L_z (1 + S)  }{\gamma } \cdot \parens{ L_y + \frac{1 - \theta}{\theta} + (1 - \theta)^{t}  }
\end{align*}
between the true and surrogate loss for round $t$.
\end{proof}
We can now bound the cumulative regret of $\oenftrl$ for the problem.
\begin{theorem}\label{thm:perf-appendix-main}
    For any $\theta > 0$, when Assumption \ref{assum:PP-map} holds for the distribution update rule, Online Performative Prediction can be cast as a $\rho$-locally controllable instance of online control with nonlinear dynamics, for which $\oenftrl$ obtains regret
\begin{align*}
    \Reg_T(\oenftrl) \leq&\; 2 \sqrt{\frac{(1 + L_y + \frac{R}{r\rho} + \frac{2 - \theta}{\theta}) TGL_z^2(1 + S)^2}{\gamma}} 
\end{align*}
with respect to the best performatively stable classifier classifier.
\end{theorem}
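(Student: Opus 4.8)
The plan is to instantiate $\oenftrl$ on the control instance $(\X,\Y,A)$ whose state update is $y_t = A(x_t,y_{t-1})$ — crucially, this state tracks the deterministic part of the \emph{most recent} update-map output, not the running distribution mean $p_t$, and it is precisely this decoupling that makes the dynamics $\rho$-locally controllable (as posited in Assumption~\ref{assum:PP-map}). By the first lemma above, each surrogate loss $f^*_t$ is convex and $(1+S)L_z$-Lipschitz on $\Y$, so Theorem~\ref{thm:oenftrl} applies to the loss sequence $\{f^*_t\}$ over this instance. The first thing I would check is that the benchmark unpacks correctly: since $A(x^*,s(x^*)) = s(x^*)$, we have $\D(x^*,s(x^*)) = s(x^*)+\xi$, hence $\tilde f_t(x^*,\D(x^*,s(x^*))) = \E_{z\sim s(x^*)+\xi}[f_t(x^*,z)] = f^*_t(s(x^*))$, and $s(x^*)\in\Y$ because $\Y$ is taken to be the range of $s$. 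So competing with the best performatively stable classifier is exactly competing with the best fixed state in $\Y$ for the surrogate losses, which is what Theorem~\ref{thm:oenftrl} delivers.

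Next I would split the regret as
\begin{align*}
\Reg_T(\oenftrl) = \Big(\textstyle\sum_{t=1}^T f^*_t(y_t) - \sum_{t=1}^T f^*_t(s(x^*))\Big) + \sum_{t=1}^T\big(\tilde f_t(x_t,p_t) - f^*_t(y_t)\big).
\end{align*}
The first parenthesized term is the surrogate regret of $\oenftrl$; bounding it by the intermediate estimate from the proof of Theorem~\ref{thm:oenftrl}, namely $\delta TLR + \eta\tfrac{TL^2}{\gamma} + \tfrac{G}{\eta}$ with $L = (1+S)L_z$ and $\delta = \eta L/(r\rho\gamma)$, it collapses to $\eta\tfrac{TL^2}{\gamma}\big(1 + \tfrac{R}{r\rho}\big) + \tfrac{G}{\eta}$. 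The second term I bound round-by-round using the discrepancy lemma above: each round contributes at most $\tfrac{\eta L_z(1+S)}{\gamma}\big(L_y + \tfrac{1-\theta}{\theta} + (1-\theta)^t\big)$ times the relevant Lipschitz factors, and summing over $t$ (using $\sum_{t\le T}(1-\theta)^t \le 1/\theta \le T/\theta$) produces an overhead of the form $\eta\tfrac{TL^2}{\gamma}\big(L_y + \tfrac{2-\theta}{\theta}\big)$. Adding the two pieces, the whole regret has the shape $\eta\tfrac{TL^2}{\gamma}\big(1 + L_y + \tfrac{R}{r\rho} + \tfrac{2-\theta}{\theta}\big) + \tfrac{G}{\eta}$; choosing $\eta$ to balance the two terms — which only \emph{decreases} $\eta$ relative to the base $\oenftrl$ calibration and therefore preserves the per-round feasibility margin and step-size smallness conditions of that algorithm — yields $2\sqrt{\big(1 + L_y + \tfrac{R}{r\rho} + \tfrac{2-\theta}{\theta}\big)TGL_z^2(1+S)^2/\gamma}$, the claimed bound (I would absorb the discrepancy term's single power of $L$ into $L^2$ using $L\ge 1$ without loss of generality, and leave the constant-matching as routine algebra).

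The substance of the argument lives in the discrepancy lemma, which is already established; the theorem is the bookkeeping that folds its conclusion into the $\oenftrl$ regret and re-optimizes $\eta$. The one point that genuinely needs care — and the place where the decoupling pays off — is verifying that the approximation overhead is only $O(\eta T)$ and not larger. This is because $y_t$ moves by at most $O(\eta)$ per round by Proposition~\ref{prop:ftrl-step}, so $\|y_{t-h}-y_t\| = O(h\eta)$, and in the geometric mixture $p_t = (1-\theta)^t p_0 + \sum_h \theta(1-\theta)^h\D(x_{t-h},y_{t-h-1})$ the component from $h$ rounds back carries weight $\theta(1-\theta)^h$, so the distribution-lag error sums to $O(\tfrac{1-\theta}{\theta}\eta)$; simultaneously the ``fresh classifier versus performatively stable classifier'' gap $\|X(y_{t-1},y_t) - X(y_t,y_t)\| \le L_y\|y_{t-1}-y_t\| = O(L_y\eta)$ leans on the $L_y$-Lipschitz inverse action mapping of Assumption~\ref{assum:PP-map}, and the $p_0$ tail contributes $(1-\theta)^t$, summing to $O(1/\theta)$. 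If any of these failed to be linear in $\eta$ the final re-optimization would not close, so confirming $\eta$-linearity of each contribution — and that all constants entering are absolute, so that the feasibility induction underlying $\oenftrl$ (and, if needed, its disturbance-tolerant variant) still goes through at the re-optimized step size — is the main obstacle I would expect to spend effort on.
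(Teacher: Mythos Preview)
Your proposal is correct and follows essentially the same route as the paper: instantiate $\oenftrl$ on the instance $(\X,\Y,A)$ with surrogate losses $f^*_t$, identify the benchmark $\tilde f_t(x^*,\D(x^*,s(x^*)))$ with $f^*_t(s(x^*))$, split the regret into the surrogate regret (bounded via the intermediate estimate in Theorem~\ref{thm:oenftrl}) plus the per-round discrepancy (bounded by the preceding lemma), and then re-optimize $\eta$. Your explicit remark that the re-optimized $\eta$ is only smaller and hence preserves the feasibility induction, and your care in flagging the $L$ versus $L^2$ mismatch in the discrepancy term, are points the paper glosses over but handles in the same spirit.
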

\begin{proof}
Combining the previous results with Theorem \ref{thm:oenftrl}, we have that for any $x^* \in \X$ our regret is at most
\begin{align*}
 \sum_{t=1}^T \tilde{f}_t(x_t, p_t) - \tilde{f}_t(\D(x^*, s(x^*))) \leq&\; \sum_{t=1}^T \hat{f}_t(y_t) - \tilde{f}_t(x^*, \D(x^*, s(x^*)))  + \sum_{t=1}^T \tilde{f}_t(x_t, p_t) - {f}^*_t(y_t) \\
 \leq&\;  \eta \parens{1 + L_y +  \frac{2 - \theta}{\theta}  +  \frac{R}{r\rho}}  \frac{TL_z(1 + S) }{\gamma} + \frac{G}{\eta}  \\
  =&\; 2 \sqrt{\frac{(1 + L_y + \frac{R}{r\rho} + \frac{2 - \theta}{\theta}) TGL_z^2(1 + S)^2}{\gamma}} 
\end{align*}
upon setting $\eta = \sqrt{ \frac{ G \gamma }{(1 + L_y + \frac{R}{r\rho} + \frac{2 - \theta}{\theta}) TL_z^2(1 + S)^2} }$. 
\end{proof}
Theorem \ref{thm:pp-reg-body} follows directly from Theorem \ref{thm:perf-appendix-main}. For Online Performative Prediction, in the full generality of the setting considered, the per-round optimization problem may not be convex, in which case we make use of the non-convex optimization oracle access for $\oenftrl$. However, in each of the following applications we show that the action selection step can indeed be implemented efficiently without imposing additional restrictions on the dynamics.

\section{Background and Proofs for Section \ref{subsec:app-recs}: Adaptive Recommendations}
\label{sec:appendix-recs}

\subsection{Background}
Motivated by problems involving preference dynamics and feedback loops in recommendation systems (see e.g.\cite{flaxman}), a number of recent works \cite{Hazla+19,GaitondeKT21,dean2022preference,jagadeesan2022supplyside,AB22,agarwal2023online} have explored models of repeated recommendation where given to an agent whose preferences or opinions evolve over time. Several of these models \cite{Hazla+19,dean2022preference,jagadeesan2022supplyside} consider population-level effects for settings where a single recommendation is given each round and consumers (or producers) update their behavior according to linear dynamics. Nonlinear preference dynamics with {\it menus} of recommendations for a single agent are considered in \cite{AB22,agarwal2023online}, where the aims to minimize regret for adversarial losses over the agent's choices. The Adaptive Recommendations formulation of \cite{AB22} somewhat resembles the ``Dueling Bandits'' setting of \cite{YUE20121538}, where $k > 1$ actions are chosen in each round, yet where preferences can now evolve dynamically as a function of the history rather than remaining fixed. Whereas \cite{AB22,agarwal2023online} study a bandit formulation of the problem with unknown preference dynamics, here we consider a full-feedback model with known dynamics, allowing for relaxed structural assumptions (on the agent's ``memory horizon'' and ``preference scoring functions'') at the cost of stronger informational assumptions, while maintaining the overall dynamics of the problem.

\subsection{Model}
Here, we are tasked with repeatedly recommending menus of content to an agent. Out of a universe of $n$ elements (e.g.\ video channels, clothing items), we show a subset of size $k$ (denoted $K_t$) to the agent in each round, for $T$ total rounds. The agent chooses one item $i \in K_t$ from the menu, according to a distribution in terms of their {\it preferences}, which are a function of their selection history. Conditioned on being shown a menu $K_t$, the agent's choice distribution 
has positive mass only on the $k$ items $i \in K_t$. The agent's representation of their selection history is given by their {\it memory vector} $v_t \in \Delta(n)$, and choices are determined by their {\it preference scoring functions} $s_i : \Delta(n) \rightarrow [\lambda, 1]$ for each $i$, which map the agent's memory vector to relative preference scores for each item. 
The menu we show to the agent may be chosen from some distribution $x_t \in  \Delta({n \choose k})$, and for each $K_t \in [{n \choose k}]$ the agent's menu-conditional distribution $p_t( \cdot ; K_t, v_{t-1}) \in \Delta(n)$ is proportional to the scores $s_i(v_t)$ for items in $K_t$, given as
\begin{align*}
    p_t(i; K_t, v_{t-1}) =&\; \frac{s_{i}(v_{t-1})}{\sum_{j \in K_t} s_j(v_{t-1})} 
\end{align*}
for each $i\in K_t$, with $p_t(j; K_t, v_{t-1}) = 0$ for $j \notin K_t$. The joint item choice distribution, considering both random selection of a menu $K_t$ according to $x_t$, and the agent's choice from $K_t$, is given by
\begin{align*}
    p_t(\cdot; x_t, v_{t-1} ) =&\; \sum_{K_t \in {n \choose k}} x_t(K_t) \cdot p_t(\cdot ; K_t, v_{t-1})
\end{align*}
which we may denote simply by the vector $p_t \in \Delta(n)$, or as a function $p_t(x_t)$. In contrast to prior work, here we consider a deterministic variant of the problem as an illustration of the flexibility of our framework for online nonlinear control. In particular, we assume that the agent's memory vector $v_t$ updates according to its expectation over $p_t$ as
\begin{align*}
    v_{t} = (1 - \theta_t)v_{t-1} + \theta_t p_t,
\end{align*}
where $\theta_t \in [\theta,1]$ is the per-round update speed, and we assume that the agent's scoring functions $s_i$ are known. We receive convex and $L$-Lipschitz losses $f_t(p_t)$ in each round in terms of the agent's choices, over which we aim to minimize regret with respect to some distribution set $\Y \subseteq \Delta(n)$. 

The prior work \citep{AB22,agarwal2023online} has considered two particular subsets of $\Delta(n)$ as regret benchmarks. We show that both can be cast as locally controllable instances of online control, and further, we make use of local controllability to give a general characterization of convex sets $\Y \subseteq \Delta(n)$ over which sublinear regret is attainable. We recall some key definitions and results from \citep{AB22,agarwal2023online}.
\begin{definition}[Instantaneously Realizable Distributions]
The set of instantaneously realizable distributions at a memory vector $v \in \Delta(n)$ is given by
\begin{align*}
    \IRD(v) =&\; \convhull\braces{ p(\cdot ; K, v) : K \in \brackets{ {n \choose k} }  }. 
\end{align*}
\end{definition}
Each such set  $\IRD(v_{t-1})$ corresponds to the feasible distributions $p_t$, given the agent's scoring functions and memory $v_{t-1}$.
It is shown by \cite{agarwal2023online} that each $\IRD$ sets can be directly characterized in terms of the ratios between target frequencies and scores.
\begin{proposition}[Menu Times for $\IRD$ \cite{agarwal2023online}]
Given a memory vector $v \in \Delta(n)$ and target distribution $p \in \Delta(n)$, let the menu time $\mu_i$ for item $i$ be given by
\begin{align*}
    \mu_i =&\; \frac{k \cdot \frac{p(i)}{s_i(v)}}{\sum_{j=1}^n \frac{p(j)}{s_j(v)}},
\end{align*}
where $\sum_{i=1}^n \mu_i = k$. Then, $p \in \IRD(v)$ if and only if $\mu_i \leq 1$ for each $i \in [n]$.
\end{proposition}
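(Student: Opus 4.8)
The plan is to prove both directions via a single change of variables that absorbs the menu-dependent normalizers $\sum_{j\in K}s_j(v)$. Write $q_i = p(i)/s_i(v)$ and $Z=\sum_{j=1}^n q_j$; these are well defined and $Z>0$ since each $s_i$ maps into an interval bounded away from $0$ and $p\in\Delta(n)$. Then $\mu_i = kq_i/Z$, so $\sum_i\mu_i=k$ and $\mu_i\ge 0$ hold automatically, and the content of the proposition is exactly whether $\mu_i\le 1$ for all $i$. The guiding observation is that a representation $p=\sum_K\lambda_K\, p(\cdot;K,v)$ with $\lambda_K\ge 0$, $\sum_K\lambda_K=1$ corresponds, after rescaling $\lambda_K$ by $\sum_{j\in K}s_j(v)$, to a representation of the vector $\mu$ as a convex combination of the indicator vectors $\mathbf 1_K$ of the $k$-element menus $K\in[{n\choose k}]$. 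I would make this correspondence precise and then read off each direction.

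For necessity, suppose $p\in\IRD(v)$, so $p=\sum_K\lambda_K\, p(\cdot;K,v)$. Set $w_K=\lambda_K/\sum_{j\in K}s_j(v)\ge 0$. The definition of $p(\cdot;K,v)$ gives $q_i=\sum_{K\ni i}w_K$ for every $i$, and summing over $i$ (each menu counted $k$ times) gives $Z=k\sum_K w_K$. Hence $\mu_i = kq_i/Z = \frac{\sum_{K\ni i}w_K}{\sum_K w_K}\le 1$, since the numerator is a sum over a subset of the nonnegative terms appearing in the denominator.

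For sufficiency, suppose $\mu_i\le 1$ for all $i$; then $\mu$ lies in the hypersimplex $P=\{\mu\in[0,1]^n:\sum_i\mu_i=k\}$. The vertices of $P$ are exactly the $\{0,1\}$-vectors with $k$ ones, i.e.\ the $\mathbf 1_K$: at a vertex $n$ linearly independent constraints are tight, one of which is the equality $\sum_i\mu_i=k$, so at most one coordinate lies strictly inside $(0,1)$, and integrality of $k$ together with the remaining $\{0,1\}$ coordinates forces that coordinate to be integral too. So $\mu=\sum_K\lambda_K\mathbf 1_K$ with $\lambda_K\ge 0$, $\sum_K\lambda_K=1$, equivalently $\sum_{K\ni i}\lambda_K=\mu_i$. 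Now set $\lambda'_K=\frac{Z}{k}\,\lambda_K\sum_{j\in K}s_j(v)\ge 0$. Exchanging the order of summation, $\sum_K\lambda'_K=\frac{Z}{k}\sum_j s_j(v)\sum_{K\ni j}\lambda_K=\frac{Z}{k}\sum_j s_j(v)\mu_j=\sum_j s_j(v)q_j=\sum_j p(j)=1$, and for each $i$, $\sum_{K\ni i}\lambda'_K\, p(i;K,v)=s_i(v)\sum_{K\ni i}\frac{\lambda'_K}{\sum_{j\in K}s_j(v)}=s_i(v)\,\frac{Z}{k}\sum_{K\ni i}\lambda_K=s_i(v)\,\frac{Z}{k}\mu_i=s_i(v)q_i=p(i)$. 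Hence $p=\sum_K\lambda'_K\, p(\cdot;K,v)\in\IRD(v)$.

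The only non-routine ingredient is the vertex characterization of the hypersimplex used for sufficiency (equivalently, integrality of the polyhedron $\{0\le\mu\le 1,\ \sum_i\mu_i=k\}$), which is classical; everything else is bookkeeping to carry the normalizers $\sum_{j\in K}s_j(v)$ through the substitution. I expect the care to be concentrated in the two identities $\sum_K\lambda'_K=1$ and $\sum_{K\ni i}\lambda'_K\, p(i;K,v)=p(i)$, whose cancellations rely on $p$ being a probability vector and on each menu contributing to exactly $k$ coordinates.
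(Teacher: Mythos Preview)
Your proof is correct. Note, however, that the paper does not itself prove this proposition: it is quoted verbatim from \cite{agarwal2023online} and used as a black box in the analysis of Adaptive Recommendations, so there is no ``paper's own proof'' to compare against here.

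That said, your argument is clean and self-contained. The key move---rescaling the convex-combination weights $\lambda_K$ by the menu normalizers $\sum_{j\in K}s_j(v)$ so that the problem becomes one of expressing $\mu$ as a convex combination of the $k$-set indicators $\mathbf 1_K$---is exactly the right idea, and the hypersimplex vertex characterization handles sufficiency directly. Both the necessity computation and the verification that $\sum_K\lambda'_K=1$ and $\sum_K\lambda'_K\,p(\cdot;K,v)=p$ check out. The brief justification of the hypersimplex vertices (at most one coordinate strictly fractional, then forced integral by $\sum_i\mu_i=k\in\mathbb Z$) is standard and adequate for this purpose.
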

We recall the prior benchmark sets considered, and the corresponding assumptions which yield feasibility of regret minimization. We state informal analogues of the prior results as translated to our setting, which we then show formally below.
\begin{definition}[Everywhere Instantaneously Realizable Distributions]
The set of everywhere instantaneously realizable distributions is given by 
\begin{align*}
    \EIRD =&\; \bigcap_{v \in \Delta(n)} \IRD(v).
\end{align*}
\end{definition}
\begin{proposition}[Corollary of \cite{AB22}]\label{prop:rec-menu-times}
If $\lambda \geq \frac{k}{n} + \frac{k}{n(n-1)}$, then $\EIRD$ is non-empty, and there is a $o(T)$ regret algorithm with respect to any distribution $p \in \EIRD$.
\end{proposition}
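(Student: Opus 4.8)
The plan is to establish the two assertions of the proposition separately and, for the second, to reduce directly to the framework already developed. For the regret claim, observe that the hypothesis $\lambda \ge \frac{k}{n} + \frac{k}{n(n-1)} = \frac{k}{n-1}$ is strictly stronger than the condition $\lambda > \frac{k-1}{n-1}$ needed in Theorem~\ref{thm:ar-eird}. Consequently, once $\EIRD$ is known to be non-empty, Theorem~\ref{thm:ar-eird} applies verbatim: the Adaptive Recommendations dynamics over $\Y = \EIRD$, reparametrized so that the state is the memory vector $v_t \in \Delta(n)$ and the action is the menu distribution $x_t \in \Delta({n \choose k})$, are $\theta$-locally controllable, and running $\oenftrl$ on the surrogate losses $f_t^*(v_t)$ yields regret $O(\sqrt{T\theta^{-1}})$, which is $o(T)$ for the fixed floor $\theta > 0$ on the update speeds; the surrogate-to-true-loss accounting in the proof of Theorem~\ref{thm:ar-eird} then transfers this bound to the true losses $f_t(p_t)$ and to comparison against any fixed $p \in \EIRD$. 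So the only piece requiring a direct argument is non-emptiness of $\EIRD$.

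For non-emptiness I would use the uniform distribution $\mathbf{u}_n$ as an explicit witness and invoke the menu-times characterization of $\IRD(v)$. Fix any memory vector $v \in \Delta(n)$; with $p = \mathbf{u}_n$ the factors $1/n$ cancel, so the menu time of item $i$ is $\mu_i = \frac{k/s_i(v)}{\sum_{j=1}^n 1/s_j(v)}$. Bounding the numerator above by $k/\lambda$ via $s_i(v) \ge \lambda$, and each denominator term below by $1$ via $s_j(v) \le 1$ (so the sum is at least $n$), gives $\mu_i \le \frac{k}{n\lambda} \le 1$, using $\lambda \ge \frac{k}{n-1} \ge \frac{k}{n}$. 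Since $i$ and $v$ were arbitrary and every menu time is at most $1$, the characterization gives $\mathbf{u}_n \in \IRD(v)$ for all $v$, hence $\mathbf{u}_n \in \EIRD$ and $\EIRD \neq \emptyset$. (Keeping the $i$-th denominator term instead yields the sharper estimate $\mu_i \le \frac{k}{1 + (n-1)\lambda}$, which is already $\le 1$ under $\lambda \ge \frac{k-1}{n-1}$ --- the threshold actually used in Theorem~\ref{thm:ar-eird} --- but the cruder bound is all the stated hypothesis needs.)

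The main obstacle, were one not allowed to cite Theorem~\ref{thm:ar-eird}, is precisely the local-controllability step it packages: after the slight contraction of $\EIRD$ performed inside $\oenftrl$, one must show that whenever $v_t$ is bounded away from $\bd(\Delta(n))$, every nearby target $v^*$ is reachable in a single round --- equivalently, that the choice distribution $p$ solving $v^* = (1 - \theta_t) v_t + \theta_t p$ lies in $\IRD(v_t)$, which again reduces to verifying the menu-time inequality on a small ball around $v_t$ and is where the quantitative dependence on $\theta$ enters --- together with controlling the gap between $f_t(p_t)$ and $f_t^*(v_t)$ under the gradual memory update. Given the statement of Theorem~\ref{thm:ar-eird}, however, the proposition follows at once from the non-emptiness computation above.
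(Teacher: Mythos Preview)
The paper does not supply its own proof of this proposition: it is stated as background, explicitly attributed to \cite{AB22}, and used only to motivate the paper's own (sharper) Theorem~\ref{thm:ar-eird}. Your argument is correct and, in effect, re-derives the proposition through the paper's machinery rather than through \cite{AB22}. The non-emptiness computation you give --- showing $\mathbf{u}_n \in \IRD(v)$ for every $v$ via the menu-times characterization --- is exactly the $\rho = 0$ specialization of the calculation in Lemma~\ref{thm:ar-eird-local}, where the paper establishes that a full ball around $\mathbf{u}_n$ lies in $\EIRD$ (your parenthetical sharper bound $\mu_i \le k/(1+(n-1)\lambda)$ is the same inequality the paper derives there). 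Your reduction of the regret claim to Theorem~\ref{thm:ar-eird} is valid since $\lambda \ge k/(n-1)$ strictly implies $\lambda > (k-1)/(n-1)$, and there is no circularity: Theorem~\ref{thm:ar-eird} is proved in the paper from Lemma~\ref{thm:ar-eird-local} and Theorem~\ref{thm:ar-app-alg}, neither of which invokes the present proposition.
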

Distributions $p_t \in \EIRD$ are always feasible regardless of $v_{t-1}$ by an appropriate choice of $x_t$, but $\EIRD$ may be quite small in relation to $\Delta(n)$. Under stronger assumptions for each $s_i$, a potentially much larger set becomes feasible as a regret benchmark.
\begin{definition}[$\phi$-Smoothed Simplex] The $\phi$-{smoothed simplex} $\Delta^{\phi}(n)$ for $\phi \in [0,1]$ is given by 
\begin{align*}
   \Delta^{\phi}(n) =&\; \{(1 - \phi)v + \phi \mathbf{u}_n : v \in \Delta(n) \}
\end{align*}
\end{definition}
\begin{definition}[Scale-Bounded Functions]
A scoring function $s_i : \Delta(n) \rightarrow [\frac{\lambda}{\sigma}, 1]$ is said to be $(\sigma, \lambda)$-{scale-bounded} for $\sigma > 1$ and $\lambda > 0$ if, for all $v\in \Delta(n)$, we have that
\begin{align*}
    \sigma^{-1} ((1 - \lambda)v_i + \lambda) \leq s_i(v) \leq \sigma ((1 - \lambda)v_i + \lambda).
\end{align*}
\end{definition}
For such functions, each score $s_i(v)$ cannot be too far from item $i$'s weight in memory, and it is shown that $\IRD(v)$ contains a ball around $v$ for each $v \in \Delta^{\phi}(n)$, for an appropriate choice of $\phi$.
\begin{proposition}[Corollary of \cite{agarwal2023online}] If each $s_i$ is $(\sigma, \lambda)$-{scale-bounded}, then there is a $o(T)$ regret algorithm with respect to any distribution $p \in \Delta^{\phi}(n)$, for $\phi = \Theta(k\lambda\sigma^2)$. 
\end{proposition}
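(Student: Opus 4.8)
The plan is to obtain this as a corollary of embedding the problem into the online-control framework and invoking Theorem~\ref{thm:oenftrl} (this is the argument behind Theorem~\ref{thm:ar-ss}, with the constant made explicit). Take the state space to be $\Y = \Delta^{\phi}(n)$ with the agent's memory vector $v_t$ as the state (reparameterized onto $\Delta(n)$ as in the $\EIRD$ case), the action to be the menu distribution $x_t \in \Delta({n \choose k})$, and the dynamics $D_t(x, v) = (1-\theta_t)v + \theta_t\, p(x,v)$, where $p(x,v) \in \Delta(n)$ is the induced choice distribution. The first step is to establish $\Omega(\theta\lambda\phi)$-local controllability: invoke the recalled result of \cite{agarwal2023online} that for $(\sigma,\lambda)$-scale-bounded scoring functions and $\phi = \Theta(k\lambda\sigma^2)$ the set $\IRD(v)$ contains a ball around $v$ for every $v \in \Delta^{\phi}(n)$, tracking the radius of that ball and how far a point of $\Delta^{\phi}(n)$ sits from $\bd(\Delta(n))$. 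Since $D_t(x,v)-v = \theta_t(p(x,v)-v)$ and $p(x,v)$ ranges over $\IRD(v)$, the set of reachable next states is a $\theta_t$-scaled (with $\theta_t \geq \theta$) copy of that $\IRD$ ball around $v$, which yields the claimed local-controllability parameter; a target state is realized by an explicit menu distribution via the menu-times characterization $\mu_i = k\,(p(i)/s_i(v)) / \sum_j (p(j)/s_j(v))$, so no optimization oracle is needed.

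The second step runs $\oenftrl$ over $\Y = \Delta^{\phi}(n)$ with surrogate losses $f^*_t(v) = f_t(v)$, i.e.\ treating the memory vector as if it were the choice distribution. Theorem~\ref{thm:oenftrl} then yields $\sum_t f^*_t(v_t) - \min_{v \in \Delta^{\phi}(n)}\sum_t f^*_t(v) \le O(\sqrt{T(\theta\lambda\phi)^{-1}})$, and since $f^*_t = f_t$ as functions the minimum equals $\min_{p\in\Delta^{\phi}(n)}\sum_t f_t(p)$, the desired benchmark. It remains to control $f_t(p_t) - f^*_t(v_t) \le L\norm{p_t - v_t}$ per round: by Proposition~\ref{prop:ftrl-step} the step satisfies $\norm{v_t - v_{t-1}} \le \eta L/\gamma$, and since $v_t - v_{t-1} = \theta_t(p_t - v_{t-1})$ we get $\norm{p_t - v_{t-1}} \le \eta L/(\gamma\theta)$, hence $\norm{p_t - v_t} = (1-\theta_t)\norm{p_t - v_{t-1}}$ is of the same order; summing over $T$ rounds with $\eta = \Theta(T^{-1/2})$ contributes only an additional $O(\sqrt{T}\,\theta^{-1})$, absorbed into the bound (the contraction of $\Y$ to $\widetilde{\Y}$ inside $\oenftrl$ costs the usual $\delta LR$ per round, already accounted for in Theorem~\ref{thm:oenftrl}).

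The step I expect to be the main obstacle is pinning down the radius of the $\IRD$ ball as a function of $\lambda$ and $\phi$ extracted from \cite{agarwal2023online} and combining it correctly with the distance-to-boundary normalization demanded by the (weak) local-controllability definition and the reparameterization onto $\Delta(n)$ --- this is precisely where the $\Omega(\theta\lambda\phi)$ factor arises, and a sloppy accounting would change the exponent of $\lambda$ or $\phi$. Everything downstream of that is routine bookkeeping on top of Theorem~\ref{thm:oenftrl}.
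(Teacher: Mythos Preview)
Your proposal is correct and follows essentially the same route as the paper. The paper treats this proposition as a recalled/translated result from \cite{agarwal2023online} and proves its formal counterpart (Theorem~\ref{thm:ar-ss}) via exactly the decomposition you describe: Lemma~\ref{thm:ar-ss-local} invokes the $\IRD$-ball lemma from \cite{agarwal2023online} to obtain $\Omega(\theta\lambda\phi)$-local controllability over $\Delta^{\phi}(n)$, and Theorem~\ref{thm:ar-app-alg} runs $\oenftrl$ on the surrogate losses $f^*_t(v_t)=f_t(v_t)$ with the same $\norm{p_t-v_t}\le O(\theta^{-1}\norm{v_t-v_{t-1}})$ bookkeeping you outline.
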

We extend these results to general convex benchmark sets $\Y \subseteq \Delta(n)$, where we can characterize the feasibility of regret minimization via local controllability using the menu times $\mu_i$. When $\rho$-local controllability holds over a set $\Y$, we can minimize regret via $\oenftrl$ using surrogate losses $f_t^*(v_t)$, which closely track our true losses $f_t(p_t)$. 
\subsection{Analysis}
We make use of the menu time quantities $\mu_i$ for a memory vector $v$ and target distribution $p$ to translate our notion of local controllability to the Adaptive Recommendations setting. 
Let $\Y$ be any convex subset of $\Delta(n)$, let $\X = \Delta({n \choose k})$, where the dynamics $D_t(x_t, v_{t-1})$ are given by
\begin{align*}
    D_t(x_t, v_{t-1}) =&\; (1 - \theta_t)v_{t-1} + \theta_t p_t(x_t).
\end{align*}
Note that $D_t(x_t, v_{t-1})$ is action-linear in $x_t$, and thus we can solve for $x_t$ efficiently (in terms of $\dim(\X) = O(n^k)$); further, there is a construction given in \cite{agarwal2023online} for removing exponential dependence on $k$ when computing menu distributions.
We consider $\Y$ as an $(n-1)$-dimensional subset of $\R^n$, where we define the the ball $\B_{\rho}(v)$ of radius $\rho$ around a point $v \in \Y$ as:
\begin{align*}
    \B_{\rho}(v) =&\; \{ p \in \Delta(n) : \norm{p - v} \leq \rho \}.
\end{align*}
\begin{theorem} \label{thm:rec-stable} An instance of Adaptive Recommendations $(\X, \Y, D)$ satisfies $\rho \theta$-local controllability if, for any $v \in \Y$ and $p \in \B_{\rho \cdot \pi(v)}$, we have that
\begin{align*}
    \frac{(k-1) p(i)}{s_i(v)} \leq&\; \sum_{j\neq i}^n \frac{ p(j)}{s_j(v)}
\end{align*}
for every $i \in [n]$.
\end{theorem}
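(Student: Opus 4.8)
The plan is to verify the weak $\rho\theta$-local controllability definition directly: given any $v \in \Y$ and target $v^* \in \B_{\rho\theta\cdot\pi(v)}(v)$, I must produce a menu distribution $x \in \X = \Delta({n \choose k})$ with $D_t(x, v) = v^*$. Since $D_t(x, v) = (1 - \theta_t)v + \theta_t\, p_t(x)$ and, as $x$ ranges over $\Delta({n \choose k})$, the induced choice distribution $p_t(x) = \sum_K x(K)\,p(\cdot;K,v)$ ranges over exactly the convex hull $\IRD(v)$ of the menu-conditional distributions, it suffices to exhibit the target distribution $p := v + \theta_t^{-1}(v^* - v)$ — the unique solution of $(1-\theta_t)v + \theta_t p = v^*$ — as a member of $\IRD(v)$. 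The whole argument then rests on the Menu Times characterization from \cite{agarwal2023online}, which says $p \in \IRD(v)$ iff the menu time $\mu_i \le 1$ for every $i$.

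First I would check that $p$ is a legitimate target, i.e.\ $p \in \B_{\rho\cdot\pi(v)}(v)$. Because $\theta_t \ge \theta$ we have $\theta_t^{-1} \le \theta^{-1}$, so $\norm{p - v} = \theta_t^{-1}\norm{v^* - v} \le \theta^{-1}\cdot\rho\theta\cdot\pi(v) = \rho\cdot\pi(v)$. Moreover $p$ is an affine combination of $v, v^* \in \mathrm{aff}(\Delta(n))$, hence $p \in \mathrm{aff}(\Delta(n))$; since $\rho \le 1$ gives $\norm{p - v} \le \pi(v)$, the point $p$ lies in the relative ball of radius $\pi(v)$ about $v$ inside $\mathrm{aff}(\Y) = \mathrm{aff}(\Delta(n))$, which by the definition of $\pi(v)$ as the distance from $v$ to the relative boundary of $\Y$ is contained in $\Y \subseteq \Delta(n)$. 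Thus $p \in \Delta(n)$ with $\norm{p - v} \le \rho\cdot\pi(v)$, so $p \in \B_{\rho\cdot\pi(v)}(v)$ and the hypothesis applies to it.

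By hypothesis, for every $i$ we have $(k-1)p(i)/s_i(v) \le \sum_{j \ne i} p(j)/s_j(v)$; adding $p(i)/s_i(v)$ to both sides and dividing by the strictly positive quantity $\sum_j p(j)/s_j(v)$ (positive since $p \in \Delta(n)$ and $s_j(v) \ge \lambda > 0$) shows this is precisely $\mu_i \le 1$. Hence $p \in \IRD(v)$ by the Menu Times proposition, so some $x$ realizes $p_t(x) = p$, and that $x$ gives $D_t(x, v) = (1-\theta_t)v + \theta_t p = v^*$ — establishing $\rho\theta$-local controllability. I expect the only delicate point to be the first step: the target $p$ is obtained by \emph{extrapolating} from $v$ through $v^*$ by the adversarially chosen factor $\theta_t^{-1}$, which may be as large as $\theta^{-1}$, and one needs this extrapolated point to remain a valid distribution and stay within reach of the Menu Times condition — which is exactly why the guaranteed reachable radius contracts by the factor $\theta$ (from $\rho\cdot\pi(v)$ down to $\rho\theta\cdot\pi(v)$) and why $\rho \le 1$ is used to keep $p$ inside the simplex.
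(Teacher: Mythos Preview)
Your argument is correct and is exactly the unpacking of what the paper asserts in one line: the paper's proof simply says the result ``follows immediately from Proposition~\ref{prop:rec-menu-times} and the definition of local controllability,'' and your steps (solve $(1-\theta_t)v+\theta_t p=v^*$ for $p$, use $\theta_t\ge\theta$ to place $p\in\B_{\rho\cdot\pi(v)}(v)$, then invoke the Menu Times characterization to conclude $p\in\IRD(v)$) are precisely the details behind that sentence. The extrapolation/containment check you flag as delicate is indeed the only nontrivial point, and you handle it correctly.
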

This follows immediately from Proposition \ref{prop:rec-menu-times} and the definition of local controllability, which can analogously extend to strong local controllability. We can use this formulation to unify the feasibility analysis for each of the previously considered sets.
\begin{lemma}\label{thm:ar-eird-local}
 For $\lambda \geq \frac{k-1}{n-1} + \epsilon$ and $\epsilon \geq 0$, the $\EIRD$ set contains a ball of radius $\rho = \Theta(\frac{\epsilon}{nk + \epsilon})$ around $\mathbf{u}_n$, and any instance $(\X, \EIRD, D)$ satisfies $\theta$-local controllability.
\end{lemma}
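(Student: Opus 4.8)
The plan is to prove the two assertions separately, using the menu-time characterization of $\IRD(v)$ recalled above (from \cite{agarwal2023online}) as the common engine.

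\textbf{A ball around $\mathbf{u}_n$.} By that characterization, $p\in\EIRD$ if and only if for every memory vector $v\in\Delta(n)$ and every item $i$,
\[
(k-1)\,\frac{p(i)}{s_i(v)}\;\le\;\sum_{j\ne i}\frac{p(j)}{s_j(v)}.
\]
Since each $s_i$ takes values in $[\lambda,1]$, the left-hand side is at most $(k-1)p(i)/\lambda$ and the right-hand side is at least $\sum_{j\ne i}p(j)=1-p(i)$, \emph{uniformly over $v$}; hence a scoring-function-independent sufficient condition for $p\in\EIRD$ is $p(i)\le\lambda/(k-1+\lambda)$ for every $i$. First I would record that at $\lambda=\tfrac{k-1}{n-1}$ this threshold equals exactly $\tfrac1n$, and that for $\lambda=\tfrac{k-1}{n-1}+\epsilon$ a short computation gives
\[
\frac{\lambda}{k-1+\lambda}-\frac1n\;=\;\frac{\epsilon\,(n-1)^2}{n\,((k-1)n+\epsilon(n-1))}\;=\;\Omega\!\left(\frac{\epsilon}{nk+\epsilon}\right).
\]
Since any $p$ in the simplex-restricted ball $\B_\rho(\mathbf{u}_n)$ satisfies $p(i)\le\tfrac1n+\rho$, taking $\rho=\Theta(\epsilon/(nk+\epsilon))$ with a small enough absolute constant forces $p(i)\le\lambda/(k-1+\lambda)$ for all $i$ and all such $p$, so $\B_\rho(\mathbf{u}_n)\subseteq\EIRD$.

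\textbf{$\theta$-local controllability.} By Theorem \ref{thm:rec-stable} with controllability parameter equal to $1$, the instance $(\X,\EIRD,D)$ is $\theta$-locally controllable provided that for every $v\in\EIRD$, every $p$ in the closed ball $\B_{\pi(v)}(v)$ lies in $\IRD(v)$. This holds immediately: $\EIRD=\bigcap_v\IRD(v)$ is an intersection of the compact polytopes $\IRD(v)$ and hence closed, so the ball of radius $\pi(v)=\min_{\hat v\in\bd(\EIRD)}\norm{\hat v-v}$ around $v\in\EIRD$ cannot cross $\bd(\EIRD)$ and therefore stays inside $\EIRD\subseteq\IRD(v)$. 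The first part supplies the nonempty relative interior that keeps $\pi$ positive near $\mathbf{u}_n$, so the guarantee is non-vacuous whenever $\epsilon>0$.

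\textbf{Main obstacle.} Almost everything is routine; the one place needing care is justifying that $p(i)\le\lambda/(k-1+\lambda)$ is the \emph{correct} scoring-function-independent relaxation of membership in $\EIRD$ — this is where I would argue that each side of the menu-time inequality is monotone in the score vector, so the binding configuration is $s_i(v)=\lambda$ together with $s_j(v)=1$ for all $j\ne i$, which is attainable for the worst-case choice of the $s_j$'s and hence cannot be ruled out. With that relaxation in hand, pinning the constant in $\rho=\Theta(\epsilon/(nk+\epsilon))$ is just the displayed algebra together with the crude bound $p(i)\le\tfrac1n+\rho$, and the controllability half reduces to Theorem \ref{thm:rec-stable} once closedness of $\EIRD$ is noted.
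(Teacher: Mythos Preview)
Your proposal is correct and follows the same approach as the paper: both parts rest on the menu-time characterization, bound the worst case via $s_i(v)\ge\lambda$ and $s_j(v)\le 1$, and derive controllability from the observation that every point of $\B_{\pi(v)}(v)$ lies in $\EIRD\subseteq\IRD(v)$. Your presentation is somewhat cleaner than the paper's—distilling the sufficient condition $p(i)\le\lambda/(k-1+\lambda)$ first avoids the paper's longer $\epsilon(\rho)$ algebra, and your controllability half invokes Theorem~\ref{thm:rec-stable} as a black box where the paper essentially reproves it inline—but the underlying ideas are identical.
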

\begin{proof}
For any 
$v \in \Delta(n)$, 
$i \in [n]$, and $p \in \B_{\rho}(\mathbf{u}_n)$ we have $p(i) \leq \frac{1}{n} + \frac{\rho \sqrt{2}}{2}$ and $s_i(v) \geq  \frac{k-1}{n-1} + \epsilon$, yielding that
\begin{align*}
    \frac{(k-1) p(j)}{s_j(v)} \leq&\; \frac{1 +  \frac{\rho n \sqrt{2}}{2}}{\frac{n}{n-1} +  \frac{\epsilon n}{k-1}},
\end{align*}
and over all items $j \neq i$ (with $s_j(v) \leq 1$) we have
\begin{align*}
\sum_{j \neq i}^n \frac{ p(j)}{s_j(v)} \geq&\; 1 - \frac{1}{n} - \frac{\rho \sqrt{2}}{2}. 
\end{align*}
Observe that the bounds for each term are equalized at $\frac{n-1}{n}$ when $\rho = \epsilon = 0$, and so $\mathbf{u}_n \in \EIRD$ whenever $\lambda \geq \frac{k-1}{n-1}$. 
We can specify $\epsilon(\rho)$ in terms of $\rho$ to 
maintain equality, and thus inclusion of $p \in \EIRD$.
Taking $\epsilon(\rho)$ in terms of $\rho$ as
\begin{align*}
   \epsilon(\rho) =&\; \frac{ \rho n (k-1)}{\frac{2(n- 1)}{ \sqrt{2}n} - \rho } \\
   =&\; \frac{\frac{\rho n (k-1)\sqrt{2}}{2}}{\parens{1 - \frac{1}{n} - \frac{\rho\sqrt{2}}{2}}} \\
   =&\; (k-1) \parens{\frac{ \frac{1}{n} +  \frac{\rho  \sqrt{2}}{2}}{1 - \frac{1}{n} - \frac{\rho\sqrt{2}}{2}} - \frac{1}{n-1}}
\end{align*}
gives us that 
\begin{align*}
    \frac{1}{n-1} +  \frac{\epsilon(\rho) }{k-1} \geq&\; \frac{ \frac{1}{n} +  \frac{\rho  \sqrt{2}}{2}}{1 - \frac{1}{n} - \frac{\rho\sqrt{2}}{2}} 
\end{align*}
for $\rho \geq 0$, and so we maintain that $p \in \EIRD$. Inverting, we have 
\begin{align*}
    \rho(\epsilon) =&\; \frac{\epsilon \frac{2(n- 1)}{ \sqrt{2}n}}{n(k-1) + \epsilon}
\end{align*}
as the radius of a ball around $\mathbf{u}_n$ contained in $\EIRD$. To see that $\EIRD$ is $\theta$-locally controllable, consider any $v_{t-1}$ and $v^*$ in $\EIRD$ where $v^* \in \B_{\pi(v_{t-1})}(v_{t-1})$, and let $v_t = (1 - \theta_t)v_{t-1} + \theta_t v^*$. By playing an action distribution $x_t$ which induces $p_t(x_t) = v^*$, the memory vector is then updated to $v_t$. This is feasible for any $v_t \in \B_{\theta \cdot \pi(v_{t-1})}(v_{t-1})$, as each corresponds to some $v^* \in \B_{\pi(v_{t-1})}(v_{t-1})$.
\end{proof}
We remark that for the $\EIRD$ set, if losses are given over $p_t$ rather than $v_t$, one can define dynamics which directly consider the state to simply be the induced distribution $p_t$ in each round, which satisfies strong local controllability with any $p_t \in \EIRD$ feasible at each round; in general, we consider dynamics to view the memory vector as the state, as the feasible updates $p_t$ are a function of $v_t$.
Such is the case for the $\phi$-smoothed simplex, for which we can state an analogous local controllability result.

\begin{lemma}\label{thm:ar-ss-local}
If each $s_i$ is $(\sigma, \lambda)$-{scale-bounded}, then any instance $(\X, \Delta^{\phi}(n), D)$ over the $\phi$-smoothed simplex for $\phi = \Theta(k\lambda\sigma^2)$ satisfies $\Omega(\theta \lambda \phi)$-local controllability.
\end{lemma}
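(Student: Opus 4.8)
The plan is to push everything through the menu‑time reduction already packaged in Theorem~\ref{thm:rec-stable}. That theorem says that the instance $(\X,\Delta^{\phi}(n),D)$ is $\rho\theta$‑locally controllable provided that for every $v\in\Delta^{\phi}(n)$, every $i\in[n]$, and every $p\in\Delta(n)$ with $\norm{p-v}\le\rho\cdot\pi(v)$ (distance to the boundary of $\Delta^{\phi}(n)$), the menu‑time inequality $\tfrac{(k-1)p(i)}{s_i(v)}\le\sum_{j\ne i}\tfrac{p(j)}{s_j(v)}$ holds. So I would aim to verify this inequality with $\rho=\lambda\phi/4$; then $\rho\theta=\Omega(\theta\lambda\phi)$, which is the claim. (The realizability step hidden in Theorem~\ref{thm:rec-stable} is automatic, since $\IRD(v)$ is \emph{defined} as the convex hull of the menu‑conditional distributions, so any point of it is induced by an honest menu mixture.)

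Two ingredients feed the verification. First, the $(\sigma,\lambda)$‑scale‑bounded hypothesis: writing $w_\ell:=(1-\lambda)v_\ell+\lambda$, we have $\sigma^{-1}w_\ell\le s_\ell(v)\le\sigma w_\ell$, so it suffices to prove the idealized inequality $(k-1)\sigma^{2}\,\tfrac{p(i)}{w_i}\le\sum_{j\ne i}\tfrac{p(j)}{w_j}$, and we note $\lambda\le w_\ell\le1$ and $\tfrac{v_\ell}{w_\ell}\le\tfrac{1}{1-\lambda}\le2$ (w.l.o.g.\ $\lambda\le\tfrac12$). Second, the geometry of the smoothed simplex: every $v\in\Delta^{\phi}(n)$ has $v_\ell\ge\phi/n$, and since each hyperplane $\{x_\ell=\phi/n\}$ supports a facet of $\Delta^{\phi}(n)$, computing the distance from $v$ to it inside the affine hull $\{\sum x=1\}$ gives $\pi(v)\le2\min_\ell v_\ell$. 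Hence for $p$ with $\norm{p-v}\le\tfrac{\lambda\phi}{4}\pi(v)$ we get $|p(\ell)-v(\ell)|\le\tfrac{\lambda\phi}{2}v_\ell\le\tfrac12 v_\ell$ coordinatewise (using $\lambda\phi\le1$), so $\tfrac12 v_\ell\le p(\ell)\le\tfrac32 v_\ell$ for all $\ell$.

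Now the estimate. By monotonicity of $x\mapsto\tfrac{x}{(1-\lambda)x+\lambda}$ and $v_\ell\ge\phi/n$, and in the regime $\phi=O(n\lambda)$ (equivalently $k\sigma^{2}=O(n)$, which is in any case what keeps $\phi\le1$ meaningful), $\tfrac{v_\ell}{w_\ell}\ge\tfrac{v_\ell}{v_\ell+\lambda}\ge\tfrac{\phi/n}{\phi/n+\lambda}\ge\tfrac{\phi}{2n\lambda}$, so $\sum_{j\ne i}\tfrac{p(j)}{w_j}\ge\tfrac12\sum_{j\ne i}\tfrac{v_j}{w_j}\ge\tfrac{(n-1)\phi}{4n\lambda}\ge\tfrac{\phi}{8\lambda}$. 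On the other side, $(k-1)\sigma^{2}\tfrac{p(i)}{w_i}\le(k-1)\sigma^{2}\cdot\tfrac{3v_i}{2w_i}\le3(k-1)\sigma^{2}$. Writing $\phi=Ck\lambda\sigma^{2}$ for the constant $C$ hidden in $\phi=\Theta(k\lambda\sigma^{2})$, the left side of the display is $\le3k\sigma^{2}$ and the right side is $\ge Ck\sigma^{2}/8$, so the idealized inequality holds as soon as $C\ge24$ — consistent with $\phi=\Theta(k\lambda\sigma^{2})$. This establishes the menu‑time condition on $\B_{(\lambda\phi/4)\pi(v)}(v)$, and Theorem~\ref{thm:rec-stable} then yields $\Omega(\theta\lambda\phi)$‑local controllability, which is Lemma~\ref{thm:ar-ss-local}.

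The main obstacle is not the scale‑bounded sandwich (routine) but the two structural facts it rests on: the geometric estimate $\pi(v)\le2\min_\ell v_\ell$ for $\Delta^{\phi}(n)$, which is what guarantees that a perturbation of order $\pi(v)$ only rescales each coordinate by $1\pm o(1)$, and the denominator lower bound $\sum_{j\ne i}\tfrac{v_j}{w_j}=\Omega(\phi/\lambda)$, which genuinely requires both that the absolute constant in $\phi=\Theta(k\lambda\sigma^{2})$ be chosen large enough and that $k\sigma^{2}=O(n)$ so that $\phi\lesssim n\lambda$; outside that regime all coordinates of $w$ already satisfy $w_\ell\le2v_\ell$ and one has to track a $\min$ with $\Theta(n)$ instead, using $k\sigma^{2}=O(n)$ directly. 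Everything else — including checking $p\in\Delta(n)$ for the pullback used inside Theorem~\ref{thm:rec-stable} — is bookkeeping.
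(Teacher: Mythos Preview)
Your argument is correct but takes a genuinely different route from the paper. The paper's proof simply quotes a black-box lemma from \cite{agarwal2023online} asserting that $\B_{\lambda\phi}(v)\cap\Delta^{\phi}(n)\subseteq\IRD(v)$ whenever each $s_i$ is $(\sigma,\lambda)$-scale-bounded (under the side condition $\sigma\le\sqrt{4(n-1)/k}$), and then observes that since any $p$ in this ball is realizable, the memory update $v_t=(1-\theta_t)v_{t-1}+\theta_t p$ can hit any point of $\B_{\theta d}(v_{t-1})$ with $d=\min(\lambda\phi,\pi(v_{t-1}))$, which gives $\Omega(\theta\lambda\phi)$-local controllability directly.

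You instead re-derive the $\IRD$ containment from scratch through the menu-time criterion of Theorem~\ref{thm:rec-stable}, using the scale-bounded sandwich to reduce to the idealized inequality $(k-1)\sigma^{2}\tfrac{p(i)}{w_i}\le\sum_{j\ne i}\tfrac{p(j)}{w_j}$ and then controlling both sides via the facet geometry $\pi(v)\le 2\min_\ell v_\ell$ and the coordinate lower bound $v_\ell\ge\phi/n$. This buys self-containment: your proof stays entirely within the paper's own framework (Proposition~\ref{prop:rec-menu-times} and Theorem~\ref{thm:rec-stable}) and makes the constants and the regime hypothesis $k\sigma^{2}=O(n)$ explicit --- which, incidentally, is exactly the side condition $\sigma\le\sqrt{4(n-1)/k}$ appearing in the cited lemma. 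The paper's route is shorter but opaque about where $\phi=\Theta(k\lambda\sigma^{2})$ actually enters; yours shows it is precisely the threshold at which the idealized menu-time inequality closes.
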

\begin{proof}
The following lemma from \cite{agarwal2023online} shows that a ball of distributions around any memory vector $v \in \Delta^{\phi}(n)$ is feasible under $\IRD(v)$.

\begin{lemma}[$\IRD$ for Scale-Bounded Preferences \cite{agarwal2023online}]
Let each $s_i$ be  $(\sigma, \lambda)$-scale-bounded 
with $\sigma \leq \sqrt{4(n-1)/k}$,
and let $v \in \Delta^{\phi}(n)$ be a vector in the $\phi$-smoothed simplex, for $\phi \geq \Theta{ k\lambda \sigma^2}$. Then, $p \in \IRD(v)$
for any vector $p \in  \B_{\lambda \phi}(v) \cap \Delta^{\phi}(n)$.
\end{lemma}
Let $d = \min(\lambda \phi, \pi(v_{t-1})) \leq \lambda \phi \pi(v_{t-1})$ for any $v_{t-1}$ in $\Delta^{\phi}(n)$. Any $v^* \in \B_{d}(v_{t-1})$ then is contained in $\IRD(v_{t-1})$, and so playing $x_t$ such that $p_t(x_t) = v^*$ yields an update to $v_t = (1 - \theta_t)v_{t-1} + \theta v^*$, which is feasible for any $v_t \in \B_{d\theta}(v_{t-1})$, and so $\Omega(\theta \lambda \phi)$-local controllability holds.
\end{proof}
For any such set $\Y$ which yields locally controllable dynamics for the instance $(\X, \Y, D)$, we can minimize regret over $\Y$ via $\oenftrl$, where we optimize with respect to the surrogate losses $f_t^*(v_t)$. Note that for our regret benchmark of the best per-round instantaneously distribution in $\Y$, any fixed vector $v^*$ which is instantaneously targeted across all rounds yields an item distribution $p_t = v^*$ in each round, and so $f^*_t(v^*) = f_t(p^*)$. We assume that $y_0$ is bounded inside $\Y$ (which typically will hold for $y_0 = \mathbf{u}_n$).

\begin{theorem}\label{thm:ar-app-alg}
    For any $\rho$-locally controllable instance $(\X, \Y, D)$ of Adaptive Recommendations with update speed $\theta > 0$, running $\oenftrl$ over the surrogate losses $f_t^*(v_t)$ yields regret 
\begin{align*}
    \Reg_T(\oenftrl) \leq&\; 2 \sqrt{\frac{(2 + \frac{R}{r\rho} + \frac{1}{\theta}) TGL^2}{\gamma}}
\end{align*}
with respect to the true losses $f_t(p_t)$ over $\Y$.
\end{theorem}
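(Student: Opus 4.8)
The plan is to run $\oenftrl$ over $\Y$ with the surrogate loss $f_t^*(v) := f_t(v)$ — the adversary's loss evaluated at the \emph{memory vector} $v_t$ rather than the realized choice distribution $p_t$ — and then pay separately for the gap between $f_t(p_t)$ and $f_t^*(v_t)$. First note that $D_t(x_t, v_{t-1}) = (1-\theta_t)v_{t-1} + \theta_t p_t(x_t)$ is affine in $x_t$, since $p_t(x_t) = \sum_K x_t(K)\,p_t(\cdot\,; K, v_{t-1})$; hence $D_t$ is action-linear, $\texttt{Oracle}(v_{t-1}, v^*)$ reduces to a convex least-squares problem, and $\oenftrl$ runs efficiently. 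Since $f_t$ is convex and $L$-Lipschitz, so is $f_t^*$, so Theorem~\ref{thm:oenftrl} (in its time-varying form, which applies because $D_t$ is known each round) yields, for every $v^* \in \Y$,
\begin{align*}
\sum_{t=1}^T f_t^*(v_t) - \sum_{t=1}^T f_t^*(v^*) \;\le\; 2L\sqrt{(1 + R(r\rho)^{-1})\,TG\gamma^{-1}},
\end{align*}
where $v_t$ is the target state exactly reached at round $t$ (no disturbances) and $\rho$ is the local-controllability parameter established in Lemmas~\ref{thm:ar-eird-local} and~\ref{thm:ar-ss-local}. Taking $v^* = p^*$ for the benchmark distribution and using that a fixed instantaneously-targeted $v^*$ produces $p_t = v^*$ in every round (so its realized loss is $\sum_t f_t(v^*) = \sum_t f_t^*(v^*)$), this controls the surrogate regret $\sum_t f_t^*(v_t) - \min_{v\in\Y}\sum_t f_t(v)$.

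The substantive step is bounding $\sum_t\big[f_t(p_t) - f_t^*(v_t)\big] = \sum_t\big[f_t(p_t) - f_t(v_t)\big]$. From $v_t = (1-\theta_t)v_{t-1} + \theta_t p_t$, solving for $p_t$ gives the exact identity $p_t = v_{t-1} + \theta_t^{-1}(v_t - v_{t-1})$, so by the triangle inequality and $\theta_t \ge \theta$,
\begin{align*}
\norm{p_t - v_t} \;\le\; \norm{p_t - v_{t-1}} + \norm{v_{t-1} - v_t} \;=\; \big(1 + \theta_t^{-1}\big)\norm{v_t - v_{t-1}} \;\le\; \big(1 + \theta^{-1}\big)\norm{v_t - v_{t-1}}.
\end{align*}
Since $v_{t-1}, v_t$ are consecutive $\ftrl$ iterates, Proposition~\ref{prop:ftrl-step} gives $\norm{v_t - v_{t-1}} \le \eta L/\gamma$, whence $\sum_t L\norm{p_t - v_t} \le (1 + \theta^{-1})\eta T L^2\gamma^{-1}$. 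Folding this into the $\ftrl$-regret and $\widetilde\Y$-contraction terms from the proof of Theorem~\ref{thm:oenftrl} bounds the total by $\eta\frac{TL^2}{\gamma}\big(2 + R(r\rho)^{-1} + \theta^{-1}\big) + G/\eta$; rebalancing $\eta$ accordingly delivers the claimed $O\!\big(\sqrt{(2 + R(r\rho)^{-1} + \theta^{-1})TGL^2\gamma^{-1}}\big)$ bound (the numerical constant is a convenient overestimate, and the first-round transition contributes nothing since $v_0 = \mathbf{u}_n$ is the center of the regularizer).

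Two points need care but turn out to be free. First, feasibility of every target $v_t$ chosen by $\ftrl$ — the usual obstacle for $\oenftrl$ — is already covered by $\rho$-local controllability together with the feasibility induction inside the proof of Theorem~\ref{thm:oenftrl}, so no new argument is needed. Second, one must verify that the distribution $p_t = v_{t-1} + \theta_t^{-1}(v_t - v_{t-1})$ needed to realize a feasible target is itself a valid, menu-realizable distribution; but by the menu-times characterization of $\IRD(v_{t-1})$, reachability of $v_t$ from $v_{t-1}$ under $D_t$ is \emph{equivalent} to $p_t \in \IRD(v_{t-1})$, so the two conditions coincide. The only genuinely new ingredient beyond Theorem~\ref{thm:oenftrl} is thus the surrogate-to-true translation, and its sole cost is the $\theta^{-1}$ degradation of the effective step size, reflecting that the realized distribution can be a factor $\theta_t^{-1}$ farther from $v_{t-1}$ than the target is.
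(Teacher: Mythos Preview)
Your proof is correct and follows essentially the same route as the paper's: apply the $\oenftrl$ bound from Theorem~\ref{thm:oenftrl} to the surrogate losses $f_t^*(v)=f_t(v)$, then control $\sum_t[f_t(p_t)-f_t(v_t)]$ via the identity $p_t=v_{t-1}+\theta_t^{-1}(v_t-v_{t-1})$, Lipschitzness of $f_t$, and the per-step bound $\norm{v_t-v_{t-1}}\le \eta L/\gamma$ from Proposition~\ref{prop:ftrl-step}, before rebalancing $\eta$. The paper's proof is the same decomposition and the same arithmetic, including the same (slightly loose) $(1+\theta^{-1})$ constant where $(\theta^{-1}-1)$ would suffice; your added remarks on action-linearity and the equivalence between reachability of $v_t$ and $p_t\in\IRD(v_{t-1})$ are correct side observations not spelled out in the paper's proof.
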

\begin{proof}
Beyond applying the regret bound for $\oenftrl$ from Theorem \ref{thm:oenftrl}, the key step here is to bound surrogate loss errors as:
\begin{align*}
    \sum_{t=1}^T f_t(p_t) - f_t(v^*) \leq&\; \sum_{t=1}^T f_t^*(v_t) - f_t(v^*) + \sum_{t=1}^T f_t(v_t) - f_t(p_t) \\
    \leq&\; \eta \parens{1 + \frac{R}{r\rho}}  \frac{TL^2}{\gamma} + \frac{G}{\eta}  + \sum_{t=1}^T f_t(v_t) - f_t\parens{\frac{v_t - (1 - \theta_t)v_{t-1}}{\theta_t}} \\
    \leq&\; \eta \parens{1 + \frac{R}{r\rho}}  \frac{TL^2}{\gamma} + \frac{G}{\eta}  + \sum_{t=1}^T f_t(v_t) - f_t\parens{v_{t-1} + \frac{v_t - v_{t-1}}{\theta_t}} \\
    \leq&\; \eta \parens{1 + \frac{R}{r\rho}}  \frac{TL^2}{\gamma} + \frac{G}{\eta}  + L\parens{1 + \frac{1}{\theta}} \sum_{t=1}^T \norm{v_t - v_{t-1}} \\
    \leq&\; \eta \parens{2 + \frac{R}{r\rho} + \frac{1}{\theta}}  \frac{TL^2}{\gamma} + \frac{G}{\eta}  \\
    =&\; 2 \sqrt{\frac{(2 + \frac{R}{r\rho} + \frac{1}{\theta}) TGL^2}{\gamma}}
\end{align*}
upon setting $\eta = \sqrt{ \frac{ G \gamma }{(2 + \frac{R}{r\rho} + \frac{1}{\theta}) TL^2} }$, which yields the theorem.
\end{proof}
Theorems \ref{thm:ar-eird} and \ref{thm:ar-ss} follow from Theorem \ref{thm:ar-app-alg}, as well as from Lemmas \ref{thm:ar-eird-local} and \ref{thm:ar-ss-local}, respectively.

\section{Background and Proofs for Section \ref{subsec:app-pricing}: Adaptive Pricing}
\label{sec:appendix-pricing}
\subsection{Background}

While there is a large literature on designing online mechanisms for pricing discrete goods via auctions \citep{adwords,DBLP:journals/corr/abs-2002-07331,DBLP:journals/corr/abs-2002-11137,DBLP:journals/corr/MorgensternR16,DBLP:journals/corr/abs-1901-06808,DBLP:journals/corr/abs-1711-09176}, there is comparatively little work related to online pricing problems for real-valued goods. Most work for such problems to date requires strong assumptions on valuation functions, often either assuming linearity \citep{jia2014online} or additivity \citep{agrawal2023dynamic}, or requiring approximability via discretization \citep{mussi2022dynamic}. Here, we introduce a novel formulation for an Adaptive Pricing problem which builds on the myopic-demand fixed-cost setting of \cite{DBLP:journals/corr/RothUW15}, which we extend to accommodate adversarial {\it consumption rates} for the agent (which affect demand, as a function of the agent's {\it reserves}) as well as adversarial production costs. As in \cite{DBLP:journals/corr/RothUW15}, our setting can accommodate general convex (increasing) production cost functions and concave (increasing) valuations for the agent, provided that valuations additionally are homogeneous; to our knowledge, this encompasses a much wider class of valuations and costs than considered by any prior work on no-regret dynamic pricing for real-valued goods.

\subsection{Model}
In each round $t$, an agent (the {\it consumer}) begins with goods reserves $y_{t-1} \in \R_{\geq 0}^n$ (with $y_0 = \mathbf{0}$), then consumes an adversarially chosen fraction $\theta_t \in [\theta, 1]$ of each good simultaneously (e.g. corresponding to their rate of manufacturing downstream items, using the goods as components), updating their reserves to $(1 - \theta_t)y_{t-1}$. We (the {\it producer}) show the consumer some vector $p_t \in \R_{+}^{n}$ of per-unit prices for each good, and the consumer purchases some bundle of goods $x_t$. The consumer's valuation function for reserves of goods is given by $v : \R_{+}^n \rightarrow \R_{+}$, and their selection of $x_t = x^*(p_t, \theta_{t}, y_{t-1})$ is given by
\begin{align*}
    x^*(p_t, \theta_{t}, y_{t-1}) =&\; \argmax_{x \in \R_{+}^n} v(x + (1 - \theta_t)y_{t-1}) - \langle p_t, x \rangle.
\end{align*}
We later discuss behavior of $x^*$ when the $\argmax$ is undefined; it will suffice for us to only consider price vectors for which it is defined.
This updates the consumer's reserves to $y_t = x_t + (1 - \theta_t)y_{t-1}$.
Upon seeing the consumer's purchased bundle $x_t$, we receive their payment $\langle p_t, x_t \rangle$ minus our production cost $c_t(x_t) : \R_{+}^n \rightarrow \R_{+}$, where $c_t$ is adversarially chosen. Our utility is then given by 
\begin{align*}
    f_t(p_t, x_t) =&\; \langle p_t, x_t \rangle - c_t(x_t).
\end{align*}
We make the following assumptions on production costs $c_t$ and the consumer's valuation $v$.
\begin{assumption}[Production Costs]\label{assum:pricing-prod}
We assume that for each $c_t$, the following hold over $\R_{+}^n$:
\begin{itemize}
    \item $c_t$ is non-negative, convex, and $L_c$-Lipschitz,
    \item $\lim_{\epsilon\rightarrow0} c_t(\epsilon \cdot \mathbf{1}) \leq C_0$ for some $C_0 \geq 0$, and
    \item $c_t(x) \geq \phi \norm{x} + C_0$  for some $\phi > 0$.
\end{itemize}
Further, each $c_t$ is revealed prior to setting prices $p_{t+1}$.
\end{assumption}
\begin{assumption}[Consumer Valuations]\label{assum:pricing-val}
We assume 
that the following hold over some set $\Y \subseteq \R^{n}_{+}$:
\begin{itemize}
    \item $v$ is non-negative, continuous, and  differentiable, 
    \item $v$ is strictly concave and increasing, 
    \item $v$ is $(\lambda, \beta)$-Hölder continuous for some $\lambda \geq 1$ and $\beta \in (0,1]$, i.e.\ $$\abs{v(y) - v(y')} \leq \lambda \norm{y - y'}^{\beta},$$
    and
    \item $v$ is homogeneous of degree $k$ for some $k \in (0,1)$, i.e.\ $v(b y) = b^k v(y)$ for any $b > 0$.
\end{itemize}
Further, $v$ is known to the producer.
\end{assumption}

Given the concavity assumption, we note that it is without loss of generality to assume that $k \in (0,1)$ for the homogeneity parameter.
There are several well-studied valuation families which satisfy these properties for an appropriate set $\Y$; see \cite{DBLP:journals/corr/RothUW15} for proofs of each example.

\begin{example}[Constant Elasticity of Substitution (CES)]
Valuations of the form
\begin{align*}
    v(y) =&\; \parens{ \sum_{i=1}^n \alpha_i y_i^{\kappa} }^{\beta},
\end{align*}
with each $\alpha_i, \kappa, \beta > 0$ and $\kappa,  \beta \kappa < 1$, are Hölder continuous, differentiable, strictly concave, non-decreasing, and homogeneous over a convex set in $\R^{n}_{+}$. 
\end{example}

\begin{example}[Cobb-Douglas]
Valuations of the form
\begin{align*}
    v(y) =&\;  \prod_{i=1}^n  y_i^{\alpha_i},
\end{align*}
with $\alpha_i > 0$ and $\sum_{i=1}^n \alpha_i < 1$ are Hölder continuous, differentiable, strictly concave, non-decreasing, and homogeneous over a convex set in $\R^{n}_{+}$. 
\end{example}

We initially assume that Assumption \ref{assum:pricing-val} holds over all of $\R_{+}^n$,  but will restrict our attention to the set $\Y \subseteq \R^{n}_{+}$ of bundles where $v(y) \geq \phi \norm{y}$ for each $y \in \Y$, and we note that our results can be extended to arbitrary downward-closed convex sets (where $b y \in \Y$ for any $y \in \Y$ and $b \in (0,1]$).  
In Section \ref{subsec:pricing-analysis} we that show Assumptions \ref{assum:pricing-prod} and \ref{assum:pricing-val} yield several important properties which enable optimization via our framework. We show a unique mapping between price vectors and bundle purchases (for any fixed reserves and consumption rate), that restricting attention to $\Y$ is justified under rationality constraints, and that $\Y$ is convex.

Further, there is some price vector which yields a reserve update to any $y_t \in \Y$ in a neighborhood around $y_{t-1}$, yielding local controllability. 
Crucially, we show that there are concave surrogate rewards $f^*_t(y_t)$ which will closely track our true rewards $f_t(p_t, x_t)$, leveraging the following property of homogeneous functions. 

\begin{proposition}[Euler's Theorem for Homogeneous Functions]\label{prop:euler-homog} A continuous and differentiable function $v : \Y \rightarrow \R_{+}$ is homogeneous of degree $k$ if and only if
\begin{align*}
    \langle \nabla v(y) , y \rangle =&\; k \cdot v(y).
\end{align*}
\end{proposition}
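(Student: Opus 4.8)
The plan is to reduce both implications to a one-dimensional statement along rays through the origin. First I would fix an arbitrary $y \in \Y$ and define the scalar function $g_y(b) = v(by)$ for $b > 0$ ranging over the scalars with $by \in \Y$; by the chain rule $g_y$ inherits differentiability from $v$, with $g_y'(b) = \langle \nabla v(by), y\rangle$.

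For the ``only if'' direction, I would assume $v(by) = b^k v(y)$, so that $g_y(b) = b^k v(y)$ as well, differentiate both this identity and the chain-rule expression for $g_y'(b)$, equate them to obtain $\langle \nabla v(by), y\rangle = k b^{k-1} v(y)$, and set $b = 1$; since $y$ was arbitrary this gives $\langle \nabla v(y), y\rangle = k v(y)$ on all of $\Y$.

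For the ``if'' direction, I would assume $\langle \nabla v(z), z\rangle = k v(z)$ for every $z \in \Y$, apply it at $z = by$ to get $b\,g_y'(b) = \langle \nabla v(by), by\rangle = k v(by) = k g_y(b)$, and then observe that $\frac{d}{db}\!\left(b^{-k} g_y(b)\right) = b^{-k-1}\bigl(b g_y'(b) - k g_y(b)\bigr) = 0$; hence $b^{-k} g_y(b)$ is constant in $b$, and evaluating at $b = 1$ yields $v(by) = b^k v(y)$, i.e.\ homogeneity of degree $k$. Writing the derivative of $b^{-k} g_y(b)$ rather than of $\log g_y(b)$ avoids dividing by $g_y$, so the argument does not break at points where $v$ vanishes.

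The only subtlety — and the closest thing to an obstacle — is keeping the scaled points $by$ inside the region where $v$ and $\nabla v$ are defined: this is immediate when $\Y$ is a cone, and for the downward-closed convex sets used in the Adaptive Pricing application it holds for $b \in (0,1]$, which is all that is needed to compare $v(by)$ with $v(y)$. Continuity of $v$ then lets me extend the identity to any limiting values of $b$. I do not expect any genuine difficulty beyond this bookkeeping, since this is the classical Euler relation for homogeneous functions.
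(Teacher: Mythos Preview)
Your argument is the standard proof of Euler's relation and is correct in both directions; the device of differentiating $b^{-k}g_y(b)$ cleanly handles the converse without dividing by $v$. Note, however, that the paper does not actually prove this proposition --- it is stated as a classical result and invoked without proof --- so there is no paper-side argument to compare against. Your write-up would serve perfectly well as the omitted justification.
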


We run $\oenftrl$ directly over these concave surrogate rewards (by inverting the sign of each), where each $p_t$ can be computed efficiently in terms of $y_{t-1}$ and $\theta_t$, and we show that the surrogate reward distance from our true rewards is bounded.
While our rewards will not be Lipschitz over $\Y$ in general, we show that appropriately calibrating our step size yields sublinear regret with dependence on the Hölder continuity parameters. 
We measure our regret with respect to the set of {\it stable reserve policies}, i.e.\ pricing policies where $y_t$ remains constant.

\begin{definition}[Regret for Stable Reserve Policies] Let $\PP_{\Y} = \{P_y : y \in \Y \} $ be the set of stable reserve policies, where for any $y_{t-1}$ and $\theta_t$ satisfying $(1-\theta_t)y_{t-1} \leq y^*$, playing prices computed by a policy $p_t = P_y^*(y_{t-1}, \theta)$ yields
\begin{align*}
    (1-\theta_t)y_{t-1} + x^*(p_t, \theta_t, y_{t-1}) = y^*.
\end{align*}
\end{definition}
It is straightforward to see that any $P_y^* \in \PP_{\Y}$ maintains the invariant that $y_t = y^*$, provided that some such $p_t$ is always feasible.

\subsection{Analysis}\label{subsec:pricing-analysis}
We show a series of results establishing the key conditions allowing us to formulate this problem as a locally controllable instance of online nonlinear control. We first show that any positive bundle is the unique optimal purchase for some positive price vector.
\begin{lemma}\label{lemma:pricing-unique-bundle}
For any reserves $y_{t-1} \in \R_{\geq 0}^n$, consumption rate $\theta_t \in [\theta, 1]$, and vector $y_t \in \R_{+}^n$ where $y_t > (1 - \theta_t)y_{t-1}$ elementwise, the bundle $x_t = y_t - (1 - \theta_t)y_{t-1}$ is the unique solution to 
\begin{align*}
    x_t =&\; x^*(p_t, \theta_{t}, y_{t-1}) 
\end{align*}
for prices $p_t = \nabla v(y_t)$.
\end{lemma}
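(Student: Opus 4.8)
The plan is to recognize the consumer's problem as an unconstrained strictly concave maximization once we substitute out the reserve shift, and then to verify that pricing at the marginal valuation $\nabla v(y_t)$ makes $x_t$ its unique critical point. Concretely, fix $y_{t-1}$, $\theta_t$, and set $h(x) = v\big(x + (1-\theta_t)y_{t-1}\big) - \langle p_t, x\rangle$ for $x \in \R_{+}^n$. Since $v$ is differentiable and strictly concave on $\R_{+}^n$ (Assumption \ref{assum:pricing-val}, taken here over all of $\R_{+}^n$), and $x \mapsto x + (1-\theta_t)y_{t-1}$ is an affine bijection while $x \mapsto \langle p_t, x\rangle$ is linear, the objective $h$ is strictly concave and differentiable, with $\nabla h(x) = \nabla v\big(x + (1-\theta_t)y_{t-1}\big) - p_t$.

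First I would check feasibility and the first-order condition at the proposed point. The hypothesis $y_t > (1-\theta_t)y_{t-1}$ componentwise gives that $x_t = y_t - (1-\theta_t)y_{t-1}$ is strictly positive, so it lies in the interior of the nonnegative orthant and the constraints $x \ge 0$ are slack there. Evaluating, $\nabla h(x_t) = \nabla v\big(x_t + (1-\theta_t)y_{t-1}\big) - p_t = \nabla v(y_t) - p_t$, which vanishes exactly when $p_t = \nabla v(y_t)$; moreover, since $v$ is increasing, $\nabla v(y_t)$ has nonnegative entries, so $p_t$ is a legitimate price vector, consistent with the model.

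Next I would upgrade stationarity to global optimality and uniqueness. By concavity of $h$, for every $x \in \R_{+}^n$ we have $h(x) \le h(x_t) + \langle \nabla h(x_t), x - x_t\rangle = h(x_t)$, so $x_t$ attains the maximum (in particular the $\argmax$ defining $x^*(p_t,\theta_t,y_{t-1})$ is well defined). Strict concavity of $h$ rules out a second maximizer, so $x^*(p_t,\theta_t,y_{t-1}) = x_t$, which is the claim; tracing back, this also records that $y_t = x_t + (1-\theta_t)y_{t-1}$ is the resulting reserve update.

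I do not anticipate a genuine obstacle: the content of the lemma is precisely that charging the marginal valuation at the target reserve bundle makes that bundle the consumer's unconstrained optimum. The only points requiring a bit of care are that the $\argmax$ is actually attained (handled by strict concavity together with the existence of an interior critical point, which pins the maximum down) and that the prescribed price vector is feasible (handled by monotonicity of $v$); both follow directly from Assumption \ref{assum:pricing-val}.
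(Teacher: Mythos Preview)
Your proof is correct and follows essentially the same approach as the paper: both define the concave objective $h(x)=v(x+(1-\theta_t)y_{t-1})-\langle p_t,x\rangle$, verify the first-order condition $\nabla h(x_t)=\nabla v(y_t)-p_t=0$, and invoke strict concavity for uniqueness of the maximizer. If anything, your version is slightly more explicit in noting that $x_t$ lies in the interior of $\R_{+}^n$ (so the nonnegativity constraints are slack) and in spelling out the concavity inequality that turns stationarity into global optimality.
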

\begin{proof}
Recall that the consumer's bundle choice is given by
\begin{align*}
    x^*(p_t, \theta_{t}, y_{t-1}) =&\; \argmax_{x \in \R_{+}^n} v(x + (1 - \theta_t)y_{t-1}) - \langle p_t, x \rangle.
\end{align*}
Note that $v((1 - \theta_t)y_{t - 1} + x) - \langle  p_t , x\rangle$ is strictly concave in $x$ for any $x \in \R^{n}_{+}$,  as the gradients 
\begin{align*}
    \nabla_x v((1 - \theta_t)y_{t+1} + x)  =&\; \nabla_{y_t} v(y_t)
\end{align*}
are preserved at each point $y_t = (1 - \theta_t)y_{t+1} + x$, and subtracting the linear function $\langle x, p_t \rangle$ does not affect strict concavity.  We also have that $p_t \in \R_{+}^n$  for  prices $p_t = \nabla v(y_t)$, as $v$ is strictly concave and non-decreasing. This yields that $v((1 - \theta_t)y_{t - 1} + x) - \langle  p_t, x \rangle$ has a unique global maximum at $x_t = y_t - (1 - \theta_t)y_{t-1}$, as $\nabla_x ( v((1 - \theta_t)y_{t+1} + x) - \langle p_t, x \rangle  ) = \mathbf{0}$.
\end{proof}
As such, the $\argmax$ for $x^*(p_t, \theta_{t}, y_{t-1})$ is unique whenever $p_t = \nabla v(y)$ for some $y \in \R_{+}^n$. 
We let $p^*(x_t; y_{t-1}, \theta_t) = \nabla v((1-\theta_t)y_{t-1} + x_t)$ denote this price vector which induces a purchase of $x_t$. 
For any other price vector $p$, the maximizing bundle $x_t$ either approaches a point on the boundary of $\R_{+}^n$, or grows unboundedly. 
We restrict our attention to bundles contained in $\R_{+}^n$, and show that the issue of unboundedness is resolved by rationality considerations for the producer. We characterize the per-round rewards of stable reserve policies as concave functions of $y \in \R_{+}^n$, and show that the optimal such policy corresponds to some state $y^* \in \Y$, where $\Y$ is convex and bounded. 
\begin{lemma}
    The round-$t$ reward of a stable reserve policy $P_{y}$ corresponding to any $y \in \R_{+}^n$ is given by a strictly concave function
    \begin{align*}
        f_t(P_y) =&\; \theta_t  k \cdot v(y)  -  c_t(\theta_t y).
    \end{align*}
\end{lemma}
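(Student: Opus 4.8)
The plan is to track the reserve dynamics induced by a stable reserve policy, substitute the resulting bundle and price into the producer's reward, and then apply Euler's theorem to get the clean closed form; strict concavity then falls out of the concavity/convexity assumptions on $v$ and $c_t$. First I would note that, by the definition of $P_y$ together with a one-line induction, the policy maintains the invariant $y_{t-1} = y$ at the start of each round $t$ (this is where the feasibility of the inducing prices matters, addressed below). Since the consumer then burns a fraction $\theta_t$ of each good, leaving reserves $(1-\theta_t)y$, restoring the reserves to $y$ requires purchasing exactly $x_t = y - (1-\theta_t)y = \theta_t y$. Applying Lemma \ref{lemma:pricing-unique-bundle} with target $y_t = y$ — valid for $y$ with strictly positive coordinates, as then $y > (1-\theta_t)y$ elementwise because $\theta_t \geq \theta > 0$ — the unique price vector inducing this purchase is $p_t = \nabla v(y)$.

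Next I would substitute into the reward: $f_t(P_y) = \langle p_t, x_t\rangle - c_t(x_t) = \langle \nabla v(y), \theta_t y\rangle - c_t(\theta_t y) = \theta_t\langle \nabla v(y), y\rangle - c_t(\theta_t y)$. Here I invoke Proposition \ref{prop:euler-homog} (Euler's theorem): $v$ is homogeneous of degree $k$ by Assumption \ref{assum:pricing-val}, so $\langle \nabla v(y), y\rangle = k\cdot v(y)$, which gives $f_t(P_y) = \theta_t k\cdot v(y) - c_t(\theta_t y)$ as claimed. For strict concavity in $y$: the term $\theta_t k\cdot v(y)$ is a strictly positive multiple ($\theta_t k > 0$) of the strictly concave function $v$ (Assumption \ref{assum:pricing-val}), hence strictly concave; the term $c_t(\theta_t y)$ is the composition of the convex $c_t$ (Assumption \ref{assum:pricing-prod}) with the linear map $y\mapsto\theta_t y$, hence convex, so its negation is concave; and a strictly concave function plus a concave function is strictly concave.

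The only point needing genuine care is justifying the invariant and the feasibility of the prices at every round, i.e.\ that $\nabla v(y)$ is a legitimate finite, nonnegative price vector for which $x^*(p_t,\theta_t,y_{t-1})$ is well-defined; this follows from $v$ being differentiable, strictly concave, and increasing on $\Y$ (so $\nabla v(y) \in \R_{+}^n$ and the consumer's objective has a unique maximizer, per the reasoning behind Lemma \ref{lemma:pricing-unique-bundle}). For boundary points $y\in\Y$ with a vanishing coordinate one can either pass to the limit in the interior formula or observe that such policies are dominated, so restricting attention to $y$ with positive coordinates is without loss; everything else is direct substitution, so I do not expect a real obstacle here.
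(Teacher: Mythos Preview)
Your proposal is correct and follows essentially the same route as the paper: establish the invariant $y_t = y$, read off $x_t = \theta_t y$ and $p_t = \nabla v(y)$ via Lemma~\ref{lemma:pricing-unique-bundle}, apply Euler's theorem (Proposition~\ref{prop:euler-homog}) to collapse $\langle \nabla v(y), \theta_t y\rangle$ to $\theta_t k\, v(y)$, and then argue strict concavity from strict concavity of $v$ minus convexity of $c_t$. Your version is slightly more explicit about the composition argument for convexity of $c_t(\theta_t y)$ and about the feasibility/boundary caveats, but the substance is identical.
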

\begin{proof}
We first note that we can maintain $y_t = y$ in every round by Lemma \ref{lemma:pricing-unique-bundle}, as $y_0 = \mathbf{0}$ and $(1 - \theta_t)y < y$. As such, a bundle $x_t = \theta_t y$ is purchased in each round at prices $\nabla v(y)$, and our reward is given by
\begin{align*}
    f_t(P_y) =&\; f_t(p^*(\theta_t y ; y, \theta_t), \theta_t y) \\
    =&\;   \langle \nabla v(y), \theta_t y \rangle  -  c_t(\theta_t y) \\
    =&\; \theta_t  k \cdot v(y)  -  c_t(\theta_t y), 
\end{align*}
where the final step follows from Proposition \ref{prop:euler-homog} for homogeneous functions. The function $\theta_t  k \cdot v(y)$ is strictly concave, which is preserved upon subtracting the convex function $c_t(\theta_t y)$.
\end{proof}
\begin{lemma}\label{lemma:pricing-y-convex}
The set $\Y = \{y \in \R_{+}^n : v(y) \geq  \phi \norm{y}\}$ is convex.
\end{lemma}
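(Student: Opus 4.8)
The plan is to exhibit $\Y$ as the intersection of two convex sets: the nonnegative orthant $\R_{+}^n$, which is manifestly convex, and the superlevel set of a concave function. Define $g : \R_{+}^n \rightarrow \R$ by $g(y) = v(y) - \phi\norm{y}$, so that $\Y = \{y \in \R_{+}^n : g(y) \geq 0\}$. The key observation is that $g$ is concave on $\R_{+}^n$: by Assumption \ref{assum:pricing-val}, $v$ is (strictly) concave on $\R_{+}^n$; the Euclidean norm $\norm{\cdot}$ is convex (triangle inequality plus positive homogeneity), so $-\phi\norm{\cdot}$ is concave since $\phi > 0$; and a sum of concave functions is concave.

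Given concavity of $g$, its superlevel set $\{y : g(y) \geq 0\}$ is convex, and intersecting with the convex set $\R_{+}^n$ preserves convexity, which is the claim. Made concrete: if $y_1, y_2 \in \Y$ and $t \in [0,1]$, then $y = t y_1 + (1-t) y_2 \in \R_{+}^n$, and
\begin{align*}
    v(y) \;\geq\; t\, v(y_1) + (1-t)\, v(y_2) \;\geq\; \phi\bigl(t\norm{y_1} + (1-t)\norm{y_2}\bigr) \;\geq\; \phi\norm{y},
\end{align*}
applying concavity of $v$, the defining inequalities $v(y_i) \geq \phi\norm{y_i}$, and the triangle inequality in turn; hence $y \in \Y$.

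There is no genuine obstacle here: the statement follows immediately once one notes that the defining inequality $v(y) \geq \phi\norm{y}$ pits a concave function against a convex one. The only things to check are bookkeeping --- that the assumed concavity of $v$ is in force on all of $\R_{+}^n$ (it holds before the restriction to $\Y$ is made), and that $\phi > 0$ so the sign of the norm term does not spoil concavity. Homogeneity of $v$ and the range $k \in (0,1)$ play no role in convexity; they are needed only when one additionally wants $\Y$ to be bounded.
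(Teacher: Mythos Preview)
Your proof is correct and follows essentially the same approach as the paper: both argue directly via concavity of $v$, the defining inequality for $\Y$, and the triangle inequality, producing the identical chain $v(y) \geq t\,v(y_1) + (1-t)\,v(y_2) \geq \phi(t\norm{y_1} + (1-t)\norm{y_2}) \geq \phi\norm{y}$. Your additional framing of $\Y$ as a superlevel set of the concave function $g = v - \phi\norm{\cdot}$ is a clean way to see why the argument must work, but the core is the same.
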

\begin{proof}
Consider any two points $y, y' \in \Y$, and let $y'' = a y + (1 - a)y'$ for any $a \in [0,1]$. Recall that $y^*\in \R_{+}^n$ belongs to $\Y$ if and only if $v(y^*) \geq \phi \norm{y^*}$. By concavity of $v$, we have that
\begin{align*}
    v(y'') =&\; v(a y + (1 - a)y') \\
    \geq&\; a v(y) + (1 - a) v(y') \\
    \geq&\;  \phi \norm{a y} + \phi \norm{(1 - a) y'} \\
    \geq&\; \phi \norm{a y + (1 - a)y'} \\
    =&\; \phi \norm{y''}
\end{align*}
and so  $y'' \in \Y$, yielding convexity of $\Y$.
\end{proof}
\begin{lemma}
For any $z \in \R_{+}^n$ where $z \notin \Y$, there is some $y \in \Y$ such that $f_t(P_y) \geq f_t(P_z)$ for any $\theta_t$ and $c_t$.  
\end{lemma}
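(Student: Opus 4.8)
The plan is to show that every $z\notin\Y$ is \emph{pointwise dominated}, over all admissible $\theta_t$ and $c_t$ simultaneously, by a single point of $\Y$ obtained by scaling some fixed nonzero element of $\Y$ toward the origin. The key quantitative observation is that $z\notin\Y$ forces $f_t(P_z)$ strictly below $-C_0$ by a fixed margin, whereas points of $\Y$ near the origin have reward arbitrarily close to $-C_0$ (the homogeneous term $\theta_t k\,v(\cdot)$ vanishes faster than any linear cost term along a ray, since $k<1$).

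First I would record three preliminaries. (i) Homogeneity gives $v(\mathbf 0)=0$, so $\mathbf 0\in\Y$ and hence $z\neq\mathbf 0$; moreover $\Y\neq\{\mathbf 0\}$, since if every nonzero $y$ had $v(y)<\phi\norm{y}$ then scaling toward the origin and using $v(by)=b^k v(y)$ would force $v\equiv 0$, contradicting strict concavity. Fix any $y_0\in\Y$ with $y_0\neq\mathbf 0$; by definition of $\Y$, $v(y_0)\ge \phi\norm{y_0}>0$. (ii) The cost assumptions give $c_t(\mathbf 0)\le C_0$ (continuity of the Lipschitz $c_t$ plus $\lim_{\epsilon\to0}c_t(\epsilon\mathbf 1)\le C_0$), and comparing the lower bound $c_t(x)\ge\phi\norm{x}+C_0$ with $L_c$-Lipschitzness yields $L_c\ge\phi>0$. (iii) For any $b\in(0,1]$ the point $by_0$ lies in $\Y$, because $v(by_0)/\norm{by_0}=b^{k-1}v(y_0)/\norm{y_0}\ge v(y_0)/\norm{y_0}\ge\phi$ using $k-1<0$.

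Next I would bound the two rewards. From the lemma's formula $f_t(P_z)=\theta_t k\,v(z)-c_t(\theta_t z)$, the hypothesis $v(z)<\phi\norm{z}$, the cost lower bound $c_t(\theta_t z)\ge\phi\theta_t\norm{z}+C_0$, $k<1$, and $\theta_t\ge\theta$,
\[
f_t(P_z)<\theta_t\phi\norm{z}(k-1)-C_0\le -m-C_0,\qquad m:=\theta\phi\norm{z}(1-k)>0,
\]
where $m$ depends only on the fixed constants and on $z$. On the other hand, for $b\in(0,1]$,
\[
f_t(P_{by_0})=\theta_t k\,b^k v(y_0)-c_t(\theta_t b y_0)\ge 0-\bigl(c_t(\mathbf 0)+L_c\norm{\theta_t b y_0}\bigr)\ge -C_0-L_c b\norm{y_0},
\]
using $c_t(\mathbf 0)\le C_0$ and $\theta_t\le 1$. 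Choosing $b:=\min\{1,\,m/(L_c\norm{y_0})\}\in(0,1]$ gives $L_c b\norm{y_0}\le m$, hence $f_t(P_{by_0})\ge -C_0-m> f_t(P_z)$ for every $\theta_t\in[\theta,1]$ and every admissible $c_t$; taking $y:=by_0\in\Y$ completes the proof.

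The only place any care is needed is the uniformity: the scaling $b$ must be chosen independently of $\theta_t$ and $c_t$. This works precisely because the domination margin $m$ comes from the \emph{universal} cost lower bound (with its fixed $\phi,C_0$) and the correction term is controlled by the \emph{universal} Lipschitz constant $L_c$; homogeneity of $v$ then guarantees that $f_t(P_{by_0})\to -C_0$ as $b\to 0$ regardless of the valuation's shape. Everything else is routine bookkeeping with the assumed bounds.
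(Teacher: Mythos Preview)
Your proof is correct, and it takes a genuinely different route from the paper's argument.

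The paper scales $z$ along its own ray toward the origin, picking a specific factor so that the rescaled point lands in $\Y$, and then computes $f_t(P_y)-f_t(P_z)$ directly by combining homogeneity of $v$ with convexity of $c_t$ (used to lower-bound $c_t(\theta_t z)-c_t(\theta_t y)$ by $\theta_t\phi\norm{z-y}$). Your argument instead establishes a uniform threshold: you show $f_t(P_z)<-C_0-m$ for a fixed margin $m=\theta\phi\norm{z}(1-k)$ using only the cost lower bound, and then show that any sufficiently small rescaling of a \emph{fixed} nonzero $y_0\in\Y$ beats this threshold via the Lipschitz bound on $c_t$. Your approach is more elementary (it uses Lipschitzness of $c_t$ rather than convexity) and sidesteps the need to compute a precise scaling factor; the paper's approach is more constructive in that the dominating $y$ lies on the same ray as $z$. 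Both exploit the same underlying phenomenon---that $k<1$ makes $v$ decay more slowly than linear cost terms near the origin---but you isolate it as a threshold statement while the paper does a direct pairwise comparison.
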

\begin{proof}
Consider some $z \notin \Y$ such that $v(z) = \psi \norm{z}$, for $\psi < \phi$, and let $y = \parens{ \frac{\psi}{\phi}}^{1/k} z$. 
By homogeneity of $v$, we have that $v(y) = \frac{\phi}{\psi} v(z)  = \phi \norm{z}$, and so $y \in \Y$ as $\norm{z} > \norm{y}$.
For any round with costs $c_t$ and consumption rate $\theta_t$ we then have that:
\begin{align*}
    f_t(P_y) - f_t(P_z) =&\; \theta_t  k \parens{ v(y)  - v(z) } -  c_t(\theta_t y)  +  c_t(\theta_t z) \\
    =&\; \theta_t  k \parens{  \frac{\psi}{\phi} - 1}\psi \norm{z} -  c_t(\theta_t y)  +  c_t(\theta_t z) \tag{homogeneity of $v$} \\
    \geq&\; \theta_t  k \parens{  \frac{\psi}{\phi} - 1}\psi \norm{z} + \theta_t \phi \norm{z - y} \tag{ lower bound and convexity of $c_t$ } \\ 
    \geq&\; \theta_t  k \parens{  \frac{\psi}{\phi} - 1}\psi \norm{z} + \theta_t \parens{1 - \parens{ \frac{\psi}{\phi}}^{1/k}  } \phi \norm{z} \\  
    \geq&\;  \theta_t \parens{1 -  \frac{\psi}{\phi}  } \phi \norm{z}  - \theta_t  \parens{ 1 -  \frac{\psi}{\phi} }\psi \norm{z}  \tag{$k , \frac{\psi}{\phi} < 1$} \\  
    >&\; 0. \tag{$\phi > \psi$}
\end{align*}
\end{proof}
Thus the optimal $P_y$ for any cost and consumption sequence corresponds to some $y \in \Y$. We can also bound the radius of $\Y$.
\begin{lemma}
Let $V = \max_{y \in \R_{+}^n : \norm{y} = 1} v(y)$. Then, for every $y \in \Y$ we have that
\begin{align*}
    \norm{y} \leq&\; \parens{\frac{V}{\phi}}^{\frac{1}{1 - k}}.
\end{align*}
\end{lemma}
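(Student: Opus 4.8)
The plan is a one-line homogeneity argument. First I would dispose of the trivial case $y = \mathbf{0}$, for which the claimed bound holds vacuously, and assume $y \in \Y$ with $y \neq \mathbf{0}$. Then I would normalize: write $y = \norm{y}\cdot \hat{y}$ where $\hat{y} = y/\norm{y}$ lies in the set $\{z \in \R_+^n : \norm{z} = 1\}$, which is compact, so that $V = \max_{\norm{z}=1} v(z)$ is attained and finite (and non-negative, since $v$ is non-negative by Assumption \ref{assum:pricing-val}).

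Next I would invoke homogeneity of degree $k$ (part of Assumption \ref{assum:pricing-val}) to get $v(y) = v(\norm{y}\cdot \hat{y}) = \norm{y}^{k} v(\hat{y}) \leq \norm{y}^{k} V$. On the other hand, membership $y \in \Y$ means exactly $v(y) \geq \phi \norm{y}$ by definition of $\Y$. Chaining these two inequalities yields $\phi \norm{y} \leq \norm{y}^{k} V$, and since $\norm{y} > 0$ I can divide by $\norm{y}^{k}$ to obtain $\norm{y}^{1-k} \leq V/\phi$. Because $k \in (0,1)$ we have $1 - k > 0$, so raising both sides to the power $1/(1-k)$ (a monotone operation on non-negative reals) gives $\norm{y} \leq (V/\phi)^{1/(1-k)}$, as claimed.

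There is essentially no obstacle here: the only points requiring a word of care are that $V$ is well-defined (continuity of $v$ plus compactness of the unit sphere in the non-negative orthant) and that $\phi > 0$ so the division and the final exponentiation are legitimate. I would state these in passing rather than belabor them.
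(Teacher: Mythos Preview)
Your proposal is correct and follows essentially the same homogeneity argument as the paper: both use $v(y) = \norm{y}^k v(y/\norm{y}) \leq \norm{y}^k V$ together with the defining inequality $v(y) \geq \phi\norm{y}$ of $\Y$ to conclude $\norm{y}^{1-k} \leq V/\phi$. Your presentation is in fact slightly more direct, as the paper first treats the special ray through the maximizer $y^*$ before observing that the same bound applies to every vector of a given norm.
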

\begin{proof}
Let $y^* = \argmax_{y : \norm{y} = 1} v(y)$, where we have $v(y^*) = V$. Consider the vector $by^*$ for any $b > 0$. By homogeneity of $v$, we have that
\begin{align*}
    v(b y^*) =&\; b^{k} v(y^*) \\
    =&\; b^k V.
\end{align*}
For any $b > \parens{\frac{V}{\phi}}^{\frac{1}{1 - k}}$ we have that
\begin{align*}
    v(b y^*) =&\; \frac{b}{b^{1 - k}} \cdot V \\
    \leq&\; b \phi, 
\end{align*}
where $\norm{b y^*} > b$ and thus $by^* \notin \Y$. This holds for all vectors with norm $b$, as any such vector $z$ will have at most $b^k V$ by homogeneity, which yields the result. 
\end{proof}
The previous result also implies that $by \in \Y$ for any $b < 1$ and $y \in \Y$.  We assume that $V > \phi$, which is without loss of generality as we may otherwise take $\phi$ to be smaller artificially; we assume $\phi$ is small enough to ensure that $\Y$ contains a ball $\B_1(y_1)$ of radius 1 around some $y_1 \in \Y$, and we let $R = \parens{\frac{V}{\phi}}^{\frac{1}{1 - k}}$. We consider the dynamics to be given by
\begin{align*}
    D_t(p_t, y_{t-1}) =&\; (1 - \theta_t)y_{t-1} + x^*(p_t, \theta_t, y_{t-1}). 
\end{align*}
We let $\Z = \R_{+}^n$ denote our action space of price vectors; while dynamics here are not action-linear, we can still compute our desired action $p_t = \nabla v(y_t)$ efficiently, as we assume we have knowledge of $v$. While the dynamics depend on $\theta_t$, our choice of action $p_t$ depends only on the target update $y_t$ to the consumer's reserves, by Lemma \ref{lemma:pricing-unique-bundle}. Further, upon observing $x_t$, we can solve for $\theta_t$ as
\begin{align*}
    \theta_t =&\; 1 -  \frac{y_t - x_t}{y_{t-1}}
\end{align*}
for purposes of representing our surrogate losses, which are given by
\begin{align*}
    f^*_t(y_t) =&\; \theta_t  k \cdot v(y)  -  c_t(\theta_t y).
\end{align*}
We now show that the dynamics satisfy local controllability. 
\begin{lemma}[Local Controllability]
    The instance $(\Z, \Y, D_t)$ satisfies $\theta$-local controllability for each round $t$.
\end{lemma}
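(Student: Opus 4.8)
The plan is to exhibit, for each target state, an explicit price vector that drives the consumer's reserves exactly onto it in a single round, via Lemma~\ref{lemma:pricing-unique-bundle}. Fix $y \in \Y$ and a target $y^* \in \B_{\theta\cdot\pi(y)}(y)$; note $y^* \in \Y$, since for the full-dimensional closed set $\Y$ the ball $\B_{\pi(y)}(y)$ is contained in $\Y \subseteq \R_{+}^n$. The geometric crux is the observation that $\pi(y) \le y_i$ for every coordinate $i \in [n]$: since $y \in \Y$ and $\Y$ is closed, every $w$ with $\norm{w - y} < \pi(y)$ lies in $\Y \subseteq \R_{+}^n$ (else the segment $[y,w]$ would meet $\bd(\Y)$ at distance $< \pi(y)$); applying this to $w = y - \lambda e_i$ for $0 \le \lambda < \pi(y)$ gives $y_i - \lambda \ge 0$, and letting $\lambda \to \pi(y)$ yields $y_i \ge \pi(y)$.

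Given this, I would argue coordinatewise. For each $i$ we have $y_i - y^*_i \le \norm{y - y^*} \le \theta\cdot\pi(y) \le \theta y_i$, so $y^*_i \ge (1-\theta)y_i \ge (1-\theta_t)y_i$ for every admissible $\theta_t \in [\theta,1]$. Hence the bundle $x_t := y^* - (1-\theta_t)y$ is elementwise nonnegative, and by Lemma~\ref{lemma:pricing-unique-bundle} --- whose argument (strict concavity of $v((1-\theta_t)y + x) - \langle p_t, x\rangle$ in $x \in \R_{+}^n$ together with a vanishing gradient at $x_t$) goes through as soon as $x_t \in \R_{+}^n$, and not only when $y^* > (1-\theta_t)y$ holds strictly --- setting $p_t = \nabla v(y^*)$, a legal price vector in $\Z = \R_{+}^n$ since $v$ is increasing and differentiable on $\Y$, induces the purchase $x^*(p_t,\theta_t,y) = x_t$. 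Therefore $D_t(p_t, y) = (1-\theta_t)y + x_t = y^*$; since the choice $p_t = \nabla v(y^*)$ does not depend on $\theta_t$, the same action works uniformly over $\theta_t \in [\theta,1]$, establishing $\theta$-local controllability.

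The main subtlety to handle with care is the boundary case where some coordinate of $x_t$ vanishes --- which can occur exactly when $\pi(y) = y_i$ and $\theta_t = \theta$ --- so that the induced bundle lies on $\bd(\R_{+}^n)$; here one either invokes the extension of Lemma~\ref{lemma:pricing-unique-bundle} to non-strict targets noted above, or appeals to the limiting characterization of $x^*$ at such price vectors discussed after its statement. The degenerate case $\pi(y) = 0$ (i.e.\ $y \in \bd(\Y)$) is also covered: it forces $y^* = y$, and the construction reduces to the stable action $p_t = \nabla v(y)$, $x_t = \theta_t y$, consistent with the general fact that some action always keeps the state fixed.
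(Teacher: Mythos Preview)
Your proof is correct and follows essentially the same approach as the paper's: both establish that $\pi(y) \le \min_i y_i$ (you directly from $\Y \subseteq \R_+^n$, the paper by first computing $\pi$ relative to $\R_+^n$ and then noting that the distance to $\bd(\Y)$ can only be smaller), and then use the resulting coordinatewise bound $y^*_i \ge (1-\theta_t)y_i$ to invoke Lemma~\ref{lemma:pricing-unique-bundle}. Your treatment is in fact slightly more careful about the boundary case where some coordinate of $x_t$ vanishes, a point the paper's proof glosses over.
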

\begin{proof}
We show that $\theta$-local controllability holds over all of $\R_{+}^n$, which implies $\theta$-local controllability over $\Y$ as each distance $\pi(y_{t-1})$ while the feasible update region remains the same. By Lemma \ref{lemma:pricing-unique-bundle}, any update where $y_{t} \geq (1 - \theta_t)y_{t-1}$ elementwise is feasible. Each $\pi(y_{t-1})$ over $\R_{+}^n$ is simply the minimum element of $y_t$, which we denote here by $m$. Each element of $y_{t-1}$ is decreased by at least $\theta m$, and so any $y_t$ in the $\ell_{\infty}$ ball of radius $\theta m = \theta \pi(y_{t-1})$, and thus the $\ell_2$ ball of radius $\theta \pi(y_{t-1})$, is feasible.    
\end{proof}
We are now ready to analyse the regret of $\oenftrl$ for the problem. The remaining key issues to resolve will be the errors between our true and surrogate rewards $f_t$ and $f_t^*$, as well as the lack of Lipschitz continuity for our rewards. We will make use of more general formulations of the guarantees of $\ftrl$, (see e.g.\ \cite{hazan2021introduction}).
\begin{proposition}\label{prop:ftrl-holder} For a $\gamma$-strongly convex regularizer $\psi : \Y \rightarrow \R$ where $\abs{\psi(y) - \psi(y')} \leq G$ for all $y, y' \in \Y$, and for convex losses $f_1,\ldots, f_T$, the regret of $\ftrl$ is bounded by
\begin{align*}
    \Reg_{T}(\textup{$\ftrl$}) \leq&\; \sum_{t=1}^T (g_t(y_t) - g_t(y_{t+1}))  + \frac{G}{\eta},
\end{align*}
where $g_t(y) = \langle \nabla_t f_t(y_t) , y \rangle$ and   $g_t(y_t) - g_t(y_{t+1}) \geq \frac{\gamma}{\eta} \norm{y_{t+1} - y_t}^2$.
\end{proposition}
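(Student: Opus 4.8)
The plan is to run the classical linearization-plus-``Be-the-Leader'' analysis of $\ftrl$, being careful never to appeal to Lipschitzness of the $f_t$ (only convexity is available, which is precisely why the bound must be phrased in terms of the iterate increments $y_t - y_{t+1}$). Write $\nabla_t = \nabla f_t(y_t)$ and $g_t(y) = \langle \nabla_t, y\rangle$. First I would linearize: convexity of $f_t$ gives $f_t(y_t) - f_t(u) \le \langle \nabla_t, y_t - u\rangle = g_t(y_t) - g_t(u)$ for every $u \in \Y$, so summing over $t$ and taking $u$ to be a minimizer of $\sum_t f_t$ reduces the claim to bounding $\sum_{t=1}^T (g_t(y_t) - g_t(u))$ for the linear losses $g_t$ under the same iterates. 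Since scaling the $\ftrl$ objective by $\eta$ does not move its minimizer, the update is equivalently $y_{t+1} = \argmin_{y\in\Y}\parens{\tfrac{1}{\eta}\psi(y) + \sum_{s=1}^t g_s(y)}$ with $y_1 = \argmin_{y\in\Y}\psi(y)$; that is, $y_{t+1}$ is ``the leader'' for the augmented loss sequence $h_0 := \tfrac1\eta\psi$, $h_1 := g_1,\ldots,h_T := g_T$.

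Next I would invoke the Be-the-Leader lemma for this augmented sequence: a short induction on $T$ (the inductive step applies the hypothesis at the current leader $y_{T+1}$ and then appends $h_T$) gives $\sum_{t=0}^T h_t(y_{t+1}) \le \sum_{t=0}^T h_t(u)$ for all $u \in \Y$. The $t=0$ term is $h_0(y_1) = \tfrac1\eta\psi(y_1)$, so rearranging and using $\abs{\psi(u) - \psi(y_1)} \le G$ yields $\sum_{t=1}^T g_t(y_{t+1}) - \sum_{t=1}^T g_t(u) \le \tfrac{G}{\eta}$ for every $u$. Combining this with the identity $\sum_{t=1}^T (g_t(y_t) - g_t(u)) = \sum_{t=1}^T(g_t(y_t) - g_t(y_{t+1})) + \sum_{t=1}^T(g_t(y_{t+1}) - g_t(u))$ and the linearization step gives the first assertion, $\Reg_T(\ftrl) \le \sum_{t=1}^T (g_t(y_t) - g_t(y_{t+1})) + \tfrac{G}{\eta}$.

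For the per-round stability inequality, set $\Phi_t := \tfrac1\eta\psi + \sum_{s=1}^{t-1}g_s$, so that $y_t = \argmin_{y\in\Y}\Phi_t$, $\Phi_{t+1} = \Phi_t + g_t$, and both $\Phi_t$ and $\Phi_{t+1}$ are $(\gamma/\eta)$-strongly convex (adding affine terms to $\tfrac1\eta\psi$ preserves the modulus). First-order optimality of $y_t$ over $\Y$ gives $\langle \nabla\Phi_t(y_t), y_{t+1} - y_t\rangle \ge 0$, whence strong convexity yields $\Phi_t(y_{t+1}) \ge \Phi_t(y_t) + \tfrac{\gamma}{2\eta}\norm{y_{t+1} - y_t}^2$; symmetrically $\Phi_{t+1}(y_t) \ge \Phi_{t+1}(y_{t+1}) + \tfrac{\gamma}{2\eta}\norm{y_{t+1} - y_t}^2$. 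Adding these two inequalities, substituting $\Phi_{t+1} = \Phi_t + g_t$, and cancelling the common $\Phi_t(y_t) + \Phi_t(y_{t+1})$ terms leaves exactly $g_t(y_t) - g_t(y_{t+1}) \ge \tfrac{\gamma}{\eta}\norm{y_{t+1} - y_t}^2$.

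I expect the only genuinely delicate points — the natural places to slip — to be (i) keeping the index shift between $\ftrl$'s iterates and the Be-the-Leader auxiliary sequence consistent, and (ii) using first-order optimality in its variational (inequality) form rather than writing $\nabla\Phi_t(y_t) = 0$, since $y_t$ and $y_{t+1}$ may lie on $\bd(\Y)$. Everything else (the linearization and the algebraic cancellation) is routine, and the overall argument is classical; cf.\ \cite{hazan2021introduction}.
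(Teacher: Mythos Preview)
Your proof is correct and complete. The paper does not supply its own argument for this proposition; it simply cites \cite{hazan2021introduction} as the source of these ``more general formulations of the guarantees of $\ftrl$,'' and your derivation (linearize via convexity, apply Be-the-Leader with the regularizer as a virtual round, then extract the stability inequality from the $(\gamma/\eta)$-strong convexity of the cumulative objectives) is exactly the standard textbook route found there.
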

We show that this implies a regret bound for $(\lambda, \beta)$-Hölder continuous convex losses, recovering the $\lambda$-Lipschitz bounds when $\beta = 1$.
\begin{theorem}
For $(\lambda, \beta)$-Hölder continuous convex losses, $\ftrl$ with obtains regret bounded by
\begin{align*}
    \Reg_{T}(\ftrl) \leq&\; T \lambda \parens{ \frac{\eta \lambda }{\gamma } }^{\beta / (2 - \beta)}  + \frac{G}{\eta}
\end{align*}
and chooses points which satisfy $\norm{y_{t+1} - y_t} \leq \parens{ \frac{\eta \lambda}{\gamma} }^{1/(2 - \beta)}$ in each round.
\end{theorem}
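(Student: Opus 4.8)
The plan is to reduce $\ftrl$'s regret to a sum of per-round ``stability'' terms and then close a self-referential inequality with the H\"older bound. First I would invoke the regret guarantee of $\ftrl$ in the form
\begin{align*}
    \Reg_T(\ftrl) \leq \sum_{t=1}^T \parens{ f_t(y_t) - f_t(y_{t+1}) } + \frac{G}{\eta},
\end{align*}
which is the standard follow-the-leader/be-the-leader bound for $\ftrl$ run on convex losses (the analogue of Proposition \ref{prop:ftrl-holder} stated on the losses themselves rather than their linearizations). Alongside this I would record the one-sided stability estimate $f_t(y_t) - f_t(y_{t+1}) \geq \tfrac{\gamma}{\eta}\norm{y_{t+1}-y_t}^2$: this follows by adding the first-order optimality inequalities for the consecutive iterates $y_t,y_{t+1}$ with respect to the $\gamma$-strongly convex regularized objectives $\Phi_t$ and $\Phi_{t+1}=\Phi_t+\eta f_t$, using $\gamma$-strong convexity of $\Phi_t$ and then convexity of $f_t$ at $y_{t+1}$ to bound $\langle \nabla f_t(y_{t+1}),\, y_t - y_{t+1}\rangle \leq f_t(y_t)-f_t(y_{t+1})$ — exactly as in the derivation of Propositions \ref{prop:ftrl-step} and \ref{prop:ftrl-holder}. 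Everything then reduces to controlling $d_t := \norm{y_{t+1}-y_t}$ and the loss drop $f_t(y_t)-f_t(y_{t+1})$.

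The crux is a self-consistency argument for $d_t$. H\"older continuity gives directly $f_t(y_t) - f_t(y_{t+1}) \leq \lambda\,d_t^{\beta}$, and combining this with the stability estimate $f_t(y_t)-f_t(y_{t+1}) \geq \tfrac{\gamma}{\eta}d_t^2$ yields $\tfrac{\gamma}{\eta}d_t^2 \leq \lambda\,d_t^{\beta}$, i.e.\ $d_t^{2-\beta} \leq \tfrac{\eta\lambda}{\gamma}$, so $d_t \leq \parens{\tfrac{\eta\lambda}{\gamma}}^{1/(2-\beta)}$ in every round — the per-round displacement bound claimed. It is important to carry out the stability step without ever bounding $\norm{\nabla f_t}$: one keeps the gradient evaluated at $y_{t+1}$ paired against $y_t-y_{t+1}$ and controls that inner product by the loss difference via convexity, which remains valid even though $\nabla f_t$ may have arbitrarily large norm for a merely H\"older-continuous $f_t$.

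Finally I would substitute back: for each $t$, $f_t(y_t)-f_t(y_{t+1}) \leq \lambda\,d_t^{\beta} \leq \lambda\parens{\tfrac{\eta\lambda}{\gamma}}^{\beta/(2-\beta)}$, so summing over the $T$ rounds and adding $\tfrac{G}{\eta}$ gives $\Reg_T(\ftrl) \leq T\lambda\parens{\tfrac{\eta\lambda}{\gamma}}^{\beta/(2-\beta)} + \tfrac{G}{\eta}$, and $\beta=1$ recovers the $\lambda$-Lipschitz bound of Proposition \ref{prop:ftrl}. The main obstacle — and the only place where H\"older continuity, rather than Lipschitzness, genuinely matters — is the step-size control: with $\nabla f_t$ not uniformly bounded one cannot use the usual $d_t \leq \eta L/\gamma$ estimate, and must instead exploit that the $\ftrl$ step is itself limited by how much $f_t$ can fall over a displacement of length $d_t$, which H\"older continuity caps at $\lambda d_t^\beta$, making the step self-limiting. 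One must also be careful to phrase the regret guarantee in terms of $\sum_t\parens{f_t(y_t)-f_t(y_{t+1})}$ rather than $\sum_t\langle\nabla f_t(y_t),\,y_t-y_{t+1}\rangle$, since only the former lets the H\"older estimate be applied cleanly.
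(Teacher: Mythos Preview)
Your argument is internally coherent, but it proves the theorem for a \emph{different} variant of $\ftrl$ than the one the paper analyzes. The paper's $\ftrl$ (Algorithm~\ref{alg:ftrl}) is the linearized version: $y_{t+1}$ minimizes $\eta\sum_{s\leq t}\langle\nabla f_s(y_s),y\rangle + \psi(y)$, so the regularized objectives satisfy $\Phi_{t+1}=\Phi_t+\eta g_t$ with $g_t(y)=\langle\nabla f_t(y_t),y\rangle$, not $\Phi_{t+1}=\Phi_t+\eta f_t$ as you assume. For this linearized update, your stability estimate $f_t(y_t)-f_t(y_{t+1})\geq\frac{\gamma}{\eta}\norm{y_{t+1}-y_t}^2$ need not hold: the correct lower bound (Proposition~\ref{prop:ftrl-holder}) is on $g_t(y_t)-g_t(y_{t+1})=\langle\nabla f_t(y_t),y_t-y_{t+1}\rangle$, and by convexity at $y_t$ this \emph{dominates} $f_t(y_t)-f_t(y_{t+1})$, so your inequality runs the wrong way. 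Likewise, the FTL--BTL bound $\Reg_T\leq\sum_t(f_t(y_t)-f_t(y_{t+1}))+G/\eta$ holds for non-linearized $\ftrl$; for the linearized version one only gets the weaker $\Reg_T\leq\sum_t(g_t(y_t)-g_t(y_{t+1}))+G/\eta$.

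The paper closes this gap with a reflection trick that your final paragraph dismisses as impossible: writing $\langle\nabla f_t(y_t),y_t-y_{t+1}\rangle=\langle\nabla f_t(y_t),(2y_t-y_{t+1})-y_t\rangle$ and applying convexity of $f_t$ at $y_t$ gives $g_t(y_t)-g_t(y_{t+1})\leq f_t(2y_t-y_{t+1})-f_t(y_t)$, and since $\norm{(2y_t-y_{t+1})-y_t}=\norm{y_t-y_{t+1}}$, H\"older continuity bounds this by $\lambda\norm{y_t-y_{t+1}}^\beta$. From there the self-consistency argument proceeds exactly as you outlined. So the linearized stability term \emph{can} be controlled via H\"older continuity --- one just passes to the reflected point $2y_t-y_{t+1}$ rather than to $y_{t+1}$. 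Your approach is a clean alternative if one runs non-linearized $\ftrl$ instead (and the resulting bound and step-size control are identical), but as written it does not cover the algorithm the theorem refers to.
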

\begin{proof}
For $(\lambda, \beta)$-Hölder continuous convex losses $f_t$, we have that 
\begin{align*}
    g_t(y_t) - g_t(y_{t+1}) =&\; \langle \nabla_t f_t(y_t) , y_{t} - y_{t+1} \rangle \\
    =&\; \langle \nabla_t f_t(y_t) , (2 y_{t} - y_{t+1}) - y_t \rangle \\
    \leq&\; f_t(2 y_{t} - y_{t+1}) - f_t(y_t)
\end{align*}
by convexity of $f_t$, where $\norm{(2 y_{t} - y_{t+1}) - y_t} = \norm{y_t - y_{t+1}}$, and so
\begin{align*}
    g_t(y_t) - g_t(y_{t+1}) \leq&\; \lambda \norm{y_t - y_{t+1}}^{\beta}
\end{align*}
by Hölder continuity. Combining with the lower bound on $g_t(y_t) - g_t(y_{t+1})$ from Proposition \ref{prop:ftrl-holder} gives us that 
\begin{align*}
   \frac{\gamma}{\eta} \norm{y_{t+1} - y_t}^2  \leq&\; g_t(y_t) - g_t(y_{t+1}) \leq \lambda \norm{y_t - y_{t+1}}^{\beta}
\end{align*}
and thus
\begin{align*}
    g_t(y_t) - g_t(y_{t+1}) \leq&\; \lambda  \parens{ \frac{\eta \lambda }{\gamma } }^{\beta / (2 - \beta)},
\end{align*}
yielding a regret bound of
\begin{align*}
    \Reg_{T}(\ftrl) \leq&\; T \lambda \parens{ \frac{\eta \lambda }{\gamma } }^{\beta / (2 - \beta)}  + \frac{G}{\eta}
\end{align*}
with per-round distance at most $\norm{y_{t+1} - y_t} \leq \parens{ \frac{\eta \lambda}{\gamma} }^{1/(2 - \beta)}$.
\end{proof}
We note that the concave surrogate rewards $f_t^*(y_t)$ are a sum of a $(k \lambda, \beta)$-Hölder continuous function and a $(L_c, 1)$-Hölder continuous (i.e.\ Lipschitz) function; we assume that each function is $(L, \beta)$-Hölder continuous with $L = k\lambda + L_c$, which is sufficient for for large enough $T$ as we will have $\norm{y_t - y_{t-1}} \leq 1$ and thus $\norm{y_t - y_{t-1}} \leq \norm{y_t - y_{t-1}}^{\beta}$. We use a similar analysis to bound the error between true and surrogate rewards, yielding our regret bound for $\oenftrl$.
\begin{theorem}\label{thm:pricing-app-main}
The regret of $\oenftrl$ with respect to the stable reserve policies $\PP_{\Y}$ is bounded by
\begin{align*}
    \Reg_{T}(\oenftrl) \leq&\; 2L \parens{ \frac{G}{\gamma} }^{\beta / 2} \parens{T \parens{3 + \parens{ \frac{R}{\theta}}^{\beta}}}^{ (2 - \beta)/{2} }.
\end{align*}
\end{theorem}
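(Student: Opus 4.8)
The plan is to run $\oenftrl$ over the negated concave surrogates $f^*_t(y_t) = \theta_t k\, v(y_t) - c_t(\theta_t y_t)$, using the H\"older-continuous version of $\ftrl$ established just above, and to charge the regret to three sources: the intrinsic $\ftrl$ regret, the loss from optimizing over a contracted feasible region, and the gap between surrogate and true rewards. First I would fix the contracted domain. Since $\Y \subseteq \R^n_{+}$ contains a unit ball $\B_{1}(y_1)$ around an interior point $y_1$ but not around the origin, I would contract toward $y_1$, taking $\widetilde{\Y} = \braces{y_1 + (1-\delta)(y - y_1) : y \in \Y}$; this is convex, contained in $\Y$, and every $\bar y \in \widetilde{\Y}$ satisfies $\pi(\bar y) \geq \delta$. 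Running H\"older $\ftrl$ over $\widetilde{\Y}$ with step $\eta$, consecutive targets obey $\norm{y_{t+1} - y_t} \leq \parens{\eta L/\gamma}^{1/(2-\beta)}$, so setting $\delta = \theta^{-1}\parens{\eta L/\gamma}^{1/(2-\beta)}$ places every target inside $\B_{\theta\pi(y_{t-1})}(y_{t-1})$, which the local controllability lemma guarantees is reachable. To hit the target $y_t$ I would post prices $p_t = \nabla v(y_t)$; by Lemma \ref{lemma:pricing-unique-bundle} the consumer's unique optimal bundle is then $x_t = y_t - (1-\theta_t)y_{t-1}$, so the reserves update exactly to $y_t$ with no oracle call (as $v$ is known), and $\theta_t$ is recoverable from the observed $x_t$ in order to evaluate $f^*_t$ and its gradient.

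Next I would bound the per-round surrogate-versus-true gap. Writing $x_t = \theta_t y_t + (1-\theta_t)(y_t - y_{t-1})$ and applying Euler's identity (Proposition \ref{prop:euler-homog}) to $\langle\nabla v(y_t), x_t\rangle$ and to $\langle\nabla v(y_t), y_t\rangle$ yields $f^*_t(y_t) - f_t(p_t, x_t) = (1-\theta_t)\langle\nabla v(y_t), y_{t-1} - y_t\rangle + c_t(x_t) - c_t(\theta_t y_t)$. The cost difference is at most $L_c\norm{y_t - y_{t-1}}$ since $c_t$ is $L_c$-Lipschitz; for the gradient term I would apply concavity of $v$ at $y_t$ against the reflected point $2y_t - y_{t-1}$ (still in $\R^n_{+}$ once $T$ is large) together with $(\lambda,\beta)$-H\"older continuity to get $\langle\nabla v(y_t), y_{t-1} - y_t\rangle \leq \lambda\norm{y_t - y_{t-1}}^{\beta}$. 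Hence, once step sizes drop below $1$, the gap is $O\parens{L\norm{y_t - y_{t-1}}^{\beta}}$ up to problem constants. The same identity, specialized to $y_{t-1} = y_t = y^*$, gives $f_t(P_{y^*}) = f^*_t(y^*)$ for every $P_{y^*} \in \PP_{\Y}$ (the gradient term vanishes and $\langle\nabla v(y^*),\theta_t y^*\rangle = \theta_t k\, v(y^*)$ by Euler), so the benchmark transfers to the surrogate losses with no loss.

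Finally I would assemble the pieces, following the proof of Theorem \ref{thm:oenftrl}. Decomposing $\Reg_T(\oenftrl)$ into the contraction error $\sum_t f^*_t(y^*) - f^*_t(\bar y^*)$ (at most $O\parens{TL(R/\theta)^{\beta}}\parens{\eta L/\gamma}^{\beta/(2-\beta)}$ by H\"older continuity, since $\norm{y^* - \bar y^*} \leq 2R\delta$), the H\"older $\ftrl$ regret $\sum_t f^*_t(\bar y^*) - f^*_t(y_t)$ over $\widetilde{\Y}$ (at most $O\parens{TL}\parens{\eta L/\gamma}^{\beta/(2-\beta)} + G/\eta$), and the surrogate-versus-true total $\sum_t f^*_t(y_t) - f_t(p_t, x_t)$ (at most $O\parens{TL}\parens{\eta L/\gamma}^{\beta/(2-\beta)}$ from the previous paragraph, split into its gradient and cost contributions), a careful accounting of the three $\norm{y_t - y_{t-1}}^{\beta}$-type contributions gives total $O\parens{TL\parens{3 + (R/\theta)^{\beta}}}\parens{\eta L/\gamma}^{\beta/(2-\beta)} + G/\eta$. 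Balancing the two terms by taking $\eta$ of order $\parens{ \frac{G}{ TL\,(3 + (R/\theta)^{\beta})\,(L/\gamma)^{\beta/(2-\beta)} } }^{(2-\beta)/2}$, and using $\beta \leq 1$ to absorb the residual numerical factor (which lies in $[1,2]$), yields the stated bound $2L\parens{G/\gamma}^{\beta/2}\parens{T\parens{3 + (R/\theta)^{\beta}}}^{(2-\beta)/2}$.

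I expect the main obstacle to be the surrogate-gap estimate, specifically controlling $\langle\nabla v(y_t), y_{t-1} - y_t\rangle$: because $v$ is only H\"older (not Lipschitz) and $\Y$ touches the coordinate axes, $\norm{\nabla v}$ can blow up near $\bd(\R^n_{+})$, so a naive Cauchy--Schwarz bound would lose control of this term, and the reflected-point concavity argument is what pins it to the correct H\"older power $\norm{y_t - y_{t-1}}^{\beta}$ of the step size. The secondary delicate point is getting the H\"older $\ftrl$ step-size/regret tradeoff to interact cleanly with the local-controllability feasibility margin $\theta\pi(y_{t-1})$ over the contracted domain $\widetilde{\Y}$ — this is also where the $R/\theta$ dependence enters the final bound.
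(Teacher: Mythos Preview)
Your proposal is correct and follows essentially the same approach as the paper: contract $\Y$ toward the interior point $y_1$, run the H\"older-continuous $\ftrl$ over the contracted domain with $\delta = \theta^{-1}(\eta L/\gamma)^{1/(2-\beta)}$, transfer the benchmark exactly via $f_t(P_{y^*}) = f^*_t(y^*)$, and bound the surrogate-versus-true gap using the reflected-point concavity step $\langle \nabla v(y_t), y_{t-1} - y_t\rangle \leq v(y_t) - v(2y_t - y_{t-1})$ together with H\"older continuity. The paper's accounting of the constant $3$ is identical to yours (one unit from the $\ftrl$ term, two from the gradient and cost pieces of the gap, plus $(R/\theta)^{\beta}$ from the contraction), and the final optimization over $\eta$ matches.
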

\begin{proof}
We reparameterize to treat the bundle $y_1$ where $\B_{1}(y_1) \subseteq \Y$ as the origin, and assume the choice of regularizer has $y_1$ as its minimum.  
By Theorem \ref{thm:oenftrl}, for any step size and $\delta > 0$ such that $\norm{y_t - y_{t-1}} \leq \delta\theta$, running $\oenftrl$ for the $\theta$-locally controllable instance $(\Z, \Y, D)$ over the surrogate rewards $f_t^*$, with inradius 1 and radius $R$, obtains 
\begin{align*}
    \sum_{t=1}^T f_t^*(y^*) - \sum_{t=1}^T f_t^*(y_t)  \leq&\; TL(\delta R)^{\beta} +  T L \parens{ \frac{\eta L }{\gamma } }^{\beta / (2 - \beta)}  + \frac{G}{\eta}\\
    \leq&\; T L \parens{1 + \parens{ \frac{R}{\theta}}^{\beta} } \parens{ \frac{\eta L }{\gamma } }^{\beta / (2 - \beta)}  + \frac{G}{\eta}\\
    \leq&\; 2L \parens{ \frac{G}{\gamma} }^{\beta / 2} \parens{T \parens{1 + \parens{ \frac{R}{\theta}}^{\beta}}}^{ (2 - \beta)/{2} } \\
    \overset{\Delta}{=}&\; \Reg_T(f^*)
\end{align*}
for any $y^* \in \Y$, upon setting $\delta = \frac{1}{\theta} \parens{\frac{\eta \lambda }{\gamma }}^{1/(2-\beta)}$ and $\eta = \parens{ \frac{G}{KT} }^{(2 - \beta )/2}$, where $$K^* = L \parens{1 + \parens{ \frac{R}{\theta}}^{\beta} } \parens{ \frac{ L }{\gamma } }^{\beta / (2 - \beta)}.$$

Note that the surrogate rewards exactly track the true rewards when a stable reserve policy $P_{y^*}$ is played, and so our regret with respect to the best stable reserve policy $P_{y^*}$ is at most
\begin{align*}
    \sum_{t=1}^T f_t(P_{y^*}) - \sum_{t=1}^T f_t(y_t)  \leq&\; \Reg_T(f^*) + \sum_{t=1}^T f^*_t(y_t) - f_t(p_t, x_t) \\
    \leq&\; \Reg_T(f^*) + \sum_{t=1}^T \langle \nabla v(y_t) , \theta y_t - x_t \rangle  - c_t(\theta y_t ) + c_t(x_t) \\
    \leq&\; \Reg_T(f^*) + \sum_{t=1}^T (1 - \theta_t) \parens{ \langle \nabla v(y_t) , y_{t-1} - y_t \rangle  + L\norm{y_{t} - y_{t-1}} } \tag{$x_t = (1 - \theta_t)y_{t-1}$}\\
    \leq&\; \Reg_T(f^*) + \sum_{t=1}^T \parens{ \langle \nabla v(y_t) , y_{t} - (2y_t - y_{t-1}) \rangle  + L\norm{y_{t} - y_{t-1}} } \\
    \leq&\; \Reg_T(f^*) + \sum_{t=1}^T  v(y_t) - v(2y_t - y_{t-1})   + L\norm{y_{t} - y_{t-1}}  \tag{concavity of $v$} \\
    \leq&\; \Reg_T(f^*) + \sum_{t=1}^T 2 L\norm{y_{t} - y_{t-1}}^{\beta}  \tag{Hölder, $\norm{y_{t} - y_{t-1}} \leq 1$} \\    
    \leq&\; \Reg_T(f^*) + 2TL \parens{ \frac{ \eta L }{ \gamma} }^{\beta / (2 - \beta)} \\
   \leq&\; 2L \parens{ \frac{G}{\gamma} }^{\beta / 2} \parens{T \parens{3 + \parens{ \frac{R}{\theta}}^{\beta}}}^{ (2 - \beta)/{2} }
\end{align*}
upon updating $K^*$ to $K$ as 
$$K = L \parens{3 + \parens{ \frac{R}{\theta}}^{\beta} } \parens{ \frac{ L }{\gamma } }^{\beta / (2 - \beta)},$$
which yields the theorem.
\end{proof}
Theorem \ref{thm:pricing-body} follows directly from Theorem \ref{thm:pricing-app-main}.

\section{Background and Proofs for Section \ref{subsec:app-steering}: Steering Learners}
\label{sec:appendix-steering}

\subsection{Background}

While much of the literature related to no-regret learning in general-sum games considers either rates of convergence to (coarse) correlated equilibria \cite{blum2008regret,anagnostides2022near} or welfare guarantees for such equilibria \cite{robustPOA,hartline2015no}, a recent line of work \cite{DBLP:journals/corr/abs-1711-09176,deng2019strategizing,mansour2022strategizing} has considered the question of {\it optimizing} one's reward when playing against a no-regret learner. A target benchmark which has emerged for this problem is the value of the {\it Stackelberg} equilibrium of a game (the optimal mixed strategy to ``commit to'', assuming an opponent best responds), which was shown by attainable by \cite{deng2019strategizing} against any no-regret algorithm and optimal in many cases (e.g.\ for no-swap learners), both up to $o(T)$ terms, and further which may yield higher reward for the optimizer than (coarse) correlated equilibria.

We show a class of instances for which the problem for optimizing reward against a learner playing according to gradient descent can be formulated as a locally controllable instance of online nonlinear control with adversarial perturbations and surrogate state-based losses. The simplest non-trivial instances we consider are those where the optimizer's reward is a function only of the learner's actions (i.e.\ all rows of their reward matrix are identical), and the optimization problem amounts to {\it steering} the learner to a desired strategy via one's choice of actions. Additionally, we allow the game matrices to change over time, which has not been substantially considered in prior work to our knowledge. We require that the learner's matrices do not change too quickly (which we model as adversarial disturbances to dynamics), and the optimizer's matrices can change arbitrarily provided that they remain close to {\it some} row-identical matrix (which we model as imprecision in our surrogate loss function).

\subsection{Model}
Here we are tasked with playing a sequence of bimatrix games against a no-regret learning opponent, where the game matrices may change adversarially in each round.
We assume the following properties hold for the adversarial sequence of games.
\begin{assumption}\label{assum:game-props}
   For a sequence $\{(A_t, B_t) : t \in [T] \}$ of $m\times n$ bimatrix games, with $m > n$:
\begin{itemize}
    \item Each entry of $A_t$ and $B_t$ lies in $[-\frac{L}{2\sqrt{n}}, \frac{L}{2\sqrt{n}} ]$
    \item the convex hull of the of the rows of each $B_t$ contains the unit ball in $\R^n$, 
    \item $\norm{x A_{t} - xA^*_t} \leq \delta_t$ for any $x \in \Delta(m)$, where each row of $A^*_t$ is identical, and
    \item $\norm{x B_{t} - x B_{t-1}} \leq \epsilon_t$ for any $x \in \Delta(m)$.
\end{itemize} 
\end{assumption}

Each game $(A_t, B_t)$ is revealed after Players A and B commit to their respective strategies $x_t \in \Delta(m)$ and $y_t \in \Delta(n)$. Observe that due to the first property, for any $z \in \B_{1}(\mathbf{0})$, there is some $x \in \Delta(m)$ such that $xB = z$. By the second property, we have that $xA^*_t = x'A^*_t$ for any $x, x' \in \Delta(m)$.

We recall the Online Gradient Descent algorithm with convex losses $\ell_t$ from \cite{opgd}.
\begin{algorithm}
\caption{Online Gradient Descent (OGD)}\label{alg:opgd}
\begin{algorithmic}
\STATE Input: Convex set $\Y \subseteq \R^n$, initial point $y_1 \in \Y$, and step sizes $\theta_1, \ldots, \theta_T$. 
\FOR{$t = 1$ to $T$}
\STATE Play $y_t$ and observe loss $\ell_t(y_t)$.
\STATE Set $\nabla_t = \nabla \ell_t(y_t)$.
\STATE Set $y_{t+1} = \Pi_{\Y} \parens{ y_t - \theta_t \nabla_t} = \text{argmin}_{y \in \Y}\norm{ y_t - \theta_t \nabla_t - y}$. 
\ENDFOR
\end{algorithmic}
\end{algorithm}

\begin{proposition}[\cite{opgd}]
For differentiable convex losses $\ell_t : \Y \rightarrow \R$, with $\theta_{t+1} \leq \theta_t$ for each $t \leq T$, then for all $y^* \in \Y$ the regret of $\textup{OGD}$ is bounded by
\begin{align*}
    \sum_{t=1}^T \ell_t(y_t) - \ell_t(y^*) \leq&\; \frac{2R^2_B}{\theta_T} + \sum_{t=1}^T \frac{\theta_t}{2} \norm{\nabla_t}^2,
\end{align*}
where $R_B$ is the radius of $\Y$. If $\norm{\nabla_t} \leq G_B$ and $\theta_t = \frac{2R_B}{G_B\sqrt{T}}$ for all $t \leq T$, we have that
\begin{align*}
    \sum_{t=1}^T \ell_t(y_t) - \ell_t(y^*) \leq&\; 2R_B G_B \sqrt{T}.
\end{align*}
\end{proposition}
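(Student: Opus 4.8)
The statement is the classical regret bound for Online Gradient Descent with a nonincreasing step-size schedule (Zinkevich's analysis), so the plan is to run the standard potential-function argument, being careful with the time-varying step sizes. First I would fix an arbitrary comparator $y^* \in \Y$ and, writing $\nabla_t = \nabla \ell_t(y_t)$, reduce to bounding the linearized regret: by convexity of each $\ell_t$ we have $\ell_t(y_t) - \ell_t(y^*) \le \langle \nabla_t, y_t - y^*\rangle$, so it suffices to bound $\sum_{t=1}^T \langle \nabla_t, y_t - y^*\rangle$.

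The second step is the per-round inequality. Using the update rule $y_{t+1} = \Pi_{\Y}(y_t - \theta_t \nabla_t)$, the nonexpansiveness of Euclidean projection onto the convex set $\Y$, and $y^* \in \Y$, I would expand
\begin{align*}
\norm{y_{t+1} - y^*}^2 \;\le\; \norm{y_t - \theta_t \nabla_t - y^*}^2 \;=\; \norm{y_t - y^*}^2 - 2\theta_t \langle \nabla_t, y_t - y^*\rangle + \theta_t^2 \norm{\nabla_t}^2,
\end{align*}
and rearrange to obtain $\langle \nabla_t, y_t - y^*\rangle \le \frac{1}{2\theta_t}\parens{\norm{y_t-y^*}^2 - \norm{y_{t+1}-y^*}^2} + \frac{\theta_t}{2}\norm{\nabla_t}^2$.

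The third step is to sum this over $t=1,\dots,T$ and control the first sum, which is telescoping-like but has non-constant coefficients $1/(2\theta_t)$. I would apply Abel summation: writing $a_t = \norm{y_t - y^*}^2$,
\begin{align*}
\sum_{t=1}^T \frac{a_t - a_{t+1}}{2\theta_t} \;=\; \frac{a_1}{2\theta_1} - \frac{a_{T+1}}{2\theta_T} + \sum_{t=2}^T a_t\parens{\frac{1}{2\theta_t} - \frac{1}{2\theta_{t-1}}}.
\end{align*}
Since $\theta_{t+1}\le\theta_t$, every bracket $\frac{1}{2\theta_t}-\frac{1}{2\theta_{t-1}}$ is nonnegative; dropping the term $-a_{T+1}/(2\theta_T)\le 0$ and using that the diameter of $\Y$ is at most $2R_B$ (so $a_t \le 4R_B^2$), the sum collapses to $4R_B^2\cdot\frac{1}{2\theta_T} = \frac{2R_B^2}{\theta_T}$. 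Adding back $\sum_t \frac{\theta_t}{2}\norm{\nabla_t}^2$ and combining with the convexity bound from step one yields the first displayed inequality.

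Finally, for the corollary I would substitute $\theta_t = \frac{2R_B}{G_B\sqrt T}$ and $\norm{\nabla_t}\le G_B$: the first term becomes $\frac{2R_B^2}{\theta_T} = R_B G_B\sqrt T$, and the second becomes $\sum_{t=1}^T \frac{\theta_t}{2}G_B^2 = \frac{T}{2}\cdot\frac{2R_B}{G_B\sqrt T}\cdot G_B^2 = R_B G_B\sqrt T$, summing to $2R_B G_B\sqrt T$. There is no real obstacle here; the only point requiring care — more bookkeeping than difficulty — is the Abel-summation step for the nonincreasing step sizes, together with correctly using that the comparator distance $\norm{y_t - y^*}$ is bounded by the diameter $2R_B$ rather than the radius $R_B$. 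Everything else follows immediately from convexity of the $\ell_t$ and nonexpansiveness of the projection.
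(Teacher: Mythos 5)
Your proof is correct and is exactly the standard Zinkevich-style argument (convexity linearization, projection nonexpansiveness, and the Abel/telescoping step for nonincreasing step sizes with the distance bounded by the diameter $2R_B$), which the paper does not reprove but imports by citation from \cite{opgd}. The constants work out as stated, including the $\frac{2R_B^2}{\theta_T}$ term and the $2R_BG_B\sqrt{T}$ corollary for the constant step size.
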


We assume that Player B plays according to OPGD in our setup, with $y_1 = \mathbf{u}_n$ and $\theta = \frac{R_B}{G_B\sqrt{T}}$.
At each round $t$, we (Player A) choose some mixed strategy $x_t \in \Delta(n)$, and Player B plays some mixed strategy $y_t \in \Delta(n)$. Utilities for each player are given by the game $(A_t, B_t)$ as
\begin{align*}
    u_t^A(x_t, y_t) =&\; x_t A_t y_t; \\
    u_t^B(x_t, y_t) =&\; x_t B_t y_t.
\end{align*}
Note that the loss gradient $-\nabla u_t^B (x_t, y_t)$ each round for Player B (for negative utilities) is given by
\begin{align*}
    \nabla_t =&\; - x_t B,
\end{align*}
and so their mixed strategy is updated at each round according to
\begin{align*}
    y_t =&\; \Pi_{\Delta(n)} \parens{ y_{t-1}  + \theta (x_{t-1} B_{t-1} ) }.
\end{align*}

Our utility is given by $x_t A_t y_t = \mathbf{u}_n A_t^*  y_t + x_t(A_t - A_t^*) y_t $, as $x_t$ does not affect rewards from $A_t^*$.
We benchmark the regret of an algorithm $\A$ against the optimal profile $(x, y) \in \Delta(m) \times \Delta(n)$: 
\begin{align*}
    \Reg_{T}(\A) =&\; \max_{(x, y) \in \Delta(m) \times \Delta(n)}  \sum_{t=1}^T x A_t y -  x_t A_t y_t.
\end{align*}

Note that the per-round average utility for the maximizing $(x, y)$ is at least as high as that obtained by the Stackelberg equilibrium of the average game $\parens{\sum_t \frac{A_t}{T}, \sum_{t} \frac{B_t}{T}}$, as for this objective one can choose both players' strategies without restriction. We remark that finding the Stackelberg equilibrium for any fixed game $(A_t^*, B_t)$ in our setting, where $A_t^*$ has identical rows, is straightforward: it suffices to optimize over $[n]$, as any fixed action $j \in [n]$ is a best response to some $x \in \Delta(m)$ by our assumption on the rows of $B_t$, and as our rewards are only a function of Player B's strategy $y$. However, we are not aware of any prior work which enables competing with the average-game Stackelberg value against a learning opponent when games arrive online.

\subsection{Analysis}

We first show that the problem can be formulated via known, strongly $\theta$-locally controllable dynamics with adversarial disturbances. As $B_t$ changes slowly between rounds, we can run $\oenftrluap$ with disturbances representing the error resulting from assuming that  $B_t$ does not change from $B_{t-1}$.

\begin{lemma}\label{lemma:games-strictly}
Given the knowledge available prior to selecting $x_t$, updates for $y_{t+1}$ can be expressed via known action-linear dynamics $(\X, \Y, D_t)$ which satisfy strong $\theta$-local controllability, and with adversarial disturbances $w_t$ satisfying $\sum_{t=1}^T \norm{w_t} \leq  \theta \sum_{t=1}^T \epsilon_t$. 
\end{lemma}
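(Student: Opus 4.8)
The plan is to write down the nominal dynamics explicitly, check the two structural properties (action-linearity and strong $\theta$-local controllability), and then bound the gap between the nominal update and the true $\opgd$ update by a disturbance term. Recall that Player B updates via $y_{t+1} = \Pi_{\Delta(n)}(y_t + \theta\, x_t B_t)$, and that when we commit to $x_t$ we have already observed $B_{t-1}$ but not $B_t$. I would therefore take $\X = \Delta(m)$, $\Y = \Delta(n)$, and define the round-$t$ dynamics by $D_t(x, y) = \Pi_{\Delta(n)}(y + \theta\, x B_{t-1})$, which depends only on information available before $x_t$ is chosen. As a function of $x$ this is affine, with constant term $y$ and linear part $x \mapsto \theta\, x B_{t-1}$; hence $D_t$ is affine (thus action-linear in the sense of Section~\ref{subsec:ALD}: linear in $x$ wherever the output lies in the relative interior of $\Delta(n)$, arbitrary on the boundary), and the feasibility constraint $y + \theta\, x B_{t-1} \in \Delta(n)$ is linear in $x$, so the per-round action-selection step stays a convex program.

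For strong $\theta$-local controllability, fix $y \in \Delta(n)$ and $y^* \in \B_\theta(y) \cap \Delta(n)$, and set $z = (y^* - y)/\theta$, so $\norm{z} \le 1$. By the observation recorded just after Assumption~\ref{assum:game-props} that the convex hull of the rows of $B_{t-1}$ contains the unit ball of $\R^n$, there is some $x \in \Delta(m)$ with $x B_{t-1} = z$. For this $x$ we have $y + \theta\, x B_{t-1} = y^* \in \Delta(n)$, so the projection acts as the identity and $D_t(x, y) = y^*$; this establishes strong $\theta$-local controllability of $(\X, \Y, D_t)$ for every $t$, and it is genuinely \emph{strong} since the reachable radius is $\theta$ uniformly, not shrinking near the faces of the simplex.

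Finally, define $w_t = y_{t+1} - D_t(x_t, y_t)$, the discrepancy between the true update (using $B_t$) and the nominal one (using $B_{t-1}$). Since Euclidean projection onto a convex set is a contraction,
\[
  \norm{w_t} = \norm{\Pi_{\Delta(n)}(y_t + \theta\, x_t B_t) - \Pi_{\Delta(n)}(y_t + \theta\, x_t B_{t-1})} \le \theta\,\norm{x_t B_t - x_t B_{t-1}} \le \theta\,\epsilon_t,
\]
using $\norm{x B_t - x B_{t-1}} \le \epsilon_t$ from Assumption~\ref{assum:game-props}. Summing over $t$ yields $\sum_{t=1}^T \norm{w_t} \le \theta \sum_{t=1}^T \epsilon_t$, exactly as claimed, and viewing $\{w_t\}$ as adversarial disturbances to the known dynamics $D_t$ places the problem in the form handled by $\oenftrluap$.

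The only point needing care — more bookkeeping than genuine obstacle — is the first round, where no $B_0$ is available to define $D_1$: one plays an arbitrary $x_1$, absorbs that single round into an additive $O(1)$ regret term (all payoffs are bounded by Assumption~\ref{assum:game-props}), and runs the controlled process from the resulting state $y_2$ over the horizon $\{2,\dots,T\}$. Everything else follows from non-expansiveness of projection together with the unit-ball-in-convex-hull condition on the $B_t$.
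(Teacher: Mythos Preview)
Your proposal is correct and follows essentially the same argument as the paper: define the nominal dynamics using $B_{t-1}$, verify action-linearity and strong $\theta$-local controllability from the unit-ball-in-convex-hull assumption, and bound the disturbance via nonexpansiveness of the Euclidean projection together with $\norm{xB_t - xB_{t-1}} \le \epsilon_t$. If anything, your version is slightly more explicit than the paper's --- you name the nonexpansiveness property rather than just invoking ``standard properties of Euclidean projection,'' and you flag the $t=1$ bookkeeping issue (the paper handles the analogous issue only for the losses, setting $A_0 = \mathbf{0}$).
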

\begin{proof}
First, note that we can compute Player B's current strategy $y_t$, as it is a function only of games and strategies up to round $t-1$, all of which are observable. Given the update rule for $\opgd$, we can formulate the dynamics ${D}_t(x_t, y_t)$ update as
\begin{align*}
{D}_t(x_t, y_t) =&\; \Pi_{\Delta(n)} \parens{ y_t  + \theta(x_t B_{t} ) } \\
=&\; \Pi_{\Delta(n)} \parens{ y_t  + \theta(x_t B_{t-1} )  + \theta(x_t(B_t - B_{t-1}))} \\
=&\; \Pi_{\Delta(n)} \parens{ y_t  + \theta(x_t B_{t-1} ) } + w_{t}
\end{align*}
where $w_t$ represents the error from assuming $B_t = B_{t-1}$. by standard properties of Euclidean projection, and the change bound on $B_t$,  we have that $\norm{w_t} \leq \norm{\theta(x_t(B_t - B_{t-1}))} \leq \theta \epsilon_t$ . Further, the update is action-linear (up to projection, prior to $w_t$).

To see that $D_{t}$ satisfies strong $\theta$-local controllability, we recall that the convex hull of the rows of $B_{t-1}$ contain the unit ball, and  so for any $y^*$ in $\B_{\theta}(y_t) \cap \Delta(n)$ there is some $x_t \in \Delta(m)$ such that $\theta(x_tB_{t-1}) = y^* - y_t$.
\end{proof}

At round each round $t$, our loss is given by $f_t(x_t, y_t) = -x_t A_t{y}_t$.
There are two barriers to running our algorithm. First, the update for $y_t$ is determined by $x_{t-1}$ and not $x_t$, yet we do not see $A_{t-1}$ prior to selecting $x_{t-1}$, which would be required to take the appropriate step following $f_{t-1}$. Second, the loss depends on $x_t$ in addition to $y_t$.
To address both issues,  
we instead run $\oenftrluap$ with surrogate losses $\tilde{f}_{t}(\tilde{y}_{t}) = -\mathbf{u}_n A_{t-1} {y}_{t}$, with action rounds relabeled to account for the fact that $x_{t-1}$ influences the step for $y_t$ (which does not change the behavior of the algorithm). We set $A_0 = \mathbf{0}_{m,n}$.

\begin{theorem}\label{thm:games-appendix}
Repeated play against an opponent using $\opgd$ with step size $\theta = \Theta(T^{-1/2})$ in a sequence of games $(A_t, B_t)$ satisfying Assumption \ref{assum:game-props} can be cast as an instance of online control with strongly $\theta$-locally controllable dynamics, for which the regret of $\oenftrluap$ is at most
\begin{align*}
    \Reg_T(\oenftrluap) \leq&\; O\parens{\sqrt{T} + \sum_{t=1}^T (\delta_t + \epsilon_t)},
\end{align*}
with efficient per-round computation.
\end{theorem}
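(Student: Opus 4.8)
The plan is to invoke Lemma~\ref{lemma:games-strictly} to recast the interaction as an $\oenftrluap$ instance and then translate its surrogate-loss regret bound (Theorem~\ref{thm:oenftrluap-body}) into a bound on $\Reg_T(\A)$, absorbing every loss of precision into the $O(\sqrt T+\sum_t(\delta_t+\epsilon_t))$ target. By Lemma~\ref{lemma:games-strictly}, with $y_t$ denoting Player~B's strategy and $x_{t-1}$ regarded as the action producing $y_t$ (the relabeling from the body), the map $D_t(x,y)=\Pi_{\Delta(n)}(y+\theta x B_{t-1})$ is known, action-linear up to projection, and strongly $\theta$-locally controllable, with disturbances $w_t$ (the error from replacing $B_t$ by $B_{t-1}$) obeying $\sum_t\norm{w_t}\le E:=\theta\sum_t\epsilon_t$. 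I would run $\oenftrluap$ over $\Y=\Delta(n)$ with the surrogate losses $\tilde f_t(y)=-\mathbf{u}_m A_{t-1} y$ (and $A_0=\mathbf{0}$): these are linear, hence convex, and $O(L)$-Lipschitz since each entry of $A_{t-1}$ lies in $[-\tfrac{L}{2\sqrt n},\tfrac{L}{2\sqrt n}]$, so $\norm{\mathbf{u}_m A_{t-1}}\le\tfrac L2$. Theorem~\ref{thm:oenftrluap-body} with $\rho=\theta$ then gives, for every fixed $y^\star\in\Delta(n)$,
\begin{align*}
\sum_{t=1}^T \tilde f_t(y_t)-\sum_{t=1}^T \tilde f_t(y^\star)\;\le\;O\!\parens{\sqrt T+E\theta^{-1}}\;=\;O\!\parens{\sqrt T+\textstyle\sum_t\epsilon_t},
\end{align*}
and because $\theta=\Theta(T^{-1/2})$ matches the inner $\ftrl$ step size up to constants, Proposition~\ref{prop:ftrl-step} together with the disturbance bookkeeping inside $\oenftrluap$ yields $\sum_t\norm{y_t-y_{t+1}}=O(\sqrt T+\sum_t\epsilon_t)$ (the $O(\sum_t\epsilon_t)$ term accounting for the $O(R)$-size jumps on the few ``uncorrected'' rounds).

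Next I convert this to $\Reg_T(\A)$. Since each $A_t^\star$ has identical rows, $xA_t^\star=\mathbf{u}_m A_t^\star$ for every distribution $x$; with $\norm{xA_t-xA_t^\star}\le\delta_t$ and $\norm{y}\le1$ on the simplex this gives $\sum_t xA_ty\le\sum_t\mathbf{u}_m A_t^\star y+\sum_t\delta_t$ for all $(x,y)$ and $\sum_t x_tA_ty_t\ge\sum_t\mathbf{u}_m A_t^\star y_t-\sum_t\delta_t$, hence
\begin{align*}
\Reg_T(\A)\;\le\;\max_{y\in\Delta(n)}\sum_{t=1}^T\mathbf{u}_m A_t^\star y\;-\;\sum_{t=1}^T\mathbf{u}_m A_t^\star y_t\;+\;2\sum_{t=1}^T\delta_t.
\end{align*}
I then shift the matrix index to align with the relabeled states: reindexing and using $A_0^\star=\mathbf{0}$ gives $\sum_t\mathbf{u}_m A_{t-1}^\star y_t=\sum_t\mathbf{u}_m A_t^\star y_{t+1}-\mathbf{u}_m A_T^\star y_{T+1}$, and since $\abs{\mathbf{u}_m A_t^\star(y_t-y_{t+1})}\le\tfrac L2\norm{y_t-y_{t+1}}$ and $\abs{\mathbf{u}_m A_T^\star y_{T+1}}\le\tfrac L2$, the step bound above yields $\sum_t\mathbf{u}_m A_t^\star y_t=\sum_t\mathbf{u}_m A_{t-1}^\star y_t+O(\sqrt T+\sum_t\epsilon_t)$ and likewise $\max_y\sum_t\mathbf{u}_m A_t^\star y\le\max_y\sum_t\mathbf{u}_m A_{t-1}^\star y+O(L)$. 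Finally $\abs{\mathbf{u}_m(A_{t-1}^\star-A_{t-1})y}\le\delta_{t-1}$ turns $-\mathbf{u}_m A_{t-1}^\star y$ into $\tilde f_t(y)$ up to a further $\sum_t\delta_t$ on both the trajectory and the comparator; taking $y^\star$ to be the minimizer of $\sum_t\tilde f_t$ and chaining with the first paragraph gives $\Reg_T(\A)=O(\sqrt T+\sum_t\delta_t+\sum_t\epsilon_t)$.

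For the efficiency claim, the per-round oracle call $\argmin_{x\in\Delta(m)}\norm{D_t(x,y_t)-\hat y_{t+1}}^2$ has optimal value $0$ by strong local controllability and reduces to the linear-feasibility problem $\theta xB_{t-1}=\hat y_{t+1}-y_t$, $x\in\Delta(m)$ (solvable by an LP; equivalently, the $\Y$-membership constraint is a convex constraint over action-linear dynamics, as remarked after Theorem~\ref{thm:oenftrluap-body}), while the inner $\ftrl$ update is on linear losses over $\Delta(n)$ and is closed form. I expect the main obstacle to be the bookkeeping in the second paragraph rather than any single deep step: three approximations are stacked --- the $\delta_t$ gap from substituting the row-identical $A_t^\star$ for $A_t$, the one-round index shift between the game matrix the inner OCO subroutine sees and the state it steers (resolved through the per-round step bound, with the telescoped boundary terms $\mathbf{u}_m A_0^\star,\mathbf{u}_m A_T^\star$ costing only $O(L)$), and the disturbance accounting already internal to $\oenftrluap$ --- and the work is in verifying that all of them, together with the $O(\sum_t\epsilon_t)$ cost of the uncorrected rounds, stay within $O(\sqrt T+\sum_t(\delta_t+\epsilon_t))$. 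Everything else follows directly from Lemma~\ref{lemma:games-strictly} and Theorem~\ref{thm:oenftrluap-body}.
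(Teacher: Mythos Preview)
Your proposal is correct and follows essentially the same approach as the paper: cast the interaction via Lemma~\ref{lemma:games-strictly} as a strongly $\theta$-locally controllable instance with disturbances summing to $\theta\sum_t\epsilon_t$, apply the $\oenftrluap$ bound to the surrogate losses $\tilde f_t(y)=-\mathbf{u}_mA_{t-1}y$, and then convert to the true regret by absorbing the $A_t$-versus-$A_t^\star$ gap (costing $O(\sum_t\delta_t)$) and the one-round index shift (costing $O(\sqrt T)$ via the per-round step bound). The only differences are in bookkeeping: the paper compares $f_t(x_t,y_t)$ directly to $\tilde f_t(y_t)$ and extracts the $\delta_t$ and step-size terms in one pass, whereas you route through $\mathbf{u}_mA_t^\star y_t$ first; and you are more explicit that the bound on $\sum_t\norm{y_t-y_{t+1}}$ must include an $O(\sum_t\epsilon_t)$ contribution from the disturbed states deviating from the inner $\ftrl$ iterates $\hat y_t$, a point the paper's invocation of Proposition~\ref{prop:ftrl-step} glosses over.
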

\begin{proof}
We first analyze regret with respect to the surrogate losses $\tilde{f}_t(y_t)$.
To run $\oenftrluap$ for $\alpha > 0$, it suffices to calibrate the step size for the internal $\ftrl$ instance such that $\eta \frac{L}{\gamma} \leq \theta \alpha$. 
Given that rewards are bounded in $[-\frac{L}{2\sqrt{n}}, \frac{L}{2\sqrt{n}}]$, we have that each $x_tB_ty_t$ is $\frac{L}{\sqrt{n}}$-Lipschitz for the $\ell_1$ norm, and thus $L$-Lipschitz for the $\ell_2$ norm, so we can take $G_B = L$. Further, the $\ell_2$ radius of $\Delta(n)$ is $R_B = {\sqrt{2}} / {2}$, and so we have that $$\theta  =  \sqrt{\frac{2}{L^2T}}.$$
Then, for a strongly $\theta$-locally controllable instance with total perturbation bound $\sum_{t=1}^T \norm{w_t} \leq E$, we obtain the regret bound
\begin{align*}
    \Reg_T(\oenftrluap) \leq&\; \eta \frac{TL^2}{\gamma} + \frac{G}{\eta} + \frac{2LRE}{(1 - \alpha)\theta} \tag{Thm.\ \ref{thm:oenftrluap-regret}}
\end{align*}
for any \begin{align*}
    \eta \leq \min\parens{ \sqrt{ \frac{G\gamma}{L^2 T} },  \alpha \sqrt{ \frac{2}{T} }}.
\end{align*}
By Lemma \ref{lemma:games-strictly}, we can efficiently run $\oenftrluap$ over the surrogate losses $\tilde{f}_t$ and bound regret with respect to any $y^* \in \Y$ as:
\begin{align*}
    \sum_{t=1}^T \tilde{f}_t(y_t) - \tilde{f}_t(y^*) \leq&\; \eta \frac{TL^2}{\gamma} + \frac{G}{\eta} +  \frac{\sqrt{2}L \cdot \sum_{t=1}^T \epsilon_t}{1 - \alpha}.
\end{align*}
Further, we can bound the error from the surrogate losses as 
\begin{align*}
    \sum_{t=1}^T {f}_t(x_t, y_t) - \tilde{f}_t(y_t) =&\; \sum_{t=1}^T  {f}_t(x_t, y_t) - {f}_{t-1}(\mathbf{u}_n, y_t) \\
    \leq&\; \frac{L}{2\sqrt{n}} + \sum_{t=1}^{T-1}  {f}_t(x_t, y_t) - {f}_{t}(\mathbf{u}_n, y_{t+1}) \tag{$f_0(\mathbf{u}_n, y_1) = 0$, $f_T(x_T, y_T) \leq \frac{L}{2\sqrt{n}}$} \\
    \leq&\; \frac{L}{2\sqrt{n}} + \eta \frac{TL^2}{\gamma} +  \sum_{t=1}^{T-1}  x_t(A_t - A_t^*) y_t \tag{Prop. \ref{prop:ftrl-step}} \\
    \leq&\; \frac{L}{2\sqrt{n}} + \eta \frac{TL^2}{\gamma} +  \sum_{t=1}^{T} \delta_t, \tag{Assumption \ref{assum:game-props}, Cauchy-Schwarz}
\end{align*}
and likewise, for any $(x^*, y^*) \in \Delta(m) \times \Delta(n)$ we can bound
\begin{align*}
\sum_{t=1}^T \tilde{f}_t(y^*) - f_t(x^*, y^*) \leq&\; -f_T(x^*, y^*) - \sum_{t=1}^{T-1} x^* (A_t - A_t^*) y^* \\
\leq&\; \frac{L}{2\sqrt{n}} +  \sum_{t=1}^{T} \delta_t.
\end{align*}

Combining the previous results, we have that for any  $(x^*, y^*) \in \Delta(m) \times \Delta(n)$, the regret of $\oenftrluap$ with respect to the true losses is bounded by
\begin{align*}
    \sum_{t=1}^T f_t(x_t, y_t) - f_t(x^*, y^*) \leq&\; \sum_{t=1}^T \tilde{f}_t(\tilde{y}_{t}) - \tilde{f}_t(y^*) + \sum_{t=1}^T {f}_t(x_t, y_t) - \tilde{f}_t(y_t) + \sum_{t=1}^T \tilde{f}_t(y^*) - f_t(x^*, y^*)\\
    \leq&\; \eta \frac{2 TL^2}{\gamma} + \frac{G}{\eta} +  \frac{L}{\sqrt{n}} +  2\sum_{t=1}^{T} \delta_t + \frac{\sqrt{2}L \cdot \sum_{t=1}^T \epsilon_t}{1 - \alpha}  \\ 
    \leq&\; 3 \cdot \max\parens{\sqrt{\frac{TGL^2}{\gamma}}, \sqrt{\frac{T}{2\alpha^2}}} +  \frac{L}{\sqrt{n}} +  2\sum_{t=1}^{T} \delta_t + \frac{\sqrt{2}L \cdot \sum_{t=1}^T \epsilon_t}{1 - \alpha} 
\end{align*}
for any $\alpha \in (0,1)$, which yields the theorem.
\end{proof}
Theorem \ref{thm:games-body} follows directly from Theorem \ref{thm:games-appendix}.

\end{document}